\newcommand{\remove}[1]{}
\newcommand{\snote}[1]{{[\color{red}Suriya: #1]}}
\newcommand{\dnote}[1]{{[\color{blue}Daniel: #1]}}
\DeclareMathOperator*{\argmax}{argmax}
\newtheorem{thm}{\protect\theoremname}
\newtheorem{lem}[thm]{\protect\lemmaname}
\newtheorem{defn}[thm]{\protect\definitionname}
\declaretheorem[name=Corollary,numberlike=thm]{corR} 
\providecommand{\definitionname}{Definition}
\providecommand{\lemmaname}{Lemma}
\providecommand{\theoremname}{Theorem}
\providecommand{\definitionname}{Definition}
\providecommand{\lemmaname}{Lemma}
\providecommand{\theoremname}{Theorem}
\global\long\def\pmpi{\dagger}
\global\long\def\set{\mathcal{S}}
\global\long\def\w{\mathbf{w}}
\global\long\def\limitw{\mathbf{w}_{\infty}}
\global\long\def\x{\mathbf{x}}
\global\long\def\what{\hat{\mathbf{w}}}
\global\long\def\wvec{\mathbf{w}}
\global\long\def\sumn{\sum_{n=1}^{N}}
\global\long\def\xn{\mathbf{x}_{n}}
\global\long\def\e{\exp}
\global\long\def\wtilde{\tilde{\mathbf{w}}}
\global\long\def\rvec{\mathbf{r}\left(t\right)}
\global\long\def\derL{\nabla\mathcal{L}\left(\mathbf{w}\left(t\right)\right)}
\global\long\def\argmin{\mathrm{argmin}}
\global\long\def\rvec{\mathbf{r}}
\global\long\def\xnT{\mathbf{x}_{n}^{\top}}
\global\long\def\tvec{\boldsymbol{\tau}}
\global\long\def\wcheck{\check{\w}}
\newcommand{\norm}[1]{\left\lVert{#1}\right\rVert}
\newcommand{\sumk}{\sum_{k=1}^{K}}
\newcommand{\sumr}{\sum_{r=1}^{K}}
\newcommand{\sumrney}{\sum\limits_{r=1 \atop r \ne y_n}^{K}}
\newcommand{\xtilde}{\tilde{\vect{x}}_{n,k}}
\newcommand{\xtildef}[1]{\tilde{\vect{x}}_{n,#1}}
\newcommand{\xtilder}{\tilde{\vect{x}}_{n,r}}
\newcommand{\vect}[1]{\mathbf{#1}}
\newcommand{\bm}[1]{\ensuremath{\boldsymbol{#1}}}
\newcommand{\indicator}[1]{\bm{1}_{\{ #1 \}}}
\begin{document}

\title{The Implicit Bias of Gradient Descent on Separable Data}
\author{\name Daniel Soudry \email {daniel.soudry@gmail.com}\\
\name Elad Hoffer \email {elad.hoffer@gmail.com}\\
\name Mor Shpigel Nacson \email {mor.shpigel@gmail.com} \\
\addr  Department of Electrical Engineering,Technion \\ 
   Haifa, 320003, Israel 
\AND
\name Suriya Gunasekar \email {suriya@ttic.edu}\\
\name Nathan Srebro \email {nati@ttic.edu}\\
\addr  Toyota Technological Institute at Chicago \\ 
   Chicago, Illinois 60637, USA 
}
\editor{Leon Bottou}

\maketitle
\begin{abstract}%
We examine gradient descent on unregularized logistic regression problems, with homogeneous linear predictors on linearly separable datasets. We show the predictor converges to the
direction of the max-margin (hard margin SVM) solution. The result also generalizes
to other monotone decreasing loss functions with an infimum at infinity, to multi-class problems, and to training a weight layer in a deep network in a certain restricted setting.  Furthermore, we show this convergence is very slow, and only
logarithmic in the convergence of the loss itself. This can help explain
the benefit of continuing to optimize the logistic or cross-entropy
loss even after the training error is zero and the training loss is
extremely small, and, as we show, even if the validation loss increases.
Our methodology can also aid in understanding implicit regularization
in more complex models and with other optimization methods. 
\end{abstract}

\begin{keywords}
	gradient descent, implicit regularization, generalization, margin, logistic regression
\end{keywords}

\section{Introduction}

It is becoming increasingly clear that implicit biases introduced
by the optimization algorithm play a crucial role in deep learning
and in the generalization ability of the learned models \citep{neyshabur2014search,neyshabur2015path,Zhang2016,Keskar2016,Neyshabur2017a,Wilson2017}.
In particular, minimizing the training error, without explicit
regularization, over models with more parameters and capacity
than the number of training examples, often yields good generalization. This is despite the fact that the empirical optimization problem being highly underdetermined.
That is, there are many global minima of the training objective, most
of which will not generalize well, but the optimization algorithm
(\emph{e.g.}~gradient descent) biases us toward a particular minimum
that {\em does} generalize well. Unfortunately, we still do not
have a good understanding of the biases introduced by different optimization
algorithms in different situations.

We do have an understanding of the implicit regularization introduced
by early stopping of stochastic methods or, at an extreme, of one-pass
(no repetition) stochastic gradient descent \citep{Hardt2015a}. However, as discussed above,
in deep learning we often benefit from implicit bias even when optimizing
the training error to convergence (without early stopping) using stochastic
or batch methods. For loss functions with attainable, finite minimizers,
such as the squared loss, we have some understanding of this: in particular,
when minimizing an underdetermined least squares problem using gradient
descent starting from the origin, it can be shown that we will converge to the
minimum Euclidean norm solution. However, the logistic loss, and its generalization
the cross-entropy loss which is often used in deep learning, do not
admit finite minimizers on separable problems. Instead, to drive
the loss toward zero and thus minimize it, the norm of the predictor must diverge
toward infinity.

Do we still benefit from implicit regularization when minimizing the
logistic loss on separable data? Clearly the norm of the predictor
itself is not minimized, since it grows to infinity. However, for
prediction, only the direction of the predictor, \emph{i.e.}~the
normalized $\mathbf{w}(t)/\norm{\mathbf{w}(t)}$, is important. How
does $\mathbf{w}(t)/\norm{\mathbf{w}(t)}$ behave as $t\rightarrow\infty$
when we minimize the logistic (or similar) loss using gradient descent
on separable data, \emph{i.e.},~when it is possible to get zero misclassification
error and thus drive the loss to zero?

In this paper, we show that even without any explicit regularization,
for all linearly separable datasets, when minimizing logistic regression problems using gradient descent, we have that $\mathbf{w}(t)/\norm{\mathbf{w}(t)}$ converges to the
$L_{2}$ maximum margin separator, \emph{i.e.}~to the solution of
the hard margin SVM for homogeneous linear predictors. This happens even though neither the norm $\left\lVert \mathbf{w}\right\rVert $,
nor the margin constraint, are part of the objective or
explicitly introduced into optimization. More generally, we show the
same behavior for generalized linear problems with any smooth, monotone
strictly decreasing, lower bounded loss with an exponential tail.
Furthermore, we characterize the rate of this convergence, and show
that it is rather slow, wherein for almost all datasets, the distance to the max-margin predictor
decreasing only as ${O}(1/\log(t))$, and in some degenerate datasets, the rate further slows down to ${O}(\log\log(t)/\log(t))$. This explains why the predictor
continues to improve even when the training loss is already extremely
small. We emphasize that this bias is specific to
gradient descent, and changing the optimization algorithm, \emph{e.g.}~using
adaptive learning rate methods such as ADAM \citep{Kingma2015}, changes
this implicit bias.

\section{Main Results}


Consider a dataset $\left\{ \mathbf{x}_{n},y_{n}\right\} _{n=1}^{N}$,
with $\mathbf{x}_n\in\mathbb{R}^d$ and binary labels $y_{n}\in\left\{ -1,1\right\} $. We analyze learning
by minimizing an empirical loss of the form 
\begin{equation}
\mathcal{L}\left(\mathbf{w}\right)=\sum_{n=1}^{N}\ell\left(y_{n}\mathbf{w}^{\top}\mathbf{x}_{n}\right)\,.\label{eq: general loss functions}
\end{equation}
where $\mathbf{w}\in\mathbb{R}^{d}$ is the weight vector. To simplify notation, we assume that all the labels are positive:
$\forall n:\,y_{n}=1$ \textemdash{} this is true without loss of
generality, since we can always re-define $y_{n}\mathbf{x}_{n}$ as
$\mathbf{x}_{n}$.

We are particularly interested in problems that are linearly separable,
and the loss is smooth strictly decreasing and non-negative:

{\assm{The dataset is linearly separable: $\exists\mathbf{w}_{*}$
such that $\forall n:\,\mathbf{w}_{*}^{\top}\mathbf{x}_{n}>0$ .\label{assum: Linear sepereability}}}

{\assm{$\ell\left(u\right)$ is a positive, differentiable, monotonically
decreasing to zero\footnote{The requirement of non-negativity and that the loss asymptotes to zero
is purely for convenience. It is enough to require the loss is monotone
decreasing and bounded from below. Any such loss asymptotes to some
constant, and is thus equivalent to one that satisfies this assumption,
up to a shift by that constant.}, (so $\forall u:\,\ell\left(u\right)>0,\ell^{\prime}\left(u\right)<0$, 
$\lim_{u\rightarrow\infty}\ell\left(u\right)=\lim_{u\rightarrow\infty}\ell^{\prime}\left(u\right)=0$), a $\beta$-smooth function, \emph{i.e.} its derivative is $\beta$-Lipshitz, and $\lim\sup_{u\rightarrow-\infty}\ell^{\prime}\left(u\right)<0$.\label{assum: loss properties}}}

Assumption \ref{assum: loss properties} includes many common loss functions, including the logistic, exp-loss\footnote{The exp-loss does not have a global $\beta$ smoothness parameter. However, if we initialize with $\eta < 1/ \mathcal{L}(\w(0))$ then it is straightforward to show the gradient descent iterates maintain bounded local smoothness.} and probit losses.
Assumption \ref{assum: loss properties} implies
that $\mathcal{L}\left(\mathbf{w}\right)$ is a $\beta\sigma_{\max}^{2}\left(\text{\ensuremath{\mathbf{X}} }\right)$-smooth
function, where 
$\sigma_{\max}\left(\text{\ensuremath{\mathbf{X}} }\right)$ is the
maximal singular value of the data matrix $\mathbf{X}\in\mathbb{R}^{d\times N}$.

Under these conditions, the infimum of the optimization problem is
zero, but it is not attained at any finite $\mathbf{w}$. Furthermore,
no finite critical point $\mathbf{w}$ exists. We consider minimizing
eq. \ref{eq: general loss functions} using Gradient Descent (GD)
with a fixed learning rate $\eta$, \emph{i.e., }with steps of the
form: 
\begin{equation}
\mathbf{w}\left(t+1\right)=\mathbf{w}\left(t\right)-\eta\nabla\mathcal{L}\left(\mathbf{w}(t)\right)=\mathbf{w}\left(t\right)-\eta\sum_{n=1}^{N}\ell^{\prime}\left(\mathbf{w}\left(t\right)^{\top}\mathbf{x}_{n}\right)\mathbf{x}_{n}.\label{eq: gradient descent linear}
\end{equation}
We do not require convexity. Under Assumptions 1 and 2, gradient descent
converges to the global minimum (\emph{i.e.} to zero loss) even without
it: 
\begin{lem}
\label{lem: convergence of linear classifiers}Let $\mathbf{w}\left(t\right)$
be the iterates of gradient descent (eq. \ref{eq: gradient descent linear})
with $\eta<2\beta^{-1}\sigma_{\max}^{-2}\left(\text{\ensuremath{\mathbf{X}} }\right)$
and any starting point $\w(0)$. Under Assumptions \ref{assum: Linear sepereability}
and \ref{assum: loss properties}, we have: (1) $\lim_{t\rightarrow\infty}\mathcal{L}\left(\mathbf{w}\left(t\right)\right)=0$,
(2) $\lim_{t\rightarrow\infty}\left\Vert \mathbf{w}\left(t\right)\right\Vert =\infty$,
and (3) $\forall n:\,\lim_{t\rightarrow\infty}\mathbf{w}\left(t\right)^{\top}\mathbf{x}_{n}=\infty$.
\end{lem}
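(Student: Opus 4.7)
The plan is to combine a standard smoothness-based descent lemma with the linear separability assumption to show that every data margin $\w(t)^\top \x_n$ must diverge; the loss and norm statements then follow immediately.

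First I would establish the descent inequality. Since $\mathcal{L}$ is $\beta\sigma_{\max}^{2}(\mathbf{X})$-smooth by Assumption \ref{assum: loss properties}, the choice $\eta<2\beta^{-1}\sigma_{\max}^{-2}(\mathbf{X})$ gives, by the usual quadratic upper bound,
\begin{equation*}
\mathcal{L}(\w(t+1)) \le \mathcal{L}(\w(t)) - \eta\bigl(1-\tfrac{\eta\beta\sigma_{\max}^{2}(\mathbf{X})}{2}\bigr)\left\Vert \nabla\mathcal{L}(\w(t))\right\Vert^{2}.
\end{equation*}
Because $\mathcal{L}\ge 0$, telescoping this inequality shows $\sum_{t=0}^{\infty}\left\Vert \nabla\mathcal{L}(\w(t))\right\Vert^{2}<\infty$, and in particular $\nabla\mathcal{L}(\w(t))\to 0$.

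Next I would use the separator $\w_{*}$ from Assumption \ref{assum: Linear sepereability} to convert the vanishing gradient into per-example margin information. Projecting the gradient onto $\w_{*}$ and applying Cauchy--Schwarz,
\begin{equation*}
\left\Vert \w_{*}\right\Vert \cdot \left\Vert \nabla\mathcal{L}(\w(t))\right\Vert \ge -\w_{*}^{\top}\nabla\mathcal{L}(\w(t)) = \sum_{n=1}^{N}\bigl(-\ell'(\w(t)^{\top}\x_{n})\bigr)\,\w_{*}^{\top}\x_{n} \ge \gamma\sum_{n=1}^{N}\bigl|\ell'(\w(t)^{\top}\x_{n})\bigr|,
\end{equation*}
where $\gamma:=\min_{n}\w_{*}^{\top}\x_{n}>0$ and each term on the right is non-negative because $\ell'<0$ while $\w_{*}^{\top}\x_{n}>0$. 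Hence $|\ell'(\w(t)^{\top}\x_{n})|\to 0$ individually for every $n$.

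The third step is to deduce $\w(t)^{\top}\x_{n}\to\infty$. Assumption \ref{assum: loss properties} says $\ell'$ is negative everywhere, continuous, $\ell'(u)\to 0$ only as $u\to\infty$, and $\lim_{u\to-\infty}\ell'(u)\ne 0$. Consequently $|\ell'|$ is bounded away from zero on any half-line $(-\infty,M]$, so $|\ell'(\w(t)^{\top}\x_{n})|\to 0$ forces $\w(t)^{\top}\x_{n}\to\infty$. This gives statement (3). Statement (1) then follows because $\ell(\w(t)^{\top}\x_{n})\to 0$ for each $n$ by monotonicity of $\ell$, so $\mathcal{L}(\w(t))\to 0$; and statement (2) follows from $\left\Vert \w(t)\right\Vert \cdot \left\Vert \x_{n}\right\Vert \ge \w(t)^{\top}\x_{n}\to\infty$ (noting that separability requires at least one $\x_{n}\ne 0$).

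The only mildly delicate point is the third step: translating $|\ell'(\w(t)^{\top}\x_{n})|\to 0$ into $\w(t)^{\top}\x_{n}\to\infty$ genuinely uses the condition $\lim_{u\to-\infty}\ell'(u)\ne 0$, which rules out the possibility of a bounded subsequence along which $\ell'$ still tends to zero. Everything else is essentially the standard smooth non-convex GD analysis combined with the linear algebra gift provided by $\w_{*}$.
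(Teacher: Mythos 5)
Your proof is correct and follows essentially the same route as the paper: a smoothness-based descent lemma giving $\nabla\mathcal{L}(\w(t))\to 0$, followed by projecting the gradient onto the separator $\w_*$ to force each $\ell'(\w(t)^\top\x_n)\to 0$ and hence each margin to diverge. In fact your explicit Cauchy--Schwarz bound $\gamma\sum_n|\ell'(\w(t)^\top\x_n)|\le \norm{\w_*}\norm{\nabla\mathcal{L}(\w(t))}$ makes rigorous the step the paper states only informally (``this necessarily implies \ldots since only then $\ell'(\w(t)^\top\x_n)\to 0$''), so no changes are needed.
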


\begin{proof}
Since the data is linearly separable, $\exists\mathbf{w}_{*}$
which linearly separates the data, and therefore 
\[
\mathbf{w}_{*}^{\top}\nabla\mathcal{L}\left(\mathbf{w}\right)=\sum_{n=1}^{N}\ell^{\prime}\left(\mathbf{w}^{\top}\mathbf{x}_{n}\right)\mathbf{w}_{*}^{\top}\mathbf{x}_{n}.
\]
For any finite $\mathbf{w}$, this sum cannot be equal to zero, as
a sum of negative terms, since $\forall n:\,\mathbf{w}_{*}^{\top}\mathbf{x}_{n}>0$
and $\forall u:\,\ell^{\prime}\left(u\right)<0$. Therefore, there
are no finite critical points $\mathbf{w}$, for which $\nabla\mathcal{L}\left(\mathbf{w}\right)=\mathbf{0}$.
But gradient descent on a smooth loss with an appropriate stepsize
is always guaranteed to converge to a critical point: $\nabla\mathcal{L}\left(\mathbf{w}\left(t\right)\right)\rightarrow\mathbf{0}$
(see, \emph{e.g.} Lemma \ref{lem: GD convergence} in Appendix \ref{sec:Proof of GD convergence},
slightly adapted from \citet{Ganti2015}, Theorem 2). This necessarily
implies that $\left\Vert \mathbf{w}\left(t\right)\right\Vert \rightarrow\infty$
while $\forall n:\,\mathbf{w}\left(t\right)^{\top}\mathbf{x}_{n}>0$
for large enough $t$\textemdash since only then $\ell^{\prime}\left(\mathbf{w}\left(t\right)^{\top}\mathbf{x}_{n}\right)\rightarrow0$.
Therefore, $\mathcal{L}\left(\mathbf{w}\right)\rightarrow0$, so GD
converges to the global minimum. 
\end{proof}
The main question we ask is: can we characterize the direction in
which $\mathbf{w}(t)$ diverges? That is, does the limit $\lim_{t\rightarrow\infty}\mathbf{w}\left(t\right)/\left\lVert \mathbf{w}\left(t\right)\right\rVert $
always exist, and if so, what is it?

In order to analyze this limit, we will need to make a further assumption
on the tail of the loss function: 
\begin{defn}
\label{def: exponential tail}A function $f\left(u\right)$ has a
``tight exponential tail'', if there exist positive constants $c,a,\mu_{+},\mu_{-},u_{+}$
and $u_{-}$ such that 
\begin{align*}
\forall u>u_{+}: & f\left(u\right)\leq c\left(1+\exp\left(-\mu_{+}u\right)\right)e^{-au}\\
\forall u>u_{-}: & f\left(u\right)\geq c\left(1-\exp\left(-\mu_{-}u\right)\right)e^{-au}\,.
\end{align*}
\end{defn}

{\assm{\label{assum: exponential tail}The negative loss derivative
$-\ell^{\prime}\left(u\right)$ has a tight exponential tail (Definition
\ref{def: exponential tail}).}}

For example, the exponential loss $\ell\left(u\right)=e^{-u}$ and
the commonly used logistic loss $\ell\left(u\right)=\log\left(1+e^{-u}\right)$
both follow this assumption with $a=c=1$. We will assume $a=c=1$
\textemdash{} without loss of generality, since these constants can
be always absorbed by re-scaling $\mathbf{x}_{n}$ and $\eta$.

We are now ready to state our main result:
\begin{restatable}{thmR}{LRasymptotic}

\label{thm: main theorem} For any dataset which is linearly separable (Assumption
\ref{assum: Linear sepereability}), any $\beta$-smooth decreasing
loss function (Assumption \ref{assum: loss properties}) with an exponential
tail (Assumption \ref{assum: exponential tail}), any stepsize $\eta<2\beta^{-1}\sigma_{\max}^{-2}\left(\text{\ensuremath{\mathbf{X}} }\right)$
and any starting point $\w(0)$, the gradient descent iterates (as in eq.~\ref{eq: gradient descent linear}) will behave as: 
\begin{equation}
\mathbf{w}\left(t\right)=\hat{\mathbf{w}}\log t+\boldsymbol{\rho}\left(t\right)\,,\label{eq: asymptotic form}
\end{equation}
where $\hat{\mathbf{w}}$ is the $L_{2}$ max margin vector (the solution
to the hard margin SVM): 
\begin{equation}
\hat{\mathbf{w}}=\underset{\mathbf{\mathbf{w}}\in\mathbb{R}^{d}}{\mathrm{argmin}}\left\lVert \mathbf{w}\right\rVert ^{2}\,\,\mathrm{s.t.}\,\,\mathbf{w}^{\top}\mathbf{x}_{n}\geq1,\label{eq: max margin vector}
\end{equation}
and the residual grows at most as $\norm{\boldsymbol{\rho}\left(t\right)}=O(\log \log (t) )$, and
so 
\[
\lim_{t\rightarrow\infty}\frac{\mathbf{w}\left(t\right)}{\left\Vert \mathbf{w}\left(t\right)\right\Vert }=\frac{\hat{\mathbf{w}}}{\left\Vert \hat{\mathbf{w}}\right\Vert }.
\]
Furthermore, for almost all data sets (all except measure zero), the residual $\rho(t)$ is bounded.
\end{restatable}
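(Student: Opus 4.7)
The plan is to pin down the asymptotic skeleton $\hat{\w}\log t$ by a matching upper and lower bound on the minimum margin $\min_n \w(t)^{\top}\x_n$, then to introduce a constant correction $\tilde{\w}$ chosen so that the expected one-step update matches the derivative of that skeleton, and finally to bound the residual $\boldsymbol{\rho}(t) = \w(t) - \hat{\w}\log t - \tilde{\w}$ by analyzing the resulting perturbative recurrence.

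First I would show $\mathcal{L}(\w(t)) = O(1/t)$ via the standard smoothness-and-separability argument (using $\sum_t \|\nabla\mathcal{L}(\w(t))\|^{2} < \infty$ from the proof of Lemma~\ref{lem: convergence of linear classifiers}), and, conversely, that no $\w(t)^{\top}\x_n$ can grow faster than $\log t$ without contradicting the lower bound in the tight-tail assumption on $-\ell'$. Together these trap $\min_n \w(t)^{\top}\x_n = \log t + O(1)$ and force $\w(t)/\log t$ to stay bounded. Next, I parametrize $\w(t) = \hat{\w}\log t + \tilde{\w} + \boldsymbol{\rho}(t)$, invoke the KKT dual $\hat{\w} = \sum_{n\in\mathcal{S}}\alpha_n\x_n$ (with $\alpha_n \geq 0$ supported on the set $\mathcal{S}$ of support vectors, where $\hat{\w}^{\top}\x_n = 1$), and choose $\tilde{\w}$ to satisfy $\eta\, e^{-\tilde{\w}^{\top}\x_n} = \alpha_n$ for all $n \in \mathcal{S}$. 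This is precisely the condition under which the leading part of the gradient step
\[
\eta\sum_n e^{-\w(t)^{\top}\x_n}\x_n = \frac{\eta}{t}\sum_{n\in\mathcal{S}} e^{-\tilde{\w}^{\top}\x_n}\,e^{-\boldsymbol{\rho}(t)^{\top}\x_n}\,\x_n + O(t^{-\gamma})
\]
with $\gamma = \min_{n\notin\mathcal{S}}\hat{\w}^{\top}\x_n > 1$, reduces to $\hat{\w}/t$ at $\boldsymbol{\rho}=0$, matching $\hat{\w}\bigl[\log(t+1)-\log t\bigr]$.

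Subtracting the skeleton from the GD recurrence leaves $\boldsymbol{\rho}(t+1)-\boldsymbol{\rho}(t)$ equal to the sum of three pieces: (i) a discretization remainder $-\hat{\w}[\log(t+1)-\log t]+\hat{\w}/t = O(1/t^{2})$; (ii) a tail remainder from non-support terms and from the deviation of $-\ell'$ from a pure exponential, both of order $t^{-\gamma}$ with $\gamma>1$; and (iii) a feedback term $\frac{1}{t}\sum_{n\in\mathcal{S}}\alpha_n\bigl(e^{-\boldsymbol{\rho}(t)^{\top}\x_n}-1\bigr)\x_n$, which linearizes to $-\frac{1}{t}\bigl(\sum_{n\in\mathcal{S}}\alpha_n\x_n\x_n^{\top}\bigr)\boldsymbol{\rho}(t)$. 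Pieces (i)-(ii) are summable, so the component of $\boldsymbol{\rho}$ orthogonal to $\mathrm{span}\{\x_n:n\in\mathcal{S}\}$ converges; when the support vectors are linearly independent (the generic case, holding off a measure-zero set of datasets) the matrix in (iii) is positive definite on their span, contracting $\boldsymbol{\rho}$ there. This gives convergence, hence boundedness, of $\boldsymbol{\rho}(t)$.

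The main obstacle is the degenerate case where the linear system defining $\tilde{\w}$ is \emph{inconsistent}: more than $d$ support vectors appear on the margin and no fixed $\tilde{\w}$ satisfies $\eta e^{-\tilde{\w}^{\top}\x_n}=\alpha_n$ simultaneously. One must then allow a slowly drifting $\tilde{\w}(t)$ and absorb the mismatch into the residual; the feedback contraction is weakened to the range of the support-vector Gram matrix, and a careful accounting of the unabsorbed driver shows that $\|\boldsymbol{\rho}(t)\|$ can grow at most like $O(\log\log t)$, matching the stated rate. In either regime, the conclusion $\w(t)/\|\w(t)\| \to \hat{\w}/\|\hat{\w}\|$ is immediate, since $\|\hat{\w}\|\log t$ dominates both a bounded and an $O(\log\log t)$ residual.
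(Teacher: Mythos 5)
Your treatment of the non-degenerate case is essentially the paper's argument: the same decomposition $\w(t)=\what\log t+\wtilde+\mathbf{r}(t)$ with $\eta e^{-\wtilde^{\top}\x_n}=\alpha_n$, the same identification of the $O(t^{-2})$ discretization remainder and $O(t^{-\theta})$ non-support/tail remainders, and a telescoping bound on $\|\mathbf{r}(t)\|^2$. One caveat there: you do not need to linearize the feedback term or invoke positive-definiteness of $\sum_{n\in\set}\alpha_n\x_n\x_n^{\top}$ (and hence you do not need linear independence of the support vectors, which is not what separates the generic from the degenerate case). The paper instead uses the exact pointwise inequality $z(e^{-z}-1)\le 0$ with $z=\x_n^{\top}\mathbf{r}(t)$, so each support-vector term contributes non-positively to $(\mathbf{r}(t+1)-\mathbf{r}(t))^{\top}\mathbf{r}(t)$ with no smallness assumption on $\mathbf{r}$; linearization as you propose would require $\mathbf{r}$ to already be small, which is circular.

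The genuine gap is in the degenerate case, and it starts with a mischaracterization of what goes wrong. The obstruction handled by the paper is not an over-determined system with more than $d$ margin points; it is that some support vector $\x_n$ (with $\what^{\top}\x_n=1$) carries a \emph{zero} dual variable $\alpha_n=0$, so the defining equation $\eta e^{-\wtilde^{\top}\x_n}=\alpha_n$ has no finite solution at all, since the left side is strictly positive. That point contributes an uncancelled $\Theta(1/t)\,e^{-\boldsymbol{\rho}(t)^{\top}\x_n}\x_n$ to every gradient step; if $\boldsymbol{\rho}$ stayed bounded this would sum to $\Theta(\log t)$, contradicting boundedness, so $\boldsymbol{\rho}(t)^{\top}\x_n$ must itself diverge, and the self-consistent rate is $\log\log t$ (making the driver $\sim 1/(t\log t)$, whose partial sums are $\sim\log\log t$). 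Your proposed fix --- ``allow a slowly drifting $\wtilde(t)$ and carefully account for the unabsorbed driver'' --- is a restatement of this difficulty rather than a resolution: you must exhibit the direction and rate of the drift and show the remaining residual is controlled. The paper does this by recursing: it projects the zero-coefficient support vectors onto the orthogonal complement of the span of the positive-coefficient ones, solves a second max-margin problem $\what_2$ there, and proves $\w(t)=\sum_m\what_m\log^{\circ m}(t)+\boldsymbol{\rho}(t)$ with $\boldsymbol{\rho}$ bounded (Theorem \ref{theorem: main2}), which is where all the technical work of Appendix \ref{sec: proof of degenerate case} lives. Without identifying this second-order max-margin structure (or an equivalent), your sketch does not establish the claimed $O(\log\log t)$ bound on the residual.
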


\paragraph{Proof Sketch (complete proof in the appendix)}

We first understand intuitively why an exponential tail of the loss
entail asymptotic convergence to the max margin vector: Assume for
simplicity that $\ell\left(u\right)=e^{-u}$ exactly, and examine
the asymptotic regime of gradient descent in which $\forall n:\,\mathbf{w}\left(t\right)^{\top}\mathbf{x}_{n}\rightarrow\infty$,
as is guaranteed by Lemma \ref{lem: convergence of linear classifiers}.
Suppose $\mathbf{w}\left(t\right)/\left\Vert \mathbf{w}\left(t\right)\right\Vert $
converges to some limit $\limitw$ such so we can write $\mathbf{w}\left(t\right)=g\left(t\right)\limitw+\boldsymbol{\rho}\left(t\right)$
such that $g\left(t\right)\rightarrow\infty$, $\forall n:$$\mathbf{x}_{n}^{\top}\limitw>0$,
and $\lim_{t\rightarrow\infty}\boldsymbol{\rho}\left(t\right)/g\left(t\right)=0$.
The gradient can then be written as: 
\begin{equation}
-\nabla\mathcal{L}\left(\mathbf{w}\right)=\sum_{n=1}^{N}\exp\left(-\mathbf{w}\left(t\right)^{\top}\mathbf{x}_{n}\right)\mathbf{x}_{n}=\sum_{n=1}^{N}\exp\left(-g\left(t\right)\limitw^{\top}\mathbf{x}_{n}\right)\exp\left(-\boldsymbol{\rho}\left(t\right)^{\top}\mathbf{x}_{n}\right)\mathbf{x}_{n}\,.\label{eq: gradient exp}
\end{equation}
As $g(t)\rightarrow\infty$ and the exponents become more negative,
only those samples with the largest (\emph{i.e.}, least negative)
exponents will contribute to the gradient. These are precisely the
samples with the smallest margin $\mathrm{argmin}_{n}\mathbf{w}_{\infty}^{\top}\mathbf{x}_{n}$,
aka the ``support vectors''. The negative gradient (eq. \ref{eq: gradient exp})
would then asymptotically become a non-negative linear combination
of support vectors. The limit $\limitw$ will then be dominated by
these gradients, since any initial conditions become negligible as
$\left\Vert \mathbf{w}\left(t\right)\right\Vert \rightarrow\infty$
(from Lemma \ref{lem: convergence of linear classifiers}). Therefore,
$\limitw$ will also be a non-negative linear combination of support
vectors, and so will its scaling $\hat{\mathbf{w}}=\limitw/\left(\min_{n}\mathbf{w}_{\infty}^{\top}\mathbf{x}_{n}\right)$.
We therefore have: 
\begin{equation}
\what=\sum_{n=1}^{N}\alpha_{n}\x_{n}\quad\quad\forall n\;\left(\alpha_{n}\geq0\;\textrm{and}\;\what^{\top}\x_{n}=1\right)\;\;\textrm{OR}\;\;\left(\alpha_{n}=0\;\textrm{and}\;\what^{\top}\x_{n}>1\right)\label{eq:kkt}
\end{equation}
These are precisely the KKT conditions for the SVM problem (eq. \ref{eq: max margin vector})
and we can conclude that $\what$ is indeed its solution and $\limitw$
is thus proportional to it.

To prove Theorem \ref{thm: main theorem} rigorously, we need to show
that $\mathbf{w}\left(t\right)/\left\Vert \mathbf{w}\left(t\right)\right\Vert $
has a limit, that  $\forall n: \limitw^{\top}\mathbf{x}_{n}>0$, that $g\left(t\right)=\log\left(t\right)$ and to bound
the effect of various residual errors, such as gradients of non-support
vectors and the fact that the loss is only approximately exponential.
To do so, we substitute eq. \ref{eq: asymptotic form} into the gradient
descent dynamics (eq. \ref{eq: gradient descent linear}), with $\limitw=\hat{\mathbf{w}}$
being the max margin vector and $g(t)=\log t$. We then show that, except when certain degeneracies occur, 
the increment in the norm of $\boldsymbol{\rho}\left(t\right)$ is
bounded by $C_{1}t^{-\nu}$ for some $C_{1}>0$ and $\nu>1$, which
is a converging series. This happens because the increment in the
max margin term, $\hat{\mathbf{w}}\left[\log\left(t+1\right)-\log\left(t\right)\right]\approx\hat{\mathbf{w}}t^{-1}$,
cancels out the dominant $t^{-1}$ term in the gradient $-\nabla\mathcal{L}\left(\mathbf{w}\left(t\right)\right)$
(eq. \ref{eq: gradient exp} with $g\left(t\right)=\log\left(t\right)$
and $\limitw^{\top}\mathbf{x}_{n}=1$).

\paragraph{Degenerate and Non-Degenerate Data Sets}  An earlier conference version of this paper \citep{Soudry2017a} included a partial version of Theorem \ref{thm: main theorem}, which only applies to almost all data sets, in which case we can ensure the residual $\rho(t)$ is bounded.  This partial statement (for almost all data sets) is restated and proved as Theorem \ref{thm: main theorem almost everywhere} in Appendix \ref{sec:proof}.  It applies, \emph{e.g.}~with probability one for data sampled from any absolutely continuous distribution.  It does not apply  in ``degenerate'' cases where some of the support vectors $\x_n$ (for which $\what^\top \x_n=1$) are associated with dual variables that are zero ($\alpha_n=0$) in the dual optimum of \ref{eq: max margin vector}.  As we show in Appendix \ref{sec:alpha}, this only happens on measure zero data sets.  Here, we prove the more general result which applies for all data sets, including degenerate data sets.  To do so, in Theorem \ref{theorem: main2} in Appendix \ref{sec: proof of degenerate case} we provide a more complete characterization of the iterates $\w(t)$ that explicitly specifies all unbounded components even in the degenerate case.  We then prove the Theorem by plugging in this more complete characterization and showing that the residual is bounded, thus also establishing Theorem \ref{thm: main theorem}.

\paragraph{Parallel Work on the Degenerate Case} Following the publication of our initial version, and while preparing this revised version for publication, we learned of parallel work by Ziwei Ji and Matus Telgarsky that also closes this gap.  \cite{Ji2018}  provide an analysis of the degenerate case, establishing converges to the max margin predictor by showing that $\norm{ \frac{\w(t)}{\norm{\w(t)}} - \frac{\what}{\norm{\what}}}=O\left(\sqrt{\frac{\log \log t}{\log t}}\right)$.  Our analysis provides a more precise characterization of the iterates, and also shows the convergence is actually quadratically faster (see Section \ref{sec: convergence rates}).  However, Ji and Telgarsky go even further and provide a characterization also when the data is non-separable but $\w(t)$ still goes to infinity.

\paragraph{More Refined Analysis of the Residual}

In some non-degenerate cases, we can further characterize the asymptotic behavior of $\boldsymbol{\rho}\left(t\right)$.  To do so, we need to refer to the KKT conditions (eq.~\ref{eq:kkt})
of the SVM problem (eq.~\ref{eq: max margin vector}) and the associated
support vectors $\set=\mathrm{argmin}_{n}\what^{\top}\mathbf{x}_{n}$.  We then have the following Theorem, 
proved in Appendix \ref{sec:proof}:

\begin{restatable}{thmR}{LRasymptotic2}

\label{thm: refined Theorem} Under the conditions and notation of
Theorem \ref{thm: main theorem}, for almost all datasets, if in addition the support vectors
span the data (\emph{i.e.}~$\mathrm{rank}\left(\mathbf{X}_{\set}\right)=\mathrm{rank}\left(\mathbf{X}\right)$, 
where $\mathbf{X}_{\set}$ is a matrix whose columns are only those data points $\x_n$ s.t.~$\what^\top\x_n=1$), then $\lim_{t\rightarrow\infty}\boldsymbol{\rho}\left(t\right)=\tilde{\mathbf{w}}$,
where $\tilde{\mathbf{w}}$ is a solution to 
\begin{equation}
\forall n\in\set:\,\eta\exp\left(-\mathbf{x}_{n}^{\top}\tilde{\mathbf{w}}\right)=\alpha_{n}\,\label{eq: w tilde}
\end{equation}

\end{restatable}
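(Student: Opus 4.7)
The plan is to start from Theorem~\ref{thm: main theorem}, which for almost all datasets already guarantees that $\boldsymbol{\rho}(t)=\w(t)-\what\log t$ is bounded, and to treat the recursion it satisfies as a diminishing-step-size perturbed gradient descent on a strictly convex potential whose minimizer is exactly $\tilde\w$. Subtracting $\what(\log(t+1)-\log t)$ from both sides of the GD update gives
\[
\boldsymbol{\rho}(t+1)-\boldsymbol{\rho}(t) \;=\; \eta\sum_{n=1}^N\bigl(-\ell'(\w(t)^\top\x_n)\bigr)\x_n \;-\; \what\bigl(\log(t+1)-\log t\bigr),
\]
and the goal is to massage the right-hand side into the form $-(1/t)\nabla F(\boldsymbol{\rho}(t))+e(t)$ with a strictly convex $F$ and a summable residual $e(t)$.

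First I would expand each piece asymptotically using the two-sided bracketing in Definition~\ref{def: exponential tail}. For support vectors ($\what^\top\x_n=1$), combined with boundedness of $\boldsymbol{\rho}(t)$, the tight exponential tail gives $-\ell'(\w(t)^\top\x_n) = t^{-1}e^{-\boldsymbol{\rho}(t)^\top\x_n}\bigl(1+O(t^{-\mu})\bigr)$ for some $\mu>0$. For non-support vectors ($\what^\top\x_n>1$) the same assumption yields $-\ell'(\w(t)^\top\x_n)=O(t^{-\what^\top\x_n})$, which is summable. Combining these with $\log(t+1)-\log t = t^{-1}+O(t^{-2})$ and the KKT identity $\what=\sum_{n\in\set}\alpha_n\x_n$ from eq.~\ref{eq:kkt}, the recursion rewrites as
\[
\boldsymbol{\rho}(t+1)-\boldsymbol{\rho}(t) \;=\; -\tfrac{1}{t}\nabla F(\boldsymbol{\rho}(t)) + e(t),\qquad F(\boldsymbol{\rho}) := \eta\!\!\sum_{n\in\set}\!e^{-\boldsymbol{\rho}^\top\x_n} + \boldsymbol{\rho}^\top\what,
\]
with $\|e(t)\|=O(t^{-1-\nu})$ for some $\nu>0$, so that $\sum_t\|e(t)\|<\infty$.

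Second, a Polyak-style Lyapunov argument closes the proof. The Hessian of $F$ is $\eta\sum_{n\in\set}e^{-\boldsymbol{\rho}^\top\x_n}\x_n\x_n^\top$; the rank assumption $\mathrm{rank}(\mathbf{X}_\set)=\mathrm{rank}(\mathbf{X})$ together with boundedness of $\boldsymbol{\rho}(t)$ makes $F$ uniformly strongly convex on the data span $V=\mathrm{span}\{\x_n\}$ throughout the trajectory. Its unique in-span minimizer $\tilde\w$ satisfies $\nabla F(\tilde\w)=0$, which reduces termwise to eq.~\ref{eq: w tilde} because for almost all datasets meeting the rank condition the support vectors are linearly independent in $V$. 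The component of $\boldsymbol{\rho}(t)$ orthogonal to $V$ is frozen at its initial value (both $\what$ and every gradient of $\mathcal L$ lie in $V$) and can be absorbed into the non-uniqueness of $\tilde\w$. For the in-span part, plugging the expansion into $V_t:=\|P_V(\boldsymbol{\rho}(t)-\tilde\w)\|^2$ yields an inequality of the form $V_{t+1}\le(1-c/t)V_t+O(\|e(t)\|\sqrt{V_t})+O(t^{-2})$, and a standard Robbins--Siegmund-type lemma then drives $V_t\to 0$, giving $\boldsymbol{\rho}(t)\to\tilde\w$.

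The main obstacle is keeping every perturbation term integrable against the $1/t$ schedule. The two-sided ``$\le$ and $\ge$'' bracketing in Definition~\ref{def: exponential tail} is exactly what is needed: after the leading $1/t$ piece of the support-vector gradient cancels $\what(\log(t+1)-\log t)$ via the KKT identity, only a strictly faster-than-$1/t$ residual survives, making $e(t)$ summable. A merely one-sided tail bound would leave a residual of order $1/t$ that is not summable, and would block the Lyapunov step from upgrading asymptotic direction (already delivered by Theorem~\ref{thm: main theorem}) into an actual limit for $\boldsymbol{\rho}(t)$.
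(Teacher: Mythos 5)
Your proposal is correct, and it takes a genuinely different route from the paper. The paper proves this statement by continuing from the boundedness argument of Theorem \ref{thm: main theorem almost everywhere}: it first absorbs the frozen component $\bar{\mathbf{P}}_1\mathbf{r}(t)$ into $\tilde{\mathbf{w}}$ exactly as you do, but then relies on the dichotomy of Lemma \ref{lem: correlation bound} --- the increment $(\mathbf{r}(t+1)-\mathbf{r}(t))^{\top}\mathbf{r}(t)$ is summable in general and is at most $-C_2 t^{-1}$ whenever $\left\Vert \mathbf{P}_1\mathbf{r}(t)\right\Vert \geq\epsilon_1$ --- and closes with a set-theoretic excursion argument: the set $\mathcal{T}=\{t:\left\Vert \mathbf{r}(t)\right\Vert <\epsilon_1\}$ must be infinite (else $\left\Vert \mathbf{r}(t)\right\Vert ^2\to-\infty$), and any late excursion out of $\mathcal{T}$ can raise $\left\Vert \mathbf{r}(t)\right\Vert ^2$ only by a vanishing amount. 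You instead recast the residual dynamics as perturbed gradient descent with step $1/t$ on the convex potential $F(\boldsymbol{\rho})=\eta\sum_{n\in\set}e^{-\boldsymbol{\rho}^{\top}\x_n}+\boldsymbol{\rho}^{\top}\what$, whose in-span critical point is exactly the solution of eq.~\ref{eq: w tilde}, and you exploit uniform strong convexity of $F$ on $\mathrm{span}(\mathbf{X}_{\set})$ along the bounded trajectory to get a multiplicative contraction $V_{t+1}\le(1-c/t)V_t+(\text{summable})$, finishing with a standard Chung/Robbins--Siegmund lemma. Both arguments hinge on the same two ingredients (boundedness of $\boldsymbol{\rho}(t)$ from the earlier theorem, and the two-sided tail bracketing that makes the post-cancellation error $O(t^{-1-\nu})$ summable --- your remark that a one-sided bound would not suffice is exactly right). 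What your route buys is a cleaner, more modular mechanism --- it identifies $\tilde{\mathbf{w}}$ as the minimizer of an explicit Lyapunov function (consonant with how the paper's Lemma \ref{lem: existence of solutions} establishes existence of $\tilde{\mathbf{w}}$ via a convex potential) and yields, in principle, a quantitative rate for $V_t$. What the paper's route buys is economy: Lemma \ref{lem: correlation bound} is already needed to prove boundedness, so the marginal cost of the refinement is small, and no strong-convexity constant needs to be tracked. Two details you should make explicit in a full write-up: strong convexity must be invoked on the segment joining $\boldsymbol{\rho}(t)$ and $\tilde{\mathbf{w}}$ (both lie in a fixed bounded set, so the uniform Hessian lower bound $\eta e^{-C}\sigma_{\min}^2(\mathbf{X}_{\set})$ applies there), and the specific solution $\tilde{\mathbf{w}}$ to which $\boldsymbol{\rho}(t)$ converges is pinned down by the frozen orthogonal component, \emph{i.e.}~by $\bar{\mathbf{P}}_1(\tilde{\mathbf{w}}-\mathbf{w}(0))=0$, as the paper notes at the end of its proof.
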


\paragraph{Analogies with Boosting}  Perhaps most similar to our study is the line of work on understanding AdaBoost in terms its implicit bias toward large $L_1$-margin solutions, starting with the seminal work of \cite{schapire1998boosting}.  Since AdaBoost can be viewed as coordinate descent on the exponential loss of a linear model, these results can be interpreted as analyzing the bias of coordinate descent, rather then gradient descent, on a monotone decreasing loss with an exact exponential tail.  Indeed, with small enough step sizes, such a coordinate descent procedure does converge precisely to the maximum $L_1$-margin solution \citep{zhang2005boosting,telgarsky2013margins}.  In fact, \cite{telgarsky2013margins} also generalizes these results to other losses with tight exponential tails, similar to the class of losses we consider here.

Also related is the work of \citet{Rosset2004}.
They considered the regularization path $\w_{\lambda}=\arg\min\mathcal{L}(\w)+\lambda\norm{\w}_{p}^{p}$
for similar loss functions as we do, and showed that $\lim_{\lambda\rightarrow0}\w_{\lambda}/\norm{\w_{\lambda}}_{p}$
is proportional to the maximum $L_{p}$ margin solution.  That is, they showed how adding infinitesimal $L_p$ (\emph{e.g.}~$L_1$ and $L_2$) regularization to logistic-type losses gives rise to the corresponding max-margin predictor.\footnote{In contrast, with non-vanishing regularization (\emph{i.e.}, $\lambda>0$), $\arg\min_{\w}\mathcal{L}(\w)+\lambda\norm{\w}_{p}^{p}$ is generally \emph{not} a max margin solution.} However, \citeauthor{Rosset2004}
do not consider the effect of the optimization algorithm, and instead
add explicit regularization. Here we are specifically interested
in the bias implied by the algorithm {\em not} by adding (even
infinitesimal) explicit regularization.  We see that coordinate descent gives rise to the max $L_1$ margin predictor, while gradient descent gives rise to the max $L_2$ norm predictor.  In Section \ref{sec:other_opt} and in follow-up work \citep{gunasekar2018characterizing} we discuss also other optimization algorithms, and their implied biases.

\paragraph{Non-homogeneous linear predictors} In this paper we focused on homogeneous linear predictors of the form $\wvec^\top \x$, similarly to previous works (\emph{e.g.}, \citet{Rosset2004,telgarsky2013margins}). Specifically, we did not have the common intercept term: $\wvec^\top \x + b$. One may be tempted to introduce the intercept in the usual way, \emph{i.e.}, by extending all the input vectors $\x_n$  with an additional $'1'$ component. In this extended input space, naturally, all our results hold. Therefore, we converge in direction to the $L_2$ max margin solution (eq. \ref{eq: max margin vector}) in the extended space. However, if we translate this solution to the original $\x$ space  we obtain $$
\underset{\mathbf{\mathbf{w}}\in\mathbb{R}^{d},b\in\mathbb{R}}{\mathrm{argmin}}\left\lVert \mathbf{w}\right\rVert ^{2}+b^2\,\,\mathrm{s.t.}\,\,\mathbf{w}^{\top}\mathbf{x}_{n}+b\geq1, $$
which is not the $L_2$ max margin (SVM) solution
$$
\underset{\mathbf{\mathbf{w}}\in\mathbb{R}^{d},b\in\mathbb{R}}{\mathrm{argmin}}\left\lVert \mathbf{w}\right\rVert ^{2}\,\,\mathrm{s.t.}\,\,\mathbf{w}^{\top}\mathbf{x}_{n}+b\geq1,$$
\remove{What rule does $b$ have in this equation? In https://openreview.net/pdf?id=ByfbnsA9Km they said (p.4 eq. (P1)) that the svm solution with intercept should solve this equation with $\wvec^\top\xn+b\ge 1$}where we do not have a $b^2$ penalty in the objective\remove{As stated above, I don't think this should be the only difference}.

\section{Implications: Rates of convergence\label{sec: convergence rates}}

The solution in eq. \ref{eq: asymptotic form} implies that $\mathbf{w}\left(t\right)/\left\Vert \mathbf{w}\left(t\right)\right\Vert $
converges to the normalized max margin vector $\hat{\mathbf{w}}/\left\Vert \hat{\mathbf{w}}\right\Vert .$
Moreover, this convergence is very slow\textemdash{} logarithmic in
the number of iterations. Specifically, our results imply the following
tight rates of convergence:
\begin{restatable}{thmR}{ratetheorem}
\label{thm: rates} Under the conditions and notation of Theorem \ref{thm: main theorem}, for any linearly separable data set, the normalized weight vector converges to the normalized max margin vector
in $L_{2}$ norm 
\begin{equation}
\left\Vert \frac{\mathbf{w}\left(t\right)}{\left\Vert \mathbf{w}\left(t\right)\right\Vert }-\frac{\hat{\mathbf{w}}}{\left\Vert \hat{\mathbf{w}}\right\Vert }\right\Vert =O\left(\frac{\log\log t}{\log t}\right)\,,\label{eq: normalized weight vector}
\end{equation}
with this rate improving to $O(1/\log(t))$ for almost every dataset; and in angle 
\begin{equation}
1-\frac{\mathbf{w}\left(t\right)^{\top}\hat{\mathbf{w}}}{\left\Vert \mathbf{w}\left(t\right)\right\Vert \left\Vert \hat{\mathbf{w}}\right\Vert }=O\left(\left(\frac{\log\log t}{\log t}\right)^2\right)\,,\label{eq: angle}
\end{equation}
with this rate improving to $O(1/\log^2(t))$ for almost every dataset;
and the margin converges as 
\begin{equation}
\frac{1}{\left\Vert \hat{\mathbf{w}}\right\Vert }-\frac{\min_{n}\mathbf{x}_{n}^{\top}\mathbf{w}\left(t\right)}{\left\Vert \mathbf{w}\left(t\right)\right\Vert }=O\left(\frac{1}{\log t}\right)\,.\label{eq: margin}
\end{equation}
On the other hand, the loss itself decreases as
\begin{equation}
\mathcal{L}\left(\mathbf{w}\left(t\right)\right)=O\left({\frac{1}{t}}\right)\,.\label{eq: logistic loss convergence}
\end{equation}
\end{restatable}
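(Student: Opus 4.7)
The plan is to derive the four rates by combining the asymptotic decomposition $\mathbf{w}(t) = \hat{\mathbf{w}}\log t + \boldsymbol{\rho}(t)$ with $\|\boldsymbol{\rho}(t)\|=O(\log\log t)$ from Theorem~\ref{thm: main theorem} together with a direct analysis of the loss dynamics. The natural order is: first establish the loss rate~\eqref{eq: logistic loss convergence}; then use it to lower bound $\min_n \mathbf{x}_n^\top\mathbf{w}(t)$, which combined with an upper bound on $\|\mathbf{w}(t)\|$ yields the margin bound~\eqref{eq: margin}; finally plug the decomposition directly into the definitions for the $L_2$ and angular rates~\eqref{eq: normalized weight vector}--\eqref{eq: angle}.

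For the loss rate, I would establish a Polyak--{\L}ojasiewicz-type inequality: projecting $-\nabla\mathcal{L}(\mathbf{w})$ onto $\hat{\mathbf{w}}$ and using both $\hat{\mathbf{w}}^\top\mathbf{x}_n \geq 1$ and $-\ell'(u) \geq (1-o(1))\,\ell(u)$ (which follows from the tight exponential tail in Assumption~\ref{assum: exponential tail}) gives $\|\nabla\mathcal{L}(\mathbf{w}(t))\| \geq (1-o(1))\,\mathcal{L}(\mathbf{w}(t))/\|\hat{\mathbf{w}}\|$ for $t$ large. Combined with the standard smooth-descent inequality $\mathcal{L}(\mathbf{w}(t+1))\leq\mathcal{L}(\mathbf{w}(t))-(\eta/2)\|\nabla\mathcal{L}(\mathbf{w}(t))\|^2$, this yields the recursion $a_{t+1}\leq a_t - c\,a_t^2$ for $a_t:=\mathcal{L}(\mathbf{w}(t))$, which integrates to $a_t=O(1/t)$.

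For the margin rate, the loss bound immediately implies $\ell(\mathbf{x}_n^\top\mathbf{w}(t))\leq\mathcal{L}(\mathbf{w}(t))=O(1/t)$ for every $n$, and inverting the lower exponential tail of $\ell$ (which asymptotes to $\tfrac{1}{2}e^{-u}$) yields $\min_n \mathbf{x}_n^\top\mathbf{w}(t)\geq\log t - O(1)$. To obtain the sharp $O(1/\log t)$ rate one additionally needs the upper bound $\|\mathbf{w}(t)\|\leq\|\hat{\mathbf{w}}\|\log t + O(1)$. This is the main obstacle: the naive upper bound $\|\mathbf{w}(t)\|\leq\|\hat{\mathbf{w}}\|\log t + O(\|\boldsymbol{\rho}(t)\|) = \|\hat{\mathbf{w}}\|\log t + O(\log\log t)$ only delivers the weaker rate $O(\log\log t/\log t)$, so I would need to show that the projection $\hat{\mathbf{w}}^\top\boldsymbol{\rho}(t)$ stays bounded even when $\|\boldsymbol{\rho}(t)\|$ grows. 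This should follow from the finer characterization of $\mathbf{w}(t)$ in Theorem~\ref{theorem: main2}, which confines the unbounded components of $\boldsymbol{\rho}(t)$ to a subspace orthogonal to $\hat{\mathbf{w}}$. Given these two bounds on $\min_n \mathbf{x}_n^\top\mathbf{w}(t)$ and $\|\mathbf{w}(t)\|$, simple algebra on $1/\|\hat{\mathbf{w}}\|-\min_n \mathbf{x}_n^\top\mathbf{w}(t)/\|\mathbf{w}(t)\|$ delivers $O(1/\log t)$.

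For the $L_2$ and angular rates, substituting the decomposition and Taylor expanding
\[
\|\mathbf{w}(t)\| \;=\; \|\hat{\mathbf{w}}\|\log t\,\sqrt{1 + \tfrac{2\hat{\mathbf{w}}^\top\boldsymbol{\rho}(t)}{\|\hat{\mathbf{w}}\|^2\log t} + \tfrac{\|\boldsymbol{\rho}(t)\|^2}{\|\hat{\mathbf{w}}\|^2\log^2 t}}
\]
around $\|\hat{\mathbf{w}}\|\log t$ yields $\left\|\mathbf{w}(t)/\|\mathbf{w}(t)\| - \hat{\mathbf{w}}/\|\hat{\mathbf{w}}\|\right\| = O(\|\boldsymbol{\rho}(t)\|/\log t)$, which is $O(\log\log t/\log t)$ in general and sharpens to $O(1/\log t)$ whenever $\boldsymbol{\rho}(t)$ is bounded (almost all datasets). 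The angular rate then drops out of the identity $\|u-v\|^2 = 2(1 - u^\top v)$ for unit vectors $u,v$, making the angular gap exactly the square of the $L_2$ gap and hence $O((\log\log t/\log t)^2)$ in general, improving to $O(1/\log^2 t)$ for almost every dataset.
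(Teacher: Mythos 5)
Your proposal is correct, and for the two headline rates (eqs.~\ref{eq: normalized weight vector} and \ref{eq: angle}) it coincides with the paper's proof in Appendix \ref{sec:Calculation-of-convergence rates}: substitute $\mathbf{w}(t)=\hat{\mathbf{w}}\log t+\boldsymbol{\rho}(t)$, Taylor-expand $1/\norm{\mathbf{w}(t)}$, and use $\norm{\boldsymbol{\rho}(t)}=O(\log\log t)$ in general and $O(1)$ for almost every dataset (your observation that the angular gap is exactly half the squared $L_2$ gap for unit vectors is a slightly cleaner way to get eq.~\ref{eq: angle} than the paper's second direct expansion). Where you genuinely diverge is the loss rate: the paper obtains $\mathcal{L}(\mathbf{w}(t))=O(1/t)$ by plugging the decomposition into the upper exponential-tail bound, so that the support-vector terms contribute $t^{-1}\sum_{n\in\set}e^{-\boldsymbol{\rho}(t)^{\top}\mathbf{x}_n}$ and the rest is $O(t^{-\max(\theta,1+\mu_+)})$; you instead prove a Polyak--{\L}ojasiewicz-type bound $\norm{\nabla\mathcal{L}}\geq(1-o(1))\mathcal{L}/\norm{\hat{\mathbf{w}}}$ (valid since $-\ell'(u)\geq(1-o(1))\ell(u)$ under the tight exponential tail and all margins diverge by Lemma \ref{lem: convergence of linear classifiers}) and close with the descent lemma and the recursion $a_{t+1}\leq a_t-ca_t^2\Rightarrow a_t=O(1/t)$. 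Your route has the advantage of not using the decomposition at all, so it covers degenerate datasets with no extra work, whereas the paper must appeal to Theorem \ref{theorem: main2} to argue that the $O(\log\log t)$ correction has nonnegative inner product with all support vectors. For the margin, your derivation (invert the tail of $\ell$ to get $\min_n\mathbf{x}_n^{\top}\mathbf{w}(t)\geq\log t-O(1)$, then upper-bound $\norm{\mathbf{w}(t)}$ by $\norm{\hat{\mathbf{w}}}\log t+O(1)$) differs in packaging from the paper's direct expansion (eq.~\ref{eq: margin full}), but both hinge on the same key fact, which you correctly identified as the one nontrivial point: in the degenerate case the unbounded components $\hat{\mathbf{w}}_m\log^{\circ m}(t)$, $m\geq2$, of Theorem \ref{theorem: main2} lie in the range of $\bar{\mathbf{P}}_1$ and are therefore orthogonal to $\hat{\mathbf{w}}=\hat{\mathbf{w}}_1$ and to the support vectors in $\set_1$ attaining the minimum, so both $\hat{\mathbf{w}}^{\top}\boldsymbol{\rho}(t)$ and $\min_{n}\mathbf{x}_n^{\top}\boldsymbol{\rho}(t)$ over $\set_1$ remain bounded. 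With that ingredient supplied, all four rates follow as you describe.
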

All the rates in the above Theorem are a direct consequence of Theorem \ref{thm: main theorem}, except for avoiding the $\log \log t$ factor for the degenerate cases in eq. \ref{eq: margin} and eq. \ref{eq: logistic loss convergence} (\emph{i.e.}, establishing that the rates  $1/\log t$ and $1/t$ always hold)---this additional improvement is a consequence of the more complete characterization of Theorem \ref{theorem: main2}.  Full details are provided in Appendix \ref{sec:Calculation-of-convergence rates}.  In this appendix, we also provide a simple construction showing all the rates in Theorem \ref{thm: rates} are tight (except possibly for the $\log\log t$ factors).  

The sharp contrast between the tight logarithmic and $1/t$ rates in Theorem \ref{thm: rates} implies that the convergence
of $\mathbf{w}(t)$ to the max-margin $\what$ can be logarithmic
in the loss itself, and we might need to wait until the loss is exponentially
small in order to be close to the max-margin solution. This can help
explain why continuing to optimize the training loss, even after the
training error is zero and the training loss is extremely small, still
improves generalization performance\textemdash our results suggests
that the margin could still be improving significantly in this regime.

A numerical illustration of the convergence is depicted in Figure
\ref{fig:Synthetic-dataset}. As predicted by the theory, the norm
$\left\Vert \mathbf{w}(t)\right\Vert $ grows logarithmically (note
the semi-log scaling), and $\w(t)$ converges to the max-margin separator,
but only logarithmically, while the loss itself decreases very rapidly
(note the log-log scaling).

An important practical consequence of our theory, is that although
the margin of $\mathbf{w}(t)$ keeps improving, and so we can expect
the population (or test) misclassification error of $\mathbf{w}(t)$
to improve for many datasets, the same cannot be said about the expected
population loss (or test loss)! At the limit, the direction of $\mathbf{w}(t)$
will converge toward the max margin predictor $\what$. Although $\what$
has zero training error, it will not generally have zero misclassification
error on the population, or on a test or a validation set. Since the
norm of $\mathbf{w}(t)$ will increase, if we use the logistic loss
or any other convex loss, the loss incurred on those misclassified
points will also increase. More formally, consider the logistic loss
$\ell(u)=\log(1+e^{-u})$ and define also the hinge-at-zero loss $h(u)=\max(0,-u)$.
Since $\what$ classifies all training points correctly, we have that
on the training set $\sum_{n=1}^{N}h(\what^{\top}\x_{n})=0$. However,
on the population we would expect some errors and so $\mathbb{E}[h(\what^{\top}\x)]>0$.
Since $\mathbf{w}(t)\approx\what\log t$ and $\ell(\alpha u)\rightarrow\alpha h(u)$
as $\alpha\rightarrow\infty$, we have: 
\begin{equation}
\mathbb{E}[\ell(\mathbf{w}(t)^{\top}\x)]\approx\mathbb{E}[\ell((\log t)\what^{\top}\x)]\approx(\log t)\mathbb{E}[h(\what^{\top}\x)]=\Omega(\log t).\label{eq:poploss}
\end{equation}
That is, the population loss increases logarithmically while the margin
and the population misclassification error improve. Roughly speaking,
the improvement in misclassification does not out-weight the increase
in the loss of those points still misclassified.

The increase in the test loss is practically important because the
loss on a validation set is frequently used to monitor progress and
decide on stopping. Similar to the population loss, the validation
loss will increase
logarithmically with $t$, if there is at least one sample in the validation set which is classified incorrectly by the max margin vector (since we would not expect zero validation
error). More precisely, as a direct consequence of Theorem \ref{thm: main theorem} (as shown on Appendix \ref{sec:Calculation-of-convergence rates}):
\begin{corR}
Let $\ell$ be the logistic loss, and $\mathcal{V}$ be an independent validation set, for which $\exists \x\in \mathcal{V}$ such that $\x^{\top} \what < 0$. Then the validation loss increases as
\[\mathcal{L}_{\mathrm{val}}\left(\mathbf{w}\left(t\right)\right)=\sum_{\x\in\mathcal{V}}\ell\left(\mathbf{w}\left(t\right)^{\top}\mathbf{x}\right)=\Omega(\log (t) ).\]
\end{corR}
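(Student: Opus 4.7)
The plan is to observe that the corollary follows almost immediately from the asymptotic expansion provided by Theorem \ref{thm: main theorem}: the loss contribution of just the single witness point $\x$ already grows like $\Omega(\log t)$, and since $\ell$ is non-negative, this single term lower-bounds the whole validation sum.

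Concretely, I would first invoke Theorem \ref{thm: main theorem} to write $\w(t) = \hat{\w}\log t + \boldsymbol{\rho}(t)$ with $\|\boldsymbol{\rho}(t)\| = O(\log\log t)$. Fix the witness $\x\in\mathcal{V}$ with $\x^{\top}\hat{\w}<0$ and set $\gamma := -\hat{\w}^{\top}\x>0$. Substituting the expansion and applying Cauchy--Schwarz,
\[
\w(t)^{\top}\x \;=\; -\gamma\log t + \boldsymbol{\rho}(t)^{\top}\x \;\leq\; -\gamma\log t + \|\boldsymbol{\rho}(t)\|\,\|\x\| \;=\; -\gamma\log t + O(\log\log t),
\]
so in particular $\w(t)^{\top}\x \to -\infty$.

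Second, I would use the elementary inequality $\ell(u)=\log(1+e^{-u})\geq\log(e^{-u})=-u$, valid for every $u\in\mathbb{R}$. Combined with the bound above, this yields
\[
\ell\bigl(\w(t)^{\top}\x\bigr) \;\geq\; -\w(t)^{\top}\x \;\geq\; \gamma\log t - \|\boldsymbol{\rho}(t)\|\,\|\x\| \;=\; \gamma\log t - O(\log\log t),
\]
which is $\Omega(\log t)$. Finally, since $\ell\geq 0$, dropping the remaining non-negative validation summands gives $\mathcal{L}_{\mathrm{val}}(\w(t)) \geq \ell(\w(t)^{\top}\x) = \Omega(\log t)$.

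There is no real obstacle here; the argument is essentially a one-line consequence of Theorem \ref{thm: main theorem} together with the asymptotic linearity $\log(1+e^{-u})\sim -u$ as $u\to-\infty$. The only sanity check worth making explicit is that the $O(\log\log t)$ residual truly contributes a lower-order term compared with the $\gamma\log t$ leading behavior, which is immediate. Note that this argument uses only the boundedness of $\|\boldsymbol{\rho}(t)\|$ up to $O(\log\log t)$, so it works in both the generic and degenerate regimes covered by Theorem \ref{thm: main theorem}.
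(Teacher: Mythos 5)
Your proposal is correct and follows essentially the same route as the paper's Appendix \ref{sec:Calculation-of-convergence rates}: restrict to the single misclassified validation point, substitute $\w(t)=\what\log t+\boldsymbol{\rho}(t)$, and lower-bound $\log(1+e^{-u})\geq -u$ (the paper obtains this by factoring out $e^{-u}$ inside the logarithm, which is the same inequality). The only cosmetic difference is that you explicitly invoke Cauchy--Schwarz and the $O(\log\log t)$ bound on the residual to confirm it is lower order, which the paper leaves implicit.
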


This behavior might cause us to think we are over-fitting or otherwise
encourage us to stop the optimization. However, this increase does not
actually represent the model getting worse, merely $\norm{\mathbf{w}(t)}$
getting larger, and in fact the model might be getting better (increasing the margin and possibly decreasing the error rate).

\begin{figure}[t!]
\begin{centering}
\includegraphics[width=1\columnwidth]{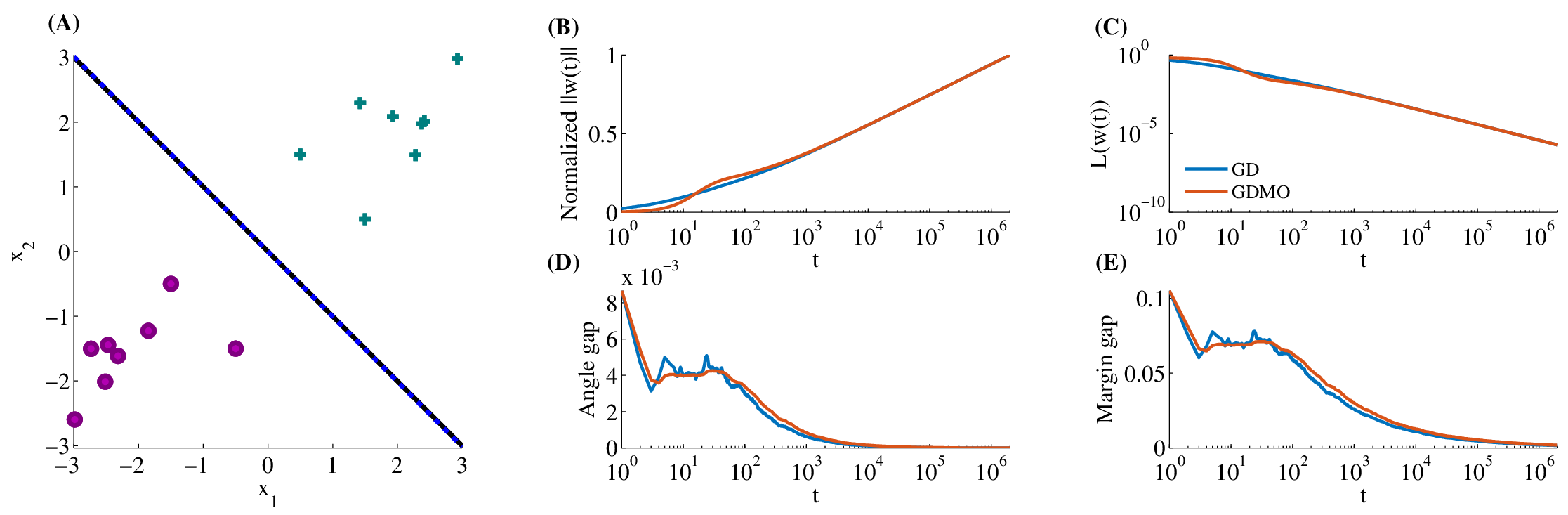} 
\par\end{centering}
\caption{Visualization of or main results on a synthetic dataset in which the
$L_{2}$ max margin vector $\hat{\mathbf{w}}$ is precisely known.
\textbf{(A)} The dataset (positive and negatives samples ($y=\pm1$)
are respectively denoted by $'+'$ and $'\circ'$), max margin separating
hyperplane (black line), and the asymptotic solution of GD (dashed
blue). For both GD and GD with momentum (GDMO), we show: \textbf{(B)
}The norm of $\mathbf{w}\left(t\right)$, normalized so it would equal
to $1$ at the last iteration, to facilitate comparison. As expected
(eq. \ref{eq: asymptotic form}), the norm increases logarithmically;
\textbf{(C) }the training loss. As expected, it decreases as $t^{-1}$
(eq. \ref{eq: logistic loss convergence}); and \textbf{(D\&E) }the
angle and margin gap of $\mathbf{w}\left(t\right)$ from $\hat{\mathbf{w}}$
(eqs. \ref{eq: angle} and \ref{eq: margin}). As expected, these
are logarithmically decreasing to zero. \textbf{Implementation details:}
The dataset includes four support vectors: $\mathbf{x}_{1}=\left(0.5,1.5\right),\mathbf{x}_{2}=\left(1.5,0.5\right)$
with $y_{1}=y_{2}=1$, and $\mathbf{x}_{3}=-\mathbf{x}_{1}$, $\mathbf{x}_{4}=-\mathbf{x}_{2}$
with $y_{3}=y_{4}=-1$ (the $L_{2}$ normalized max margin vector
is then $\hat{\mathbf{w}}=\left(1,1\right)/\sqrt{2}$ with margin
equal to $\sqrt{2}$ ), and $12$ other random datapoints ($6$ from
each class), that are not on the margin. We used a learning rate $\eta=1/\sigma^2_{\max}\left(\mathbf{X}\right)$,
where $\sigma^2_{\max}\left(\mathbf{X}\right)$ is the maximal singular
value of $\mathbf{X}$, momentum $\gamma=0.9$ for GDMO, and initialized
at the origin. \label{fig:Synthetic-dataset}}
\end{figure}

\section{Extensions}

\subsection{Multi-Class Classification with Cross-Entropy Loss}

So far, we have discussed the problem of binary classification, but in many practical situations, we have more than two classes. For
multi-class problems, the labels are the
class indices $y_{n}\in[K]\triangleq\left\{ 1,\dots,K\right\}$ and we learn a predictor
$\mathbf{w}_{k}$ for each class $k\in[K]$. A common loss function in multi-class classification is the following cross-entropy loss with
a softmax output, which is a generalization of the logistic loss:
\begin{align}\label{eq: multi loss}
\mathcal{L}\left(\{\mathbf{w}_{k}\}_{k\in[K]}\right) & =-\sum_{n=1}^{N}\log\left(\frac{\exp\left(\mathbf{w}_{y_{n}}^{\top}\mathbf{x}_{n}\right)}{\sum_{k=1}^{K}\exp\left(\mathbf{w}_{k}^{\top}\mathbf{x}_{n}\right)}\right)
\end{align}
What do the linear predictors $\mathbf{w}_{k}(t)$ converge to if
we minimize the cross-entropy loss by gradient descent on the predictors?
In Appendix \ref{sec:Softmax-output-with-cross-entropy-loss} we analyze
this problem for separable data and show that again, the predictors
diverge to infinity and the loss converges to zero. Next, to answer to which direction do these predictors converge, we define $\hat{\vect{w}}_k$ as the solution of the $K$-class SVM:
	\begin{equation} \label{eq: K-class SVM}
	\argmin_{\vect{w}_1,...,\vect{w}_K}\sumk ||\vect{w}_k||^2\,\textrm{s.t.}\,\forall n,\forall k \ne y_n : \vect{w}_{y_n}^\top\vect{x}_n \ge \vect{w}_k^\top\vect{x}_n+1,
	\end{equation} 
for each $k \in [K]$, define $\mathcal{S}_k=\arg \min_n(\hat{\mathbf{w}}_{y_n}-\hat{\mathbf{w}}_k)^\top\mathbf{x}_n 
=\{n :(\hat{\mathbf{w}}_{y_n}-\hat{\mathbf{w}}_k)^\top\mathbf{x}_n  =1\}
$, i.e., the $k^{th}$ class support vectors, and define $\alpha_{n,k}$ as some positive dual variables for $\mathcal{S}_k$ that together satisfy the $K$-class SVM KKT conditions. Using these definitions, we prove the following Theorem:
\begin{restatable}{thmR}{mainMulti}
\label{thm:mainMulti}
	For all multiclass datasets which are linearly separable (i.e.~the constraints in eq.~\ref{eq: K-class SVM} below are feasible) and for which the equation
     \begin{equation}
    \label{equation:w-tilde-defining-condition}
    \forall k \in [K], \forall n \in \mathcal{S}_k:\,\eta\exp\left(-\mathbf{x}_{n}^{\top}\left(\tilde{\mathbf{w}}_{y_n}-\tilde{\mathbf{w}}_{k}\right)\right)=\alpha_{n, k},
    \end{equation}
    has a solution $\{{\tilde{\mathbf{w}}_k}\}_{k=1}^K$, the following holds: for any starting point $\wvec(0)$ and any small enough stepsize, the iterates of gradient descent on eq. \ref{eq: multi loss} will behave as: 
	\begin{equation} \label{eq:5}
	\vect{w}_k(t) = \hat{\vect{w}}_k \log(t)+\bm{\rho}_k(t),
	\end{equation}
	where the residual $\bm{\rho}_k(t)$ is bounded.
\end{restatable}

Note that here we had to assume eq. \ref{equation:w-tilde-defining-condition} has a solution. In the binary case, we could prove that this equation has a solution for almost every dataset. In the original version of this manuscript, we incorrectly assumed that this proof in the binary case carries to the multiclass case (as was pointed to us by Yutong Wang). We therefore added the assumption that eq. \ref{equation:w-tilde-defining-condition} has a solution. We conjecture this assumption should also be true for almost all datasets in the multiclass case (see Appendix H in \cite{Ravi2024}), but we leave this proof for future work.


\subsection{Deep networks}

\begin{figure}
\begin{centering}
\begin{tabular}{ccc}
\includegraphics[width=0.31\columnwidth]{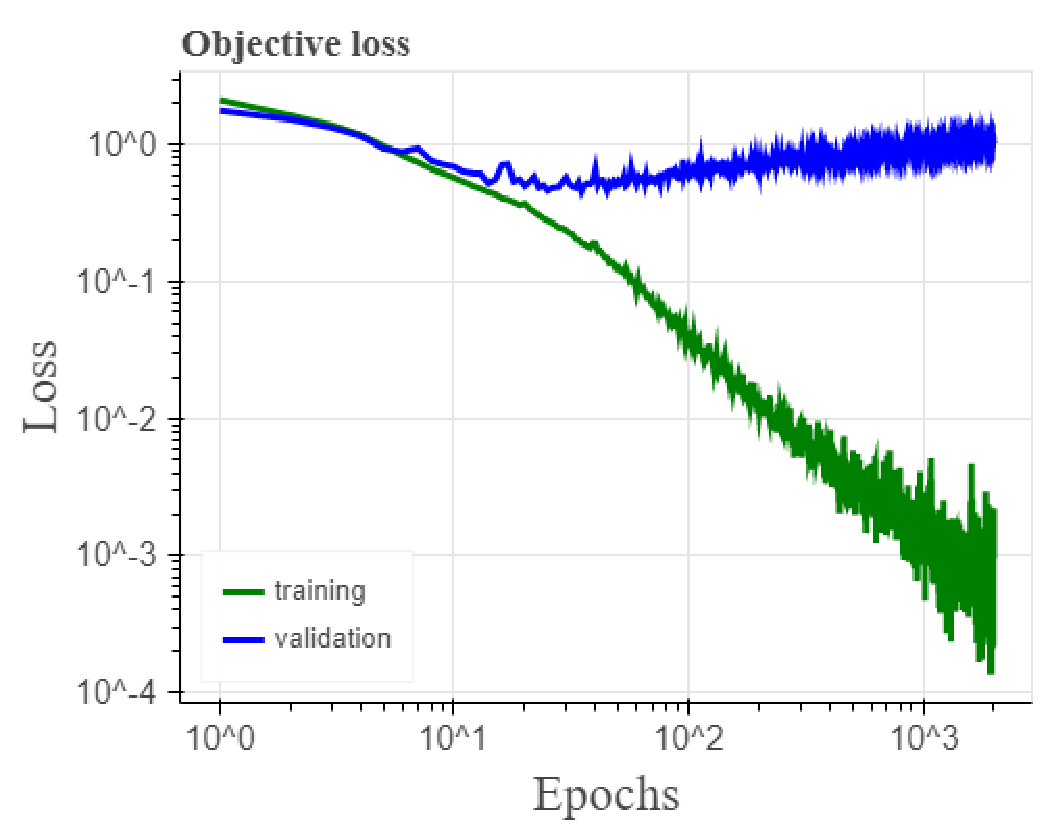}  & \includegraphics[width=0.31\columnwidth]{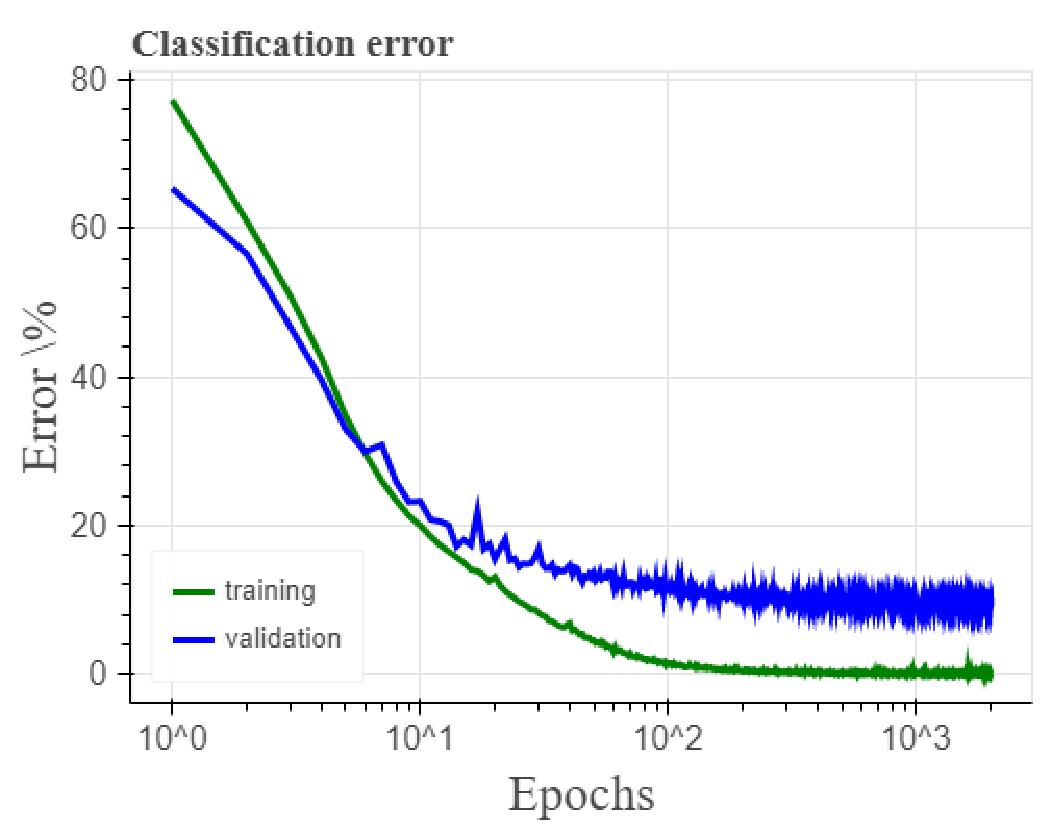}  & \includegraphics[width=0.31\columnwidth]{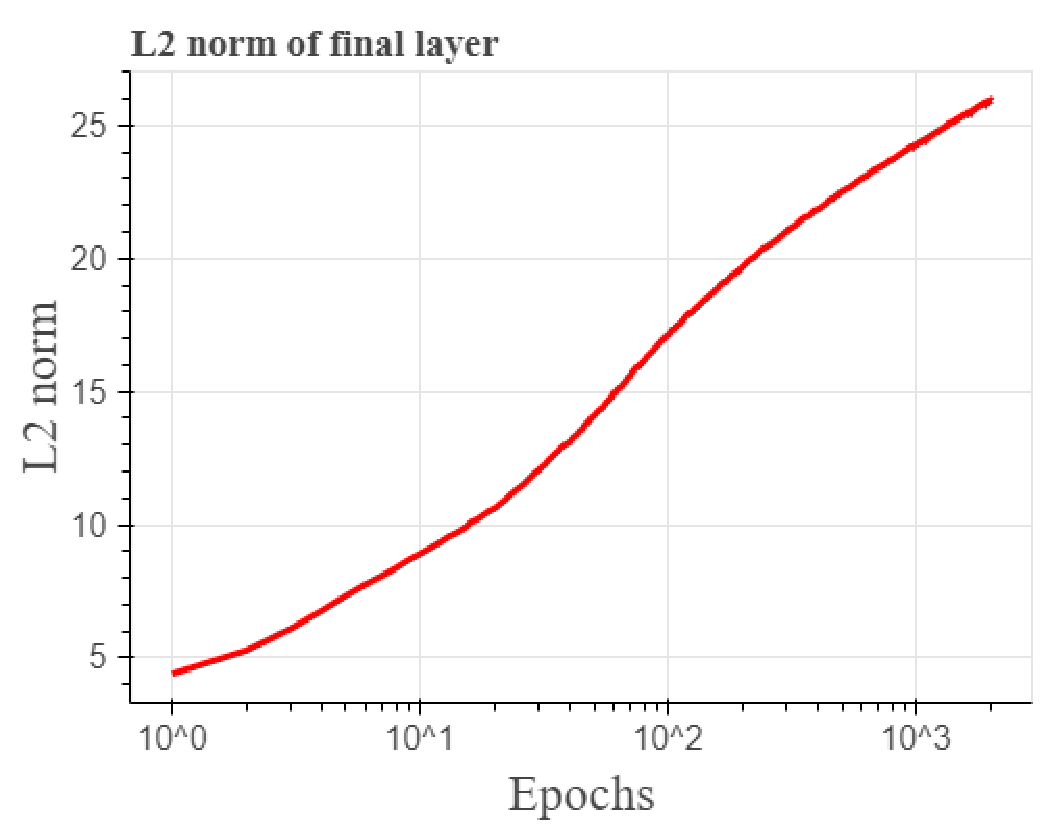}\tabularnewline
\end{tabular}
\par\end{centering}
\caption{Training of a convolutional neural network on CIFAR10 using stochastic
gradient descent with constant learning rate and momentum, softmax
output and a cross entropy loss, where we achieve $8.3\%$ final validation
error. We observe that, approximately: (1) The training loss decays
as a $t^{-1}$, (2) the $L_{2}$ norm of last weight layer increases
logarithmically, (3) after a while, the validation loss starts to
increase, and (4) in contrast, the validation (classification) error
slowly improves. \label{fig: DNN results}}
\end{figure}

So far we have only considered linear prediction. Naturally, it is
desirable to generalize our results also to non-linear models and
especially multi-layer neural networks.

Even without a formal extension and description of the precise bias,
our results already shed light on how minimizing the cross-entropy
loss with gradient descent can have a margin maximizing effect, how
the margin might improve only logarithmically slow, and why it might
continue to improve even as the validation loss increases. These effects
are demonstrated in Figure \ref{fig: DNN results} and Table \ref{tab:Sample-value-dnn}
which portray typical training of a convolutional neural network using
unregularized gradient descent\footnote{Code available here: \url{ https://github.com/paper-submissions/MaxMargin}}.
As can be seen, the norm of the weight increases, but the validation
error continues decreasing, albeit very slowly (as predicted by the
theory), even after the training error is zero and the training loss
is extremely small. We can now understand how even though the loss
is already extremely small, some sort of margin might be gradually
improving as we continue optimizing. We can also observe how the validation
loss increases despite the validation error decreasing, as discussed
in Section \ref{sec: convergence rates}.

As an initial advance toward tackling deep network, we can point out
that for several special cases, our results may be directly applied to
multi-layered networks. First, somewhat trivially, our results
may be applied directly to the last weight layer of a neural network if the last
hidden layer becomes fixed and linearly separable after a certain
number of iterations. This can become true, either approximately,
if the input to the last hidden layer is normalized (\emph{e.g.},
using batch norm), or exactly, if the last hidden layer is quantized
\citep{Hubara2016}. 

\begin{table}
\begin{centering}
\begin{tabular}{|c|c|c|c|c|c|c|}
\hline 
Epoch  & 50  & 100  & 200  & 400  & 2000  & 4000\tabularnewline
\hline 
\hline 
$L_{2}$ norm  & 13.6  & 16.5  & 19.6  & 20.3  & 25.9  & 27.54\tabularnewline
\hline 
Train loss  & 0.1  & 0.03  & 0.02  & 0.002  & $10^{-4}$  & $3\cdot10^{-5}$\tabularnewline
\hline 
Train error  & 4\%  & 1.2\%  & 0.6\%  & 0.07\%  & 0\%  & 0\%\tabularnewline
\hline 
Validation loss  & 0.52  & 0.55  & 0.77  & 0.77  & 1.01  & 1.18\tabularnewline
\hline 
Validation error  & 12.4\%  & 10.4\%  & 11.1\%  & 9.1\%  & 8.92\%  & 8.9\% \tabularnewline
\hline 
\end{tabular}
\par\end{centering}
\caption{Sample values from various epochs in the experiment depicted in Fig.
\ref{fig: DNN results}. \label{tab:Sample-value-dnn}}
\end{table}

Second, as we show next, our results may be applied exactly on deep networks if only a single weight layer is being optimized, and, furthermore, after a sufficient number of iterations, the activation units stop switching and the training error goes to zero. 

\begin{corR}
We examine a multilayer neural network with component-wise ReLU functions
$f\left(z\right)=\max\left[z,0\right]$, and weights $\left\{ \mathbf{W}_{l}\right\} _{l=1}^{L}$.
Given input $\mathbf{x}_{n}$ and target $y_{n}\in\left\{ -1,1\right\} $,
the DNN produces a scalar output 
\[
u_{n}=\mathbf{W}_{L}f\left(\mathbf{W}_{L-1}f\left(\cdots\mathbf{W}_{2}f\left(\mathbf{W}_{1}\mathbf{x}_{n}\right)\right)\right)
\]
 and has loss $\ell\left(y_{n}u_{n}\right)$, where $\ell$ obeys
assumptions \ref{assum: loss properties} and \ref{assum: exponential tail}. 

If we optimize a single weight layer $\mathbf{w}_{l}=\mathrm{vec}\left(\mathbf{W}_{l}^{\top}\right)$ using gradient
descent, so that $\mathcal{L}\left(\mathbf{w}_{l}\right)=\mathcal{\sum}_{n=1}^{N}\ell\left(y_{n}u_{n}(\mathbf{w}_{l})\right)$
converges to zero, and $\exists t_{0}$ such that $\forall t>t_{0}$
the ReLU inputs do not switch signs, then $\mathbf{w}_{l}(t)/\norm{\mathbf{w}_{l}(t)}$
converges to 
\[
\underset{\mathbf{w}_{l}}{\hat{\mathbf{w}}_{l}=\mathrm{argmin}}\left\lVert \mathbf{w}_{l}\right\rVert ^{2}\,\,\mathrm{s.t.}\,\,y_{n}u_{n}(\mathbf{w}_{l})\geq1.
\]
\end{corR}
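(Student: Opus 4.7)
The plan is to reduce this multilayer setting to the linear case already handled by Theorem \ref{thm: main theorem} by exploiting the fact that once the ReLU activation pattern is frozen, the network output $u_n$ becomes a \emph{linear} function of the optimized layer $\mathbf{w}_l$. For $t>t_0$, let $\mathbf{D}_{n,j}$ denote the diagonal $0/1$ matrix encoding which units of hidden layer $j$ are active on input $\mathbf{x}_n$ (these matrices are constant in $t$ by assumption). Since $\mathbf{W}_l$ appears exactly once in the product defining $u_n$, we can write
\[
u_n \;=\; \left(\prod_{j>l}\mathbf{W}_j \mathbf{D}_{n,j-1}\right)\mathbf{W}_l\left(\mathbf{D}_{n,l-1}\prod_{j<l}\mathbf{W}_j\mathbf{D}_{n,j-1}\right)\mathbf{x}_n \;=\; \tilde{\mathbf{x}}_n^{\top}\mathbf{w}_l,
\]
where $\tilde{\mathbf{x}}_n$ depends only on the fixed layers $\{\mathbf{W}_j\}_{j\neq l}$, the fixed activation patterns, and $\mathbf{x}_n$ (and the flattening convention $\mathbf{w}_l=\mathrm{vec}(\mathbf{W}_l^{\top})$).

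Next I would absorb the labels by defining $\mathbf{z}_n = y_n \tilde{\mathbf{x}}_n$, so that for all $t>t_0$ the loss reduces to $\mathcal{L}(\mathbf{w}_l)=\sum_{n=1}^N \ell(\mathbf{w}_l^{\top}\mathbf{z}_n)$, which exactly matches the form of eq.~\ref{eq: general loss functions}. The key observation is that because the ReLU signs do not switch for $t>t_0$, a small perturbation of $\mathbf{w}_l$ does not change any activation pattern, so $\nabla_{\mathbf{w}_l}\mathcal{L}$ computed through the network equals the gradient of the linearized loss. Hence gradient descent on the original nonlinear loss, restricted to $t>t_0$, coincides with gradient descent on a linear model with dataset $\{\mathbf{z}_n\}$ initialized at $\mathbf{w}_l(t_0)$. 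Since by hypothesis $\mathcal{L}\to 0$ and $\ell$ is strictly decreasing with $\lim_{u\to\infty}\ell(u)=0$, we must have $\mathbf{w}_l(t)^{\top}\mathbf{z}_n \to \infty$ for every $n$, which establishes linear separability of $\{\mathbf{z}_n\}$ (Assumption \ref{assum: Linear sepereability} in the effective space).

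I would then invoke Theorem \ref{thm: main theorem} directly on this reduced linear problem. Its conclusion gives $\mathbf{w}_l(t)/\|\mathbf{w}_l(t)\|\to \hat{\mathbf{w}}_l/\|\hat{\mathbf{w}}_l\|$, where $\hat{\mathbf{w}}_l = \mathrm{argmin}\,\|\mathbf{w}_l\|^2$ subject to $\mathbf{w}_l^{\top}\mathbf{z}_n \geq 1$ for all $n$. Rewriting the constraints in terms of the network output gives $y_n \mathbf{w}_l^{\top}\tilde{\mathbf{x}}_n = y_n u_n(\mathbf{w}_l)\geq 1$, which is exactly the SVM stated in the corollary.

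The main obstacle is matching the step-size / smoothness requirement of Theorem \ref{thm: main theorem} to the effective dataset $\mathbf{Z}$, since the corollary assumes only that the original GD converges. Because convergence to zero is assumed, however, the iterates must eventually enter the asymptotic regime in which $\ell$ is well-approximated by its exponential tail, and the standard smoothness argument can be run locally around the trajectory; alternatively, the corollary can be read as requiring the step size to additionally satisfy $\eta<2\beta^{-1}\sigma_{\max}^{-2}(\mathbf{Z})$, which is automatic for all sufficiently small $\eta$. A secondary issue is that the assumption ``inputs do not switch signs'' must be interpreted as strict (bounded away from zero on the training set), so that the linearization is valid in an open neighborhood of the trajectory and the gradient formula above is unambiguous.
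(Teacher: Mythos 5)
Your proposal is correct and follows essentially the same route as the paper: freeze the activation pattern via diagonal $0/1$ matrices, observe that $u_n$ is then linear in $\mathbf{w}_l$ so the loss for $t>t_0$ reduces to the generic linear form of eq.~\ref{eq: general loss functions}, deduce separability of the effective dataset from $\mathcal{L}\to 0$, and invoke Theorem \ref{thm: main theorem}. Your added remarks on the step-size condition for the effective data matrix and on interpreting the non-switching assumption strictly are reasonable clarifications of points the paper leaves implicit, but they do not change the argument.
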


\begin{proof}
We examine the output of the network given a single input \textbf{$\mathbf{x}_n$},
for\textbf{ $t>t_{0}$. }Since the ReLU inputs do not switch signs,
we can write $\mathbf{v}_{l}$, the output of layer $l$, as
\[
\mathbf{v}_{l,n}=\prod_{m=1}^{l}\mathbf{A}_{m,n}\mathbf{W}_{m}\mathbf{x}_n\,,
\]
where we defined $\mathbf{A}_{l,n}$ for $l<L$ as a diagonal 0-1 matrix, which diagonal is the ReLU slopes at layer $l$, sample $n$, and $\mathbf{A}_{L,n}=1$. Additionally,
we define
\[
\boldsymbol{\delta}_{l,n}=\mathbf{A}_{l,n}\prod_{m=L}^{l+1}\mathbf{W}_{m}^{\top}\mathbf{A}_{m,n}\,;\,\tilde{\mathbf{x}}_{l,n}=\boldsymbol{\delta}_{l,n}\otimes\mathbf{u}_{l-1,n}\,.
\]
Using this notation we can write 
\begin{equation} \label{eq: DNN_output}
u_n(\mathbf{w}_{l})=v_{L,n}=\prod_{m=1}^{L}\mathbf{A}_{m,n}\mathbf{W}_{m}\mathbf{x}_n=\boldsymbol{\delta}_{l,n}^{\top}\mathbf{W}_{l}\mathbf{u}_{l-1,n}=\tilde{\mathbf{x}}_{l,n}^{\top}\mathbf{w}_{l}\,.
\end{equation}
This implies that
\[
\mathcal{L}(\wvec_l)=\sumn \ell\left(y_n u_n(\mathbf{w}_{l})\right)=\sumn \ell\left(y_n\tilde{\mathbf{x}}_{l,n}^{\top}\mathbf{w}_{l}\right),
\]
which is the same as the original linear problem. Since the loss converges to zero, the dataset $\{\tilde{\mathbf{x}}_{l,n},y_n\}_{n=1}^N$ must be linearly separable. Applying Theorem \ref{thm: main theorem}, and recalling that $u(\mathbf{w}_{l})=\tilde{\mathbf{x}}_{l}^{\top}\mathbf{w}_{l}$ from eq. \ref{eq: DNN_output}, we prove this corollary.
\end{proof}

Importantly, this case is non-convex, unless we are optimizing the last layer. Note we assumed ReLU functions for simplicity, but this proof can be easily generalized for any other piecewise linear constant activation functions (\emph{e.g.},
leaky ReLU, max-pooling).

Lastly, in a follow-up work \citep{Gunasekar2018b}, given a few additional assumptions, extended our results to linear predictors which can be written as a homogeneous polynomial in the parameters. These results seem to indicate that, in many cases, GD operating on exp-tailed loss with positively homogeneous predictors aims to a specific direction. This is the direction of the max margin predictor minimizing the $L_2$ norm in the parameter space. It is not yet clear how to generally translate such an implicit bias in the parameter space to the implicit bias in the predictor space \textemdash{} except in special cases, such as deep linear neural nets, as we have shown in \citep{Gunasekar2018b}. Moreover, in non-linear neural nets, there are many equivalent max-margin solutions which minimize the $L_2$ norm of the parameters. Therefore, it is natural to expect that GD would have additional implicit biases, which select a specific subset of these solutions.

\subsection{Other optimization methods \label{sec:other_opt}}

\begin{figure}
\begin{centering}
\includegraphics[width=1\columnwidth]{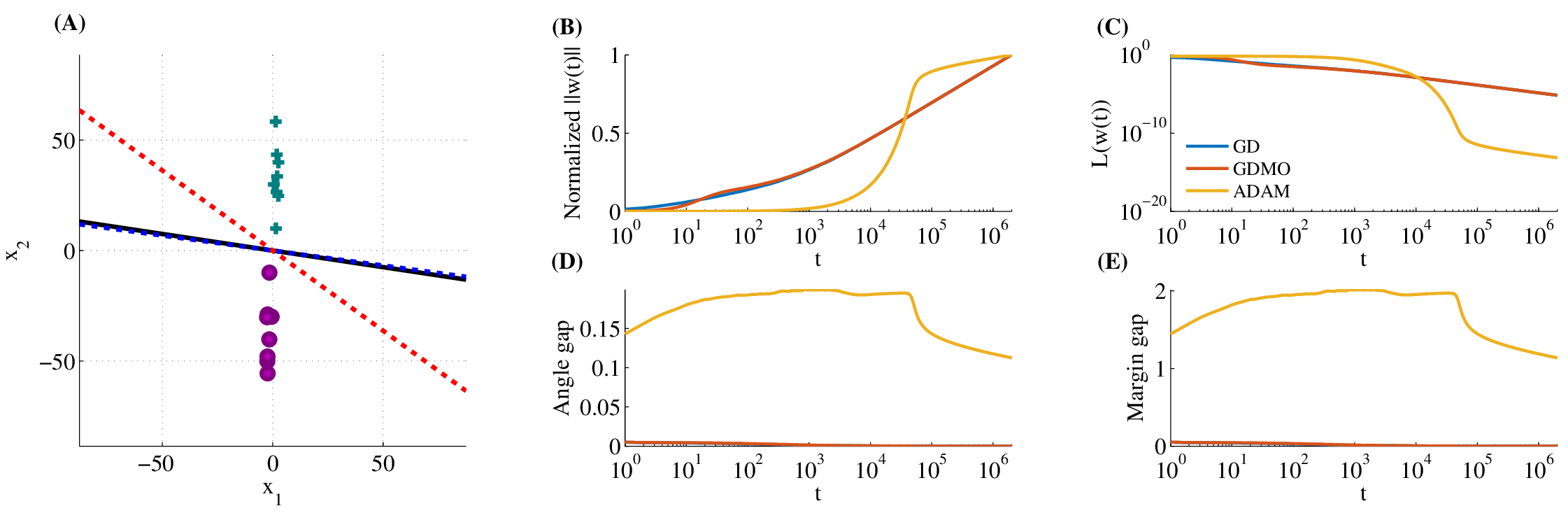} 
\par\end{centering}
\caption{Same as Fig. \ref{fig:Synthetic-dataset}, except we multiplied all
$x_{2}$ values in the dastaset by $20$, and also train using ADAM.
The final weight vector produced after $2\cdot10^{6}$ epochs of optimization
using ADAM (red dashed line) does not converge to L2 max margin solution
(black line), in contrast to GD (blue dashed line), or GDMO.\label{fig:Synthetic-dataset-adam}}
\end{figure}

In this paper we examined the implicit bias of gradient descent. Different
optimization algorithms exhibit different biases, and understanding
these biases and how they differ is crucial to understanding and constructing
learning methods attuned to the inductive biases we expect. Can we
characterize the implicit bias and convergence rate in other optimization
methods?

In Figure \ref{fig:Synthetic-dataset} we see that adding momentum
does not qualitatively affect the bias induced by gradient descent.
In Figure \ref{fig: SGD} in Appendix \ref{sec:Additional-Figures}
we also repeat the experiment using stochastic gradient descent, and
observe a similar asymptotic bias (this was later proved in \citet{Nacson2018b}). This is consistent with the fact that momentum,
acceleration and stochasticity do not change the bias when using gradient
descent to optimize an under determined least squares problem. It
would be beneficial, though, to rigorously understand how much we
can generalize our result to gradient descent variants, and how the
convergence rates might change in these cases.

On the other hand, as an example of how changing the optimization algorithm does change the bias, consider adaptive methods, such as AdaGrad \citep{duchi2011adaptive}
and ADAM \citep{Kingma2015}.
In Figure \ref{fig:Synthetic-dataset-adam} we show the predictors
obtained by ADAM and by gradient descent on a simple data set. Both
methods converge to zero training error solutions. But although gradient
descent converges to the $L_{2}$ max margin predictor, as predicted
by our theory, ADAM does not. The implicit bias of adaptive methods
has in fact been a recent topic of interest, with \citet{Hoffer2017a} and
\citet{Wilson2017} suggesting they lead to worse generalization, and \cite{Wilson2017} providing examples of the differences in the bias for linear regression problems with the squared loss.  Can we characterize the bias of adaptive methods for logistic regression problems?  Can we characterize the bias of other optimization methods, providing a general understanding linking optimization algorithms with their biases?

In a follow-up paper \citep{gunasekar2018characterizing}
provided initial answers to these questions. \citet{gunasekar2018characterizing} derived a precise characterization of the limit direction of steepest descent for general norms when optimizing the exp-loss, and show that for adaptive methods such as Adagrad the limit direction can depend on the initial point and step size and is thus not as predictable and robust as with non-adaptive methods.

\subsection{Other loss functions}
In this work we focused on loss functions with exponential tail and observed a very slow, logarithmic convergence of the normalized weight vector to the $L_2$ max margin direction. A natural question that follows is how does this behavior change with types of loss function tails. Specifically, does the normalized weight vector always converge to the $L_2$ max margin solution? How is the convergence rate affected? Can we improve the convergence rate beyond the logarithmic rate found in this work?

In a follow-up work \cite{Nacson2018} provided partial answers to these questions. They proved that the exponential tail has the optimal convergence rate, for tails for which $\ell^{\prime}(u)$ is of the form $\exp(-u^{\nu})$ with $\nu>0.25$. They then conjectured, based on heuristic analysis, that the exponential tail is optimal among all possible tails. Furthermore, they demonstrated that polynomial or heavier tails do not converge to the max margin solution. Lastly, for the exponential loss they proposed a normalized gradient scheme which can significantly improve convergence rate, achieving $O(\log(t)/\sqrt{t})$.

\subsection{Matrix Factorization}

With multi-layered neural networks in mind, \citet{Gunasekar2017}
recently embarked on a study of the implicit bias of under-determined
matrix factorization problems, where the {\em squared
loss} of the linear observation of a matrix is minimized by gradient descent on its
factorization. Since a matrix factorization can be viewed as a two-layer
network with linear activations, this is perhaps the simplest deep
model one can study in full, and can thus provide insight and direction
to studying more complex neural networks. \citeauthor{Gunasekar2017}
conjectured, and provided theoretical and empirical evidence, that
gradient descent on the factorization for an under-determined problem
converges to the minimum nuclear norm solution, but only if the initialization
is infinitesimally close to zero and the step-sizes are infinitesimally
small. With finite step-sizes or finite initialization, \citeauthor{Gunasekar2017}
could not characterize the bias. 

The follow-up paper \citep{gunasekar2018characterizing} studied this same problem with exponential loss instead of squared loss. Under additional assumptions on the asymptotic convergence of update directions and gradient directions, they were able to relate the direction of gradient descent iterates on the factorized parameterization asymptotically to the  maximum margin solution with unit nuclear norm. Unlike the case of squared loss, the result for exponential loss are independent of initialization and with only mild  conditions on the step size. Here again, we see the asymptotic nature of exponential loss on separable data nullifying the initialization effects thereby making the analysis simpler compared to squared loss. 
\remove{
Beyond
the practical relevance of the logistic loss, taking our approach
has the advantage that because of its asymptotic nature, it does not
depend on the initialization and step-size. It thus might prove easier
to analyze logistic regression on a matrix factorization instead of
the least square problem, providing significant insight into the implicit
biases of gradient descent on non-convex multi-layered optimization. \dnote{maybe add forward citation to ICML paper here?}
}

\section{Summary}

We characterized the implicit bias induced by gradient descent on homogeneous linear predictors when 
minimizing smooth monotone loss functions with an exponential tail.
This is the type of loss commonly being minimized in deep learning.
We can now rigorously understand: 
\begin{enumerate}
\item How gradient descent, without early stopping, induces implicit $L_{2}$
regularization and converges to the maximum $L_{2}$ margin solution,
when minimizing for binary classification with logistic loss, exp-loss, or other exponential tailed monotone
decreasing loss, as well as for multi-class classification with cross-entropy loss. Notably, even though the logistic loss and the exp-loss behave very different on non-separable problems, they exhibit the same behaviour for separable problems. This implies that the non-tail
part does not affect the bias. The bias is also independent of the step-size
used (as long as it is small enough to ensure convergence) and  is also independent on the initialization (unlike for least square problems). 
\item The convergence of the direction of gradient descent updates to the maximum $L_2$ margin solution, however is  very slow compared to the convergence of training loss, which explains why it is worthwhile
continuing to optimize long after we have zero training error, and
even when the loss itself is already extremely small. 
\item We should not rely on plateauing  of the training loss or on the loss (logistic or exp or cross-entropy)  evaluated on a validation data, as measures to decide when to stop. Instead, we should look at the $0$--$1$ error on the validation dataset. We might improve the validation and test errors even when  when the decrease in the training loss is tiny and even when the validation loss itself increases. 
\end{enumerate}
Perhaps that gradient descent leads to a max $L_{2}$ margin solution
is not a big surprise to those for whom the connection between $L_{2}$
regularization and gradient descent is natural. Nevertheless, we are
not familiar with any prior study or mention of this fact, let alone
a rigorous analysis and study of how this bias is exact and independent
of the initial point and the step-size. Furthermore, we also analyze
the rate at which this happens, leading to the novel observations
discussed above. Even more importantly, we hope that our analysis
can open the door to further analysis of different optimization methods
or in different models, including deep networks, where implicit regularization
is not well understood even for least square problems, or where we
do not have such a natural guess as for gradient descent on linear
problems. Analyzing gradient descent on logistic/cross-entropy loss
is not only arguably more relevant than the least square loss, but
might also be technically easier.

\section*{Acknowledgments}

The authors are grateful to J. Lee, and C. Zeno for
helpful comments on the manuscript. The research of DS was supported
by the Israel Science Foundation (grant No. 31/1031), by the Taub foundation and of NS by the National Science Foundation.

\appendix

\part*{\newpage{}Appendix}

\section{Proof of Theorems \ref{thm: main theorem} and \ref{thm: refined Theorem} for almost every dataset\label{sec:proof}}

In the following sub-sections we first prove Theorem \ref{thm: main theorem almost everywhere} below, which is a version of Theorem \ref{thm: main theorem}, specialized for almost every dataset. We then prove Theorem \ref{thm: refined Theorem} (which is already stated for almost every dataset).

\begin{restatable}{thmR}{LRasymptotic_ae}

\label{thm: main theorem almost everywhere} For almost every dataset which is linearly separable (Assumption
\ref{assum: Linear sepereability}), any $\beta$-smooth decreasing
loss function (Assumption \ref{assum: loss properties}) with an exponential
tail (Assumption \ref{assum: exponential tail}), any stepsize $\eta<2\beta^{-1}\sigma_{\max}^{-2}\left(\text{\ensuremath{\mathbf{X}} }\right)$
and any starting point $\w(0)$, the gradient descent iterates (as in eq.~\ref{eq: gradient descent linear}) will behave as: 
\begin{equation}
\mathbf{w}\left(t\right)=\hat{\mathbf{w}}\log t+\boldsymbol{\rho}\left(t\right)\,,
\end{equation}
where $\hat{\mathbf{w}}$ is the $L_{2}$ max margin vector 
\[
\hat{\mathbf{w}}=\underset{\mathbf{\mathbf{w}}\in\mathbb{R}^{d}}{\mathrm{argmin}}\left\lVert \mathbf{w}\right\rVert ^{2}\,\,\mathrm{s.t.}\,\,\forall n:\,\mathbf{w}^{\top}\mathbf{x}_{n}\geq1,
\]
the residual $\rho(t)$ is bounded, and so 
\[
\lim_{t\rightarrow\infty}\frac{\mathbf{w}\left(t\right)}{\left\Vert \mathbf{w}\left(t\right)\right\Vert }=\frac{\hat{\mathbf{w}}}{\left\Vert \hat{\mathbf{w}}\right\Vert }.
\]
\end{restatable}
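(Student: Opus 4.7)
The plan is to substitute the ansatz $\mathbf{w}(t) = \hat{\mathbf{w}}\log t + \boldsymbol{\rho}(t)$ into the gradient descent recursion (\ref{eq: gradient descent linear}) and show that $\boldsymbol{\rho}(t)$ is bounded by proving that its increments form an absolutely summable sequence. Rearranging the update gives
\[
\boldsymbol{\rho}(t+1)-\boldsymbol{\rho}(t) \;=\; -\eta\,\nabla\mathcal{L}(\mathbf{w}(t)) \;-\; \hat{\mathbf{w}}\log\!\bigl(1+1/t\bigr).
\]
Under Assumption \ref{assum: exponential tail} with the normalization $a=c=1$, the negative gradient equals $\sum_n \bigl(e^{-\mathbf{w}(t)^\top\mathbf{x}_n}+\epsilon_n(t)\bigr)\mathbf{x}_n$, where Definition \ref{def: exponential tail} gives $|\epsilon_n(t)| = O\!\bigl(e^{-(1+\mu)\mathbf{w}(t)^\top\mathbf{x}_n}\bigr)$ for some $\mu>0$. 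Writing $\mathbf{w}(t)^\top\mathbf{x}_n = (\log t)\,\hat{\mathbf{w}}^\top\mathbf{x}_n + \boldsymbol{\rho}(t)^\top\mathbf{x}_n$ splits the sum into a support-vector part (where $\hat{\mathbf{w}}^\top\mathbf{x}_n = 1$, contributing at scale $1/t$) and a non-support part (where $\hat{\mathbf{w}}^\top\mathbf{x}_n \ge 1+\delta$ for some $\delta>0$, contributing $O(t^{-(1+\delta)})$, already summable).

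The next step is to use strict complementarity, which holds for almost every dataset, to pin down the exact leading-order cancellation. For generic data the KKT duals $\alpha_n$ from (\ref{eq:kkt}) satisfy $\alpha_n>0$ precisely when $n\in\mathcal{S}$, so the system $\eta\,e^{-\mathbf{x}_n^\top\tilde{\mathbf{w}}} = \alpha_n$ for $n\in\mathcal{S}$ (equation \ref{eq: w tilde}) is feasible, and is uniquely solvable on $\mathrm{span}(\mathbf{X}_{\mathcal{S}})$. Combined with the KKT stationarity $\hat{\mathbf{w}} = \sum_{n\in\mathcal{S}}\alpha_n\mathbf{x}_n$, this gives the key identity
\[
\eta\sum_{n\in\mathcal{S}} e^{-\mathbf{x}_n^\top\tilde{\mathbf{w}}}\,\mathbf{x}_n \;=\; \hat{\mathbf{w}},
\]
so that when $\boldsymbol{\rho}(t)$ is replaced by $\tilde{\mathbf{w}}$ the support-vector gradient contribution is exactly $\hat{\mathbf{w}}/t$, matching the leading piece of $\hat{\mathbf{w}}\log(1+1/t) = \hat{\mathbf{w}}/t - O(1/t^2)$. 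A first-order Taylor expansion of $e^{-\mathbf{x}_n^\top\boldsymbol{\rho}(t)}-e^{-\mathbf{x}_n^\top\tilde{\mathbf{w}}}$ around $\tilde{\mathbf{w}}$ then reveals that the next-order term is linear in $\mathbf{r}(t):=\boldsymbol{\rho}(t)-\tilde{\mathbf{w}}$ and, crucially, acts as a contraction toward $\tilde{\mathbf{w}}$ because the matrix $M := \sum_{n\in\mathcal{S}} e^{-\mathbf{x}_n^\top\tilde{\mathbf{w}}}\mathbf{x}_n\mathbf{x}_n^\top$ is positive semidefinite.

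To turn this into a bound I would set up a Lyapunov argument on $\|\mathbf{r}(t)\|^2$. The remaining subspaces are handled separately: on $\mathrm{span}(\mathbf{X})^\perp$ the gradient vanishes and $\boldsymbol{\rho}(t)$ is constant, while on $\mathrm{span}(\mathbf{X})\cap\mathrm{span}(\mathbf{X}_{\mathcal{S}})^\perp$ only non-support vectors contribute, so $\boldsymbol{\rho}(t)$ there converges by the summable-tail estimate. Expanding $\|\mathbf{r}(t+1)\|^2 - \|\mathbf{r}(t)\|^2 = 2\mathbf{r}(t)^\top\Delta\mathbf{r}(t) + \|\Delta\mathbf{r}(t)\|^2$, the cross-term is dominated by $-2(\eta/t)\mathbf{r}(t)^\top M\mathbf{r}(t) \le 0$ plus an aggregate of the $O(1/t^2)$ Taylor remainder of $\log(1+1/t)$, the exponential-tail error $\epsilon_n(t)$, and the non-support gradient contributions, while the quadratic term is itself $O(1/t^2)\|\mathbf{r}(t)\|^2$. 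Collecting, one obtains an inequality of the form $\|\mathbf{r}(t+1)\|^2 - \|\mathbf{r}(t)\|^2 \le C\,t^{-(1+\nu)}$ for some $\nu>0$ on any region where $\|\mathbf{r}(t)\|$ is bounded; telescoping then yields $\|\mathbf{r}(t)\|^2 = O(1)$, hence $\boldsymbol{\rho}(t)$ is bounded and $\mathbf{w}(t)/\|\mathbf{w}(t)\| \to \hat{\mathbf{w}}/\|\hat{\mathbf{w}}\|$.

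The main obstacle I anticipate is a circular dependency in the last step: the constant $C$ depends on how large $\|\mathbf{r}(t)\|$ is (through the exponential factors), so the clean summable estimate is really an a priori bound valid inside some ball. I would close this by a bootstrap argument\textemdash choosing a threshold $t_0$ large enough (using Lemma \ref{lem: convergence of linear classifiers} to arrange that $\mathbf{w}(t)^\top\mathbf{x}_n$ is already large) and a radius $R$ so that the tail sum $\sum_{t\ge t_0} C(R)\,t^{-(1+\nu)}$ is at most $R^2/2$, then inductively verifying $\|\mathbf{r}(t)\| \le R$ for all $t\ge t_0$. The ``almost every dataset'' hypothesis enters the argument only through strict complementarity; that the degenerate set (where some $\alpha_n = 0$ for $n\in\mathcal{S}$) has measure zero is the content of Appendix \ref{sec:alpha}, and in the degenerate case the leading cancellation requires the refined ansatz of Theorem \ref{theorem: main2} in Appendix \ref{sec: proof of degenerate case}. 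Finally, Theorem \ref{thm: refined Theorem} follows from the same estimate: the additional hypothesis $\mathrm{rank}(\mathbf{X}_{\mathcal{S}}) = \mathrm{rank}(\mathbf{X})$ eliminates the orthogonal subspace, and the positive definiteness of $M$ on $\mathrm{span}(\mathbf{X}_{\mathcal{S}})$ upgrades boundedness of $\mathbf{r}(t)$ to $\mathbf{r}(t)\to 0$, giving $\boldsymbol{\rho}(t)\to\tilde{\mathbf{w}}$.
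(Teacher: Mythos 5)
Your overall architecture is exactly the paper's: the same ansatz $\w(t)=\what\log t+\boldsymbol{\rho}(t)$, the same residual $\rvec(t)=\boldsymbol{\rho}(t)-\wtilde$ with $\wtilde$ defined through strict complementarity and the identity $\eta\sum_{n\in\set}e^{-\x_n^\top\wtilde}\x_n=\what$, the same splitting into support vectors, non-support vectors, and the orthogonal subspaces, and the same Lyapunov expansion of $\norm{\rvec(t+1)}^2$ with the quadratic term controlled by $\sum_t\norm{\nabla\mathcal{L}(\w(t))}^2<\infty$. The one place where your argument genuinely diverges\textemdash and where it has a gap\textemdash is the treatment of the support-vector cross term. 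You linearize $e^{-\x_n^\top\boldsymbol{\rho}(t)}-e^{-\x_n^\top\wtilde}$ around $\wtilde$ to extract $-\tfrac{\eta}{t}\rvec(t)^\top M\rvec(t)$ plus a remainder, which forces the remainder constants to depend on $\norm{\rvec(t)}$ and creates the circularity you flag. Your proposed bootstrap does not clearly close: the radius $R$ must be at least $\norm{\rvec(t_0)}$, and before the theorem is proved you only know $\norm{\rvec(t_0)}=o(t_0)$, while the remainder constant $C(R)$ grows exponentially in $R$ through the factors $e^{|\x_n^\top\rvec|}$; there is no guarantee that $C(R(t_0))\sum_{t\ge t_0}t^{-(1+\nu)}$ can be made smaller than the available slack, since the exponential growth of $C$ in $R(t_0)$ can overwhelm the polynomial decay $t_0^{-\nu}$.

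The paper avoids this entirely by never Taylor-expanding. The exact support-vector contribution is $\tfrac{\eta}{t}\sum_{n\in\set}e^{-\wtilde^\top\x_n}\left(e^{-\x_n^\top\rvec(t)}-1\right)\x_n^\top\rvec(t)$, and the elementary inequality $z\left(e^{-z}-1\right)\le0$, valid for \emph{all} $z\in\mathbb{R}$, makes this nonpositive with no assumption on $\norm{\rvec(t)}$; similarly $ze^{-z}\le1$ bounds the non-support contribution by $O(t^{-\theta})$ uniformly in $\rvec(t)$. These unconditional bounds are what let the telescoping argument go through directly, with no a priori ball and no bootstrap. (The remaining work in the paper's Lemma \ref{lem: correlation bound} is to absorb the multiplicative tail corrections $(1\pm e^{-\mu_{\pm}u})$ from Assumption \ref{assum: exponential tail}, which is done by a case analysis on $|\x_n^\top\rvec(t)|$ versus $t^{-0.5\mu_{\pm}}$; your additive error term $\epsilon_n(t)$ glosses over the regime where $\x_n^\top\rvec(t)$ is large and negative, where the correction is itself exponentially large in $|\x_n^\top\rvec(t)|$ and must be dominated by the negative main term rather than bounded in absolute value.) If you replace the linearization by these global scalar inequalities, your argument becomes the paper's proof.
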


In the following proofs, for any solution $\mathbf{w}\left(t\right)$,
we define 
\[
\mathbf{r}\left(t\right)=\mathbf{w}\left(t\right)-\hat{\mathbf{w}}\log t-\tilde{\mathbf{w}},
\]
where $\hat{\mathbf{w}}$ and $\tilde{\mathbf{w}}$ follow the conditions
of Theorems \ref{thm: main theorem} and \ref{thm: refined Theorem},
\emph{i.e.} $\hat{\mathbf{w}}$ is the $L_{2}$ is the max margin vector defined above, and $\tilde{\mathbf{w}}$ is a vector which satisfies eq. \ref{eq: w tilde}:
\begin{equation}
\forall n\in\set:\,\eta\exp\left(-\mathbf{x}_{n}^{\top}\tilde{\mathbf{w}}\right)=\alpha_{n}\,,\label{eq:w_tilde2}
\end{equation}
where we recall that we denoted $\mathbf{X}_{\mathcal{\set}}\in\mathbb{R}^{d\times\left|\set\right|}$
as the matrix whose columns are the support vectors, a subset $\set\subset\left\{ 1,\dots,N\right\} $
of the columns of $\mathbf{X}=\left[\mathbf{x}_{1},\dots,\mathbf{x}_{N}\right]\in\mathbb{R}^{d\times N}$.

In Lemma \ref{lem: alpha} (Appendix \ref{sec:alpha}) we prove that
for almost every dataset $\boldsymbol{\alpha}$ is uniquely defined,
there are no more then $d$ support vectors and $\alpha_{n}\neq0$,
$\forall n\in\set$. Therefore, eq. \ref{eq:w_tilde2} is well-defined
in those cases. If the support vectors do not span the data, then
the solution $\tilde{\mathbf{w}}$ to eq. \ref{eq:w_tilde2} might
not be unique. In this case, we can use any such solution in the proof.

We furthermore denote the minimum margin to a non-support vector as:
\begin{equation}
\theta=\min_{n\notin\set}\mathbf{x}_{n}^{\top}\hat{\mathbf{w}}>1\,,\label{eq: v1 SVM}
\end{equation}
and by $C_{i}$,$\epsilon_{i}$,$t_{i}$ (\textbf{$i\in\mathbb{N}$})
various positive constants which are independent of $t$. Lastly,
we define $\mathbf{P}_{1}\in\mathbb{R}^{d\times d}$ as the orthogonal
projection matrix\footnote{This matrix can be written as $\mathbf{P}_{1}=\mathbf{X}_{\set}\mathbf{X}_{\set}^{+}$,
where $\mathbf{M}^{\pmpi}$ is the Moore-Penrose pseudoinverse of
$\mathbf{M}$. } to the subspace spanned by the support vectors (the columns of $\mathbf{X}_{\set}$),
and $\bar{\mathbf{P}}_{1}=\mathbf{I}-\mathbf{P}_{1}$ as the complementary
projection (to the left nullspace of $\mathbf{X}_{\set}$).

\subsection{Simple proof of Theorem \ref{thm: main theorem almost everywhere}}

In this section we first examine the special case that $\ell\left(u\right)=e^{-u}$
and take the continuous time limit of gradient descent: $\eta\rightarrow0$
, so 
\[
\dot{\mathbf{w}}\left(t\right)=-\nabla\mathcal{L}\left(\mathbf{w}\left(t\right)\right)\,.
\]
The proof in this case is rather short and self-contained (\emph{i.e.}, does
not rely on any previous results), and so it helps to clarify the
main ideas of the general (more complicated) proof which we will give
in the next sections.

Recall we defined 
\begin{equation}
\mathbf{r}\left(t\right)=\mathbf{w}\left(t\right)-\log\left(t\right)\hat{\mathbf{w}}-\tilde{\mathbf{w}}\,.\label{eq: r definition}
\end{equation}
Our goal is to show that $\left\Vert \mathbf{r}\left(t\right)\right\Vert $
is bounded, and therefore $\boldsymbol{\rho}\left(t\right)=\mathbf{r}\left(t\right)+\tilde{\mathbf{w}}$
is bounded. Eq. \ref{eq: r definition} implies that 
\begin{equation}
\dot{\mathbf{r}}\left(t\right)=\dot{\mathbf{w}}\left(t\right)-\frac{1}{t}\hat{\mathbf{w}}=-\nabla\mathcal{L}\left(\mathbf{w}\left(t\right)\right)-\frac{1}{t}\hat{\mathbf{w}}\label{eq: r dot}
\end{equation}
and therefore 
\begin{align}
 & \frac{1}{2}\frac{d}{dt}\left\Vert \mathbf{r}\left(t\right)\right\Vert ^{2}=\dot{\mathbf{r}}^{\top}\left(t\right)\mathbf{r}\left(t\right)\nonumber \\
 & =\sum_{n=1}^{N}\exp\left(-\mathbf{x}_{n}^{\top}\mathbf{w}\left(t\right)\right)\mathbf{x}_{n}^{\top}\mathbf{r}\left(t\right)-\frac{1}{t}\hat{\mathbf{w}}^{\top}\mathbf{r}\left(t\right)\nonumber \\
 & =\left[\sum_{n\in\set}\exp\left(-\log\left(t\right)\hat{\mathbf{w}}^{\top}\mathbf{x}_{n}-\tilde{\mathbf{w}}^{\top}\mathbf{x}_{n}-\mathbf{x}_{n}^{\top}\mathbf{r}\left(t\right)\right)\mathbf{x}_{n}^{\top}\mathbf{r}\left(t\right)-\frac{1}{t}\hat{\mathbf{w}}^{\top}\mathbf{r}\left(t\right)\right]\nonumber \\
 & +\left[\sum_{n\not\notin\set}\exp\left(-\log\left(t\right)\hat{\mathbf{w}}^{\top}\mathbf{x}_{n}-\tilde{\mathbf{w}}^{\top}\mathbf{x}_{n}-\mathbf{x}_{n}^{\top}\mathbf{r}\left(t\right)\right)\mathbf{x}_{n}^{\top}\mathbf{r}\left(t\right)\right]\text{,}\label{eq: dr^2/dt}
\end{align}
where in the last equality we used eq. \ref{eq: r definition} and
decomposed the sum over support vectors $\set$ and non-support vectors.
We examine both bracketed terms.
Recall that $\hat{\mathbf{w}}^{\top}\mathbf{x}_{n}=1$ for $n\in\set$,
and that we defined (in eq. \ref{eq:w_tilde2}) \textbf{$\tilde{\mathbf{w}}$
}so that $\sum_{n\in\set}\exp\left(-\tilde{\mathbf{w}}^{\top}\mathbf{x}_{n}\right)\mathbf{x}_{n}=\hat{\mathbf{w}}$.
Thus, the first bracketed term in eq. \ref{eq: dr^2/dt} can be written
as 
\begin{align}
 & \frac{1}{t}\sum_{n\in\set}\exp\left(-\tilde{\mathbf{w}}^{\top}\mathbf{x}_{n}-\mathbf{x}_{n}^{\top}\mathbf{r}\left(t\right)\right)\mathbf{x}_{n}^{\top}\mathbf{r}\left(t\right)-\frac{1}{t}\sum_{n\in\set}\exp\left(-\tilde{\mathbf{w}}^{\top}\mathbf{x}_{n}\right)\mathbf{x}_{n}^\top\mathbf{r}\left(t\right)\nonumber \\
= & \frac{1}{t}\sum_{n\in\set}\exp\left(-\tilde{\mathbf{w}}^{\top}\mathbf{x}_{n}\right)\left(\exp\left(-\mathbf{x}_{n}^{\top}\mathbf{r}\left(t\right)\right)-1\right)\mathbf{x}_{n}^{\top}\mathbf{r}\left(t\right)\leq0,\label{eq: dr/dt S}
\end{align}
since $\forall z,\;z\left(e^{-z}-1\right)\leq0$. Furthermore, since $\forall z\;e^{-z}z\leq1$
and $\theta=\mathrm{argmin}_{n\notin\set}\mathbf{x}_{n}^{\top}\hat{\mathbf{w}}>1$
(eq. \ref{eq: v1 SVM}), the second bracketed term in eq. \ref{eq: dr^2/dt}
can be upper bounded by 
\begin{align}
\sum_{n\not\notin\set}\exp\left(-\log\left(t\right)\hat{\mathbf{w}}^{\top}\mathbf{x}_{n}-\tilde{\mathbf{w}}^{\top}\mathbf{x}_{n}\right)\exp\left(-\mathbf{x}_{n}^\top \mathbf{r}(t)\right)\mathbf{x}_{n}^\top \mathbf{r}(t) & \leq\frac{1}{t^{\theta}}\sum_{n\not\notin\set}\exp\left(-\tilde{\mathbf{w}}^{\top}\mathbf{x}_{n}\right)\,.\label{eq: dr/dt non-S}
\end{align}
Substituting eq. \ref{eq: dr/dt S} and \ref{eq: dr/dt non-S} into
eq. \ref{eq: dr^2/dt} and integrating, we obtain, that $\exists C,C^{\prime}$
such that 
\[
\forall t_{1},\forall t>t_{1}:\left\Vert \mathbf{r}\left(t\right)\right\Vert ^{2}-||\mathbf{r}(t_{1})||^{2}\leq C\int_{t_{1}}^{t}\frac{dt}{t^{\theta}}\leq C^{\prime}<\infty\,,
\]

since $\theta>1$ (eq. \ref{eq: v1 SVM}). Thus, we showed that $\mathbf{r}(t)$
is bounded, which completes the proof for the special case. $\blacksquare$

\subsection{Complete proof of Theorem \ref{thm: main theorem almost everywhere} \label{sec:Proof-of-Theorem}}

Next, we give the proof for the general case (non-infinitesimal step size, and exponentially-tailed
functions). Though it is based on a similar analysis as in the special
case we examined in the previous section, it is somewhat more involved
since we have to bound additional terms.

First, we state two auxiliary lemmata, that are proven below in appendix
sections \ref{sec:Proof of GD convergence} and \ref{sec:Proof-of-Lemma correlation}:

\begin{restatable}{lemR}{GDconvergence}

\label{lem: GD convergence} Let $\mathcal{L}\left(\mathbf{w}\right)$
be a $\beta$-smooth non-negative objective. If $\eta<2\beta^{-1}$,
then, for any $\w(0)$, with the GD sequence 
\begin{equation}
\mathbf{w}\left(t+1\right)=\mathbf{w}\left(t\right)-\eta\nabla\mathcal{L}\left(\mathbf{w}(t)\right)\,\label{eq: gradient descent}
\end{equation}
we have that $\sum_{u=0}^{\infty}\left\Vert \nabla\mathcal{L}\left(\mathbf{w}\left(u\right)\right)\right\Vert ^{2}<\infty$
and therefore $\lim_{t\rightarrow\infty}\left\Vert \nabla\mathcal{L}\left(\mathbf{w}\left(t\right)\right)\right\Vert ^{2}=0.$

\end{restatable}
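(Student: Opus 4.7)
The plan is to apply the standard descent-lemma argument for gradient descent on a $\beta$-smooth function, adapted to exploit the non-negativity of $\mathcal{L}$ rather than assuming a finite minimizer.

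First I would invoke the descent lemma implied by $\beta$-smoothness: for any $\w,\mathbf{v}$,
\[
\mathcal{L}(\mathbf{v}) \leq \mathcal{L}(\w) + \nabla\mathcal{L}(\w)^\top (\mathbf{v}-\w) + \frac{\beta}{2}\|\mathbf{v}-\w\|^2.
\]
Applying this to $\w = \w(t)$ and $\mathbf{v} = \w(t+1) = \w(t) - \eta \nabla\mathcal{L}(\w(t))$, the cross term becomes $-\eta \|\nabla\mathcal{L}(\w(t))\|^2$ and the quadratic term becomes $\frac{\beta \eta^2}{2}\|\nabla\mathcal{L}(\w(t))\|^2$, yielding
\[
\mathcal{L}(\w(t+1)) \leq \mathcal{L}(\w(t)) - \eta\Bigl(1 - \tfrac{\beta\eta}{2}\Bigr)\|\nabla\mathcal{L}(\w(t))\|^2.
\]
Since $\eta < 2\beta^{-1}$, the constant $c := \eta(1-\beta\eta/2)$ is strictly positive, so the iterates form a monotone non-increasing sequence in $\mathcal{L}$.

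Next I would telescope this one-step inequality from $u=0$ to $u=T$:
\[
c \sum_{u=0}^{T} \|\nabla\mathcal{L}(\w(u))\|^2 \leq \mathcal{L}(\w(0)) - \mathcal{L}(\w(T+1)) \leq \mathcal{L}(\w(0)),
\]
where the last step uses non-negativity of $\mathcal{L}$. Letting $T \to \infty$ gives $\sum_{u=0}^{\infty}\|\nabla\mathcal{L}(\w(u))\|^2 \leq \mathcal{L}(\w(0))/c < \infty$. Convergence of this series forces its general term to zero, which yields $\lim_{t\to\infty} \|\nabla \mathcal{L}(\w(t))\|^2 = 0$, completing the proof.

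There is no real obstacle here: the argument is entirely standard and hinges only on (i) the descent lemma, (ii) the choice $\eta < 2/\beta$ to keep $c>0$, and (iii) the lower bound $\mathcal{L}\geq 0$ to turn the telescoped inequality into an absolute bound on the sum. The only small care point is checking that $c>0$ under the stated step-size condition, which is immediate since $1-\beta\eta/2 > 0 \iff \eta < 2/\beta$.
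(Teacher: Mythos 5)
Your proof is correct and follows essentially the same route as the paper's: the descent lemma from $\beta$-smoothness, the observation that $\eta < 2/\beta$ makes the per-step decrease coefficient positive, telescoping, and the lower bound $\mathcal{L}\geq 0$ to bound the partial sums. No gaps.
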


\begin{restatable}{lemR}{correlation}

\label{lem: correlation bound} We have 
\begin{equation}
\exists C_{1},t_{1}:\,\forall t>t_{1}:\,\left(\mathbf{r}\left(t+1\right)-\mathbf{r}\left(t\right)\right)^{\top}\mathbf{r}\left(t\right)\leq C_{1}t^{-\min\left(\theta,1+1.5\mu_{+},1+0.5\mu_{-}\right)}\,.\label{eq: general case}
\end{equation}
Additionally, $\forall\epsilon_{1}>0\,$, $\exists C_{2},t_{2}$,
such that $\forall t>t_{2}$, if 
\begin{equation}
\left\Vert \mathbf{P}_{1}\mathbf{r}\left(t\right)\right\Vert \geq\epsilon_{1},\label{eq: bounded r(t)}
\end{equation}
then the following improved bound holds 
\begin{equation}
\left(\mathbf{r}\left(t+1\right)-\mathbf{r}\left(t\right)\right)^{\top}\mathbf{r}\left(t\right)\leq-C_{2}t^{-1}<0\,.\label{eq: bounded cases}
\end{equation}

\end{restatable}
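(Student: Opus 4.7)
The plan is to expand $\mathbf{r}(t+1) - \mathbf{r}(t)$ via the gradient descent update, isolate the leading $1/t$ contribution that is cancelled by $\hat{\mathbf{w}}[\log(t+1) - \log t]$, and control the residual errors using the exponential-tail assumption together with the margin gap $\theta > 1$. Starting from
\begin{equation*}
\mathbf{r}(t+1) - \mathbf{r}(t) = -\eta \nabla\mathcal{L}(\mathbf{w}(t)) - \hat{\mathbf{w}}\log(1 + 1/t),
\end{equation*}
I would split the gradient sum over support vectors ($n \in \set$) and non-support vectors. For $n \in \set$, $\hat{\mathbf{w}}^\top \mathbf{x}_n = 1$, so $\mathbf{x}_n^\top \mathbf{w}(t) = \log t + \tilde{\mathbf{w}}^\top \mathbf{x}_n + \mathbf{x}_n^\top \mathbf{r}(t)$. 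Substituting the exponential-tail form of $-\ell'$ from Assumption \ref{assum: exponential tail} (with $a=c=1$) and invoking the defining identity $\eta e^{-\tilde{\mathbf{w}}^\top \mathbf{x}_n} = \alpha_n$ from eq.~\ref{eq:w_tilde2}, each support vector contributes $(\alpha_n / t)\,e^{-\mathbf{x}_n^\top \mathbf{r}(t)} \mathbf{x}_n$ plus a tail correction bounded by a constant times $e^{-(1+\mu_\pm)\mathbf{x}_n^\top \mathbf{w}(t)}$. For $n \notin \set$, $\hat{\mathbf{w}}^\top \mathbf{x}_n \geq \theta > 1$ by eq.~\ref{eq: v1 SVM}, so the total non-support contribution is $O(t^{-\theta})$. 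Combining with $\log(1 + 1/t) = 1/t + O(1/t^2)$ and the KKT identity $\hat{\mathbf{w}} = \sum_{n \in \set} \alpha_n \mathbf{x}_n$, the $\hat{\mathbf{w}}/t$ piece exactly cancels the constant part $(1/t)\sum_{n\in\set}\alpha_n\mathbf{x}_n$ of the support-vector contribution, yielding
\begin{equation*}
\mathbf{r}(t+1) - \mathbf{r}(t) = \frac{1}{t} \sum_{n \in \set} \alpha_n \bigl(e^{-\mathbf{x}_n^\top \mathbf{r}(t)} - 1\bigr) \mathbf{x}_n + \mathbf{e}(t),
\end{equation*}
where $\mathbf{e}(t)$ collects all the tail, non-support-vector, and $O(1/t^2)$ corrections.

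Taking the inner product with $\mathbf{r}(t)$, the main sum contributes $\frac{1}{t}\sum_{n\in\set}\alpha_n(e^{-\mathbf{x}_n^\top \mathbf{r}(t)}-1)(\mathbf{x}_n^\top \mathbf{r}(t))$, which is $\leq 0$ because $z(e^{-z}-1) \leq 0$ and $\alpha_n \geq 0$ (well-defined by Lemma \ref{lem: alpha}). Discarding this non-positive sum, eq.~\ref{eq: general case} reduces to an upper bound on $|\mathbf{e}(t)^\top \mathbf{r}(t)|$. The non-support-vector piece is $O(t^{-\theta})$, absorbing the $\mathbf{x}_n^\top \mathbf{r}(t)$ factor via $|z e^{-z}| \leq 1/e$. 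The support-vector tail corrections are bounded by terms of the form $t^{-(1+\mu_\pm)} e^{-(1+\mu_\pm)\mathbf{x}_n^\top \mathbf{r}(t)} |\mathbf{x}_n^\top \mathbf{r}(t)|$, which behave well when $\mathbf{x}_n^\top \mathbf{r}(t) \geq 0$ but can grow when it is very negative. The specific exponents $1+1.5\mu_+$ and $1+0.5\mu_-$ reflect trading part of the tail-decay budget to absorb polynomial blow-up of $|\mathbf{x}_n^\top \mathbf{r}(t)|$ using a crude a priori bound such as $\|\mathbf{r}(t)\| = O(\log t)$ obtained from a weaker preliminary version of the argument.

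For the improved bound under $\|\mathbf{P}_1 \mathbf{r}(t)\| \geq \epsilon_1$, the key observation is that the main non-positive term depends on $\mathbf{r}(t)$ only through $\mathbf{P}_1 \mathbf{r}(t)$, since $\mathbf{x}_n = \mathbf{P}_1\mathbf{x}_n$ for $n \in \set$. Define $\Phi(\mathbf{v}) := \sum_{n\in\set}\alpha_n(1-e^{-\mathbf{x}_n^\top \mathbf{v}})(\mathbf{x}_n^\top \mathbf{v})$ on $\mathrm{range}(\mathbf{P}_1)$. It is continuous, non-negative, coercive (since $z(1-e^{-z}) \to \infty$ as $|z|\to\infty$), and vanishes only at $\mathbf{v} = \mathbf{0}$, because by Lemma \ref{lem: alpha} one has $\alpha_n > 0$ for every $n \in \set$ in the non-degenerate regime and $\{\mathbf{x}_n\}_{n\in\set}$ spans $\mathrm{range}(\mathbf{P}_1)$. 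A compactness-plus-coercivity argument then gives $c(\epsilon_1) := \inf\{\Phi(\mathbf{v}) : \mathbf{v}\in\mathrm{range}(\mathbf{P}_1),\,\|\mathbf{v}\|\geq \epsilon_1\} > 0$, so the main term is $\leq -c(\epsilon_1)/t$, dominating the $o(1/t)$ error bound from the previous paragraph and establishing eq.~\ref{eq: bounded cases}.

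The main obstacle is the tail-correction error: extracting a clean $t^{-(1+\text{const}\cdot\mu_\pm)}$ rate for $|\mathbf{e}(t)^\top \mathbf{r}(t)|$ requires carefully trading part of the available tail decay against the possibly growing factor $e^{-(1+\mu_\pm)\mathbf{x}_n^\top \mathbf{r}(t)}|\mathbf{x}_n^\top \mathbf{r}(t)|$ in the regime $\mathbf{x}_n^\top \mathbf{r}(t) < 0$, and the precise bookkeeping behind the coefficients $1.5$ and $0.5$ is the technical heart of the argument. This interacts delicately with the bootstrap structure of the overall proof: the final conclusion of Theorem \ref{thm: main theorem almost everywhere} gives a strong a priori bound on $\|\mathbf{r}(t)\|$, but inside the proof of that theorem one has only a weaker bound available, so the argument must be organized so that the crude control on $\|\mathbf{r}(t)\|$ feeds back cleanly into the error estimate without triggering a circular dependence.
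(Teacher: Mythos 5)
Your overall decomposition matches the paper's: expand $\mathbf{r}(t+1)-\mathbf{r}(t)=-\eta\nabla\mathcal{L}(\mathbf{w}(t))-\hat{\mathbf{w}}\log(1+t^{-1})$, split support from non-support vectors, use $\eta e^{-\tilde{\mathbf{w}}^\top\mathbf{x}_n}=\alpha_n$ so that the $\hat{\mathbf{w}}/t$ piece cancels, observe that the surviving main term $\tfrac{1}{t}\sum_{n\in\set}\alpha_n\left(e^{-\mathbf{x}_n^\top\mathbf{r}(t)}-1\right)\mathbf{x}_n^\top\mathbf{r}(t)$ is non-positive, bound the non-support contribution by $O(t^{-\theta})$ via $ze^{-z}\leq 1$, and for the improved bound convert $\norm{\mathbf{P}_1\mathbf{r}(t)}\geq\epsilon_1$ into $|\mathbf{x}_k^\top\mathbf{r}(t)|\geq\epsilon_2$ for some support vector (your coercivity argument for $\Phi$ is a clean equivalent of the paper's $\sigma_{\min}(\mathbf{X}_\set)$ step).

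The genuine gap is in the step you yourself flag as the "technical heart": controlling the tail-correction error $t^{-(1+\mu_\pm)}e^{-(1+\mu_\pm)\mathbf{x}_n^\top\mathbf{r}(t)}|\mathbf{x}_n^\top\mathbf{r}(t)|$ when $\mathbf{x}_n^\top\mathbf{r}(t)\ll 0$ by invoking an a priori bound $\norm{\mathbf{r}(t)}=O(\log t)$. That mechanism fails: it only yields $e^{-(1+\mu_\pm)\mathbf{x}_n^\top\mathbf{r}(t)}\leq e^{C(1+\mu_\pm)\log t}=t^{C(1+\mu_\pm)}$ with an uncontrolled constant $C$ (depending on $\norm{\mathbf{x}_n}$ and the constant hidden in the $O(\log t)$), so no fixed fraction of the tail-decay budget $t^{-\mu_\pm}$ can absorb it, and the exponents $1+1.5\mu_+$, $1+0.5\mu_-$ cannot be recovered this way. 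The paper avoids any such bound by \emph{not} separating the tail correction from the main term in this regime: it keeps the full expression $-\eta[t^{-1}e^{-\tilde{\mathbf{w}}^\top\mathbf{x}_n}+\ell'(\mathbf{w}(t)^\top\mathbf{x}_n)]\mathbf{x}_n^\top\mathbf{r}(t)$ intact and uses the lower tail bound $-\ell'(u)\geq(1-e^{-\mu_-u})e^{-u}$ to show that whenever $|\mathbf{x}_n^\top\mathbf{r}(t)|>C_0t^{-0.5\mu_-}$ (so that $e^{-\mathbf{x}_n^\top\mathbf{r}(t)}$ strictly exceeds $1$ by a quantifiable margin, while $e^{-\mu_-\mathbf{w}(t)^\top\mathbf{x}_n}\to 0$), the bracket is positive and the whole term is $\leq 0$; the analogous threshold $C_0t^{-0.5\mu_+}$ handles the positive-sign case. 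The only positive contributions then come from the regime $|\mathbf{x}_n^\top\mathbf{r}(t)|\leq C_0t^{-0.5\mu_\pm}$, where the threshold itself supplies the factors that produce $t^{-1-1.5\mu_+}$ and $t^{-1-0.5\mu_-}$. You would need to replace your bootstrap with this threshold-based case split (or an equivalent comparison of the tail correction against the negative main term, using $e^{-\mathbf{w}(t)^\top\mathbf{x}_n}\to 0$ so that $e^{-\mathbf{x}_n^\top\mathbf{r}(t)}=o(t)$) for both eq.~\ref{eq: general case} and the $o(t^{-1})$ error estimate needed in eq.~\ref{eq: bounded cases}.
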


Our goal is to show that $\left\Vert \mathbf{r}\left(t\right)\right\Vert $
is bounded, and therefore $\boldsymbol{\rho}\left(t\right)=\mathbf{r}\left(t\right)+\tilde{\mathbf{w}}$
is bounded. To show this, we will upper bound the following equation
\begin{align}
\left\Vert \mathbf{r}\left(t+1\right)\right\Vert ^{2} & =\left\Vert \mathbf{r}\left(t+1\right)-\mathbf{r}\left(t\right)\right\Vert ^{2}+2\left(\mathbf{r}\left(t+1\right)-\mathbf{r}\left(t\right)\right)^{\top}\mathbf{r}\left(t\right)+\left\Vert \mathbf{r}\left(t\right)\right\Vert ^{2}\label{eq: r recursion}
\end{align}
First, we note that first term in this equation can be upper-bounded
by 
\begin{align}
 & \left\Vert \mathbf{r}\left(t+1\right)-\mathbf{r}\left(t\right)\right\Vert ^{2}\nonumber \\
 & \overset{\left(1\right)}{=}\left\Vert \mathbf{w}\left(t+1\right)-\hat{\mathbf{w}}\log\left(t+1\right)-\tilde{\mathbf{w}}-\mathbf{w}\left(t\right)+\hat{\mathbf{w}}\log\left(t\right)+\tilde{\mathbf{w}}\right\Vert ^{2}\nonumber \\
 & \overset{\left(2\right)}{=}\left\Vert -\eta\nabla\mathcal{L}\left(\mathbf{w}\left(t\right)\right)-\hat{\mathbf{w}}\left[\log\left(t+1\right)-\log\left(t\right)\right]\right\Vert ^{2}\nonumber \\
 & =\eta^{2}\left\Vert \nabla\mathcal{L}\left(\mathbf{w}\left(t\right)\right)\right\Vert ^{2}+\left\Vert \hat{\mathbf{w}}\right\Vert ^{2}\log^{2}\left(1+t^{-1}\right)+2\eta\hat{\mathbf{w}}^{\top}\nabla\mathcal{L}\left(\mathbf{w}\left(t\right)\right)\log\left(1+t^{-1}\right)\nonumber \\
 & \overset{\left(3\right)}{\leq}\eta^{2}\left\Vert \nabla\mathcal{L}\left(\mathbf{w}\left(t\right)\right)\right\Vert ^{2}+\left\Vert \hat{\mathbf{w}}\right\Vert ^{2}t^{-2}\label{eq: square r difference}
\end{align}
where in $\left(1\right)$ we used eq. \ref{eq: r definition},
in $\left(2\right)$ we used eq. \ref{eq: gradient descent linear},
and in $\left(3\right)$ we used $\forall x>0:\,x\geq\log\left(1+x\right)>0$,
and also that 
\begin{equation}
\hat{\mathbf{w}}^{\top}\nabla\mathcal{L}\left(\mathbf{w}\left(t\right)\right)=\sum_{n=1}^{N}\ell^{\prime}\left(\mathbf{w}\left(t\right)^{\top}\mathbf{x}_{n}\right)\hat{\mathbf{w}}^{\top}\mathbf{x}_{n}\leq0\,,
\end{equation}
since $\hat{\mathbf{w}}^{\top}\mathbf{x}_{n}\geq1$ (from the definition
of $\what$) and $\ell^{\prime}(u)\leq0$.

Also, from Lemma \ref{lem: GD convergence} we know that 
\begin{equation}
\left\Vert \nabla\mathcal{L}\left(\mathbf{w}\left(t\right)\right)\right\Vert ^{2}=o\left(1\right)\,\mathrm{and}\,\sum_{t=0}^{\infty}\left\Vert \nabla\mathcal{L}\left(\mathbf{w}\left(t\right)\right)\right\Vert ^{2}<\infty\,.\label{eq: norm grad squared 1/t}
\end{equation}
Substituting eq. \ref{eq: norm grad squared 1/t} into eq. \ref{eq: square r difference},
and recalling that a $t^{-\nu}$ power series converges for any $\nu>1$,
we can find $C_{0}$ such that 
\begin{equation}
\left\Vert \mathbf{r}\left(t+1\right)-\mathbf{r}\left(t\right)\right\Vert ^{2}=o\left(1\right)\,\mathrm{and}\,\sum_{t=0}^{\infty}\left\Vert \mathbf{r}\left(t+1\right)-\mathbf{r}\left(t\right)\right\Vert ^{2}=C_{0}<\infty\,.\label{eq: square norm of r difference-1}
\end{equation}
Note that this equation also implies that $\forall\epsilon_{0}$ 
\begin{equation}
\exists t_{0}:\forall t>t_{0}:\left|\left\Vert \mathbf{r}\left(t+1\right)\right\Vert -\left\Vert \mathbf{r}\left(t\right)\right\Vert \right|<\epsilon_{0}\,.\label{eq: norm difference convergence}
\end{equation}

Next, we would like to bound the second term in eq. \ref{eq: r recursion}.
From eq. \ref{eq: general case} in Lemma \ref{lem: correlation bound},
we can find $t_{1},C_{1}$ such that $\forall t>t_{1}$: 
\begin{equation}
\left(\mathbf{r}\left(t+1\right)-\mathbf{r}\left(t\right)\right)^{\top}\mathbf{r}\left(t\right)\leq C_{1}t^{-\min\left(\theta,1+1.5\mu_{+},1+0.5\mu_{-}\right)}\,.\label{eq: correlation general case}
\end{equation}
Thus, by combining eqs. \ref{eq: correlation general case} and \ref{eq: square norm of r difference-1}
into eq. \ref{eq: r recursion}, we find 
\begin{align*}
\, & \left\Vert \mathbf{r}\left(t\right)\right\Vert ^{2}-\left\Vert \mathbf{r}\left(t_{1}\right)\right\Vert ^{2}\\
 & =\sum_{u=t_{1}}^{t-1}\left[\left\Vert \mathbf{r}\left(u+1\right)\right\Vert ^{2}-\left\Vert \mathbf{r}\left(u\right)\right\Vert ^{2}\right]\\
 & \leq C_{0}+2\sum_{u=t_{1}}^{t-1}C_{1}u^{-\min\left(\theta,1+1.5\mu_{+},1+0.5\mu_{-}\right)}
\end{align*}
which is a bounded, since $\theta>1$ (eq. \ref{eq: v1 SVM}) and $\mu_-,\mu_+>0$ (Definition \ref{def: exponential tail}). Therefore,
$\left\Vert \mathbf{r}\left(t\right)\right\Vert $ is bounded. $\blacksquare$

\subsection{Proof of Theorem \ref{thm: refined Theorem} \label{subsec:Proof-of-refined-Theorem}}

All that remains now is to show that $\left\Vert \mathbf{r}\left(t\right)\right\Vert \rightarrow0$
if $\mathrm{rank}\left(\mathbf{X}_{\set}\right)=\mathrm{rank}\left(\mathbf{X}\right)$,
and that $\tilde{\mathbf{w}}$ is unique given $\mathbf{w}\left(0\right)$.
To do so, this proof will continue where the proof of Theorem \ref{thm: main theorem}
stopped, using notations and equations from that proof.

Since $\mathbf{r}\left(t\right)$ has a bounded norm, its two orthogonal
components $\mathbf{r}\left(t\right)=\mathbf{P}_{1}\mathbf{r}\left(t\right)+\bar{\mathbf{P}}_{1}\mathbf{r}\left(t\right)$
also have bounded norms (recall that $\mathbf{P}_{1},\bar{\mathbf{P}}_{1}$
were defined in the beginning of appendix section \ref{sec:proof}).
From eq. \ref{eq: gradient descent linear}, $\nabla\mathcal{L}\left(\mathbf{w}\right)$
is spanned by the columns of $\mathbf{X}$. If $\mathrm{rank}\left(\mathbf{X}_{\set}\right)=\mathrm{rank}\left(\mathbf{X}\right)$,
then it is also spanned by the columns of $\mathbf{X}_{\set}$, and
so $\bar{\mathbf{P}}_{1}\nabla\mathcal{L}\left(\mathbf{w}\right)=0$.
Therefore, $\bar{\mathbf{P}}_{1}\mathbf{r}\left(t\right)$ is not
updated during GD, and remains constant. Since $\tilde{\mathbf{w}}$
in eq. \ref{eq: r definition} is also bounded, we can absorb this
constant $\bar{\mathbf{P}}_{1}\mathbf{r}\left(t\right)$ into \textbf{$\tilde{\mathbf{w}}$}
without affecting eq. \ref{eq: w tilde} (since $\forall n\in\set:\,\mathbf{x}_{n}^{\top}\bar{\mathbf{P}}_{1}\mathbf{r}\left(t\right)=0$).
Thus, without loss of generality, we can assume that $\mathbf{r}\left(t\right)=\mathbf{P}_{1}\mathbf{r}\left(t\right)$.

We define the set 
\[
\mathcal{T}=\left\{ t>\max\left[t_{2},t_{0}\right]:\ensuremath{\left\Vert \mathbf{r}\left(t\right)\right\Vert <\epsilon_{1}}\right\} \,.
\]
By contradiction, we assume that the complementary set is not finite,
\[
\bar{\mathcal{T}}=\left\{ t>\max\left[t_{2},t_{0}\right]:\ensuremath{\left\Vert \mathbf{r}\left(t\right)\right\Vert \geq\epsilon_{1}}\right\} \,.
\]
Additionally, the set $\mathcal{T}$ is not finite: if it were finite, it would
have had a finite maximal point $t_{\max}\in\mathcal{T}$, and then, combining eqs. \ref{eq: bounded cases}, \ref{eq: r recursion}, and \ref{eq: square norm of r difference-1},
we would find that $\forall t>t_{\max}$
\begin{align*}
\left\Vert \mathbf{r}\left(t\right)\right\Vert ^{2}-\left\Vert \mathbf{r}\left(t_{\max}\right)\right\Vert ^{2} & =\sum_{u=t_{\max}}^{t-1}\left[\left\Vert \mathbf{r}\left(u+1\right)\right\Vert ^{2}-\left\Vert \mathbf{r}\left(u\right)\right\Vert ^{2}\right]\leq C_{0}-2C_{2}\sum_{u=t_{\max}}^{t-1}u^{-1}\rightarrow-\infty\,,
\end{align*}
which is impossible since $\left\Vert \mathbf{r}\left(t\right)\right\Vert ^{2}\geq0$.
Furthermore, eq. \ref{eq: square norm of r difference-1}
implies that 
\[
\sum_{u=0}^{t}\left\Vert \mathbf{r}\left(u+1\right)-\mathbf{r}\left(t\right)\right\Vert ^{2}=C_{0}-h\left(t\right)
\]
 where $h\left(t\right)$ is a positive monotone function decreasing
to zero. Let $t_{3},t$ be any two points such that $t_{3}<t$,
$\left\{ t_{3}, t_{3}+1,\dots t\right\} \subset\bar{\mathcal{T}}$, and $\left(t_{3}-1\right)\in\mathcal{T}$. For all such  $t_{3}$ and $t$, we have 
\begin{align}
\left\Vert \mathbf{r}\left(t\right)\right\Vert ^{2}\nonumber & \leq\left\Vert \mathbf{r}\left(t_{3}\right)\right\Vert ^{2}+\sum_{u=t_{3}}^{t-1}\left[\left\Vert \mathbf{r}\left(u+1\right)\right\Vert ^{2}-\left\Vert \mathbf{r}\left(u\right)\right\Vert ^{2}\right]\\ 
\nonumber &= \left\Vert \mathbf{r}\left(t_{3}\right)\right\Vert ^{2}+\sum_{u=t_{3}}^{t-1}\left[\left\Vert \mathbf{r}\left(u+1\right)-\mathbf{r}\left(u\right)\right\Vert ^{2}+2\left(\mathbf{r}\left(u+1\right)-\mathbf{r}\left(u\right)\right)^{\top}\mathbf{r}\left(u\right)\right]\\
\nonumber
 & \leq\left\Vert \mathbf{r}\left(t_{3}\right)\right\Vert ^{2}+h\left(t_{3}\right)-h\left(t-1\right)-2C_{2}\sum_{u=t_{3}}^{t-1}u^{-1}\\ 
 & \leq\left\Vert \mathbf{r}\left(t_{3}\right)\right\Vert ^{2}+h\left(t_{3}\right)\,. \label{eq: r(t) bound}
\end{align}
Also, recall that $t_{3}>t_{0}$, so from eq. \ref{eq: norm difference convergence},
we have that $\left|\left\Vert \mathbf{r}\left(t_{3}\right)\right\Vert -\left\Vert \mathbf{r}\left(t_{3}-1\right)\right\Vert \right|<\epsilon_{0}$.
Since $\left\Vert \mathbf{r}\left(t_{3}-1\right)\right\Vert <\epsilon_{1}$ (from $\mathcal{T}$ definition),
we conclude that $\left\Vert \mathbf{r}\left(t_{3}\right)\right\Vert \leq\epsilon_{1}+\epsilon_{0}$.
Moreover, since $\mathcal{\bar{\mathcal{T}}}$ is an infinite set,
we can choose $t_{3}$ as large as we want. This implies that $\forall\epsilon_{2}>0$
we can find $t_{3}$ such that $\epsilon_{2}>h\left(t_{3}\right)$,
since $h\left(t\right)$ is a monotonically decreasing function. Therefore, from eq. \ref{eq: r(t) bound}, 
$\forall\epsilon_{1},\epsilon_{0},\epsilon_{2}$, $\exists t_{3}\in \bar{\mathcal{T}}$
such that 
\[
\forall t>t_{3}:\,\left\Vert \mathbf{r}\left(t\right)\right\Vert ^{2}\leq\epsilon_{1}+\epsilon_{0}+\epsilon_{2}\,.
\]
This implies that $\left\Vert \mathbf{r}\left(t\right)\right\Vert \rightarrow0$.

Lastly, we note that since $\bar{\mathbf{P}}_{1}\mathbf{r}\left(t\right)$
is not updated during GD, we have that $\bar{\mathbf{P}}_{1}\left(\tilde{\mathbf{w}}-\mathbf{w}\left(0\right)\right)=0$.
This sets $\tilde{\mathbf{w}}$ uniquely, together with eq. \ref{eq: w tilde}.
$\blacksquare$

\subsection{Proof of Lemma \ref{lem: GD convergence}\label{sec:Proof of GD convergence}}

\GDconvergence*

This proof is a slightly modified version of the proof of Theorem
2 in \citep{Ganti2015}. Recall a well-known property of $\beta$-smooth
functions: 
\begin{equation}
\left|f\left(\mathbf{x}\right)-f\left(\mathbf{y}\right)-\nabla f\left(\mathbf{y}\right)^{\top}\left(\mathbf{x-y}\right)\right|\leq\frac{\beta}{2}\left\Vert \mathbf{x}-\mathbf{y}\right\Vert ^{2}\,.\label{eq: propery of beta smoothness}
\end{equation}
From the $\beta$-smoothness of $\mathcal{L}\left(\mathbf{w}\right)$
\begin{align*}
\mathcal{L}\left(\mathbf{w}\left(t+1\right)\right) & \leq\mathcal{L}\left(\mathbf{w}\left(t\right)\right)+\nabla\mathcal{L}\left(\mathbf{w}\left(t\right)\right)^{\top}\left(\mathbf{w}\left(t+1\right)-\mathbf{w}\left(t\right)\right)+\frac{\beta}{2}\left\Vert \mathbf{w}\left(t+1\right)-\mathbf{w}\left(t\right)\right\Vert ^{2}\\
 & =\mathcal{L}\left(\mathbf{w}\left(t\right)\right)-\eta\left\Vert \nabla\mathcal{L}\left(\mathbf{w}\left(t\right)\right)\right\Vert ^{2}+\frac{\beta\eta^{2}}{2}\left\Vert \nabla\mathcal{L}\left(\mathbf{w}\left(t\right)\right)\right\Vert ^{2}\\
 & =\mathcal{L}\left(\mathbf{w}\left(t\right)\right)-\eta\left(1-\frac{\beta\eta}{2}\right)\left\Vert \nabla\mathcal{L}\left(\mathbf{w}\left(t\right)\right)\right\Vert ^{2}
\end{align*}
Thus, we have 
\[
\frac{\mathcal{L}\left(\mathbf{w}\left(t\right)\right)-\mathcal{L}\left(\mathbf{w}\left(t+1\right)\right)}{\eta\left(1-\frac{\beta\eta}{2}\right)}\geq\left\Vert \nabla\mathcal{L}\left(\mathbf{w}\left(t\right)\right)\right\Vert ^{2}
\]
which implies 
\[
\sum_{u=0}^{t}\left\Vert \nabla\mathcal{L}\left(\mathbf{w}\left(u\right)\right)\right\Vert ^{2}\leq\sum_{u=0}^{t}\frac{\mathcal{L}\left(\mathbf{w}\left(u\right)\right)-\mathcal{L}\left(\mathbf{w}\left(u+1\right)\right)}{\eta\left(1-\frac{\beta\eta}{2}\right)}=\frac{\mathcal{L}\left(\mathbf{w}\left(0\right)\right)-\mathcal{L}\left(\mathbf{w}\left(t+1\right)\right)}{\eta\left(1-\frac{\beta\eta}{2}\right)}\,.
\]
The right hand side is upper bounded by a finite constant, since $L\left(\mathbf{w}\left(0\right)\right)<\infty$
and $0\leq\mathcal{L}\left(\mathbf{w}\left(t+1\right)\right)$. This
implies 
\[
\sum_{u=0}^{\infty}\left\Vert \nabla\mathcal{L}\left(\mathbf{w}\left(u\right)\right)\right\Vert ^{2}<\infty\,,
\]
and therefore $\left\Vert \nabla\mathcal{L}\left(\mathbf{w}\left(t\right)\right)\right\Vert ^{2}\rightarrow0$.  $\blacksquare$

\subsection{Proof of Lemma \ref{lem: correlation bound}\label{sec:Proof-of-Lemma correlation}}

Recall that we defined $\mathbf{r}\left(t\right)=\mathbf{w}\left(t\right)-\hat{\mathbf{w}}\log t-\tilde{\mathbf{w}}$,
with $\hat{\mathbf{w}}$ and $\tilde{\mathbf{w}}$ follow the conditions
of the Theorems \ref{thm: main theorem} and \ref{thm: refined Theorem},
\emph{i.e}, $\hat{\mathbf{w}}$ is the $L_{2}$ max margin vector
and (eq. \ref{eq: max margin vector}), and eq. \ref{eq: w tilde}
holds 
\[
\forall n\in\set:\,\eta\exp\left(-\mathbf{x}_{n}^{\top}\tilde{\mathbf{w}}\right)=\alpha_{n}\,.
\]

\correlation*

From Lemma \ref{lem: convergence of linear classifiers}, $\forall n:\,\lim_{t\rightarrow\infty}\mathbf{w}\left(t\right)^{\top}\mathbf{x}_{n}=\infty$.
In addition, from assumption \ref{assum: exponential tail} the negative
loss derivative $-\ell^{\prime}\left(u\right)$ has an exponential
tail $e^{-u}$ (recall we assume $a=c=1$ without loss of generality).
Combining both facts, we have positive constants $\mu_{-},\mu_{+}$,
$t_{-}$ and $t_{+}$ such that $\forall n$ 
\begin{align}
 & \!\!\forall t>t_{+}:-\ell^{\prime}\left(\mathbf{w}\left(t\right)^{\top}\mathbf{x}_{n}\right)\leq\left(1+\exp\left(-\mu_{+}\mathbf{w}\left(t\right)^{\top}\mathbf{x}_{n}\right)\right)\exp\left(-\mathbf{w}\left(t\right)^{\top}\mathbf{x}_{n}\right)\label{eq: exp bound top}\\
 & \!\!\forall t>t_{-}:-\ell^{\prime}\left(\mathbf{w}\left(t\right)^{\top}\mathbf{x}_{n}\right)\geq\left(1-\exp\left(-\mu_{-}\mathbf{w}\left(t\right)^{\top}\mathbf{x}_{n}\right)\right)\exp\left(-\mathbf{w}\left(t\right)^{\top}\mathbf{x}_{n}\right)\label{eq: exp bound bottom}
\end{align}
Next, we examine the expression we wish to bound, recalling that $\mathbf{r}\left(t\right)=\mathbf{w}\left(t\right)-\hat{\mathbf{w}}\log t-\tilde{\mathbf{w}}$:
\begin{align}
 & \left(\mathbf{r}\left(t+1\right)-\mathbf{r}\left(t\right)\right)^{\top}\mathbf{r}\left(t\right)\nonumber \\
 & =\left(-\eta\nabla\mathcal{L}\left(\mathbf{w}\left(t\right)\right)-\hat{\mathbf{w}}\left[\log\left(t+1\right)-\log\left(t\right)\right]\right)^{\top}\mathbf{r}\left(t\right)\nonumber \\
 & =-\eta\sum_{n=1}^{N}\ell^{\prime}\left(\mathbf{w}\left(t\right)^{\top}\mathbf{x}_{n}\right)\mathbf{x}_{n}^{\top}\mathbf{r}\left(t\right)-\hat{\mathbf{w}}^{\top}\mathbf{r}\left(t\right)\log\left(1+t^{-1}\right)\nonumber \\
 & =\hat{\mathbf{w}}^{\top}\mathbf{r}\left(t\right)\left[t^{-1}-\log\left(1+t^{-1}\right)\right]-\eta\sum_{n\notin\set}\ell^{\prime}\left(\mathbf{w}\left(t\right)^{\top}\mathbf{x}_{n}\right)\mathbf{x}_{n}^{\top}\mathbf{r}\left(t\right)\label{eq: norm r dot}\\
 & -\eta\sum_{n\in\set}\left[t^{-1}\exp\left(-\tilde{\mathbf{w}}^{\top}\mathbf{x}_{n}\right)+\ell^{\prime}\left(\mathbf{w}\left(t\right)^{\top}\mathbf{x}_{n}\right)\right]\mathbf{x}_{n}^{\top}\mathbf{r}\left(t\right)\nonumber 
\end{align}
where in last line we used eqs. \ref{eq:kkt} and \ref{eq: w tilde}
to obtain 
\[
\hat{\mathbf{w}}=\sum_{n\in\set}\alpha_{n}\mathbf{x}_{n}=\eta\sum_{n\in\set}\exp\left(-\tilde{\mathbf{w}}^{\top}\mathbf{x}_{n}\right)\mathbf{x}_{n}\,.
\]
We examine the three terms in eq. \ref{eq: norm r dot}. The first
term can be upper bounded by 
\begin{align}
 & \hat{\mathbf{w}}^{\top}\mathbf{r}\left(t\right)\left[t^{-1}-\log\left(1+t^{-1}\right)\right]\nonumber \\
\leq & \max\left[\hat{\mathbf{w}}^{\top}\mathbf{r}\left(t\right),0\right]\left[t^{-1}-\log\left(1+t^{-1}\right)\right]\nonumber \\
\overset{\left(1\right)}{\leq} & \max\left[\hat{\mathbf{w}}^{\top}\mathbf{P}_{1}\mathbf{r}\left(t\right),0\right]t^{-2}\nonumber \\
\overset{\left(2\right)}{\leq} & \begin{cases}
\mathbf{\left\Vert \hat{\mathbf{w}}\right\Vert }\epsilon_{1}t^{-2} & ,\,\mathrm{if}\,\left\Vert \mathbf{P}_{1}\mathbf{r}\left(t\right)\right\Vert \leq\epsilon_{1}\\
o\left(t^{-1}\right) & ,\,\mathrm{if}\,\left\Vert \mathbf{P}_{1}\mathbf{r}\left(t\right)\right\Vert >\epsilon_{1}
\end{cases}\label{eq: w_hat r bound 1}
\end{align}
where in $\left(1\right)$ we used that $\bar{\mathbf{P}}_{1}\hat{\mathbf{w}}=\bar{\mathbf{P}}_{1}\mathbf{X}_{\set}\boldsymbol{\alpha}=0$
from eq. \ref{eq:kkt}, and in $\left(2\right)$ we used that $\hat{\mathbf{w}}^{\top}\mathbf{r}\left(t\right)=o\left(t\right)$,
since 
\begin{align*}\hat{\mathbf{w}}^{\top}\mathbf{r}\left(t\right) & =\hat{\mathbf{w}}^{\top}\left(\mathbf{w}\left(0\right)-\eta\sum_{u=0}^{t}\nabla\mathcal{L}\left(\mathbf{w}\left(u\right)\right)-\hat{\mathbf{w}}\log\left(t\right)-\tilde{\mathbf{w}}\right)\\
 & \leq\hat{\mathbf{w}}^{\top}\left(\mathbf{w}\left(0\right)-\tilde{\mathbf{w}}-\hat{\mathbf{w}}\log\left(t\right)\right)+\eta\left\Vert \hat{\mathbf{w}}\right\Vert \sum_{u=0}^{t}\left\Vert \nabla\mathcal{L}\left(\mathbf{w}\left(u\right)\right)\right\Vert \\
 & \leq O\left(\log\left(t\right)\right)+\eta\left\Vert \hat{\mathbf{w}}\right\Vert \sum_{u=0}^{\left\lceil \sqrt{t}\right\rceil }\left\Vert \nabla\mathcal{L}\left(\mathbf{w}\left(u\right)\right)\right\Vert +\eta\left\Vert \hat{\mathbf{w}}\right\Vert \sum_{u=\left\lceil \sqrt{t}\right\rceil }^{t}\left\Vert \nabla\mathcal{L}\left(\mathbf{w}\left(u\right)\right)\right\Vert \\
 & \leq O\left(\log\left(t\right)\right)+\eta\left\lceil \sqrt{t}\right\rceil \left\Vert \hat{\mathbf{w}}\right\Vert \max_{0\leq u\leq\left\lceil \sqrt{t}\right\rceil }\left\Vert \nabla\mathcal{L}\left(\mathbf{w}\left(u\right)\right)\right\Vert +\eta t\left\Vert \hat{\mathbf{w}}\right\Vert \max_{\left\lceil \sqrt{t}\right\rceil \leq u\leq t}\left\Vert \nabla\mathcal{L}\left(\mathbf{w}\left(u\right)\right)\right\Vert \\
 & =O\left(\log\left(t\right)\right)+\left\lceil \sqrt{t}\right\rceil O\left(1\right)+o\left(1\right)t=o\left(t\right) \, ,
\end{align*}
where in the last line we used that $\nabla\mathcal{L}\left(\mathbf{w}\left(t\right)\right)=o\left(1\right)$,
from Lemma \ref{lem: GD convergence}.

Next, we upper bound the second term in eq. \ref{eq: norm r dot}. From eq. \ref{eq: exp bound top} $\exists t_+^{\prime}$, such that $\forall >t_0>t_+^{\prime}$, 
\begin{align}
&\ell'(\wvec(t)^\top \xn)\le 2\exp(-\wvec(t)^\top \xn).\label{eq: two factor inequality}
\end{align}

Therefore, $\forall t>t_{+}^{\prime}$: 
\begin{align}
 & -\eta\sum_{n\notin\set}\ell^{\prime}\left(\mathbf{w}\left(t\right)^{\top}\mathbf{x}_{n}\right)\mathbf{x}_{n}^{\top}\mathbf{r}\left(t\right)\nonumber \\
\leq & -\eta\sum_{n\notin\set:\,\mathbf{x}_{n}^{\top}\mathbf{r}\left(t\right)\geq0}\!\!\!\!\!\!\!\ell^{\prime}\left(\mathbf{w}\left(t\right)^{\top}\mathbf{x}_{n}\right)\mathbf{x}_{n}^{\top}\mathbf{r}\left(t\right)\nonumber \\
\overset{\left(1\right)}{\leq} & \eta\sum_{n\notin\set:\,\mathbf{x}_{n}^{\top}\mathbf{r}\left(t\right)\geq0}\!\!\!\!\!\!\!2\exp\left(-\mathbf{w}\left(t\right)^{\top}\mathbf{x}_{n}\right)\mathbf{x}_{n}^{\top}\mathbf{r}\left(t\right)\nonumber \\
\overset{\left(2\right)}{\leq} & \eta\sum_{n\notin\set:\,\mathbf{x}_{n}^{\top}\mathbf{r}\left(t\right)\geq0}\!\!\!\!\!\!\!2t^{-\mathbf{x}_{n}^{\top}\hat{\mathbf{w}}}\exp\left(-\tilde{\mathbf{w}}^{\top}\mathbf{x}_{n}-\mathbf{x}_{n}^{\top}\mathbf{r}\left(t\right)\right)\mathbf{x}_{n}^{\top}\mathbf{r}\left(t\right)\nonumber \\
\overset{\left(3\right)}{\leq} & \eta\sum_{n\notin\set:\,\mathbf{x}_{n}^{\top}\mathbf{r}\left(t\right)\geq0}\!\!\!\!\!\!\!2t^{-\mathbf{x}_{n}^{\top}\hat{\mathbf{w}}}\exp\left(-\tilde{\mathbf{w}}^{\top}\mathbf{x}_{n}\right)\nonumber \\
\overset{\left(4\right)}{\leq} & \eta N\exp\left(-\min_{n}\mathbf{\tilde{w}}{}^{\top}\mathbf{x}_{n}\right)t^{-\theta} \label{eq: bound on non-SV gradient}
\end{align}

where in $\left(1\right)$ we used eq. \ref{eq: two factor inequality}, in
$\left(2\right)$ we used $\mathbf{w}\left(t\right)=\hat{\mathbf{w}}\log t+\tilde{\mathbf{w}}+\mathbf{r}\left(t\right)$,
in $\left(3\right)$ we used $xe^{-x}\leq1$ and $\mathbf{x}_{n}^{\top}\mathbf{r}\left(t\right)\geq0$,and
in $\left(4\right)$ we used $\theta>1$, from eq. \ref{eq: v1 SVM}.

Lastly, we will  bound the sum in the third term in eq. \ref{eq: norm r dot}
\begin{equation}
-\eta\sum_{n\in\set}\left[t^{-1}\exp\left(-\tilde{\mathbf{w}}^{\top}\mathbf{x}_{n}\right)+\ell^{\prime}\left(\mathbf{w}\left(t\right)^{\top}\mathbf{x}_{n}\right)\right]\mathbf{x}_{n}^{\top}\mathbf{r}\left(t\right)\,.\label{eq: minimum over S1 tilde}
\end{equation}
We examine each term $n$ in this sum, and divide into two cases,
depending on the sign of $\mathbf{x}_{n}^{\top}\mathbf{r}\left(t\right)$.

First, if $\mathbf{x}_{n}^{\top}\mathbf{r}\left(t\right)\geq0$, then
term $n$ in eq. \ref{eq: minimum over S1 tilde} can be upper bounded
$\forall t>t_{+}$, using eq. \ref{eq: exp bound top}, by 
\begin{equation}
\eta t^{-1}\exp\left(-\tilde{\mathbf{w}}^{\top}\mathbf{x}_{n}\right)\left[\left(1+t^{-\mu_{+}}\exp\left(-\mu_{+}\tilde{\mathbf{w}}^{\top}\mathbf{x}_{n}\right)\right)\exp\left(-\mathbf{x}_{n}^{\top}\mathbf{r}\left(t\right)\right)-1\right]\mathbf{x}_{n}^{\top}\mathbf{r}\left(t\right)\label{eq: positive case}
\end{equation}
We further divide into cases: 
\begin{enumerate}
\item If $\left|\mathbf{x}_{n}^{\top}\mathbf{r}(t)\right|\leq C_{0}t^{-0.5\mu_{+}}$,
then we can upper bound eq. \ref{eq: positive case} with 
\begin{equation}
\eta\exp\left(-\left(1+\mu_{+}\right)\min_{n}\tilde{\mathbf{w}}^{\top}\mathbf{x}_{n}\right)C_{0}t^{-1-1.5\mu_{+}}\,.\label{eq: positive case a}
\end{equation}
\item If $\left|\mathbf{x}_{n}^{\top}\mathbf{r}(t)\right|>C_{0}t^{-0.5\mu_{+}}$,
then we can find $t_{+}^{\prime\prime}>t_{+}^{\prime}$ to upper bound
eq. \ref{eq: positive case} $\forall t>t_{+}^{\prime\prime}$: 
\begin{align}
 & \eta t^{-1}e^{-\tilde{\mathbf{w}}^{\top}\mathbf{x}_{n}}\left[\left(1+t^{-\mu_{+}}e^{-\mu_{+}\tilde{\mathbf{w}}^{\top}\mathbf{x}_{n}}\right)\exp\left(-C_{0}t^{-0.5\mu_{+}}\right)-1\right]\mathbf{x}_{n}^{\top}\mathbf{r}\left(t\right)\,\nonumber \\
\overset{\left(1\right)}{\leq} & \eta t^{-1}e^{-\tilde{\mathbf{w}}^{\top}\mathbf{x}_{n}}\left[\left(1+t^{-\mu_{+}}e^{-\mu_{+}\tilde{\mathbf{w}}^{\top}\mathbf{x}_{n}}\right)\left(1-C_{0}t^{-0.5\mu_{+}}+C_{0}^{2}t^{-\mu_{+}}\right)-1\right]\mathbf{x}_{n}^{\top}\mathbf{r}\left(t\right)\nonumber \\
\leq & \eta t^{-1}e^{-\tilde{\mathbf{w}}^{\top}\mathbf{x}_{n}}\left[\left(1-C_{0}t^{-0.5\mu_{+}}+C_{0}^{2}t^{-\mu_{+}}\right)e^{-\mu_{+}\min\limits_{n}\tilde{\mathbf{w}}^{\top}\mathbf{x}_{n}}t^{-\mu_{+}}-C_{0}t^{-0.5\mu_{+}}+C_{0}^{2}t^{-\mu_{+}}\right]\mathbf{x}_{n}^{\top}\mathbf{r}\left(t\right)\nonumber \\
\overset{\left(2\right)}{\leq} & 0,\,\forall t>t_{+}^{\prime\prime}\label{eq: positive case b}
\end{align}
where in $\left(1\right)$ we used the fact that $e^{-x}\leq1-x+x^{2}$
for $x\geq0$ and in $\left(2\right)$ we defined $t_{+}^{\prime\prime}$
so that the previous expression is negative \textemdash{} since $t^{-0.5\mu_{+}}$ decreases slower than $t^{-\mu_{+}}$. 
\item If $\left|\mathbf{x}_{n}^{\top}\mathbf{r}(t)\right|\geq\epsilon_{2}$,
then we define $t_{+}^{\prime\prime\prime}>t_{+}^{\prime\prime}$
such that $t_{+}^{\prime\prime\prime}>\exp\left(\min_{n}\tilde{\mathbf{w}}^{\top}\mathbf{x}_{n}\right)\left[e^{0.5\epsilon_{2}}-1\right]^{-1/\mu_{+}}$,
and therefore $\forall t>t_{+}^{\prime\prime\prime}$, we have $\left(1+t^{-\mu_{+}}\exp\left(-\mu_{+}\tilde{\mathbf{w}}^{\top}\mathbf{x}_{n}\right)\right)e^{-\epsilon_{2}}<e^{-0.5\epsilon_{2}}$~.

This implies that $\forall t>t_{+}^{\prime\prime\prime}$ we can upper
bound eq. \ref{eq: positive case} by 
\begin{align}
-\eta\exp\left(-\max_{n}\tilde{\mathbf{w}}^{\top}\mathbf{x}_{n}\right)\left(1-e^{-0.5\epsilon_{2}}\right)\epsilon_{2}t^{-1}.\label{eq: positive case c}
\end{align}
\end{enumerate}
Second, if $\mathbf{x}_{n}^{\top}\mathbf{r}(t)<0$, we again further
divide into cases: 
\begin{enumerate}
\item If $\left|\mathbf{x}_{n}^{\top}\mathbf{r}(t)\right|\leq C_{0}t^{-0.5\mu_{-}}$,
then, since $-\ell^{\prime}\left(\mathbf{w}\left(t\right)^{\top}\mathbf{x}_{n}\right)>0$,
we can upper bound term $n$ in eq. \ref{eq: minimum over S1 tilde}
with 
\begin{equation}
\eta t^{-1}\exp\left(-\tilde{\mathbf{w}}^{\top}\mathbf{x}_{n}\right)\left|\mathbf{x}_{n}^{\top}\mathbf{r}\left(t\right)\right|\leq\eta\exp\left(-\min_{n}\mathbf{\tilde{w}}{}^{\top}\mathbf{x}_{n}\right)C_{0}t^{-1-0.5\mu_{-}}\label{eq: negative case a}
\end{equation}
\item If $\left|\mathbf{x}_{n}^{\top}\mathbf{r}\left(t\right)\right|>C_{0}t^{-0.5\mu_{-}}$
, then, using eq. \ref{eq: exp bound bottom} we upper bound term
$n$ in eq. \ref{eq: minimum over S1 tilde} with 
\begin{align}
 & \eta\left[-t^{-1}e^{-\tilde{\mathbf{w}}^{\top}\mathbf{x}_{n}}-\ell^{\prime}\left(\mathbf{w}\left(t\right)^{\top}\mathbf{x}_{n}\right)\right]\mathbf{x}_{n}^{\top}\mathbf{r}\left(t\right)\nonumber \\
\leq & \eta\left[-t^{-1}e^{-\tilde{\mathbf{w}}^{\top}\mathbf{x}_{n}}+\left(1-\exp\left(-\mu_{-}\mathbf{w}\left(t\right)^{\top}\mathbf{x}_{n}\right)\right)\exp\left(-\mathbf{w}\left(t\right)^{\top}\mathbf{x}_{n}\right)\right]\mathbf{x}_{n}^{\top}\mathbf{r}\left(t\right)\nonumber \\
= & \eta t^{-1}e^{-\tilde{\mathbf{w}}^{\top}\mathbf{x}_{n}}\left[1-\exp\left(-\mathbf{r}\left(t\right)^{\top}\mathbf{x}_{n}\right)\left(1-\left[t^{-1}e^{-\tilde{\mathbf{w}}^{\top}\mathbf{x}_{n}}\exp\left(-\mathbf{r}\left(t\right)^{\top}\mathbf{x}_{n}\right)\right]^{\mu_{-}}\right)\right]\left|\mathbf{x}_{n}^{\top}\mathbf{r}\left(t\right)\right|\label{eq: case 2 2}
\end{align}
Next, we will show that $\exists t_{-}^{\prime}>t_{-}$ such that
the last expression is strictly negative $\forall t>t_{-}^{\prime}$.
Let $M>1$ be some arbitrary constant. Then, since $\left[t^{-1}e^{-\tilde{\mathbf{w}}^{\top}\mathbf{x}_{n}}\exp\left(-\mathbf{r}\left(t\right)^{\top}\mathbf{x}_{n}\right)\right]^{\mu_{-}}=\exp\left(-\mu_{-}\mathbf{w}\left(t\right)^{\top}\mathbf{x}_{n}\right)\rightarrow0$
from Lemma \ref{lem: convergence of linear classifiers}, $\exists t_{M}>\max (t_{-}, Me^{-\tilde{\mathbf{w}}^{\top}\mathbf{x}_{n}})$
such that $\forall t>t_{M}$, if $\exp\left(-\mathbf{r}\left(t\right)^{\top}\mathbf{x}_{n}\right)\geq M>1$
then 
\begin{equation}
\exp\left(-\mathbf{r}\left(t\right)^{\top}\mathbf{x}_{n}\right)\left(1-\left[t^{-1}e^{-\tilde{\mathbf{w}}^{\top}\mathbf{x}_{n}}\exp\left(-\mathbf{r}\left(t\right)^{\top}\mathbf{x}_{n}\right)\right]^{\mu_{-}}\right)\geq M^{\prime}>1\,.\label{eq: M bound 1}
\end{equation}
Furthermore, if $\exists t>t_{M}$ such that $\exp\left(\mathbf{r}\left(t\right)^{\top}\mathbf{x}_{n}\right)<M$,
then 
\begin{align}
 & \exp\left(-\mathbf{r}\left(t\right)^{\top}\mathbf{x}_{n}\right)\left(1-\left[t^{-1}e^{-\tilde{\mathbf{w}}^{\top}\mathbf{x}_{n}}\exp\left(-\mathbf{r}\left(t\right)^{\top}\mathbf{x}_{n}\right)\right]^{\mu_{-}}\right)\nonumber \\
> & \exp\left(-\mathbf{r}\left(t\right)^{\top}\mathbf{x}_{n}\right)\left(1-\left[t^{-1}e^{-\tilde{\mathbf{w}}^{\top}\mathbf{x}_{n}}M\right]^{\mu_{-}}\right).\label{eq: M bound 2}
\end{align}
which is lower bounded by 
\begin{align*}
 & \left(1+C_{0}t^{-0.5\mu_{-}}\right)\left(1-t^{-\mu_{-}}\left[e^{-\tilde{\mathbf{w}}^{\top}\mathbf{x}_{n}}M\right]^{\mu_{-}}\right)\\
\geq & 1+C_{0}t^{-0.5\mu_{-}}-t^{-\mu_{-}}\left[e^{-\tilde{\mathbf{w}}^{\top}\mathbf{x}_{n}}M\right]^{\mu_{-}}-t^{-1.5\mu_{-}}\left[e^{-\tilde{\mathbf{w}}^{\top}\mathbf{x}_{n}}M\right]^{\mu_{-}}C_{0}
\end{align*}
since $\left|\mathbf{x}_{n}^{\top}\mathbf{r}\left(t\right)\right|>C_{0}t^{-0.5\mu_{-}}$,
$\mathbf{x}_{n}^{\top}\mathbf{r}\left(t\right)<0$ and $e^{x}\geq1+x$.
In this case last line is strictly larger than $1$ for sufficiently
large $t$. Therefore, after we substitute eqs. \ref{eq: M bound 1}
and \ref{eq: M bound 2} into \ref{eq: case 2 2}, we find that $\exists t_{-}^{\prime}>t_{M}>t_{-}$
such that $\forall t>t_{-}^{\prime}$, term $k$ in eq. \ref{eq: minimum over S1 tilde}
is strictly negative 
\begin{equation}
\eta\left[-t^{-1}e^{-\tilde{\mathbf{w}}^{\top}\mathbf{x}_{k}}-\ell^{\prime}\left(\mathbf{w}\left(t\right)^{\top}\mathbf{x}_{k}\right)\right]\mathbf{x}_{k}^{\top}\mathbf{r}\left(t\right)<0\label{eq: negative case b}
\end{equation}
\item If $\left|\mathbf{x}_{k}^{\top}\mathbf{r}(t)\right|\geq\epsilon_{2}$
, which is a special case of the previous case ($\left|\mathbf{x}_{k}^{\top}\mathbf{r}\left(t\right)\right|>C_{0}t^{-0.5\mu_{-}}$)
then $\forall t>t_{-}^{\prime}$, either eq. \ref{eq: M bound 1}
or \ref{eq: M bound 2} holds. Furthermore, in this case, $\exists t_{-}^{\prime\prime}>t_{-}^{\prime}$
and $M^{\prime\prime}>1$ such that $\forall t>t_{-}^{\prime\prime}$
eq. \ref{eq: M bound 2} can be lower bounded by 
\[
\exp\left(\epsilon_{2}\right)\left(1-\left[t^{-1}e^{-\tilde{\mathbf{w}}^{\top}\mathbf{x}_{k}}M\right]^{\mu_{-}}\right)>M^{\prime\prime}>1\,.
\]
Substituting this, together with eq. \ref{eq: M bound 1}, into eq.
\ref{eq: case 2 2}, we can find $C_{0}^{\prime}>0$ such we can upper
bound term $k$ in eq. \ref{eq: minimum over S1 tilde} with 
\begin{equation}
-C_{0}^{\prime}t^{-1}\,,\,\forall t>t_{-}^{\prime\prime}\,.\label{eq: negative case c}
\end{equation}
\end{enumerate}
To conclude, we choose $t_{0}=\max\left[t_{+}^{\prime\prime\prime},t_{-}^{\prime\prime}\right]$: 
\begin{enumerate}
\item If $\left\Vert \mathbf{P}_{1}\mathbf{r}\left(t\right)\right\Vert \geq\epsilon_{1}$
(as in Eq. \ref{eq: bounded r(t)}), we have that 
\begin{equation}
\max_{n\in\set}\left|\mathbf{x}_{n}^{\top}\mathbf{r}\left(t\right)\right|^{2}\overset{\left(1\right)}{\geq}\frac{1}{\left|\set\right|}\sum_{n\in\set}\left|\mathbf{x}_{n}^{\top}\mathbf{P}_{1}\mathbf{r}\left(t\right)\right|^{2}=\frac{1}{\left|\set\right|}\left\Vert \mathbf{X}_{\set}^{\top}\mathbf{P}_{1}\mathbf{r}\left(t\right)\right\Vert ^{2}\overset{\left(2\right)}{\geq}\frac{1}{\left|\set\right|}\sigma_{\min}^{2}\left(\mathbf{X}_{\set}\right)\epsilon_{1}^{2}\label{eq: epsilon 2}
\end{equation}
where in $\left(1\right)$ we used $\mathbf{P}_{1}^{\top}\mathbf{x}_{n}=\mathbf{x}_{n}$
$\forall n\in\set$, in $\left(2\right)$ we denoted by $\sigma_{\min}\left(\mathbf{X}_{\set}\right)$,
the minimal non-zero singular value of $\mathbf{X}_{\set}$ and used
eq. \ref{eq: bounded r(t)}. Therefore, for some $k$, $\left|\mathbf{x}_{k}^{\top}\mathbf{r}\right|\geq\epsilon_{2}\triangleq\sqrt{\left|\set\right|^{-1}\sigma_{\min}^{2}\left(\mathbf{X}_{\set}\right)\epsilon_{1}^{2}}$.
In this case, we denote $C_{0}^{\prime\prime}$ as the minimum between
$C_{0}^{\prime}$ (eq. \ref{eq: negative case c}) and $\eta\exp\left(-\max_{n}\tilde{\mathbf{w}}^{\top}\mathbf{x}_{n}\right)\left(1-e^{-0.5\epsilon_{2}}\right)\epsilon_{2}$
(eq. \ref{eq: positive case c}). Then we find that eq. \ref{eq: minimum over S1 tilde}
can be upper bounded by $-C_{0}^{\prime\prime}t^{-1}+o\left(t^{-1}\right)$,
$\forall t>t_{0}$, given eq. \ref{eq: bounded r(t)}. Substituting
this result, together with eqs. \ref{eq: w_hat r bound 1} and \ref{eq: bound on non-SV gradient}
into eq. \ref{eq: norm r dot}, we obtain $\forall t>t_{0}$ 
\[
\left(\mathbf{r}\left(t+1\right)-\mathbf{r}\left(t\right)\right)^{\top}\mathbf{r}\left(t\right)\leq-C_{0}^{\prime\prime}t^{-1}+o\left(t^{-1}\right)\,.
\]
This implies that $\exists C_{2}<C_{0}^{\prime\prime}$ and $\exists t_{2}>t_{0}$
such that eq. \ref{eq: bounded cases} holds. This implies also that
eq. \ref{eq: general case} holds for $\left\Vert \mathbf{P}_{1}\mathbf{r}\left(t\right)\right\Vert \geq\epsilon_{1}$. 
\item Otherwise, if $\left\Vert \mathbf{P}_{1}\mathbf{r}\left(t\right)\right\Vert <\epsilon_{1}$,
we find that $\forall t>t_{0}$ , each term in eq. \ref{eq: minimum over S1 tilde}
can be upper bounded by either zero (eqs. \ref{eq: positive case b}
and \ref{eq: negative case b}), or terms proportional to $t^{-1-1.5\mu_{+}}$
(eq. \ref{eq: positive case a}) or $t^{-1-0.5\mu_{-}}$, (eq. \ref{eq: negative case a}).
Combining this together with eqs. \ref{eq: w_hat r bound 1}, \ref{eq: bound on non-SV gradient}
into eq. \ref{eq: norm r dot} we obtain (for some positive constants
$C_{3}$, $C_{4}$, $C_{5}$, and $C_{6}$) 
\[
\left(\mathbf{r}\left(t+1\right)-\mathbf{r}\left(t\right)\right)^{\top}\mathbf{r}\left(t\right)\leq C_{3}t^{-1-1.5\mu_{+}}+C_{4}t^{-1-0.5\mu_{-}}+C_{5}t^{-2}+C_{6}t^{-\theta}\,.
\]
Therefore, $\exists t_{1}>t_{0}$ and $C_{1}$ such that eq. \ref{eq: general case}
holds. $\blacksquare$
\end{enumerate}

\section{Generic solutions of the KKT conditions in eq. \ref{eq:kkt} \label{sec:alpha}}
\begin{lem}
\label{lem: alpha} For almost all datasets there is a unique $\boldsymbol{\alpha}$
which satisfies the KKT conditions (eq. \ref{eq:kkt}): 
\[
\what=\sum_{n=1}^{N}\alpha_{n}\x_{n}\quad\quad\forall n\;\left(\alpha_{n}\geq0\;\textrm{and}\;\what^{\top}\x_{n}=1\right)\;\;\textrm{OR}\;\;\left(\alpha_{n}=0\;\textrm{and}\;\what^{\top}\x_{n}>1\right)
\]
Furthermore, in this solution $\alpha_{n}\neq0$ if $\what^{\top}\mathbf{x}_{n}=1$,
{\em{i.e.}}, $\x_{n}$ is a support vector ($n\in\set$), and
there are at most $d$ such support vectors. 
\end{lem}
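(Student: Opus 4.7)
The plan is to establish the three assertions of the lemma by leveraging the uniqueness of $\what$ together with standard measure-zero arguments on the data. For every separable dataset, $\what$ is the unique minimizer of a strongly convex quadratic over a polyhedron, so the active set $\set = \{n : \what^{\top}\xn = 1\}$ is well-defined. Complementary slackness forces $\alpha_n = 0$ whenever $n \notin \set$, so any KKT solution $\boldsymbol{\alpha}$ is supported on $\set$, and the task reduces to (i) bounding $|\set|$, (ii) uniqueness of $\boldsymbol{\alpha}_\set$ in the system $\what = \mathbf{X}_\set \boldsymbol{\alpha}_\set$, and (iii) strict positivity $\alpha_n > 0$ on $\set$.

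For (i), I would observe that every support vector lies on the $(d-1)$-dimensional affine hyperplane $\{\x : \what^{\top}\x = 1\}$, so having $d+1$ support vectors would force them to be affinely dependent. Affine dependence of a specified $(d+1)$-tuple is the vanishing of a determinantal polynomial in the point coordinates, which defines a measure-zero subset of $(\mathbb{R}^d)^{d+1}$. Taking a union over the finitely many $(d+1)$-subsets of $[N]$ preserves the null property, so generically $|\set| \leq d$. Because the affine hull of the support vectors does not contain the origin (since $\what^{\top}\mathbf{0} = 0 \neq 1$), affine independence upgrades to linear independence, so $\mathbf{X}_\set$ has full column rank. This immediately yields uniqueness of $\boldsymbol{\alpha}_\set$ as the solution of $\what = \mathbf{X}_\set \boldsymbol{\alpha}_\set$, establishing (ii).

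For (iii), which is the most delicate step, I would enumerate candidate active sets. For each $T \subseteq [N]$ with $|T| \leq d$ and $\mathbf{X}_T$ of full column rank, define $\what_T = \mathbf{X}_T(\mathbf{X}_T^{\top}\mathbf{X}_T)^{-1}\mathbf{1}$, the unique minimum-norm solution of $\mathbf{X}_T^{\top}\w = \mathbf{1}$. A degenerate KKT solution with $\alpha_n = 0$ for some $n \in \set$ would force $\what$ to coincide with $\what_T$ for $T = \set \setminus \{n\}$, and hence $\xn^{\top}\what_T = 1$. Holding the data $\{\mathbf{x}_m\}_{m \in T}$ fixed, this is a single linear equation in $\xn$, and it is nontrivial because $\what_T \neq \mathbf{0}$ (otherwise $\mathbf{x}_m^{\top}\what_T = 0$ rather than $1$ for $m \in T$). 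By Fubini, the set of $\xn$ satisfying it has measure zero, and integrating out the remaining coordinates keeps the joint exceptional set null. Taking a union over the finitely many pairs $(T,n)$ yields a measure-zero set of degenerate datasets.

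The main obstacle I foresee is tracking the linear-independence hypotheses throughout, so that $\what_T$ is well-defined and nonzero on the nondegenerate stratum, and applying Fubini cleanly. Once the exceptional conditions from (i) and (iii) are identified as finite unions of proper algebraic subvarieties of the data space, their union has measure zero under any absolutely continuous distribution on $(\mathbb{R}^d)^N$, completing the proof.
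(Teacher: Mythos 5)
Your proof is correct, and its overall skeleton (generically at most $d$ support vectors since they all lie on one hyperplane, hence $\mathbf{X}_{\set}$ has full column rank and $\boldsymbol{\alpha}_{\set}$ is uniquely determined by $\mathbf{X}_{\set}^{\top}\mathbf{X}_{\set}\boldsymbol{\alpha}_{\set}=\mathbf{1}$, followed by a union over the finitely many candidate support sets) matches the paper's. Where you genuinely diverge is the non-degeneracy step $\alpha_n\neq 0$ for $n\in\set$. The paper fixes the candidate set $\set$, writes $\boldsymbol{\alpha}_{\set}=(\mathbf{X}_{\set}^{\top}\mathbf{X}_{\set})^{-1}\mathbf{1}$, observes that each $\alpha_n$ is a rational function $p_n/q_n$ of the entries of $\mathbf{X}_{\set}$, and shows $p_n$ is not identically zero by exhibiting one witness configuration ($\mathbf{X}_{\set}^{\top}=[\mathbf{I},\mathbf{0}]$, giving $\alpha_n=1$); the zero set of a non-trivial polynomial is then null. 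You instead argue that a vanishing coefficient forces $\what=\what_T$ with $T=\set\setminus\{n\}$, and hence forces $\x_n$ onto the affine hyperplane $\{\x:\x^{\top}\what_T=1\}$ determined by the remaining points, which is null by Fubini. Your route buys transparency and dispenses with the witness configuration, since $\what_T\neq\mathbf{0}$ is automatic from $\mathbf{x}_m^{\top}\what_T=1$ on $T$; the paper's route keeps the whole argument inside one algebraic object (the Gram matrix) and avoids the conditioning step. Two trivial loose ends in yours: handle $T=\emptyset$ separately (then $\what=\mathbf{0}$, contradicting $\what^{\top}\x_n=1$ directly), and note explicitly that the stratum on which $\mathbf{X}_T$ has full column rank, so that $\what_T$ is well-defined, excludes only a null set, which you already flag as the bookkeeping to be tracked.
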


For almost every set $\mathbf{X}$, no more than $d$ points $\mathbf{x}_{n}$
can be on the same hyperplane. Therefore, since all support vectors
must lie on the same hyperplane, there can be at most $d$ support
vectors, for almost every $\mathbf{X}$.

Given the set of support vectors, $\set$, the KKT conditions of eq.
\ref{eq:kkt} entail that $\alpha_{n}=0$ if $n\notin\set$ and 
\begin{equation}
\mathbf{1}=\mathbf{X}_{\set}^{\top}\what=\mathbf{X}_{\set}^{\top}\mathbf{X}_{\set}\boldsymbol{\alpha}_{\set}\,,\label{eq: dual KKT}
\end{equation}
where we denoted \textbf{$\boldsymbol{\alpha}_{\set}$ }as $\boldsymbol{\alpha}$
restricted to the support vector components. For almost every set
$\mathbf{X}$, since $d\geq\left|\set\right|$, $\mathbf{X}_{\set}^{\top}\mathbf{X}_{\set}\in\mathbb{R}^{\left|\set\right|\times\left|\set\right|}$
is invertible. Therefore, $\boldsymbol{\alpha}_{\set}$ has the unique
solution 
\begin{equation}
\left(\mathbf{X}_{\set}^{\top}\mathbf{X}_{\set}\right)^{-1}\mathbf{1}=\boldsymbol{\alpha}_{\set}\,.\label{eq: inverted dual KKT}
\end{equation}
This implies that $\forall n\in\set$, $\alpha_{n}$ is equal to a
rational function in the components of $\mathbf{X}_{S}$, \emph{i.e.},
$\alpha_{n}=p_{n}\left(\text{\textbf{X}}_{\set}\right)/q_{n}\left(\text{\textbf{X}}_{\set}\right)$,
where $p_{n}$ and $q_{n}$ are polynomials in the components of $\mathbf{X}_{S}$.
Therefore, if $\alpha_{n}=0$, then $p_{n}\left(\text{\textbf{X}}_{\set}\right)=0$,
so the components of $\mathbf{X}_{\set}$ must be at a root of the
polynomial $p_{n}$. The roots of the polynomial $p_{n}$ have measure
zero, unless $\forall\mathbf{X}_{\set}:\,\,p_{n}\left(\text{\textbf{X}}_{\set}\right)=0$.
However, $p_{n}$ cannot be identically equal to zero, since, for
example, if $\mathbf{X}_{\set}^{\top}=\left[\mathbf{I}_{\left|\set\right|\times\left|\set\right|},\boldsymbol{0}_{\left|\set\right|\times\left(d-\left|\set\right|\right)}\right]$,
then $\mathbf{X}_{\set}^{\top}\mathbf{X}_{\set}=\mathbf{I}_{\left|\set\right|\times\left|\set\right|}$,
and so in this case $\forall n\in\set$, $\alpha_{n}=1\neq0$, from
eq. \ref{eq: inverted dual KKT}.

Therefore, for a given $\set$, the event that ``eq.
\ref{eq: dual KKT} has a solution with a zero component''
has a zero measure. Moreover, the union of these events, for all possible
$\set$, also has zero measure, as a finite union of zero measures
sets (there are only finitely many possible sets $\set\subset\left\{ 1,\dots,N\right\} $
). This implies that, for almost all datasets $\mathbf{X}$, $\alpha_{n}=0$
only if $n\notin\set$. Furthermore, for almost all datasets the solution
$\boldsymbol{\alpha}$ is unique: for each dataset, $\set$ is uniquely
determined, and given $\set$ , the solution eq. \ref{eq: dual KKT}
is uniquely given by eq. \ref{eq: inverted dual KKT}. $\blacksquare$

\section{Completing the proof of Theorem  \ref{thm: main theorem} for zero measure cases \label{sec: proof of degenerate case}}

In the preceding Appendices, we established Theorem \ref{thm: refined Theorem}, which only applied when all support vectors are associated with non-zero coefficients.  This characterizes almost all data sets, \emph{i.e.}~all except for measure zero.  We now turn to presenting and proving a more complete characterization of the limit behaviour of gradient descent, which covers all data sets, including those degenerate data sets not covered by Theorem \ref{thm: refined Theorem}, thus establishing Theorem \ref{thm: main theorem}.

\newcommand{\barP}{\bar{\mathbf{P}}}
\newcommand{\barS}{\bar{\set}}

In order to do so, we first have to introduce additional notation and a recursive treatment of the data set.  We will define a sequence of data sets $\barP_m \mathbf{X}_{\barS_m}$ obtained by considering only a subset $\barS_m$ of the points, and projecting them using the projection matrix $\barP_m$.  We start, for $m=0$, with the full original data set, \emph{i.e.}~$\barS_0=\{1,\ldots,N\}$ and $\barP_0=\mathbf{I}_{d \times d}$.  We then define $\hat{\mathbf{w}}_m$ as the max margin predictor for $\barP_{m-1} \mathbf{X}_{\barS_{m-1}}$, \emph{i.e.}:
\begin{equation}\label{eq:hatwm}
\hat{\mathbf{w}}_{m}=\underset{\mathbf{\mathbf{w}}\in\mathbb{R}^{d}}{\mathrm{argmin}}\left\lVert \mathbf{w}\right\rVert ^{2}\,\,\mathrm{s.t.}\,\,\mathbf{w}^{\top}\bar{\mathcal{\mathbf{P}}}_{m-1}\mathbf{x}_{n}\geq1\,\forall n\in\bar{\set}_{m-1}\,.
\end{equation}
In particular, $\hat{\mathbf{w}}_1$ is the max margin predictor for the original data set.  We then denote $\set_m^+$ the indices of non-support vectors for \ref{eq:hatwm}, $\set_m$ the indices of support vector of \ref{eq:hatwm} with non-zero coefficients for the dual variables corresponding to the margin constraints (for some dual solution), and $\barS_m$ the set of support vector with zero coefficients.  That is:
\begin{align}\label{eq:sm}
\set_{m}^{+}= & \left\{ n\in\bar{\set}_{m-1}|\what_{m}^{\top}\bar{\mathcal{\mathbf{P}}}_{m-1}\x_{n}>1\right\} \nonumber\\
\set_{m}^{=}= & \left\{ n\in\bar{\set}_{m-1}|\what_{m}^{\top}\bar{\mathcal{\mathbf{P}}}_{m-1}\x_{n}=1\right\} =\bar{\mathcal{S}}_{m}\setminus\set_{m}^{+}\nonumber\\
\set_{m}= & \left\{ n\in\set_{m}^{=}|\exists\boldsymbol{\alpha}\in\mathbb{R}_{\geq0}^{N}:\mathbf{\hat{w}}_{m}=\sum_{k=1}^{N}\alpha_{k}\bar{\mathcal{\mathbf{P}}}_{m-1}\x_{k},\alpha_{n}>0,\forall i\notin\set_{m}^{=}:\,\alpha_{i}=0\right\} \nonumber\\
\bar{\mathcal{S}}_{m}= &\, \, \set_{m}^{=} \setminus\set_{m}\,.
\end{align}
The problematic degenerate case, not covered by the analysis of Theorem \ref{thm: refined Theorem}, is when there are support vectors with zero coefficients, \emph{i.e.}, when $\barS_m\neq\emptyset$.  In this case we recurse on these zero-coefficient support vectors (\emph{i.e.}, on $\barS_m$), but only consider their components orthogonal to the non-zero-coefficient support vectors (\emph{i.e.}, not spanned by points in $\set_m$).  That is, we project using:
\begin{equation}
    \bar{\mathbf{P}}_{m}=\bar{\mathbf{P}}_{m-1}\left(\mathbf{I}_{d}-\mathbf{X}_{\set_{m}}\mathbf{X}_{\set_{m}}^{\pmpi}\right)
\end{equation}
where we denoted $\mathbf{A}^{\pmpi}$ as the Moore-Penrose pseudo-inverse 
of $\mathbf{A}$. We also denote $\mathbf{P}_{m}  =\mathbf{I}_{d}-\bar{\mathbf{P}}_{m}$.  

 This recursive treatment continues as long as $\barS_m\neq\emptyset$, defining a sequence $\hat{\mathbf{w}}_m$ of max margin predictors, for smaller and lower dimensional data sets $\barP_{m-1} \mathbf{X}_{\barS_{m-1}}$.  We stop when $\barS_m=\emptyset$ and denote the stopping stage $M$---that is, $M$ is the minimal $m$ such that $\barS_m=\emptyset$.  Our characterization will be in terms of the sequence $\hat{\mathbf{w}}_1,\ldots,\hat{\mathbf{w}}_M$.  As established in Lemma \ref{lem: alpha} of Appendix \ref{sec:alpha}, for almost all data sets we will not have support vectors with non-zero coefficients, and so we will have $M=1$, and so the characterization only depends on the max margin predictor $\hat{\mathbf{w}}_1$ of the original data set.  But, even for the measure zero of data sets in which $M>1$, we provide the following more complete characterization:
\begin{thm}
\label{theorem: main2}For all datasets which are linearly
separable (Assumption \ref{assum: Linear sepereability}) and given
a $\beta$-smooth loss function (Assumption \ref{assum: loss properties})
with an exponential tail (Assumption \ref{assum: exponential tail}), gradient descent (as in eq. \ref{eq: gradient descent linear})
with step size $\eta<2\beta^{-1}\sigma_{\max}^{-2}\left(\text{\ensuremath{\mathbf{X}} }\right)$
and any starting point $\w(0)$, the iterates of gradient descent can be written as:
\begin{equation}
\mathbf{w}\left(t\right)=\sum_{m=1}^{M}\hat{\mathbf{w}}_{m}\log^{\circ m}\left(t\right)+\boldsymbol{\rho}\left(t\right)\,,\label{eq: asymptotic form-1}
\end{equation}
where $\log^{\circ m}\left(t\right)=\overbrace{\log\log\cdots\log}^{m\,\mathrm{times}}\left(t\right)$,
$\hat{\mathbf{w}}_{m}$ is the $L_2$ max margin vector defined in eq.
\ref{eq:hatwm}, and the residual $\boldsymbol{\rho}\left(t\right)$
is bounded. 
\end{thm}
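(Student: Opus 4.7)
The plan is to build on the proof of Theorem \ref{thm: main theorem almost everywhere} by peeling off one log-iterate at a time. Having subtracted $\hat{\mathbf{w}}_1\log t$, the non-degenerate argument controls the component of the residual in the span of the non-zero-coefficient support vectors $\mathbf{X}_{\set_1}$, but in the degenerate case leaves the orthogonal component $\bar{\mathbf{P}}_1 \mathbf{r}(t)$ free to grow. That leftover component satisfies a gradient-descent-like dynamics on the projected and restricted data $\bar{\mathbf{P}}_1 \mathbf{X}_{\bar{\set}_1}$, whose $L_2$ max-margin vector is exactly $\hat{\mathbf{w}}_2$, so the structural argument applies one level down. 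Iterating yields the desired expansion $\mathbf{w}(t)=\sum_{m=1}^{M}\hat{\mathbf{w}}_m \log^{\circ m}(t)+\boldsymbol{\rho}(t)$, with finite $M\le d$ because each level removes at least one linearly independent direction (via Lemma \ref{lem: alpha} applied in the projected space).

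Concretely, I would introduce centering vectors $\tilde{\mathbf{w}}_1,\ldots,\tilde{\mathbf{w}}_M$, with $\tilde{\mathbf{w}}_m$ satisfying the level-$m$ analogue of eq. \ref{eq: w tilde}, namely $\eta\exp(-\tilde{\mathbf{w}}_m^\top \bar{\mathbf{P}}_{m-1}\mathbf{x}_n)=\alpha_n^{(m)}$ for $n\in\set_m$, where the $\alpha_n^{(m)}$ are the dual variables of the level-$m$ max-margin problem (well-defined by the recursive analogue of Lemma \ref{lem: alpha}). Define $\boldsymbol{\rho}(t)=\mathbf{w}(t)-\sum_{m=1}^{M}\hat{\mathbf{w}}_m\log^{\circ m}(t)-\sum_{m=1}^{M}\tilde{\mathbf{w}}_m$. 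The key algebraic fact driving the argument is that for $n\in\set_m$ one has $\hat{\mathbf{w}}_j^\top \mathbf{x}_n=1$ when $j\le m$ and $\hat{\mathbf{w}}_j^\top \mathbf{x}_n=0$ when $j>m$ --- the first because $\set_m\subseteq\bar{\set}_{j-1}$ forces the margin equality $\hat{\mathbf{w}}_j^\top \bar{\mathbf{P}}_{j-1}\mathbf{x}_n=1$, the second because $\bar{\mathbf{P}}_{j-1}$ annihilates $\mathbf{x}_n$ once $j>m$. Consequently,
\[
\exp(-\mathbf{w}(t)^\top \mathbf{x}_n)=\frac{1}{t\,\log t\cdots \log^{\circ(m-1)}(t)}\,e^{-\tilde{\mathbf{w}}_m^\top \mathbf{x}_n}\,e^{-\boldsymbol{\rho}(t)^\top \mathbf{x}_n},
\]
whose scale matches the discrete derivative $\log^{\circ m}(t+1)-\log^{\circ m}(t)\sim 1/(t\log t\cdots\log^{\circ(m-1)}(t))$ of the peeled term; the level-$m$ KKT identity then cancels the $\boldsymbol{\rho}$-independent leading piece simultaneously at every scale.

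With these cancellations in hand, the residual bound proceeds by the same Lyapunov argument as in Appendix \ref{sec:Proof-of-Theorem}. Lemma \ref{lem: GD convergence} together with the crude bound $(\log^{\circ m}(t+1)-\log^{\circ m}(t))^2=O(t^{-2})$ implies that $\|\boldsymbol{\rho}(t+1)-\boldsymbol{\rho}(t)\|^2$ is summable. For the cross term $(\boldsymbol{\rho}(t+1)-\boldsymbol{\rho}(t))^\top \boldsymbol{\rho}(t)$, I would split the gradient by index type and generalize Lemma \ref{lem: correlation bound} level by level: contributions from non-support indices $n\in\set_m^+$ decay by an extra polynomial factor coming from the strict margin $\theta_m>1$; contributions from $n\in\set_m$, after KKT cancellation, leave only a term of the form $\tfrac{1}{t\log t\cdots\log^{\circ(m-1)}(t)}\,e^{-\tilde{\mathbf{w}}_m^\top \mathbf{x}_n}(e^{-\boldsymbol{\rho}^\top \mathbf{x}_n}-1)\boldsymbol{\rho}^\top \mathbf{x}_n\le 0$ plus sub-leading errors of order $t^{-1-c}$ produced by the $\mu_{\pm}$ tail constants; and a strict negative-drift bound kicks in at each level whenever the relevant $\|\mathbf{P}_m\boldsymbol{\rho}(t)\|$ exceeds a threshold, analogous to eq. \ref{eq: bounded cases}. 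Telescoping $\|\boldsymbol{\rho}(t)\|^2$ then shows it is bounded, which is exactly what Theorem \ref{thm: main theorem} needs (the $O(\log\log t)$ residual in eq. \ref{eq: asymptotic form} being dominated by $\hat{\mathbf{w}}_2\log\log t+\cdots+\hat{\mathbf{w}}_M\log^{\circ M}(t)+O(1)$). The main obstacle I anticipate is the combinatorial bookkeeping: a sample $n\in\set_m$ simultaneously contributes ``drift'' at every level $j\le m$, and the improved negative-drift inequalities from different levels must be combined consistently through the projection tower, all while ensuring cross-level interactions in the $\tilde{\mathbf{w}}_m$'s do not spoil the KKT cancellations.
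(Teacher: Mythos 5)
Your recursive/peeling setup matches the paper's (the same tower of projected max-margin problems $\hat{\mathbf{w}}_m$, the same $\tilde{\mathbf{w}}_m$ solving the level-$m$ analogue of eq.~\ref{eq: w tilde}, the same Lyapunov recursion on $\|\boldsymbol{\rho}(t)\|^2$), but your ansatz for the residual is missing an essential ingredient, and the step where you claim the KKT identity ``cancels the $\boldsymbol{\rho}$-independent leading piece simultaneously at every scale'' fails because of it.

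The problem is that for $n\in\set_m$ with $m\geq 2$ the gradient contributes the \emph{full} vector $\mathbf{x}_n$, not its projection $\bar{\mathbf{P}}_{m-1}\mathbf{x}_n$, whereas $\hat{\mathbf{w}}_m$ lies entirely in the range of $\bar{\mathbf{P}}_{m-1}$. The level-$m$ KKT/centering identity $\eta\sum_{n\in\set_m}e^{-\tilde{\mathbf{w}}^\top\mathbf{x}_n}\bar{\mathbf{P}}_{m-1}\mathbf{x}_n=\hat{\mathbf{w}}_m$ therefore only cancels the $\bar{\mathbf{P}}_{m-1}$-component of the leading piece. What survives is
\begin{equation*}
\frac{\eta}{t\prod_{r=1}^{m-1}\log^{\circ r}(t)}\sum_{n\in\set_m}e^{-\tilde{\mathbf{w}}^\top\mathbf{x}_n}\,\mathbf{P}_{m-1}\mathbf{x}_n ,
\end{equation*}
a generically nonzero vector in the span of the lower-level support vectors, appearing at the scale $1/(t\prod_{r=1}^{m-1}\log^{\circ r}(t))$ --- which is exactly the borderline non-summable rate (its partial sums grow like $\log^{\circ m}(t)$). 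In your Lyapunov recursion this produces a cross term $\kappa(t)\,\boldsymbol{\rho}(t)^\top\mathbf{v}$ with $\kappa\notin L_1$ and no sign control, so telescoping $\|\boldsymbol{\rho}(t)\|^2$ does not close. Intuitively the residual does stay bounded, because this slow drift is opposed by the much stronger $O(1/t)$ restoring force from the level-$1$ support vectors, but to exploit that you must shift the origin by the resulting quasi-static displacement. That is precisely the role of the correction terms $\check{\mathbf{w}}_{k,m}/\prod_{r=k}^{m-1}\log^{\circ r}(t)$ that the paper adds to the residual definition, together with the decomposition $\hat{\mathbf{w}}_m=\eta\sum_{n\in\set_m}e^{-\tilde{\mathbf{w}}^\top\mathbf{x}_n}\mathbf{x}_n-\eta\sum_{k=1}^{m-1}\bigl[\sum_{n\in\set_k}e^{-\tilde{\mathbf{w}}^\top\mathbf{x}_n}\mathbf{x}_n\mathbf{x}_n^\top\bigr]\check{\mathbf{w}}_{k,m}$ (eq.~\ref{eq: w_hat full decomposition}); expanding $\exp(-\mathbf{x}_n^\top\check{\mathbf{w}}_{k,m}/\prod\log^{\circ r})$ to first order for $n\in\set_k$ generates exactly the terms that absorb the leftover $\mathbf{P}_{m-1}$-component. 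Without these corrections (and a proof of their existence and uniqueness, which the paper supplies separately), your argument has a genuine gap at the central cancellation step. The rest of your outline --- summability of $\|\boldsymbol{\rho}(t+1)-\boldsymbol{\rho}(t)\|^2$, the extra polynomial decay from $\theta_m>1$ for $n\in\set_m^+$, the sign-definite $(e^{-z}-1)z\leq 0$ terms, and $M\leq d$ --- is sound and consistent with the paper.
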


\subsection{Auxiliary notation}

We say that a function $f:\mathbb{N}\rightarrow\mathbb{R}$ is absolutely
summable if $\sum_{t=1}^{\infty}\left|f\left(t\right)\right|<\infty$,
and then we denote $f\left(t\right)\in L_{1}$. Furthermore, we define

\[
\mathbf{r}\left(t\right)=\mathbf{w}\left(t\right)-\sum_{m=1}^{M}\left[\hat{\mathbf{w}}_{m}\log^{\circ m}\left(t\right)+\tilde{\mathbf{w}}_{m}+\sum_{k=1}^{m-1}\frac{\check{\mathbf{w}}_{k,m}}{\prod_{r=k}^{m-1}\log^{\circ r}\left(t\right)}\right]
\]
where $\tilde{\mathbf{w}}_{m}$ and $\check{\mathbf{w}}_{k,m}$ are
defined next, and additionally, we denote

\[
\tilde{\mathbf{w}}=\sum_{m=1}^{M}\tilde{\mathbf{w}}_{m}\,.
\]

We define, $\forall m\geq1$, $\tilde{\mathbf{w}}_{m}$ as the solution
of 
\begin{equation}
\forall m\geq1:\forall n\in\set_{m}:\, \eta \sum_{n\in\set_{m}}\exp\left(-\sum_{k=1}^{m}\tilde{\mathbf{w}}_{k}^{\top}\mathbf{x}_{n}\right)\bar{\mathbf{P}}_{m-1}\mathbf{x}_{n}=\hat{\mathbf{w}}_{m}\,,\label{eq: w tilde-1}
\end{equation}
such that 
\begin{equation}
\mathbf{P}_{m-1}\tilde{\mathbf{w}}_{m}=0\,\mathrm{and}\,\bar{\mathbf{P}}_{m}\tilde{\mathbf{w}}_{m}=0.\label{eq: w tilde constraints}
\end{equation}
The existence and uniqueness of the solution, $\tilde{\mathbf{w}}_{m}$
are proved in appendix section \ref{subsec: existence 1}.

Lastly, we define, $\forall m>k\geq1$, $\check{\mathbf{w}}_{k,m}$
as the solution of 
\begin{equation}
\sum_{n\in\set_{m}}\exp\left(-\tilde{\mathbf{w}}^{\top}\mathbf{x}_{n}\right)\mathbf{P}_{m-1}\mathbf{x}_{n}=\sum_{k=1}^{m-1}\left[\sum_{n\in\set_{k}}\exp\left(-\tilde{\mathbf{w}}^{\top}\mathbf{x}_{n}\right)\mathbf{x}_{n}\mathbf{x}_{n}^{\top}\right]\check{\mathbf{w}}_{k,m}\label{eq: w check}
\end{equation}
such that
\begin{equation}
\mathbf{P}_{k-1}\check{\mathbf{w}}_{k,m}=0\,\mathrm{and}\,\mathbf{\bar{P}}_{k}\check{\mathbf{w}}_{k,m}=0\,.\label{eq: w check constraints}
\end{equation}
The existence and uniqueness of the solution $\check{\mathbf{w}}_{k,m}$
are proved in appendix section \ref{subsec: existence 2}. 

Together, eqs. \ref{eq: w tilde-1}-\ref{eq: w check constraints}
entail the existence of a unique decomposition, $\forall m\geq1:$
\begin{equation}
\hat{\mathbf{w}}_{m}=\eta \sum_{n\in\set_{m}}\exp\left(-\tilde{\mathbf{w}}^{\top}\mathbf{x}_{n}\right)\mathbf{x}_{n}-\eta \sum_{k=1}^{m-1}\left[\sum_{n\in\set_{k}}\exp\left(-\tilde{\mathbf{w}}^{\top}\mathbf{x}_{n}\right)\mathbf{x}_{n}\mathbf{x}_{n}^{\top}\right]\check{\mathbf{w}}_{k,m}\label{eq: w_hat full decomposition}
\end{equation}
given the constraints in eqs. \ref{eq: w tilde constraints} and \ref{eq: w check constraints}
hold. 

\subsection{Proof of Theorem \ref{theorem: main2}}

In the following proofs, for any solution $\wvec(t)$, we define 
\[
\tvec\left(t\right)=\sum_{m=2}^{M}\hat{\mathbf{w}}_{m}\log^{\circ m}\left(t\right)+\sum_{m=1}^{M}\sum_{k=1}^{m-1}\frac{\check{\mathbf{w}}_{k,m}}{\prod_{r=k}^{m-1}\log^{\circ r}\left(t\right)}
\]
noting that 
\[
\left\Vert \tvec\left(t+1\right)-\tvec\left(t\right)\right\Vert \leq\frac{C_{\tau}}{t\log\left(t\right)}
\]
and 
\begin{equation}
\rvec(t)=\wvec(t)-\hat{\mathbf{w}}_{1}\log\left(t\right)-\tilde{\mathbf{w}}-\tvec\left(t\right)\label{eq: define r(t) degenrate}
\end{equation}
where $\tilde{\mathbf{w}}$ follow the conditions of Theorem \ref{theorem: main2}.
Our goal is to show that $\Vert\rvec(t)\Vert$ is bounded. To show
this, we will upper bound the following equation 
\begin{equation}
\Vert\rvec(t+1)\Vert^{2}=\Vert\rvec(t+1)-\rvec(t)\Vert^{2}+2\left(\rvec(t+1)-\rvec(t)\right)^{\top}\rvec(t)+\Vert\rvec(t)\Vert^{2}\label{eq: norm r(t+1) degenerate}
\end{equation}
First, we note that $\exists t_{0}$ such that $\forall t>t_{0}$
the first term in this equation can be upper bounded by 
\begin{flalign}
 & ||\rvec(t+1)-\rvec(t)||^{2}\nonumber \\
 & \overset{(1)}{=}||\wvec(t+1)-\hat{\mathbf{w}}_{1}\log\left(t+1\right)-\tvec\left(t+1\right)-\wvec(t)+\hat{\mathbf{w}}_{1}\log\left(t\right)+\tvec\left(t\right)||^{2}\nonumber \\
 & \overset{(2)}{=}||-\eta\nabla L(\wvec(t))-\hat{\mathbf{w}}_{1}(\log\left(t+1\right)-\log\left(t\right))-(\tvec\left(t+1\right)-\tvec\left(t\right))||^{2}\nonumber \\
 & =\eta^{2}||\nabla L(\wvec(t))||^{2}+\left\Vert \hat{\mathbf{w}}_{1}\right\Vert ^{2}\log^{2}\left(1+t^{-1}\right)+\left\Vert \tvec\left(t+1\right)-\tvec\left(t\right)\right\Vert ^{2}\nonumber \\
 & +2\eta\nabla L(\wvec(t))^{\top}\left(\hat{\mathbf{w}}_{1}\log\left(1+t^{-1}\right)+\tvec\left(t+1\right)-\tvec\left(t\right)\right)\nonumber \\
 & +2\hat{\mathbf{w}}_{1}^{\top}(\tvec\left(t+1\right)-\tvec\left(t\right))\log\left(1+t^{-1}\right)\nonumber \\
 & \overset{(3)}{\le}\eta^{2}||\nabla L(\wvec(t))||^{2}+\left\Vert \hat{\mathbf{w}}_{1}\right\Vert ^{2}t^{-2}+C_{\tau}^2t^{-2}\log^{-2}\left(t\right)+2C_\tau\left\Vert \hat{\mathbf{w}}_{1}\right\Vert t^{-2}\log^{-1}(t)\,\,\,,\forall t>t_{0}\label{eq: norm(r(t+1)-r(t) degenerate}
\end{flalign}
where in (1) we used eq. \ref{eq: define r(t) degenrate}, in (2)
we used eq. \ref{eq: gradient descent linear} and in (3) we used
$\forall x>0:\,x\geq\log\left(1+x\right)>0$, and also using $\ell^{\prime}(\wvec(t)^{\top}\xn)<0$ for large enough $t$, we have that 
\begin{equation}\small
\left(\hat{\mathbf{w}}_{1}\log\left(1+t^{-1}\right)+\tvec\left(t+1\right)-\tvec\left(t\right)\right)^\top\derL\le\sumn\ell'(\wvec(t)^{\top}\xn)\left(\what_{1}^{\top}\xn\log\left(1+t^{-1}\right)-\frac{\left\Vert \xn\right\Vert C_{\tau}^{\prime}}{t\log\left(t\right)}\right)
\end{equation}
which is negative for sufficiently large $t_{0}$ (since $\log\left(1+t^{-1}\right)$
decreases as $t^{-1}$, which is slower then $1/\left(t\log\left(t\right)\right)$),
$\forall n:\,\what_{1}^{\top}\xn\ge1$ and $\ell^{\prime}(u)\leq0$.

Also, from Lemma \ref{lem: GD convergence} we know that: 
\begin{equation}
\Vert\nabla\mathcal{L}\left(\mathbf{w}\left(t\right)\right)\Vert^{2}=o(1)\text{ and }\sum_{u=0}^{\infty}\Vert\nabla\mathcal{L}(\wvec(u))\Vert^{2}<\infty\label{eq: derL converge degenerate}
\end{equation}
Substituting eq. \ref{eq: derL converge degenerate} into eq. \ref{eq: norm(r(t+1)-r(t) degenerate},
and recalling that $t^{-\nu_{1}}\log^{-\nu_{2}}\left(t\right)$ converges
for any $\nu_{1}>1$ and any $\nu_{2}$, and so
\begin{equation}
\kappa_{0}\left(t\right)\triangleq||\rvec(t+1)-\rvec(t)||^{2}\in L_{1}\,.\label{eq: sum norm r bound}
\end{equation}
Also, in the next subsection we will prove that

\begin{restatable}{lemR}{Correlation}

\label{lem: r correlation}Let $\kappa_{1}\left(t\right)$ and $\kappa_{2}\left(t\right)$
be functions in $L_{1}$, then 
\begin{equation}
\left(\mathbf{r}\left(t+1\right)-\mathbf{r}\left(t\right)\right)^{\top}\mathbf{r}\left(t\right)\leq\kappa_{1}\left(t\right)\left\Vert \mathbf{r}\left(t\right)\right\Vert +\kappa_{2}\left(t\right)\label{eq: general case-1}
\end{equation}

\end{restatable}
Thus, by combining eqs. \ref{eq: general case-1} and \ref{eq: sum norm r bound}
into eq. \ref{eq: norm r(t+1) degenerate}, we find 
\begin{align*}
\Vert\rvec(t+1)\Vert^{2} & \leq\kappa_{0}\left(t\right)+2\kappa_{1}\left(t\right)\left\Vert \mathbf{r}\left(t\right)\right\Vert +2\kappa_{2}\left(t\right)+\Vert\rvec(t)\Vert^{2}
\end{align*}
On this result we apply the following lemma (with $\phi\left(t\right)=\Vert\rvec(t)\Vert$,
$h\left(t\right)=2\kappa_{1}\left(t\right)$, and $z\left(t\right)=\kappa_{0}\left(t\right)+2\kappa_{2}\left(t\right)$), which we prove in appendix \ref{subsec: PhiSumProof}: 

\begin{restatable}{lemR}{PhiSum}

\label{lem: PhiSum}Let $\phi\left(t\right),h\left(t\right),z\left(t\right)$
be three functions from $\mathbb{N}$ to $\mathbb{R}_{\geq0}$, and
$C_{1},C_{2},C_{3}$ be three positive constants. Then, if $\sum_{t=1}^{\infty}h\left(t\right)\leq C_{1}<\infty$,
and 
\begin{equation}
\phi^{2}\left(t+1\right)\leq z\left(t\right)+h\left(t\right)\phi\left(t\right)+\phi^{2}\left(t\right)\,\label{eq: rho bound}
\end{equation}
we have
\begin{equation}
\phi^{2}\left(t+1\right)\leq C_{2}+C_{3}\sum_{u=1}^{t}z\left(u\right)\,\label{eq: rho sum bound}
\end{equation}

\end{restatable}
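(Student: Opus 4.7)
The plan is to absorb the linear-in-$\phi$ cross term $h(t)\phi(t)$ into a purely quadratic recursion via AM-GM, unroll the resulting linear recursion on $\phi^2(t)$, and then use $\sum h<\infty$ to bound all the products that appear uniformly in $t$.

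First, I would apply the AM-GM inequality $h(t)\phi(t)=\sqrt{h(t)}\cdot\bigl(\sqrt{h(t)}\,\phi(t)\bigr)\leq \tfrac{1}{2}h(t)+\tfrac{1}{2}h(t)\phi^2(t)$ to the hypothesis (eq.~\ref{eq: rho bound}). This converts the awkward term linear in $\phi(t)$ into a multiplicative perturbation of $\phi^2(t)$ plus a summable additive term, yielding
\[
\phi^2(t+1)\leq \bigl(1+\tfrac{1}{2}h(t)\bigr)\phi^2(t)+\bigl(\tfrac{1}{2}h(t)+z(t)\bigr).
\]
This is now a standard linear (in $\phi^2$) scalar recursion with multiplicative factor $1+\tfrac{1}{2}h(t)$ and additive forcing $\tfrac{1}{2}h(t)+z(t)$.

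Next, I would unroll this recursion to obtain
\[
\phi^2(t+1)\leq \phi^2(1)\prod_{u=1}^{t}\bigl(1+\tfrac{1}{2}h(u)\bigr)+\sum_{u=1}^{t}\bigl(\tfrac{1}{2}h(u)+z(u)\bigr)\prod_{v=u+1}^{t}\bigl(1+\tfrac{1}{2}h(v)\bigr).
\]
Using $1+x\leq e^x$ together with the hypothesis $\sum_{u\geq 1}h(u)\leq C_1$, every product appearing here is bounded by $\exp\bigl(\tfrac{1}{2}\sum_u h(u)\bigr)\leq e^{C_1/2}$, uniformly in $t$ and in the starting index. Splitting $\sum_{u=1}^{t}\bigl(\tfrac{1}{2}h(u)+z(u)\bigr)\leq \tfrac{C_1}{2}+\sum_{u=1}^{t}z(u)$, and collecting $e^{C_1/2}\phi^2(1)+e^{C_1/2}\tfrac{C_1}{2}$ into a single constant $C_2$ with $C_3=e^{C_1/2}$, gives exactly eq.~\ref{eq: rho sum bound}.

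There is no real obstacle beyond the AM-GM split: its role is to turn the non-quadratic term $h(t)\phi(t)$ into a perturbation whose infinite product converges (because $\sum h<\infty$) while simultaneously contributing an additive $\tfrac{1}{2}h(t)$ that is itself summable and can be absorbed into $C_2$. Any weighted version such as $h\phi\leq\tfrac{\epsilon}{2}\phi^2+\tfrac{1}{2\epsilon}h^2$ (with $\epsilon$ independent of $t$, using $h^2\leq \|h\|_\infty h\in L_1$) would work equally well, giving a family of admissible $(C_2,C_3)$ pairs.
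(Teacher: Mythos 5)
Your proof is correct and follows essentially the same route as the paper's: the paper handles the cross term via $h(t)\phi(t)\le h(t)\max[1,\phi^2(t)]\le h(t)+h(t)\phi^2(t)$, while you use the AM--GM split $h\phi\le\tfrac12 h+\tfrac12 h\phi^2$, but both reduce to the same linear recursion in $\phi^2$, which is then unrolled and controlled by bounding the products $\prod(1+O(h))\le e^{O(C_1)}$ using $\sum_t h(t)<\infty$. No substantive difference.
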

and obtain that 
\[
\Vert\rvec(t+1)\Vert^{2}\leq C_{2}+C_{3}\sum_{u=1}^{t}\left(\kappa_{0}\left(u\right)+2\kappa_{2}\left(u\right)\right)\leq C_{4}<\infty\,,
\]
since we assumed that $\forall i=0,1,2:\,\kappa_{i}\left(t\right)\in L_{1}$.
This completes our proof. $\blacksquare$

\subsection{Proof of Lemma \ref{lem: r correlation}}
Before we prove Lemma \ref{lem: r correlation}, we prove the following auxilary Lemma:
\begin{restatable}{lemR}{logtsum}\label{lem:logtsum}
Consider the function $f(t)=t^{-\nu_1}(\log(t))^{-\nu_2}(\log\log(t))^{-\nu_3}\ldots(\log^{\circ M}(t))^{-\nu_{M+1}}$. If $\exists m_0\le M+1$ such that $\nu_{m_0}>1$ and for all $m'<m_0$,$\nu_{m'}=1$, then $f(t)\in L_1$. \label{lem:int}
\end{restatable}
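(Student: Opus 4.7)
The plan is to reduce the sum $\sum_{t} f(t)$ to an integral via the integral test, and then evaluate that integral by an iterated change of variables that exploits the hypothesis $\nu_1=\cdots=\nu_{m_0-1}=1$. First I would verify that $f$ is eventually positive and monotonically decreasing, which is immediate since each factor $(\log^{\circ k}(t))^{-\nu_{k+1}}$ is eventually positive and its derivative (which involves products of iterated logarithms in the denominator) is eventually negative. The integral test then reduces the problem to showing $\int_T^{\infty} f(t)\,dt<\infty$ for some large $T$.

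Next I would perform the substitution $u=\log^{\circ(m_0-1)}(t)$. The key identity is
\[
\frac{du}{dt}=\frac{1}{t\,\log(t)\,\log\log(t)\,\cdots\,\log^{\circ(m_0-2)}(t)},
\]
so that the differential $dt/(t\prod_{k=1}^{m_0-2}\log^{\circ k}(t))$ becomes exactly $du$. Because the hypothesis gives $\nu_1=\nu_2=\cdots=\nu_{m_0-1}=1$, every one of these factors in $f(t)\,dt$ is absorbed into $du$, leaving
\[
\int_T^{\infty} f(t)\,dt \;=\; \int_{U}^{\infty} u^{-\nu_{m_0}}\,(\log u)^{-\nu_{m_0+1}}\,\cdots\,(\log^{\circ(M+1-m_0)}(u))^{-\nu_{M+1}}\,du,
\]
where $U=\log^{\circ(m_0-1)}(T)$.

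Finally, I would bound the residual integrand. Since $\nu_{m_0}>1$, choose any $\epsilon\in(0,\nu_{m_0}-1)$. Iterated logarithms grow arbitrarily slowly, so for all sufficiently large $u$ we have $(\log u)^{|\nu_{m_0+1}|}\cdots(\log^{\circ(M+1-m_0)}(u))^{|\nu_{M+1}|}\le u^{\epsilon}$; this is the elementary fact that $\log^{\circ k}(u)=o(u^{\delta})$ for any $\delta>0$. Hence the integrand is eventually bounded by $u^{-(\nu_{m_0}-\epsilon)}$ with $\nu_{m_0}-\epsilon>1$, and $\int^{\infty} u^{-(\nu_{m_0}-\epsilon)}\,du$ is finite. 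Combining with the integral test yields $f\in L_1$.

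The only genuinely nontrivial step is verifying the change-of-variables identity for iterated logarithms, but this follows inductively from the chain rule: if $g_k(t)=\log^{\circ k}(t)$ then $g_k'(t)=g_{k-1}'(t)/g_{k-1}(t)$, and unrolling gives $g_k'(t)=1/(t\prod_{j=1}^{k-1}g_j(t))$. Everything else is routine.
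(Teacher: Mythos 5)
Your proof is correct, and the first two steps (integral test, then the substitution $u=\log^{\circ(m_0-1)}(t)$ which absorbs exactly the factors with $\nu_{m'}=1$) coincide with the paper's argument. You diverge only in the final step: the paper bounds the residual iterated-log factors by $(\log y)^{\tilde\nu}$ with $\tilde\nu=\sum_{m>m_0}|\nu_m|$ and then evaluates $\int (\log y)^{\tilde\nu}\,y^{-(1+\epsilon)}\,dy$ by iterating integration by parts $K$ times until the exponent of $\log y$ becomes negative, whereas you absorb all residual logarithmic factors into a small power $u^{\epsilon}$ using $\log^{\circ k}(u)=o(u^{\delta})$ and compare directly with $\int u^{-(\nu_{m_0}-\epsilon)}\,du$. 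Your finish is shorter and more elementary, avoiding the recursion entirely at the cost of nothing; the paper's version yields slightly more explicit constants but proves no stronger a statement. One minor point worth stating explicitly in a final write-up: eventual monotonicity of $f$ for the integral test follows because $\tfrac{d}{dt}\log f(t)=-\nu_1/t+o(1/t)$ with $\nu_1\ge 1$, which you assert but could make precise in one line.
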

\begin{proof}To prove Lemma~\ref{lem:int}, we will show that the improper integeral $\int_{t_1}^\infty f(t)dt$ for any $t_1>0$ is bounded, \emph{i.e.}, $\forall t_1>0,\int_{t_1}^\infty f(t)dt<C$. Using the integeral test for convergence (or Maclaurin--Cauchy test) this in turn implies that $\forall t_1>0,\sum_{t_1}^\infty f(t)<C$, and thus $f(t)\in L_1$.

First, if $m_0>1$, then $\nu_1=\nu_2\ldots=\nu_{m_0-1}=1$ and $\nu_{m_0}=1+\epsilon$ for some $\epsilon>0$.  Using change of variables $y=\log^{\circ (m_0-1)}(t)$, we have 
\[
\mathrm{d}y=\left(t\prod_{r=1}^{m_0-2}\log^{\circ r}(t)\right)^{-1}\mathrm{d}t=t^{-\nu_1}\prod_{r=1}^{m_0-2}\left(\log^{\circ r}(t)\right)^{-\nu_{r+1}}\mathrm{d}t
\]
and for all $m>m_0$,  $\left(\log^{\circ (m-1)}(t)\right)^{-\nu_{m}}=\left(\log^{\circ( m-m_0)}(y)\right)^{-\nu_{m}}\le\left(\log(y)\right)^{|\nu_{m}|}$. Thus, denoting $\tilde{\nu}=\sum_{m=m_0+1}^{M+1}|\nu_{m}|$ and $\log^{\circ (m_0-1)}(t_1)=y_1$, we have 
\begin{equation}\label{eq:logsum}
    \int_{t_1}^\infty f(t)\mathrm{d}t=\int_{y_1}^\infty y^{-\nu_{m_0}}\prod_{m=m_0+1}^{M+1} \left(\log^{\circ m-m_0}(y)\right)^{-\nu_{m}}\mathrm{d}(y)\le \int_{y_1}^{\infty}\frac{\left(\log(y)\right)^{\tilde{\nu}}}{y^{1+\epsilon}} \mathrm{d}y.
\end{equation}

For $m_0=1$, we  have $\nu_1=1+\epsilon$ for some $\epsilon>0$, and for $m>1$, $\left(\log^{\circ (m-1)}(t)\right)^{-\nu_{m}}\le\left(\log(t)\right)^{|\nu_{m}|}$. Thus, denoting, $\tilde{\nu}=\sum_{m=2}^{M+1}|\nu_{m}|$, we have  $\int_{t_1}^\infty f(t)\mathrm{d}t\le \int_{t_1}^\infty\frac{\left(\log(t)\right)^{\tilde{\nu}}}{t^{1+\epsilon}} \mathrm{d}t$.

Thus, for any $m_0$, we only need to show that for all $t_1>0,\epsilon>0$ and $\tilde{\nu}>0$, $\int_{t_1}^\infty\frac{\left(\log(t)\right)^{\tilde{\nu}}}{t^{1+\epsilon}} \mathrm{d}t<\infty.$

Let us now look at $\int_{t_1}^\infty\frac{\left(\log(t)\right)^{\tilde{\nu}}}{t^{1+\epsilon}} \mathrm{d}t$. using $u=\left(\log(t)\right)^{\tilde{\nu}}$ and $\mathrm{d}v=\frac{1}{t^{1+\epsilon}}$, we have $\mathrm{d}u=\tilde{\nu}t^{-1}\left(\log(t)\right)^{\tilde{\nu}-1}$ and $v=-\frac{1}{\epsilon t^{\epsilon}}$. Using integration by parts, $\int u\mathrm{d}v=uv-\int v\mathrm{d}u$, we have  
\[\int\frac{\left(\log(t)\right)^{\tilde{\nu}}}{t^{1+\epsilon}} \mathrm{d}t=-\frac{\left(\log(t)\right)^{\tilde{\nu}}}{\epsilon t^{\epsilon}}+\frac{\bar{\nu}}{\epsilon}\int \frac{\left(\log(t)\right)^{\tilde{\nu}-1}}{t^{1+\epsilon}} \mathrm{d}t
\]

Recursing the above equation $K$ times such that $\tilde{\nu}-K<0$, we have positive constants $c_0,c_1,\ldots c_K>0$ independent of $t$, such that
\begin{align}\label{eq:int-byparts}
\int_{t_1}^\infty\frac{\left(\log(t)\right)^{\tilde{\nu}}}{t^{1+\epsilon}} \mathrm{d}t&=\left[-\sum_{k=0}^{K-1}\frac{c_k\left(\log(t)\right)^{\tilde{\nu}-k}}{\epsilon t^{\epsilon}}\right]_{t=t_1}^\infty+c_K\int_{t=t_1}^\infty \frac{\left(\log(t)\right)^{\tilde{\nu}-K}}{t^{1+\epsilon}} \mathrm{d}t\nonumber\\
&\overset{(1)}=\sum_{k=0}^{K-1}\frac{c_k\left(\log(t_1)\right)^{\tilde{\nu}-k}}{\epsilon t_1^{\epsilon}}+c_K\int_{t=t_1}^\infty \frac{\left(\log(t)\right)^{\tilde{\nu}-K}}{t^{1+\epsilon}} \mathrm{d}t\nonumber\\
&\overset{(2)}\le\sum_{k=0}^{K-1}\frac{c_k\left(\log(t_1)\right)^{\tilde{\nu}-k}}{\epsilon t_1^{\epsilon}}+c_K\int_{t=t_1}^\infty \frac{1}{t^{1+\epsilon}}\overset{(3)}\nonumber\\
&=\sum_{k=0}^{K-1}\frac{c_k\left(\log(t_1)\right)^{\tilde{\nu}-k}}{\epsilon t_1^{\epsilon}}y+\frac{c_K}{\epsilon t_1^\epsilon}<\infty
\end{align}
where $(1)$ follows as $\sum_{k=0}^{K-1}\frac{c_k\left(\log(t)\right)^{\tilde{\nu}-k}}{\epsilon t^{\epsilon}}\overset{t\to\infty}\rightarrow 0$, $(2)$ follows as $K$ is chosen such that $\tilde{\nu}-K<0$ and hence for all $t>0$, $\left(\log(t)\right)^{\tilde{\nu}-K}<1$. This completes the proof of the lemma. 
\end{proof}
\Correlation*
\begin{proof}
Recall that we defined
\begin{align}
\rvec(t) & =\wvec(t)-\mathbf{q}\left(t\right)\label{eq: define r(t) degenrate 2}
\end{align}
where
\begin{align}
\mathbf{q}\left(t\right) & =\sum_{m=1}^{M}\left[\hat{\mathbf{w}}_{m}\log^{\circ m}\left(t\right)+\mathbf{h}_{m}\left(t\right)\right]\,.\label{eq: q(t)}\\
\mathbf{h}_{m}\left(t\right) & =\wtilde_{m}+\sum_{k=1}^{m-1}\frac{\check{\mathbf{w}}_{k,m}}{\prod_{r=k}^{m-1}\log^{\circ r}\left(t\right)}\label{eq: h_m}
\end{align}
with $\hat{\mathbf{w}}_{m}$, $\tilde{\mathbf{w}}_{m}$ and $\wcheck_{k,m}$
defined in eqs. \ref{eq:hatwm}, \ref{eq: w tilde-1} and \ref{eq: w check},
respectively. We note that
\begin{equation}
\left\Vert \mathbf{q}\left(t+1\right)-\mathbf{q}\left(t\right)-\dot{\mathbf{q}}\left(t\right)\right\Vert \leq C_{q}t^{-2}\in L_1\label{eq: q second derivative}
\end{equation}
where
\begin{equation}
\dot{\mathbf{q}}\left(t\right)=\sum_{m=1}^{M}\hat{\mathbf{w}}_{m}\frac{1}{t\prod_{r=1}^{m-1}\log^{\circ r}\left(t\right)}+\dot{\mathbf{h}}_{m}\left(t\right)\,.\label{eq: q dot}
\end{equation}
Additionally, we define $C_{h},C_{h}^{\prime}$ so that 
\begin{equation}
\left\Vert \mathbf{h}_{m}\left(t\right)\right\Vert \leq\left\Vert \wtilde_{m}\right\Vert +\sum_{k=1}^{m}\left\Vert \check{\mathbf{w}}_{k,m}\right\Vert \leq C_{h}\,\label{eq: h bound}
\end{equation}
and 
\begin{equation}
\left\Vert \dot{\mathbf{h}}_{m}\left(t\right)\right\Vert \leq\frac{C_{h}^{\prime}}{t\left(\prod_{r=1}^{m-2}\log^{\circ r}\left(t\right)\right)\left(\log^{\circ (m-1)}\left(t\right)\right)^{2}}\in L_1\,.\label{eq: h dot bound}
\end{equation}
We wish to calculate 
\begin{align}
 & \left(\rvec(t+1)-\rvec(t)\right)^{\top}\rvec(t)\nonumber \\
\overset{\left(1\right)}{=} & \left[\wvec(t+1)-\mathbf{w}\left(t\right)-\left[\mathbf{q}\left(t+1\right)-\mathbf{q}\left(t\right)\right]\right]^{\top}\mathbf{r}\left(t\right)\nonumber \\
\overset{\left(2\right)}{=} & \left[-\eta\derL-\dot{\mathbf{q}}\left(t\right)\right]^{\top}\mathbf{r}\left(t\right)-\left[\mathbf{q}\left(t+1\right)-\mathbf{q}\left(t\right)-\dot{\mathbf{q}}\left(t\right)\right]^{\top}\mathbf{r}\left(t\right)\label{eq: r correlation}
\end{align}
where in $\left(1\right)$ we used eq. \ref{eq: define r(t) degenrate 2}
and in $\left(2\right)$ we used the definition of GD in eq. \ref{eq: gradient descent linear}.
We can bound the second term using Cauchy-Shwartz inequality and eq.
\ref{eq: q second derivative}:
\[
\left[\mathbf{q}\left(t+1\right)-\mathbf{q}\left(t\right)-\dot{\mathbf{q}}\left(t\right)\right]^{\top}\mathbf{r}\left(t\right)\leq\left\Vert \mathbf{q}\left(t+1\right)-\mathbf{q}\left(t\right)-\dot{\mathbf{q}}\left(t\right)\right\Vert \left\Vert \mathbf{r}\left(t\right)\right\Vert \leq C_{q}t^{-2}\left\Vert \mathbf{r}\left(t\right)\right\Vert \,.
\]
Next, we examine the second term in eq. \ref{eq: r correlation}
\begin{align}
 & \left[-\eta\derL-\dot{\mathbf{q}}\left(t\right)\right]^{\top}\mathbf{r}\left(t\right)\nonumber \\
= & \left[-\eta\sum_{n=1}^{N}\ell'(\wvec(t)^{\top}\xn)\,\mathbf{x}_{n}-\dot{\mathbf{q}}\left(t\right)\right]^{\top}\mathbf{r}\left(t\right)\nonumber \\
\overset{(1)}= & -\sum_{m=1}^{M}\dot{\mathbf{h}}_{m}\left(t\right)^{\top}\mathbf{r}\left(t\right)-\eta\sum_{m=1}^{M}\sum_{n\in\set_{m}^{+}}\ell'(\wvec(t)^{\top}\xn)\,\mathbf{x}_{n}^{\top}\mathbf{r}\left(t\right)\nonumber \\
+ & \left[\eta\sum_{m=1}^{M}\sum_{n\in\set_{m}}-\ell'(\wvec(t)^{\top}\xn)\,\mathbf{x}_{n}-\sum_{m=1}^{M}\hat{\mathbf{w}}_{m}\frac{1}{t\prod_{r=1}^{m-1}\log^{\circ r}\left(t\right)}\right]^{\top}\mathbf{r}\left(t\right),\label{eq: r correlation 2}
\end{align}
where in $(1)$ recall from eq. \ref{eq:sm} that $\set_m,\set_m^+$ are mutually exclusive and $\cup_{m=1}^M \set_m \cup \set_m^+=[N]$.

Next we upper bound the three terms in eq. \ref{eq: r correlation 2}.

To bound the first term in eq. \ref{eq: r correlation 2} we use Cauchy-Shartz,
and eq. $\ref{eq: h dot bound}$.
\[
\sum_{m=1}^{M}\dot{\mathbf{h}}_{m}\left(t\right)^{\top}\mathbf{r}\left(t\right)\leq\sum_{m=1}^{M}\left\Vert \dot{\mathbf{h}}_{m}\left(t\right)\right\Vert \left\Vert \mathbf{r}\left(t\right)\right\Vert \leq\frac{MC_{h}^{\prime}}{t\left(\prod_{r=1}^{m-2}\log^{\circ r}\left(t\right)\right)\left(\log^{\circ (m-1)}\left(t\right)\right)^{2}}\left\Vert \mathbf{r}\left(t\right)\right\Vert 
\]

In bounding the second term in eq. \ref{eq: r correlation 2}, note that for tight exponential tail loss, since $\w\left(t\right)^{\top}\mathbf{x}_{n}\to\infty$, for large enough $t_0$, we have $-\ell'(\w\left(t\right)^{\top}\mathbf{x}_{n})\le (1+\exp(-\mu_+\w\left(t\right)^{\top}\mathbf{x}_{n}))\exp(-\w\left(t\right)^{\top}\mathbf{x}_{n})\le 2\exp(-\w\left(t\right)^{\top}\mathbf{x}_{n})$ for all $t>t_0$. 
The first term in eq. \ref{eq: r correlation 2} can be bounded by the following set of inequalities, for $t>t_0$, 
\begin{align}
 & \eta\sum_{m=1}^{M}\sum_{n\in\set_{m}^{+}}-\ell'(\w\left(t\right)^{\top}\mathbf{x}_{n})\,\mathbf{x}_{n}^{\top}\mathbf{r}\left(t\right)\le \eta\sum_{m=1}^{M}\sum_{n\in\set_{m}^{+}:\,\mathbf{x}_{n}^{\top}\mathbf{r}\left(t\right)\geq0}-\ell'(\w\left(t\right)^{\top}\mathbf{x}_{n})\,\mathbf{x}_{n}^{\top}\mathbf{r}\left(t\right)\nonumber \\
\overset{\left(1\right)}{\leq} & 2\eta\sum_{m=1}^{M}\sum_{n\in\set_{m}^{+}:\,\mathbf{x}_{n}^{\top}\mathbf{r}\left(t\right)\geq0}\exp\left(-\sum_{l=1}^{M}\left[\hat{\mathbf{w}}_{l}^{\top}\x_{n}\log^{\circ l}\left(t\right)+\mathbf{x}_{n}^{\top}\mathbf{h}_{l}(t)\right]-\mathbf{x}_{n}^{\top}\mathbf{r}\left(t\right)\right)\mathbf{x}_{n}^{\top}\mathbf{r}\left(t\right)\nonumber \\
\overset{\left(2\right)}{\leq} & 2\eta\sum_{m=1}^{M}\sum_{n\in\set_{m}^{+}:\,\mathbf{x}_{n}^{\top}\mathbf{r}\left(t\right)\geq0}\exp\left(-\sum_{l=1}^{M}\left[\hat{\mathbf{w}}_{l}^{\top}\x_{n}\log^{\circ l}\left(t\right)+\mathbf{x}_{n}^{\top}\mathbf{h}_{l}\left(t\right)\right]\right)\nonumber \\
\overset{\left(3\right)}{\leq} & 2\eta\sum_{m=1}^M\left|\set_{m}^{+}\max_{n\in \set_m^+}\right|\exp\left(M\left\Vert \mathbf{x}_{n}\right\Vert C_{h}\right)\exp\left(-\sum_{l=1}^{M}\hat{\mathbf{w}}_{l}^{\top}\x_{n}\log^{\circ l}\left(t\right)\right)\nonumber \\
\overset{\left(4\right)}{\leq} &\left\{\begin{array}{cc} \sum_{m=1}^M\frac{2\eta\left|\set_{m}^{+}\right|\exp\left(M\max_{n\in \set_m^+}\left\Vert \mathbf{x}_{n}\right\Vert C_{h}\right)}{t\left(\prod_{k=1}^{m-1}\log^{\circ k}\left(t\right)\right)\left(\log^{\circ m-1}\left(t\right)\right)^{\theta_{m}}\left(\prod_{k=m}^{M-1}\left(\log^{\circ m}\left(t\right)\right)^{\mathbf{\hat{w}}_{k}^{\top}\x_{n}}\right)} & \text{if }M>1\\
\frac{2\eta\left|\set_{1}^{+}\right|\exp\left(\max_{n}\left\Vert \mathbf{x}_{n}\right\Vert C_{h}\right)}{t^{\theta_{1}}}& \text{if }M=1
\end{array}\right.\in L_1.\label{eq: bound on non-SV gradient-1}
\end{align}
where in $\left(1\right)$ we used eqs. \ref{eq: define r(t) degenrate 2}
and \ref{eq: q(t)}, in $\left(2\right)$ we used that $\forall x: xe^{-x}\leq1$
and $\mathbf{x}_{n}^{\top}\mathbf{r}\left(t\right)\geq0$, $\left(3\right)$ we
used eq. \ref{eq: h bound} and in $\left(4\right)$ we denoted $\theta_{m}=\min_{n\in\set_{m}^{+}}\what_{m}^{\top}\x_{n}>1$ and the last line is integrable based on Lemma~\ref{lem:int}. 

Next, we bound the last term in eq. \ref{eq: r correlation 2}. For exponential tailed losses (Assumption \ref{assum: exponential tail}), since $\wvec(t)^\top \xn\to\infty$, we have positive constants $\mu_{-},\mu_{+}>0$,
$t_{-}$ and $t_{+}$ such that $\forall n$ 
\begin{align*}
 & \!\!\forall t>t_{+}:-\ell^{\prime}\left(\mathbf{w}\left(t\right)^{\top}\mathbf{x}_{n}\right)\leq\left(1+\exp\left(-\mu_{+}\mathbf{w}\left(t\right)^{\top}\mathbf{x}_{n}\right)\right)\exp\left(-\mathbf{w}\left(t\right)^{\top}\mathbf{x}_{n}\right)\\
 & \!\!\forall t>t_{-}:-\ell^{\prime}\left(\mathbf{w}\left(t\right)^{\top}\mathbf{x}_{n}\right)\geq\left(1-\exp\left(-\mu_{-}\mathbf{w}\left(t\right)^{\top}\mathbf{x}_{n}\right)\right)\exp\left(-\mathbf{w}\left(t\right)^{\top}\mathbf{x}_{n}\right)
\end{align*}

We define $\gamma_n(t)$ as \begin{equation}
\gamma_n(t)=\left\{\begin{array}{ll}(1+\exp(-\mu_+\wvec(t)^\top\xn) & \text{if }\mathbf{r}\left(t\right)^\top\xn\ge0\\ (1-\exp(-\mu_-\wvec(t)^\top\xn) & \text{if }\mathbf{r}\left(t\right)^\top\xn<0\end{array}\right..
\label{eq:gamma}
\end{equation}

This implies $t>\max{(t_+,t_-)}$,  $-\ell'(\wvec(t)^\top\xn)\;\xn^\top\mathbf{r}\left(t\right)\le \gamma_n(t)\exp\left(-\w^{\top}\left(t\right)\mathbf{x}_{n}\right)\mathbf{x}_{n}$. \newpage
From this result, we have the following set of inequalities:
\begin{align}
&\eta\sum_{m=1}^{M}\sum_{n\in\set_{m}}-\ell'(\wvec(t)^{\top}\xn)\,\mathbf{x}_{n}^{\top}\mathbf{r}\left(t\right)
\le  \eta\sum_{m=1}^{M}\sum_{n\in\set_{m}}\gamma_n(t)\exp\left(-\wvec\left(t\right)^\top\mathbf{x}_{n}\right)\mathbf{x}_{n}^{\top}\mathbf{r}\left(t\right)\nonumber \\
\overset{\left(1\right)}{=} & \eta\sum_{m=1}^{M}\sum_{n\in\set_{m}}\gamma_n(t)\exp\left(-\sum_{l=1}^{M}\left[\hat{\mathbf{w}}_{l}^{\top}\x_{n}\log^{\circ l}\left(t\right)+\mathbf{x}_{n}^{\top}\wtilde_{l}+\sum_{k=1}^{l-1}\frac{\mathbf{x}_{n}^{\top}\check{\mathbf{w}}_{k,l}}{\prod_{r=k}^{l-1}\log^{\circ r}\left(t\right)}\right]-\mathbf{x}_{n}^{\top}\mathbf{r}\left(t\right)\right)\mathbf{x}_{n}^{\top}\mathbf{r}\left(t\right)\nonumber \\
\overset{\left(2\right)}{=} & \sum_{m=1}^{M}\sum_{n\in\set_{m}}\frac{\eta\gamma_n(t)\exp\left(-\mathbf{x}_{n}^{\top}\wtilde\right)\exp\left(-\mathbf{x}_{n}^{\top}\mathbf{r}\left(t\right)\right)\mathbf{x}_{n}^{\top}\mathbf{r}\left(t\right)}{t\prod_{r=1}^{m-1}\log^{\circ r}\left(t\right)}\exp\left(-\sum_{k=1}^{m}\sum_{l=k+1}^{M}\frac{\mathbf{x}_{n}^{\top}\check{\mathbf{w}}_{k,l}}{\prod_{r=k}^{l-1}\log^{\circ r}\left(t\right)}\right)\nonumber \\
\overset{\left(3\right)}{=} & \sum_{m=1}^{M}\sum_{n\in\set_{m}}\frac{\eta\gamma_n(t)\exp\left(-\mathbf{x}_{n}^{\top}\wtilde\right)\exp\left(-\mathbf{x}_{n}^{\top}\mathbf{r}\left(t\right)\right)\mathbf{x}_{n}^{\top}\mathbf{r}\left(t\right)}{t\prod_{r=1}^{m-1}\log^{\circ r}\left(t\right)}\exp\left(-\sum_{l=m+1}^{M}\frac{\mathbf{x}_{n}^{\top}\check{\mathbf{w}}_{m,l}}{\prod_{r=m}^{l-1}\log^{\circ r}\left(t\right)}\right)\psi_{m}\left(t\right)\nonumber \\
\leq & \sum_{m=1}^{M}\sum_{n\in\set_{m}}\frac{\eta\gamma_n(t)\exp\left(-\mathbf{x}_{n}^{\top}\wtilde\right)\exp\left(-\mathbf{x}_{n}^{\top}\mathbf{r}\left(t\right)\right)\mathbf{x}_{n}^{\top}\mathbf{r}\left(t\right)}{t\prod_{r=1}^{m-1}\log^{\circ r}\left(t\right)}\psi_{m}\left(t\right)\left[\left(1-\sum_{l=m}^{M-1}\frac{\mathbf{x}_{n}^{\top}\check{\mathbf{w}}_{m,l+1}}{\prod_{r=m}^{l}\log^{\circ r}\left(t\right)}\right)\right.\nonumber\\
&\quad+\left.\exp\left(-\sum_{l=m}^{M-1}\frac{\mathbf{x}_{n}^{\top}\check{\mathbf{w}}_{m,l+1}}{\prod_{r=m}^{l}\log^{\circ r}\left(t\right)}\right)-\left(1-\sum_{l=m}^{M-1}\frac{\mathbf{x}_{n}^{\top}\check{\mathbf{w}}_{m,l+1}}{\prod_{r=m}^{l}\log^{\circ r}\left(t\right)}\right)\right]\label{eq: temp  main term, degenerate}
\end{align}
where in $\left(1\right)$ we used eqs. \ref{eq: define r(t) degenrate 2}
and \ref{eq: q(t)}, and in $\left(2\right)$ we used $\mathbf{P}_{k-1}\check{\mathbf{w}}_{k,m}=0$
from eq. \ref{eq: w check constraints} (so $\mathbf{x}_{n}^{\top}\check{\mathbf{w}}_{k,l}=0$
if $m<k$) and in $\left(3\right)$ defined 
\begin{equation}
\psi_{m}\left(t\right)=\exp\left(-\sum_{k=1}^{m-1}\sum_{l=k+1}^{M}\frac{\mathbf{x}_{n}^{\top}\check{\mathbf{w}}_{k,l}}{\prod_{r=k}^{l-1}\log^{\circ r}\left(t\right)}\right)\,.
\label{eq:psi}
\end{equation}
Note $\exists t_{\psi}$ such that $\forall t>t_{\psi}$, we can bound
$\psi_{m}\left(t\right)$ by
\begin{equation}
\exp\left(\frac{-M\max_{n}\left\Vert \mathbf{x}_{n}\right\Vert C_{h}}{\log^{\circ\left(m-1\right)}\left(t\right)}\right)\leq\psi_{m}\left(t\right)\leq1\,.\label{eq: psi bound}
\end{equation}

Thus, the third term in \ref{eq: r correlation 2} is given by {\small
\begin{flalign}
&\eta\sum_{m=1}^{M}\sum_{n\in\set_{m}}-\ell'(\wvec(t)^{\top}\xn)\,\mathbf{x}_{n}^{\top}\mathbf{r}\left(t\right)
-\sum_{m=1}^{M}\frac{\hat{\mathbf{w}}_{m}^{\top}\mathbf{r}\left(t\right)}{t\prod_{r=1}^{m-1}\log^{\circ r}\left(t\right)}\nonumber \\
\overset{(1)}\leq  &\sum_{m=1}^{M}\sum_{n\in\set_{m}}\frac{\eta\gamma_n(t)\exp\left(-\mathbf{x}_{n}^{\top}\wtilde\right)\exp\left(-\mathbf{x}_{n}^{\top}\mathbf{r}\left(t\right)\right)\mathbf{x}_{n}^{\top}\mathbf{r}\left(t\right)}{t\prod_{r=1}^{m-1}\log^{\circ r}\left(t\right)}\psi_{m}\left(t\right)\left[
\exp\left(-\sum_{l=m}^{M-1}\frac{\mathbf{x}_{n}^{\top}\check{\mathbf{w}}_{m,l+1}}{\prod_{r=m}^{l}\log^{\circ r}\left(t\right)}\right)\right.\nonumber\\
&\quad\quad\quad\left.-\left(1-\sum_{l=m}^{M-1}\frac{\mathbf{x}_{n}^{\top}\check{\mathbf{w}}_{m,l+1}}{\prod_{r=m}^{l}\log^{\circ r}\left(t\right)}\right)\right]\nonumber \\
&+\sum_{m=1}^{M}\Bigg[\sum_{n\in\set_{m}}\frac{\eta\gamma_n(t)\exp\left(-\mathbf{x}_{n}^{\top}\wtilde\right)\exp\left(-\mathbf{x}_{n}^{\top}\mathbf{r}\left(t\right)\right)\mathbf{x}_{n}^{\top}\mathbf{r}\left(t\right)}{t\prod_{r=1}^{m-1}\log^{\circ r}\left(t\right)}\psi_{m}\left(t\right)\left(1-\sum_{l=m}^{M-1}\frac{\mathbf{x}_{n}^{\top}\check{\mathbf{w}}_{m,l+1}}{\prod_{r=m}^{l}\log^{\circ r}\left(t\right)}\right)\nonumber\\
&-\frac{\mathbf{r}(t)^{\top}\hat{\mathbf{w}}_{m}}{t\prod_{r=1}^{m-1}\log^{\circ r}\left(t\right)}\Bigg],
\label{eq: main term, degenerate}
\end{flalign}}
where $(1)$ follows from the bound in eq.~\ref{eq: temp  main term, degenerate}. 
\newpage
We examine the first term in eq. \ref{eq: main term, degenerate} {\small
\begin{align*}
\sum_{m=1}^{M}\sum_{n\in\set_{m}}&\frac{\eta\gamma_n(t)\exp\left(-\mathbf{x}_{n}^{\top}\wtilde\right)\exp\left(-\mathbf{x}_{n}^{\top}\mathbf{r}\left(t\right)\right)\mathbf{x}_{n}^{\top}\mathbf{r}\left(t\right)}{t\prod_{r=1}^{m-1}\log^{\circ r}\left(t\right)}\psi_{m}\left(t\right)\\
\cdot&\Bigg[
\exp\left(-\sum_{l=m}^{M-1}\frac{\mathbf{x}_{n}^{\top}\check{\mathbf{w}}_{m,l+1}}{\prod_{r=m}^{l}\log^{\circ r}\left(t\right)}\right)-\left(1-\sum_{l=m}^{M-1}\frac{\mathbf{x}_{n}^{\top}\check{\mathbf{w}}_{m,l+1}}{\prod_{r=m}^{l}\log^{\circ r}\left(t\right)}\right)\Bigg]\end{align*}}
$\forall t>t_{1}>t_{\psi}$, where we will determine $t_{1}$ later. We have the following for all $m\in[M]$
{
\begin{align}
&\sum_{n\in\set_{m}}\frac{\eta\gamma_n(t)\exp\left(-\mathbf{x}_{n}^{\top}\wtilde\right)\exp\left(-\mathbf{x}_{n}^{\top}\mathbf{r}\left(t\right)\right)\mathbf{x}_{n}^{\top}\mathbf{r}\left(t\right)}{t\prod_{r=1}^{m-1}\log^{\circ r}\left(t\right)}\psi_{m}\left(t\right)\nonumber\\
&\cdot\Bigg[\exp\left(-\sum_{l=m}^{M-1}\frac{\mathbf{x}_{n}^{\top}\check{\mathbf{w}}_{m,l+1}}{\prod_{r=m}^{l}\log^{\circ r}\left(t\right)}\right)\nonumber-\left(1-\sum_{l=m}^{M-1}\frac{\mathbf{x}_{n}^{\top}\check{\mathbf{w}}_{m,l+1}}{\prod_{r=m}^{l}\log^{\circ r}\left(t\right)}\right)\Bigg]\nonumber \\
  &\overset{\left(1\right)}{\leq}  \sum_{n\in\set_{m}: \atop \mathbf{x}_{n}^{\top}\mathbf{r}\left(t\right)\geq0}\frac{\eta\gamma_n(t)\exp\left(-\mathbf{x}_{n}^{\top}\wtilde\right)\psi_{m}\left(t\right)}{t\prod_{r=1}^{m-1}\log^{\circ r}\left(t\right)}\left[\exp\left(-\sum_{l=m}^{M-1}\frac{\mathbf{x}_{n}^{\top}\check{\mathbf{w}}_{m,l+1}}{\prod_{r=m}^{l}\log^{\circ r}\left(t\right)}\right)-\left(1-\sum_{l=m}^{M-1}\frac{\mathbf{x}_{n}^{\top}\check{\mathbf{w}}_{m,l+1}}{\prod_{r=m}^{l}\log^{\circ r}\left(t\right)}\right)\right]\nonumber \\
&\overset{\left(2\right)}{\leq}  \sum_{n\in\set_{m}: \atop \mathbf{x}_{n}^{\top}\mathbf{r}\left(t\right)\geq0}\frac{\eta\gamma_n(t)\exp\left(-\mathbf{x}_{n}^{\top}\wtilde\right)\left(t\right)}{t\prod_{r=1}^{m-1}\log^{\circ r}\psi_m\left(t\right)}\left(\sum_{l=m}^{M-1}\frac{\mathbf{x}_{n}^{\top}\check{\mathbf{w}}_{m,l+1}}{\prod_{r=m}^{l}\log^{\circ r}\left(t\right)}\right)^{2}\in L_1\,,
\end{align}}
where we set $t_{1}>0$
such that $\forall t>t_{1}$ the term in the square bracket is positive and \[ \sum_{l=m}^{M-1}\frac{\mathbf{x}_{n}^{\top}\check{\mathbf{w}}_{m,l+1}}{\prod_{r=m}^{l}\log^{\circ r}\left(t\right)}>-1 \, ,\] in $\left(1\right)$ we used that since $e^{-x}\geq1-x$, and also from using $e^{-x}x\leq1$ and in $\left(2\right)$
we use that $\forall x\geq -1$ we have that $e^{-x}\le 1-x+x^{2}$ and $\psi_{m}\left(t\right)\le 1$ from eq. \ref{eq: psi bound}.
\newpage
We examine the second term in eq. \ref{eq: main term, degenerate} using the decomposition of $\hat{\mathbf{w}}_{m}$ from eq. \ref{eq: w_hat full decomposition}  {\small
\begin{align}
&\sum_{m=1}^{M}\left[\sum_{n\in\set_{m}}\frac{\eta\gamma_n(t)\exp\left(-\mathbf{x}_{n}^{\top}\wtilde\right)\exp\left(-\mathbf{x}_{n}^{\top}\mathbf{r}\left(t\right)\right)\mathbf{x}_{n}^{\top}\mathbf{r}\left(t\right)}{t\prod_{r=1}^{m-1}\log^{\circ r}\left(t\right)}\psi_{m}\left(t\right)\left(1-\sum_{l=m}^{M-1}\frac{\mathbf{x}_{n}^{\top}\check{\mathbf{w}}_{m,l+1}}{\prod_{r=m}^{l}\log^{\circ r}\left(t\right)}\right)-\frac{\mathbf{x}_{n}^{\top}\hat{\mathbf{w}}_{m}}{t\prod_{r=1}^{m-1}\log^{\circ r}\left(t\right)}\right]\nonumber \\
&\overset{(1)}= 
\sum_{m=1}^{M}\sum_{n\in\set_{m}}\frac{\eta\exp\left(-\mathbf{x}_{n}^{\top}\wtilde\right)\mathbf{x}_{n}^{\top}\mathbf{r}\left(t\right)}{t\prod_{r=1}^{m-1}\log^{\circ r}\left(t\right)}\left(\gamma_n(t)\exp\left(-\mathbf{x}_{n}^{\top}\mathbf{r}\left(t\right)\right)
\psi_{m}\left(t\right)-1\right)\nonumber\\
&\quad\quad-\sum_{m=1}^{M}\sum_{n\in\set_{m}}\frac{\eta\gamma_n(t)\exp\left(-\mathbf{x}_{n}^{\top}\wtilde\right)\exp\left(-\mathbf{x}_{n}^{\top}\mathbf{r}\left(t\right)\right)\mathbf{x}_{n}^{\top}\mathbf{r}\left(t\right)\psi_{m}(t)}{t\prod_{r=1}^{m-1}\log^{\circ r}\left(t\right)}\sum_{l=m}^{M-1}\frac{\mathbf{x}_{n}^{\top}\check{\mathbf{w}}_{m,l+1}}{\prod_{r=m}^{l}\log^{\circ r}\left(t\right)}\nonumber\\
&\quad\quad+\sum_{m=1}^{M}\sum_{k=1}^{m-1}\sum_{n\in\set_k}\frac{\eta\exp\left(-\mathbf{x}_{n}^{\top}\wtilde\right)\mathbf{x}_{n}^{\top}\mathbf{r}\left(t\right)\mathbf{x}_{n}^{\top}\check{\mathbf{w}}_{k,m}}{\prod_{r=1}^{m-1}t\log^{\circ r}\left(t\right)}\nonumber\\
&\overset{\left(2\right)}{=}\sum_{m=1}^{M}\sum_{n\in\set_{m}}\frac{\eta\exp\left(-\mathbf{x}_{n}^{\top}\wtilde\right)\mathbf{x}_{n}^{\top}\mathbf{r}\left(t\right)}{t\prod_{r=1}^{m-1}\log^{\circ r}\left(t\right)}\left(\gamma_n(t)\exp\left(-\mathbf{x}_{n}^{\top}\mathbf{r}\left(t\right)\right)
\psi_{m}\left(t\right)-1\right)\nonumber\\
&\quad\quad-\sum_{m=1}^{M}\sum_{n\in\set_{m}}\sum_{l=m}^{M-1}\frac{\eta\gamma_n(t)\exp\left(-\mathbf{x}_{n}^{\top}\wtilde\right)\exp\left(-\mathbf{x}_{n}^{\top}\mathbf{r}\left(t\right)\right)\mathbf{x}_{n}^{\top}\mathbf{r}\left(t\right)\psi_{m}(t)\mathbf{x}_{n}^{\top}\check{\mathbf{w}}_{m,l+1}}{t\prod_{r=1}^{l}\log^{\circ r}\left(t\right)}\nonumber\\
&\quad\quad+\sum_{k=1}^{M}\sum_{n\in\set_k}\sum_{m=k}^{M-1}\frac{\eta\exp\left(-\mathbf{x}_{n}^{\top}\wtilde\right)\mathbf{x}_{n}^{\top}\mathbf{r}\left(t\right)\mathbf{x}_{n}^{\top}\check{\mathbf{w}}_{k,m+1}}{\prod_{r=1}^{m}t\log^{\circ r}\left(t\right)}\nonumber\\
&\overset{\left(3\right)}{=} \sum_{m=1}^{M}\sum_{n\in\set_{m}}\left[\frac{1}{t\prod_{r=1}^{m-1}\log^{\circ r}\left(t\right)}-\sum_{k=m}^{M-1}\frac{\mathbf{x}_{n}^{\top}\check{\mathbf{w}}_{m,k+1}}{t\prod_{r=1}^{k}\log^{\circ r}\left(t\right)}\right]\eta\exp\left(-\mathbf{x}_{n}^{\top}\wtilde\right)\left(\gamma_n(t)\psi_{m}\left(t\right)\exp\left(-\mathbf{x}_{n}^{\top}\mathbf{r}\left(t\right)\right)-1\right)\mathbf{x}_{n}^{\top}\mathbf{r}\left(t\right)\nonumber\\
&:= \sum_{m=1}^{M}\sum_{n\in\set_{m}} \Gamma_{m,n}(t),\label{eq: the last term}
\end{align}}
where in $\left(1\right)$ we used eq. \ref{eq: w_hat full decomposition}, in $\left(2\right)$
we re-arranged the order of summation in the last term, and in $(3)$ we just use a change of variables.  


Next, we examine $\Gamma_{m,n}(t)$ for each $m$ and $n\in\set_{m}$ in eq.
\ref{eq: the last term}. Note that, $\exists t_{2}>t_{\psi}$ such that $\forall t>t_{2}$ we have \[\left|\sum_{k=m}^{M-1}\frac{\mathbf{x}_{n}^{\top}\check{\mathbf{w}}_{m,k+1}}{t\prod_{r=1}^{k}\log^{\circ r}\left(t\right)}\right|\le \frac{0.5}{t\prod_{r=1}^{m-1}\log^{\circ r}\left(t\right)}\, . \]
In this case, $\forall t>t_2$
\begin{align}
 \Gamma_{m,n}(t)
\overset{\left(1\right)}{\leq} \eta\left[\frac{\kappa(n,t)}{t\prod_{r=1}^{m-1}\log^{\circ r}\left(t\right)}\right]\exp\left(-\mathbf{x}_{n}^{\top}\wtilde\right)\left(\gamma_n(t)\psi_m(t)\exp\left(-\mathbf{x}_{n}^{\top}\mathbf{r}\left(t\right)\right)-1\right)\mathbf{x}_{n}^{\top}\mathbf{r}\left(t\right),
\label{eq:case1}
\end{align}
where in $\left(1\right)$ follows from the definition of 
$t_{2}$, wherein \[\kappa_n(t)=\left\{\begin{array}{ll}
1.5&\text{if }\left(\gamma_n(t)\psi_m(t)\exp\left(-\mathbf{x}_{n}^{\top}\mathbf{r}\left(t\right)\right)-1\right)\mathbf{x}_{n}^{\top}\mathbf{r}\left(t\right)>0\\
0.5 & \text{if }\left(\gamma_n(t)\psi_m(t)\exp\left(-\mathbf{x}_{n}^{\top}\mathbf{r}\left(t\right)\right)-1\right)\mathbf{x}_{n}^{\top}\mathbf{r}\left(t\right)<0
\end{array}\right..\]

\begin{asparaenum}
\item First, if $\mathbf{x}_{n}^{\top}\mathbf{r}\left(t\right)>0$, then $\gamma_n(t)=(1+\exp(-\mu_+\wvec(t)^\top\xn))>0$. 

We further divide into two cases. In the following $C_0,C_1$ are some constants independent of $t$.
\begin{compactenum}
\item If $\left|\mathbf{x}_{n}^{\top}\mathbf{r}\left(t\right)\right|>C_0t^{-0.5\mu_+}$, then we have the following
\begin{align}
&\gamma_n(t)\psi_m(t)\exp\left(-\mathbf{x}_{n}^{\top}\mathbf{r}\left(t\right)\right) \nonumber \\
& \overset{(1)}\le\left(1+\exp\left(-\mu_+\sum_{l=1}^M\left[\hat{\mathbf{w}}_l^\top \xn\log^{\circ l}(t)+\mathbf{h}_{l}^\top\xn\right]\right)\right)\exp\left(-\mathbf{x}_{n}^{\top}\mathbf{r}\left(t\right)\right)\nonumber\\
&\overset{(2)}\le\left(1+\frac{\exp(\mu_+C_h\norm{\xn})}{\left(t\prod_{r=1}^{m-1}\log^{\circ r}\left(t\right)\right)^{\mu_+}}\right)\exp(-C_0t^{-0.5\mu_+})\nonumber\\
&\overset{(3)}\le \left(1+C_1t^{-\mu_+}\right)\left(1-C_0t^{-0.5\mu_+}+0.5C_0^2t^{-\mu_+}\right),\forall t>t_+'\nonumber\\
&\le 1-C_0t^{-0.5\mu_+}\left(1+C_1t^{-\mu_+}\right)+0.5C_0^2t^{-\mu_+}\left(1+C_1t^{-\mu_+}\right)\overset{(4)}\le 1,\forall t>t_+^{\prime\prime},
\label{eq.case1a}
\end{align}
where in $(1)$, we use $\psi_m(t)\le 1$ from eq. \ref{eq: psi bound} and using eq. \ref{eq: define r(t) degenrate 2}, in $(2)$ we used bound on $\mathbf{h}_{m}$ from eq. \ref{eq: h bound}, in $(3)$ for some large enough $t_+'>t_+$, we have $\frac{\exp(\mu_+C_h\norm{\xn})}{\left(\prod_{r=1}^{m-1}\log^{\circ r}\left(t\right)\right)^{\mu_+}}\le C_1$,  and for the second term we used the inequality $e^{-x}\le 1-x+0.5 x^2$ for $x>0$, and $(4)$ holds asymptotically for $t>t_+^{\prime\prime}$ for large enough $t_+^{\prime\prime}>t_+'$  as $C_0t^{-0.5\mu_+}$ converges slower than $0.5C_0^2t^{-\mu_+}$ to 0. 

Thus, using eq. \ref{eq.case1a}  in eq. \ref{eq:case1}, $\forall t>\max{(t_2,t_+^{\prime\prime})}$, we have
\begin{align*}
 \Gamma_{m,n}(t)
{\leq} \left[\frac{\eta\kappa(n,t)\exp\left(-\mathbf{x}_{n}^{\top}\wtilde\right)}{t\prod_{r=1}^{m-1}\log^{\circ r}\left(t\right)}\right]\left(\gamma_n(t)\psi_m(t)\exp\left(-\mathbf{x}_{n}^{\top}\mathbf{r}\left(t\right)\right)-1\right)\mathbf{x}_{n}^{\top}\mathbf{r}\left(t\right)\le0
\end{align*}

\item If $0<\mathbf{x}_{n}^{\top}\mathbf{r}\left(t\right)<C_0t^{-0.5\mu_+}$, then we have the following: $\psi_m(t)\le 1$ from eq. \ref{eq: psi bound}, 
 $\exp\left(-\mathbf{x}_{n}^{\top}\mathbf{r}\left(t\right)\right)\le1$ as $\mathbf{x}_{n}^{\top}\mathbf{r}\left(t\right)>0$,  and since $\wvec(t)^\top\xn\to\infty$, for large enough $t>t_+^{\prime\prime\prime}$, $\gamma_n(t)=\left(1+\exp\left(-\mu_+\wvec(t)^\top\xn\right)\right)\le2$
 
 This gives us, $\left(\gamma_n(t)\psi_m(t)\exp\left(-\mathbf{x}_{n}^{\top}\mathbf{r}\left(t\right)\right)-1\right)\mathbf{x}_{n}^{\top}\mathbf{r}\left(t\right)\le\mathbf{x}_{n}^{\top}\mathbf{r}\left(t\right)\le C_0t^{-0.5\mu_+}$, and using this in eq. \ref{eq:case1}, $\forall t>\max{(t_2,t_+')}$
\begin{align*}
 \Gamma_{m,n}(t)
{\leq} \left[\frac{\eta\kappa(n,t)\exp\left(-\mathbf{x}_{n}^{\top}\wtilde\right)}{t\prod_{r=1}^{m-1}\log^{\circ r}\left(t\right)}\right]C_0t^{-0.5\mu_+}\in L_1.
\end{align*}
\end{compactenum}
\item Second, if $\mathbf{x}_{n}^{\top}\mathbf{r}\left(t\right)\leq0$, then $\gamma_n(t)=(1-\exp(-\mu_-\wvec(t)^\top\xn))\in(0,1)$. We again  divide into following special cases. 
\begin{compactenum}
\item If $\left|\mathbf{x}_{n}^{\top}\mathbf{r}\left(t\right)\right|\le C_0\left(\log^{\circ (m-1)}(t)\right)^{-0.5\tilde{\mu}_-}$, where $\tilde\mu_-=\min{(\mu_-,1)}$, then we have 
\begin{align*}
 &\Gamma_{m,n}(t)\le \left[\frac{1.5 \eta\exp\left(-\mathbf{x}_{n}^{\top}\wtilde\right)}{t\prod_{r=1}^{m-1}\log^{\circ r}\left(t\right)}\right]\left(1-\gamma_n(t)\psi_{m}\left(t\right)\exp\left(-\mathbf{x}_{n}^{\top}\mathbf{r}\left(t\right)\right)\right)\left|\mathbf{x}_{n}^{\top}\mathbf{r}\left(t\right)\right|\\
\overset{(1)}\leq & \left[\frac{1.5 \eta\exp\left(-\mathbf{x}_{n}^{\top}\wtilde\right)}{t\prod_{r=1}^{m-2}\log^{\circ r}\left(t\right)}\right]C_0\left(\log^{\circ (m-1)}(t)\right)^{-1-0.5\tilde{\mu}_-}\in L_{1}.
\end{align*}
where in $(1)$ we used that $\left(1-\gamma_n(t)\psi_{m}\left(t\right)\exp\left(-\mathbf{x}_{n}^{\top}\mathbf{r}\left(t\right)\right)\right)<1$ and\\ $\left|\mathbf{x}_{n}^{\top}\mathbf{r}\left(t\right)\right|\le C_0\left(\log^{\circ (m-1)}(t)\right)^{-0.5\tilde{\mu}_-}$.
\item If  $\psi_{m}\left(t\right)\exp\left(-\mathbf{x}_{n}^{\top}\mathbf{r}\left(t\right)\right)<1,$
then, from eq. \ref{eq: psi bound} 
\begin{equation}\label{eq:tmp}
\frac{-M\max_{n}\left\Vert \mathbf{x}_{n}\right\Vert C_{h}}{\log^{\circ\left(m-1\right)}\left(t\right)}\leq\log\psi_{m}\left(t\right)<\mathbf{x}_{n}^{\top}\mathbf{r}\left(t\right).
\end{equation}
In this case, since $\gamma_n(t)=1-\exp(-\wvec(t)^\top\xn)<1$, we also have $\gamma_n(t)\psi_{m}\left(t\right)\exp\left(-\mathbf{x}_{n}^{\top}\mathbf{r}\left(t\right)\right)<1$, and hence $\left(\gamma_n(t)\psi_{m}\left(t\right)\exp\left(-\mathbf{x}_{n}^{\top}\mathbf{r}\left(t\right)\right)-1\right)\mathbf{x}_{n}^{\top}\mathbf{r}(t)>0$. Thus, $\forall t>t_{2}$, in \ref{eq:case1}, $\kappa_n(t)=1.5$, and we have 
\begin{align*}
 &\Gamma_{m,n}(t)\le \left[\frac{1.5 \eta\exp\left(-\mathbf{x}_{n}^{\top}\wtilde\right)}{t\prod_{r=1}^{m-1}\log^{\circ r}\left(t\right)}\right]\left(1-\gamma_n(t)\psi_{m}\left(t\right)\exp\left(-\mathbf{x}_{n}^{\top}\mathbf{r}\left(t\right)\right)\right)\left|\mathbf{x}_{n}^{\top}\mathbf{r}\left(t\right)\right|\\
\overset{(1)}\leq & \left[\frac{1.5 \eta\exp\left(-\mathbf{x}_{n}^{\top}\wtilde\right)}{t\prod_{r=1}^{m-1}\log^{\circ r}\left(t\right)}\right]\frac{M\max_{n}\left\Vert \mathbf{x}_{n}\right\Vert C_{h}}{\log^{\circ\left(m-1\right)}\left(t\right)}\leq \frac{C_2}{t\prod_{r=1}^{m-2}\log^{\circ r}\left(t\right)\left(\log^{\circ (m-1)}\left(t\right)\right)^{2}}\in L_{1}, 
\end{align*}
where $(1)$ follows from $\left(1-\gamma_n(t)\psi_{m}\left(t\right)\exp\left(-\mathbf{x}_{n}^{\top}\mathbf{r}\left(t\right)\right)\right)<1$ and the bound on $|\xn^\top\mathbf{r}(t)|=-\xn^\top\mathbf{r}(t)$ from eq. \ref{eq:tmp}. 
\item If $\psi_{m}\left(t\right)\exp\left(-\mathbf{x}_{n}^{\top}\mathbf{r}\left(t\right)\right)>1,$ and $\left|\mathbf{x}_{n}^{\top}\mathbf{r}\left(t\right)\right|>C_0\left(\log^{\circ (m-1)}(t)\right)^{-0.5\tilde{\mu}_-}$, where $\tilde{\mu}_=\min{(1,\mu_-)}$. 

Since, $\mathbf{x}_{n}^{\top}\wvec(t)\to\infty$ and $\psi_m(t)\to 1$ from eq.~\ref{eq:psi}, for large enough $t_-'>t_-$, we have  $\forall t>t_-'$, $\psi_m(t)>0.5$ and  $\gamma_n(t)=(1-\exp(-\mu_-\mathbf{x}_{n}^{\top}\wvec(t)))>0.5$. Let $\tau>\max{(4,t_-')}$ be an arbitrarily large constant. For all $t>\tau$, if $\exp\left(-\mathbf{x}_{n}^{\top}\mathbf{r}\left(t\right)\right)>\tau\ge4$, then $\gamma_n(t)\psi_m(t)\exp\left(-\mathbf{x}_{n}^{\top}\mathbf{r}\left(t\right)\right)> 0.25\tau\ge1$.

On the other hand, if there exists $t>\tau\ge4$, such that $\exp\left(-\mathbf{x}_{n}^{\top}\mathbf{r}\left(t\right)\right)<\tau$, then for some constants $C_1,C_2$ we have the following 
\begin{compactenum}[(i)]
\item  $\exp(-\mathbf{x}_{n}^{\top}\mathbf{r}\left(t\right))=\exp(|\mathbf{x}_{n}^{\top}\mathbf{r}\left(t\right)|)\ge \left(1+C_0\left(\log^{\circ (m-1)}(t)\right)^{-0.5\tilde{\mu}_-}\right)$,  since $e^x>1+x$ for all $x$,
\item  $\psi_m(t)\ge\exp\left(-C_1\left(\log^{\circ (m-1)}(t)\right)^{-1}\right)\ge \left(1-C_1\left(\log^{\circ (m-1)}(t)\right)^{-1}\right)$  from eq.~\ref{eq: psi bound}   and again using  $e^x>1+x$ for all $x$,
\item 
\begin{align*}
    &\gamma_n(t)=\left(1-\left[\frac{\exp(-\mathbf{h}_l(t)^\top \xn)\exp\left(-\mathbf{x}_{n}^{\top}\mathbf{r}\left(t\right)\right)}{t\prod_{r=1}^{m-1}\log^{\circ r}(t)}\right]^{\mu_-}\right)& \\
    &\ge \left(1-\left[\frac{\exp(-C_h\|x_n\|)\tau}{t\prod_{r=1}^{m-1}\log^{\circ r}(t)}\right]^{\mu_-}\right)\ge\left(1-\left(C_2\log^{\circ (m-1)}(t)\right)^{-\mu_-}\right), \forall t>t_-^{\prime\prime}
\end{align*} 
where the last inequality follows as for large enough $t_-^{\prime\prime}>t_-'$,  we have $\frac{\exp(-C_h\|x_n\|)\tau}{t\prod_{r=1}^{m-2}\log^{\circ r}(t)}\le C_2$.
\end{compactenum}

Using the above inequalities, we have {\small
\begin{align}
&\gamma_n(t)\psi_m(t)\exp\left(-\mathbf{x}_{n}^{\top}\mathbf{r}\left(t\right)\right)\nonumber\\
&\ge\left(1+C_0\left(\log^{\circ (m-1)}(t)\right)^{-0.5\tilde{\mu}_-}\right)\left(1-C_1\left(\log^{\circ (m-1)}(t)\right)^{-1}\right)\left(1-C_2\left(\log^{\circ (m-1)}(t)\right)^{-\mu_-}\right)\nonumber\\
&\overset{(1)}\ge 1+C_0\left(\log^{\circ (m-1)}(t)\right)^{-0.5\tilde{\mu}_-}-C_1\left(\log^{\circ (m-1)}(t)\right)^{-1}-C_2\left(\log^{\circ (m-1)}(t)\right)^{-{\mu}_-}\nonumber\\
&\quad\quad -C_0C_2\left(\log^{\circ (m-1)}(t)\right)^{-\mu_-0.5\tilde{\mu}_-}-C_0C_1\left(\log^{\circ (m-1)}(t)\right)^{-1-0.5\tilde{\mu}_-}\overset{(2)}\ge 1, \forall t>t_-^{\prime\prime\prime},
\label{eq:case2b}
\end{align}}
where in $(1)$ we dropped the other positive terms, and $(2)$ follows for large enough $t_-^{\prime\prime\prime}>t_-^{\prime\prime}$ as the $C_0\log\left(\log^{\circ (m-1)}(t)\right)^{-0.5\tilde{\mu}_-}$   converges to $0$ more slowly than the other negative terms.

Finally, using eq.~\ref{eq:case2b} in eq. \ref{eq:case1}, we have for all $t>\max{(t_2,\tau,t_\psi,t_-^{\prime\prime\prime})}$
\begin{align}
 \Gamma_{m,n}(t)
\le \left[\frac{\eta\kappa(n,t)\exp\left(-\mathbf{x}_{n}^{\top}\wtilde\right)}{t\prod_{r=1}^{m-1}\log^{\circ r}\left(t\right)}\right]\left(1-\gamma_n(t)\psi_m(t)\exp\left(-\mathbf{x}_{n}^{\top}\mathbf{r}\left(t\right)\right)\right)\left|\mathbf{x}_{n}^{\top}\mathbf{r}\left(t\right)\right|\leq 0
\end{align}
    \end{compactenum}
\end{asparaenum}

Collecting all the terms from the above special cases, and substituting
back into eq. \ref{eq: r correlation}, we note that all terms are
either negative, in $L_{1}$, or of the form $f\left(t\right)\left\Vert \mathbf{r}\left(t\right)\right\Vert $,
where $f\left(t\right)\in L_{1}$, thus proving the lemma. 
\end{proof}

\subsection{Proof of the existence and uniqueness of the solution to eqs. \ref{eq: w tilde-1}-\ref{eq: w tilde constraints} \label{subsec: existence 1}}

We wish to prove that $\forall m\geq1:$

\begin{equation}
\sum_{n\in\set_{m}}\exp\left(-\sum_{k=1}^{m}\tilde{\mathbf{w}}_{k}^{\top}\mathbf{x}_{n}\right)\bar{\mathbf{P}}_{m-1}\mathbf{x}_{n}=\hat{\mathbf{w}}_{m}\,,\label{eq: w tilde-1-1}
\end{equation}
such that 
\begin{equation}
\mathbf{P}_{m-1}\tilde{\mathbf{w}}_{m}=0\,\mathrm{and}\,\bar{\mathbf{P}}_{m}\tilde{\mathbf{w}}_{m}=0,\label{eq: w tilde constraints-1}
\end{equation}
we have a unique solution. From eq. \ref{eq: w tilde constraints-1},
we can modify eq. \ref{eq: w tilde-1-1} to 
\[
\sum_{n\in\set_{m}}\exp\left(-\sum_{k=1}^{m}\tilde{\mathbf{w}}_{k}^{\top}\bar{\mathbf{P}}_{k-1}\mathbf{x}_{n}\right)\bar{\mathbf{P}}_{m-1}\mathbf{x}_{n}=\hat{\mathbf{w}}_{m}\,,.
\]
To prove this, without loss of generality, and with a slight abuse
of notation, we will denote $\set_{m}$ as $\set_{1}$, $\bar{\mathbf{P}}_{m-1}\mathbf{x}_{n}$
as $\mathbf{x}_{n}$ and $\beta_{n}=\exp\left(-\sum_{k=1}^{m-1}\tilde{\mathbf{w}}_{k}^{\top}\bar{\mathbf{P}}_{k-1}\mathbf{x}_{n}\right)$,
so we can write the above equation as
\[
\sum_{n\in\set_{1}}\mathbf{x}_{n}\beta_{n}\exp\left(-\mathbf{x}_{n}^{\top}\tilde{\mathbf{w}}_{1}\right)=\what_{1}
\]
In the following Lemma \ref{lem: existence of solutions} we prove
this equation $\forall\boldsymbol{\beta}\in\mathbb{R}_{>0}^{\left|\set_{1}\right|}$.

\begin{lem}
$\forall\boldsymbol{\beta}\in\mathbb{R}_{>0}^{\left|\set_{1}\right|}$ we can find a unique $\tilde{\mathbf{w}}$ such that 
\begin{equation}
\sum_{n\in\set_{1}}\mathbf{x}_{n}\beta_{n}\exp\left(-\mathbf{x}_{n}^{\top}\tilde{\mathbf{w}}_{1}\right)=\what_{1}\label{eq: w_tilde equation}
\end{equation}
and for $\forall\mathbf{z}\in\mathbb{R}^{d}$ such that $\mathbf{z}^{\top}\mathbf{X}_{\set_{1}}=0$
we would have $\tilde{\mathbf{w}}_{1}^{\top}\mathbf{z}=0$.\label{lem: existence of solutions} 
\end{lem}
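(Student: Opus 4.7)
The plan is to recognize that the equation $\sum_{n\in\set_1} \beta_n \exp(-\x_n^\top \wtilde_1)\,\x_n = \what_1$ is exactly the first-order optimality condition for the convex potential
\[
F(\w) \;=\; \sum_{n\in\set_1} \beta_n \exp\bigl(-\x_n^\top \w\bigr) \;+\; \what_1^\top \w,
\]
since $\nabla F(\w) = -\sum_{n\in\set_1}\beta_n e^{-\x_n^\top\w}\x_n + \what_1$. So I would recast the claim as: $F$ admits a unique minimizer on the linear subspace $V := \mathrm{span}\{\x_n : n\in\set_1\}$, and then identify $\wtilde_1$ with this minimizer. The orthogonality requirement ``$\wtilde_1^\top\z = 0$ whenever $\z^\top \mathbf{X}_{\set_1}=0$'' is exactly the condition that $\wtilde_1\in V$, so if the minimizer lives in $V$ the second clause of the lemma is automatic.

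Strict convexity on $V$ is straightforward: the Hessian of $F$ is $H(\w) = \sum_{n\in\set_1}\beta_n e^{-\x_n^\top\w}\x_n\x_n^\top$, and for any nonzero $\mathbf{v}\in V$ we have $\mathbf{v}^\top H(\w)\mathbf{v} = \sum_n \beta_n e^{-\x_n^\top\w}(\x_n^\top\mathbf{v})^2 > 0$, because $\beta_n>0$ and $\mathbf{v}\in V\setminus\{0\}$ forces $\x_n^\top\mathbf{v}\neq 0$ for at least one $n\in\set_1$ (otherwise $\mathbf{v}$ would be orthogonal to its own span). Hence any critical point on $V$ is the unique global minimizer on $V$, and this handles uniqueness.

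The main obstacle will be existence, i.e.\ proving $F$ is coercive on $V$: one must rule out directions in which $F$ drifts to $-\infty$ along the linear term $\what_1^\top\w$ while the exponentials die. Here I would crucially use the definition of $\set_m$ in eq.~\ref{eq:sm} (specialized to $m{=}1$ with our abuse of notation), which guarantees a KKT decomposition $\what_1 = \sum_{n\in\set_1} \alpha_n \x_n$ with \emph{strictly positive} $\alpha_n>0$. Consider $\w = r\mathbf{u}$ with $\|\mathbf{u}\|=1$, $\mathbf{u}\in V$, $r\to\infty$. Two cases: if some $\x_n^\top\mathbf{u}<0$, then the corresponding exponential $\beta_n e^{-r\,\x_n^\top\mathbf{u}}$ blows up, dominating the linear term, so $F\to+\infty$; otherwise $\x_n^\top\mathbf{u}\geq 0$ for every $n\in\set_1$, and since $\mathbf{u}\in V$ is nonzero at least one inequality is strict (else $\mathbf{u}$ would be orthogonal to its own span, forcing $\mathbf{u}=0$), so $\what_1^\top\mathbf{u} = \sum_n \alpha_n\,\x_n^\top\mathbf{u} > 0$ by strict positivity of the $\alpha_n$, and the linear term drives $F\to+\infty$ while the exponentials remain bounded below.

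Combining coercivity and strict convexity on $V$, $F\!\restriction_V$ attains a unique minimizer $\wtilde_1\in V$, which satisfies $\mathbf{P}_V\nabla F(\wtilde_1)=0$. But since $\what_1\in V$ (it is a nonnegative combination of the $\x_n$) and the gradient $\nabla F(\wtilde_1)$ already lies in $V$ by inspection, the projected stationarity condition is equivalent to $\nabla F(\wtilde_1)=0$, which is precisely eq.~\ref{eq: w_tilde equation}. Membership $\wtilde_1\in V$ gives the second clause, completing the proof.
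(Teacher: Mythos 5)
Your proof is correct, and it reaches the same two pillars as the paper's argument---coercivity from the strict positivity of the dual coefficients $\alpha_n$, and strict convexity from the fact that no nonzero vector in $\mathrm{span}\{\x_n : n\in\set_1\}$ is orthogonal to all the $\x_n$---but it packages them differently. The paper first rotates to an orthonormal basis whose first vector is $\what_1/\norm{\what_1}$, exploits $\what_1^\top\x_n=1$ to solve for the coordinate $s_1$ along $\what_1$ in closed form, and then minimizes the pure exponential-sum potential $E(s_2,\dots,s_K)=\sum_n\beta_n\exp(-\sum_{j\ge2}s_jv_{j,n})$ in the remaining $K-1$ coordinates; coercivity of $E$ is obtained by showing that $\sum_n \alpha_n(\sum_j s_j v_{j,n})=0$ forces every nonzero direction to make at least one exponent diverge to $+\infty$. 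You instead keep the linear term, minimizing $F(\w)=\sum_n\beta_n e^{-\x_n^\top\w}+\what_1^\top\w$ directly over the span $V$, and split coercivity into the two cases (an exponential blows up, or the linear term does). Your route is cleaner in that it avoids the coordinate change and the special treatment of the first coordinate, and it makes the Lagrangian-dual structure of eq.~\ref{eq: w_tilde equation} transparent; the paper's route yields as a by-product the explicit formula for the component of $\wtilde_1$ along $\what_1$, which is not needed for the lemma. Both arguments pass from ``$F$ (resp.\ $E$) tends to $+\infty$ along every ray'' to ``the minimum is attained'' with the same level of informality---this step is valid precisely because the function is convex, so directional coercivity implies bounded sublevel sets---and both correctly observe that the gradient of the potential already lies in $V$, so stationarity on $V$ gives the full equation.
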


\begin{proof}
Let $K=\mathrm{rank}\left(\mathbf{X}_{\set_{1}}\right)$. Let and $\mathbf{U}=\left[\mathbf{u}_{1},\dots,\mathbf{u}_{d}\right]\in\mathbb{R}^{d\times d}$
be a set of orthonormal vectors (\emph{i.e.}, \textbf{$\mathbf{U}\mathbf{U}^{\top}=\mathbf{U}^{\top}\mathbf{U}=\mathbf{I}$})
such that $\mathbf{u}_{1}=\mathbf{\what}_{1}/\left\Vert \mathbf{\what}_{1}\right\Vert $,
and 
\begin{equation}
\forall\mathbf{z}\neq0,\forall n\in\set_{1}:\,\mathbf{z}^{\top}\left[\mathbf{u}_{1},\dots,\mathbf{u}_{K}\right]^{\top}\mathbf{x}_{n}\neq0\,,\label{eq: v no zero eigen value}
\end{equation}
while 
\begin{equation}
\forall i>K:\,\forall n\in\set_{1}:\,\mathbf{u}_{i}^{\top}\mathbf{x}_{n}=0\,.\label{eq: null space}
\end{equation}
In other words, $\mathbf{u}_{1}$ is in the direction of $\mathbf{\what}_{1}$,
$\left[\mathbf{u}_{1},\dots,\mathbf{u}_{K}\right]$ are in the space
spanned by the columns of $\mathbf{X}_{\set_{1}}$, and $\left[\mathbf{u}_{K+1},\dots,\mathbf{u}_{d}\right]$
are orthogonal to the columns of $\mathbf{X}_{\set_{1}}$.

We define $\mathbf{v}_{n}=\mathbf{U}^{\top}\mathbf{x}_{n}$ and $\mathbf{s}=\mathbf{U}^{\top}\tilde{\mathbf{w}}_{1}$.
Note that $\forall i>K:\,v_{i,n}=0\,\forall n\in\set_{1}$ from eq.
\ref{eq: null space}, and $\forall i>K:\,s_{i}=0$, since for $\forall\mathbf{z}\in\mathbb{R}^{d}$
such that $\mathbf{z}^{\top}\mathbf{X}_{\set_{1}}=0$ we would have
$\tilde{\mathbf{w}}_{1}^{\top}\mathbf{z}=0$. Lastly, equation \ref{eq: w_tilde equation}
becomes 
\begin{equation}
\sum_{n\in\set_{1}}\mathbf{x}_{n}\beta_{n}\exp\left(-\sum_{j=1}^{K}s_{j}v_{j,n}\right)=\what_{1}\,.\label{eq: s equation}
\end{equation}
Multiplying by $\mathbf{U}^{\top}$ from the left, we obtain 
\[
\forall i\leq K:\sum_{n\in\set_{1}}v_{i,n}\beta_{n}\exp\left(-\sum_{j=1}^{K}s_{j}v_{j,n}\right)=\mathbf{u}_{i}^{\top}\what_{1}\,.
\]
Since $\mathbf{u}_{1}=\mathbf{\what}_{1}/\left\Vert \mathbf{\what}_{1}\right\Vert $,
we have that 
\begin{equation}
\forall i\leq K:\sum_{n\in\set_{1}}v_{i,n}\beta_{n}\exp\left(-\sum_{j=1}^{K}s_{j}v_{j,n}\right)=\left\Vert \mathbf{\what}_{1}\right\Vert \delta_{i,1}\,.\label{eq: sum v full}
\end{equation}
We recall that $v_{1,n}=\hat{\mathbf{w}}_{1}^{\top}\mathbf{x}_{n}/\left\Vert \mathbf{\what}_{1}\right\Vert =1/\left\Vert \mathbf{\what}_{1}\right\Vert ,$
$\forall n\in\set_{1}$. Given $\left\{ s_{j}\right\} _{j=2}^{K}$,
we examine eq. \ref{eq: sum v full} for $i=1$, 
\[
\exp\left(-\frac{s_{1}}{\left\Vert \mathbf{\what_1}\right\Vert }\right)\left[\sum_{n\in\set_{1}}\beta_{n}\exp\left(-\sum_{j=2}^{K}s_{j}v_{j,n}\right)\right]=\left\Vert \mathbf{\what}_{1}\right\Vert ^{2}\,.
\]

This equation always has the unique solution 
\begin{equation}
s_{1}=\left\Vert \mathbf{\what}_{1}\right\Vert \log\left[\left\Vert \mathbf{\what}_{1}\right\Vert ^{-2}\sum_{n\in\set_{1}}\beta_{n}\exp\left(-\sum_{j=2}^{K}s_{j}v_{j,n}\right)\right]\,,\label{eq: s1 solution}
\end{equation}
given $\left\{ s_{j}\right\} _{j=2}^{K}$. Next, we similarly examine
eq. \ref{eq: sum v full} for $2\leq i\leq K$ as a function of $s_{i}$
\begin{equation}
\sum_{n\in\set_{1}}\beta_{n}v_{i,n}\exp\left(-s_{1}/\left\Vert \mathbf{\what}_{1}\right\Vert -\sum_{j=2}^{K}s_{j}v_{j,n}\right)=0\,.\label{eq: sum v i>1}
\end{equation}
multiplying by $\exp\left(s_{1}/\left\Vert \mathbf{\what}_{1}\right\Vert \right)$
we obtain 
\begin{align*}
0 & =\sum_{n\in\set_{1}}\beta_{n}v_{i,n}\exp\left(-\sum_{j=2}^{K}s_{j}v_{j,n}\right)=-\frac{\partial}{\partial s_{i}}\left[E\left(s_{2},\dots,s_{K}\right)\right]\,,
\end{align*}
where we defined 
\[
E\left(s_{2},\dots,s_{K}\right)=\sum_{n\in\set_{1}}\beta_{n}\exp\left(-\sum_{j=2}^{K}s_{j}v_{j,n}\right)\,.
\]
Therefore, any critical point of $E\left(s_{2},\dots,s_{K}\right)$
would be a solution of eq. \ref{eq: sum v i>1} for $2\leq i\leq K$,
and substituting this solution into eq. \ref{eq: s1 solution} we
obtain $s_{1}$. Since $\beta_{n}>0$, $E\left(s_{2},\dots,s_{K}\right)$
is a convex function, as positive linear combination of convex function
(exponential). Therefore, any finite critical point is a global minimum.
All that remains is to show that a finite minimum exists and that
it is unique.

From the definition of $\set_{1}$, $\exists\boldsymbol{\alpha}\in\mathbb{R}_{>0}^{\left|\set_{1}\right|}$
such that $\what_{1}=\sum_{n\in\set_{1}}\alpha_{n}\mathbf{x}_{n}$
. Multiplying this equation by $\mathbf{U}^{\top}$ we obtain that
$\exists\boldsymbol{\alpha}\in\mathbb{R}_{>0}^{\left|\set_{1}\right|}$
such that $2\leq i\leq K$ 
\begin{equation}
\,\sum_{n\in\set_{1}}v_{i,n}\alpha_{n}=0\,.\label{eq: alpha existence-1}
\end{equation}
Therefore, $\forall\left(s_{2,}\dots,s_{K}\right)\neq\mathbf{0}$
we have that 
\begin{equation}
\sum_{n\in\set_{1}}\left(\sum_{j=2}^{K}s_{j}v_{j,n}\right)\alpha_{n}=0\,.\label{eq: s v alpha}
\end{equation}
Recall, from eq. \ref{eq: v no zero eigen value} that $\forall\left(s_{2,}\dots,s_{K}\right)\neq\mathbf{0},\exists n\in\set_{1}:\,\sum_{j=2}^{K}s_{j}v_{j,n}\neq0$,
and that $\alpha_{n}>0$. Therefore, eq. \ref{eq: s v alpha} implies
that $\exists n\in\set_{1}$ such that $\sum_{j=2}^{K}s_{j}v_{j,n}>0$
and also $\exists m\in\set_{1}$ such that $\sum_{j=2}^{K}s_{j}v_{j,m}<0$.

Thus, in any direction we take a limit in which $\left|s_{i}\right|\rightarrow\infty$
$\forall2\leq i\leq K$, we obtain that $E\left(s_{2},\dots,s_{K}\right)\rightarrow\infty$,
since at least one exponent in the sum diverge. Since $E\left(s_{2},\dots,s_{K}\right)$,
is a continuous function, it implies it has a finite global minimum.
This proves the existence of a finite solution. To prove uniqueness
we will show the function is strictly convex, since the hessian is
(strictly) positive definite, \emph{i.e.}, that the following expression
is strictly positive: 
\begin{align*}
 & \sum_{i=2}^{K}\sum_{k=2}^{K}q_{i}q_{k}\frac{\partial}{\partial s_{i}}\frac{\partial}{\partial s_{k}}E\left(s_{2},\dots,s_{K}\right).\\
= & \sum_{n\in\set_{1}}\beta_{n}\left(\sum_{i=2}^{K}q_{i}v_{i,n}\right)\left(\sum_{k=2}^{K}q_{k}v_{k,n}\right)\exp\left(-\sum_{j=2}^{K}s_{j}v_{j,n}\right)\\
= & \sum_{n\in\set_{1}}\beta_{n}\left(\sum_{i=2}^{K}q_{i}v_{i,n}\right)^{2}\exp\left(-\sum_{j=2}^{K}s_{j}v_{j,n}\right)\,.
\end{align*}
the last expression is indeed strictly positive since $\forall\mathbf{q}\neq\mathbf{0},\exists n\in\set_{1}:\,\sum_{j=2}^{K}q_{j}v_{j,n}\neq0$,
from eq. \ref{eq: v no zero eigen value}. Thus, there exists a unique
solution $\tilde{\mathbf{w}}_{1}$. 
\end{proof}

\subsection{Proof of the existence and uniqueness of the solution to eqs. \ref{eq: w check}-\ref{eq: w check constraints}\label{subsec: existence 2}}
\begin{lem}
For $\forall m>k\geq1$, the equations
\begin{equation}
\sum_{n\in\set_{m}}\exp\left(-\tilde{\mathbf{w}}^{\top}\mathbf{x}_{n}\right)\mathbf{P}_{m-1}\mathbf{x}_{n}=\sum_{k=1}^{m-1}\left[\sum_{n\in\set_{k}}\exp\left(-\tilde{\mathbf{w}}^{\top}\mathbf{x}_{n}\right)\mathbf{x}_{n}\mathbf{x}_{n}^{\top}\right]\check{\mathbf{w}}_{k,m}\label{eq: w check proof}
\end{equation}
under the constraints

\begin{equation}
\mathbf{P}_{k-1}\check{\mathbf{w}}_{k,m}=0\,\mathrm{and}\,\mathbf{\bar{P}}_{k}\check{\mathbf{w}}_{k,m}=0\,\label{eq: w check constraints-1}
\end{equation}
have a unique solution $\check{\mathbf{w}}_{k,m}$.
\end{lem}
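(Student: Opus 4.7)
My plan is to reduce \ref{eq: w check proof} on the constraint manifold to a finite square linear system with an (orthogonally decomposed) block-upper-triangular coefficient matrix, and then show invertibility by showing each diagonal block is positive definite.

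First I would identify the constraint subspace. The two conditions $\mathbf{P}_{k-1}\wcheck_{k,m}=0$ and $\bar{\mathbf{P}}_k\wcheck_{k,m}=0$ force $\wcheck_{k,m}$ to lie in the ``new'' layer $V_k:=\mathrm{range}(\mathbf{P}_k)\cap\mathrm{range}(\mathbf{P}_{k-1})^\perp$, i.e.~the orthogonal complement of $\mathrm{span}(\mathbf{X}_{\set_1},\ldots,\mathbf{X}_{\set_{k-1}})$ inside $\mathrm{span}(\mathbf{X}_{\set_1},\ldots,\mathbf{X}_{\set_k})$. The spaces $V_1,\ldots,V_{m-1}$ are mutually orthogonal and satisfy $\bigoplus_{k=1}^{m-1}V_k=\mathrm{range}(\mathbf{P}_{m-1})$. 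Both sides of \ref{eq: w check proof} already live in $\mathrm{range}(\mathbf{P}_{m-1})$: the left-hand side by construction, and each right-hand side summand $\mathbf{A}_k\wcheck_{k,m}$ because $\mathbf{A}_k:=\sum_{n\in\set_k}e^{-\wtilde^\top\x_n}\x_n\x_n^\top$ has image in $\mathrm{span}(\mathbf{X}_{\set_k})\subseteq\mathrm{range}(\mathbf{P}_{m-1})$. So it suffices to solve the equation after projecting onto each $V_j$.

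Next I would pick an orthonormal basis matrix $\mathbf{B}_k$ for $V_k$, parametrize $\wcheck_{k,m}=\mathbf{B}_k\boldsymbol{\gamma}_k$, and left-multiply \ref{eq: w check proof} by $\mathbf{B}_j^\top$ for each $j$. This yields a square block system $\mathbf{M}\boldsymbol{\gamma}=\mathbf{c}$ in the stacked unknown $\boldsymbol{\gamma}=(\boldsymbol{\gamma}_1,\ldots,\boldsymbol{\gamma}_{m-1})$ with blocks $\mathbf{M}_{jk}=\mathbf{B}_j^\top\mathbf{A}_k\mathbf{B}_k$ and $\mathbf{c}_j=\mathbf{B}_j^\top\sum_{n\in\set_m}e^{-\wtilde^\top\x_n}\mathbf{P}_{m-1}\x_n$. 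I claim $\mathbf{M}$ is block upper-triangular with invertible diagonal blocks. Block triangularity: when $k<j$, every column of $\mathbf{B}_j$ lies in $V_j\subseteq\mathrm{range}(\mathbf{P}_{j-1})^\perp$, hence is orthogonal to every $\x_n$ with $n\in\set_k$, so $\mathbf{B}_j^\top\mathbf{A}_k=0$ and $\mathbf{M}_{jk}=0$. Diagonal blocks: writing $\mathbf{A}_j=\mathbf{X}_{\set_j}\mathbf{D}_j\mathbf{X}_{\set_j}^\top$ with diagonal $\mathbf{D}_j\succ 0$ gives
\[
\mathbf{M}_{jj}=(\mathbf{X}_{\set_j}^\top\mathbf{B}_j)^\top\mathbf{D}_j(\mathbf{X}_{\set_j}^\top\mathbf{B}_j),
\]
and if $\mathbf{X}_{\set_j}^\top\mathbf{v}=0$ for some $\mathbf{v}\in V_j$, then $\mathbf{v}$ is orthogonal to $\mathrm{span}(\mathbf{X}_{\set_j})$; combined with $\mathbf{v}\perp\mathrm{span}(\mathbf{X}_{\set_1},\ldots,\mathbf{X}_{\set_{j-1}})$ and $\mathbf{v}\in\mathrm{span}(\mathbf{X}_{\set_1},\ldots,\mathbf{X}_{\set_j})$, this forces $\mathbf{v}=0$. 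Thus $\mathbf{X}_{\set_j}^\top\mathbf{B}_j$ has full column rank and $\mathbf{M}_{jj}\succ 0$, so $\mathbf{M}$ is invertible and $\boldsymbol{\gamma}$ is uniquely determined by back-substitution in the order $k=m-1,m-2,\ldots,1$, yielding unique $\wcheck_{k,m}=\mathbf{B}_k\boldsymbol{\gamma}_k$.

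The main obstacle I anticipate is purely a bookkeeping one: confirming from the recursive definition $\bar{\mathbf{P}}_m=\bar{\mathbf{P}}_{m-1}(\mathbf{I}-\mathbf{X}_{\set_m}\mathbf{X}_{\set_m}^{\pmpi})$ the identities $\mathrm{range}(\mathbf{P}_k)=\mathrm{span}(\mathbf{X}_{\set_1},\ldots,\mathbf{X}_{\set_k})$ and the mutual orthogonality of the $V_k$'s needed above. I would handle this by a short induction on $k$, verifying that $\mathbf{P}_k$ acts as the orthogonal projection onto the accumulated support-vector span and that $V_k=\mathrm{range}(\mathbf{P}_k-\mathbf{P}_{k-1})$. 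Once this is in place, the block-triangular argument closes the proof cleanly.
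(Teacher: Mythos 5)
Your argument is essentially the paper's: you decompose $\mathrm{range}(\mathbf{P}_{m-1})$ into the mutually orthogonal layers $V_k=\mathrm{range}(\mathbf{P}_k)\cap\mathrm{range}(\mathbf{P}_{k-1})^{\perp}$ (the paper's $\mathbf{Q}_k=\mathbf{P}_k\bar{\mathbf{P}}_{k-1}$), confine each unknown to its layer via the two constraints, and obtain invertibility of the diagonal block from the injectivity of $\mathbf{X}_{\set_k}^{\top}$ on $V_k$ --- the paper asserts precisely this as the rank identity $\mathrm{rank}(\mathbf{X}_{\set_k}^{\top}\mathbf{Q}_k)=d_k$, and your short argument (a $\mathbf{v}\in V_k$ annihilated by $\mathbf{X}_{\set_k}^{\top}$ is orthogonal to all of $\mathrm{range}(\mathbf{P}_k)$ while lying inside it, hence is zero) is the right justification. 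The one substantive difference is the treatment of the coupling between layers. The paper projects the equation with each $\mathbf{Q}_k$ and writes the result as an equation in $\wcheck_{k,m}$ alone, i.e.\ it treats the system as block diagonal; but for $j>k$ the cross term $\mathbf{Q}_k\bigl[\sum_{n\in\set_j}\exp(-\wtilde^{\top}\x_n)\x_n\x_n^{\top}\bigr]\wcheck_{j,m}$ does not vanish in general, since $\x_n$ with $n\in\set_j$ typically has a nonzero component in $V_k$. Your observation that the system is only block \emph{upper} triangular, together with back-substitution from $k=m-1$ down to $k=1$, therefore does real work: it supplies the missing step that turns invertible diagonal blocks into existence and uniqueness for the full coupled system, and in this respect your write-up is tighter than the paper's. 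The bookkeeping induction you defer is indeed needed and routine, with one caveat worth handling explicitly: the literal product $\bar{\mathbf{P}}_{k-1}(\mathbf{I}-\mathbf{X}_{\set_k}\mathbf{X}_{\set_k}^{\pmpi})$ of two non-commuting orthogonal projectors is not in general symmetric or idempotent, so in your induction you should take $\bar{\mathbf{P}}_k$ to be (and verify that everything downstream only uses) the orthogonal projector onto $\mathrm{span}(\mathbf{X}_{\set_1},\ldots,\mathbf{X}_{\set_k})^{\perp}$, which is the object the construction intends.
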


\begin{proof}
For this proof we denote $\mathbf{X}_{\set_{k}}$ as the matrix which
columns are $\left\{ \mathbf{x}_{n}|n\in\set_{k}\right\} $, the orthogonal
projection matrix $\mathbf{Q}_{k}=\mathbf{P}_{k}\bar{\mathbf{P}}_{k-1}$where
$\mathbf{Q}_{k}\mathbf{Q}_{m}=0$ $\forall k\neq m$, $\mathbf{Q}_{k}\bar{\mathbf{P}}_{m}=0$
$\forall k<m$, and
\begin{equation}
\forall m:\,\mathbf{I}=\mathbf{P}_{m}+\bar{\mathbf{P}}_{m}=\sum_{k=1}^{m}\mathbf{Q}_{k}+\bar{\mathbf{P}}_{m}\label{eq: identity}
\end{equation}
We will write $\check{\mathbf{w}}_{k,m}=\mathbf{W}_{k,m}\mathbf{u}_{k,m}$
, where $\mathbf{u}_{k,m}\in\mathbb{R}^{d_{k}}$ and $\mathbf{W}_{k,m}\in\mathbb{R}^{d\times d_{k}}$
is a full rank matrix such that $\mathbf{Q}_{k}\mathbf{W}_{k,m}=\mathbf{W}_{k,m}$,
so 
\begin{equation}
\check{\mathbf{w}}_{k,m}=\mathbf{Q}_{k}\check{\mathbf{w}}_{k,m}=\mathbf{Q}_{k}\mathbf{W}_{k,m}\mathbf{u}_{k,m}\,.\label{eq: w wheck re-param}
\end{equation}
 and, furthermore,
\begin{equation}
\mathrm{rank}\left[\mathbf{X}_{\set_{k}}^{\top}\mathbf{Q}_{k}\mathbf{W}_{k,m}\right]=\mathrm{rank}\left(\mathbf{X}_{\set_{k}}^{\top}\mathbf{Q}_{k}\right)=d_{k}\,.\label{eq: rank W}
\end{equation}
Recall that $\forall m:\,\mathbf{\bar{P}}_{m}\mathbf{P}_{m}=\boldsymbol{0}$
and $\forall k\geq1$, $\forall n\in\set_{m}$ $\mathbf{\bar{P}}_{m+k}\mathbf{x}_{n}=\boldsymbol{0}$.
Therefore, $\forall\mathbf{v}\in\mathbb{R}^{d}$ , $\mathbf{P}_{k-1}\mathbf{Q}_{k}\mathbf{v}=\boldsymbol{0}$,
$\mathbf{\bar{P}}_{k}\mathbf{Q}_{k}\mathbf{v}=\mathbf{0}$. Thus,
$\check{\mathbf{w}}_{k,m}$ eq. \ref{eq: w wheck re-param} implies
the constraints in eq. \ref{eq: w check constraints-1} hold. 

Next, we prove the existence and uniqueness of the solution $\check{\mathbf{w}}_{k,m}$
for each $k=1,\dots,m$ separately. We multiply eq. \ref{eq: w check proof}
from the left by the identity matrix, decomposed to orthogonal projection
matrices as in eq. \ref{eq: identity}. Since each matrix projects
to an orthogonal subspace, we can solve each product separately. 

The product with $\bar{\mathbf{P}}_{m}$ is equal to zero for both
sides of the equation. The product with $\mathbf{Q}_{k}$ is equal
to

\[
\sum_{n\in\set_{m}}\exp\left(-\tilde{\mathbf{w}}^{\top}\mathbf{x}_{n}\right)\mathbf{Q}_{k}\mathbf{P}_{m-1}\mathbf{x}_{n}=\left[\sum_{n\in\set_{k}}\exp\left(-\tilde{\mathbf{w}}^{\top}\mathbf{x}_{n}\right)\mathbf{Q}_{k}\mathbf{x}_{n}\mathbf{x}_{n}^{\top}\right]\check{\mathbf{w}}_{k,m}\,.
\]
Substituting eq. \ref{eq: w wheck re-param}, and multiplying by $\mathbf{W}_{k,m}^{\top}$
from the right, we obtain
\begin{equation}
\sum_{n\in\set_{m}}\exp\left(-\tilde{\mathbf{w}}^{\top}\mathbf{x}_{n}\right)\mathbf{W}_{k,m}^{\top}\mathbf{Q}_{k}\mathbf{P}_{m-1}\mathbf{x}_{n}=\left[\sum_{n\in\set_{k}}\exp\left(-\tilde{\mathbf{w}}^{\top}\mathbf{x}_{n}\right)\mathbf{W}_{k,m}^{\top}\mathbf{Q}_{k}\mathbf{x}_{n}\mathbf{x}_{n}^{\top}\mathbf{Q}_{k}\mathbf{W}_{k,m}\right]\mathbf{u}_{k,m}\,.\label{eq: u equation}
\end{equation}
Denoting $\mathbf{E}_{k}\in\mathbb{R}^{\left|\set_{k}\right|\times\left|\set_{k}\right|}$
as diagonal matrix for which $E_{nn,k}=\exp\left(-\frac{1}{2}\tilde{\mathbf{w}}^{\top}\mathbf{x}_{n}\right)$,
the matrix in the square bracket in the left hand side can be written
as
\begin{equation}
\mathbf{W}_{k,m}^{\top}\mathbf{Q}_{k}\mathbf{X}_{\set_{k}}\mathbf{E}_{k}\mathbf{E}_{k}\mathbf{X}_{\set_{k}}^{\top}\mathbf{Q}_{k}\mathbf{W}_{k,m}\,.\label{eq: rank check}
\end{equation}
Since $\mathrm{rank}\left(\mathbf{A}\mathbf{A}^{\top}\right)=\mathrm{rank}\left(\mathbf{A}\right)$
for any matrix $\mathbf{A}$, the rank of this matrix is equal to
\[
\mathrm{rank}\left[\mathbf{E}\mathbf{X}_{\set_{k}}\mathbf{Q}_{k}\mathbf{W}_{k,m}\right]\overset{\left(1\right)}{=}\mathrm{rank}\left[\mathbf{X}_{\set_{k}}\mathbf{Q}_{k}\mathbf{W}_{k,m}\right]\overset{\left(2\right)}{=}d_{k}
\]
where in $\left(1\right)$ we used that $\mathbf{E}_{k}$ is diagonal
and non-zero, and in $\left(2\right)$ we used eq. \ref{eq: rank W}.
This implies that the $d_{k}\times d_{k}$ matrix in eq. \ref{eq: rank check}
is full rank, and so eq. \ref{eq: u equation} has a unique solution
$\mathbf{u}_{k,m}$. Therefore, there exists a unique solution $\check{\mathbf{w}}_{k,m}$.
\end{proof}

\subsection{Proof of Lemma \ref{lem: PhiSum}} \label{subsec: PhiSumProof}

\PhiSum*
\begin{proof}
We define $\psi\left(t\right)=z\left(t\right)+h\left(t\right)$, and
start from eq. \ref{eq: rho bound}
\begin{align*}
 & \phi^{2}\left(t+1\right)\\
 & \leq z\left(t\right)+h\left(t\right)\phi\left(t\right)+\phi^{2}\left(t\right)\\
 & \leq z\left(t\right)+h\left(t\right)\max\left[1,\phi^{2}\left(t\right)\right]+\phi^{2}\left(t\right)\\
 & \leq z\left(t\right)+h\left(t\right)+h\left(t\right)\phi^{2}\left(t\right)+\phi^{2}\left(t\right)\\
 & \leq\psi\left(t\right)+\left(1+h\left(t\right)\right)\phi^{2}\left(t\right)\\
 & \leq\psi\left(t\right)+\left(1+h\left(t\right)\right)\psi\left(t-1\right)+\left(1+h\left(t\right)\right)\left(1+h\left(t-1\right)\right)\phi^{2}\left(t-1\right)\\
 & \leq\psi\left(t\right)+\left(1+h\left(t\right)\right)\psi\left(t-1\right)+\left(1+h\left(t\right)\right)\left(1+h\left(t-1\right)\right)\psi\left(t-2\right)\\
 & +\left(1+h\left(t\right)\right)\left(1+h\left(t-1\right)\right)\left(1+h\left(t-2\right)\right)\phi^{2}\left(t-2\right)
\end{align*}
we keep iterating eq. \ref{eq: rho bound}, until we obtain
\begin{align*}
 & \leq\left[\prod_{m=1}^{t-1}\left(1+h\left(t-m\right)\right)\right]\phi\left(t_{1}\right)+\sum_{k=0}^{t-t_{1}}\left[\prod_{m=0}^{k-1}\left(1+h\left(t-m\right)\right)\right]\psi\left(t-k\right)\\
 & \leq\left[\exp\left(\sum_{m=1}^{t-1}h\left(t-m\right)\right)\right]\phi\left(t_{1}\right)+\sum_{k=0}^{t-1}\left[\exp\left(\sum_{m=1}^{k-1}h\left(t-m\right)\right)\right]\psi\left(t-k\right)\\
 & \leq\exp\left(C\right)\left[\phi\left(1\right)+\sum_{k=0}^{t-1}\psi\left(t-k\right)\right]\\
 & \leq\exp\left(C\right)\left[\phi\left(1\right)+\sum_{u=1}^{t}\psi\left(u\right)\right]\\
 & \leq\exp\left(C\right)\left[\phi\left(1\right)+\sum_{u=1}^{t}\left(z\left(u\right)+h\left(u\right)\right)\right]\\
 & \leq\exp\left(C\right)\left[\phi\left(1\right)+C+\sum_{u=1}^{t}z\left(u\right)\right]
\end{align*}
Therefore, the Lemma holds with $C_{2}=\left(\phi\left(1\right)+C\right)\exp\left(C\right)$
and $C_{3}=\exp\left(C\right)$.
\end{proof}
\remove{
\section{Proof of main results, including zero measure cases -- V2}
\snote{add word descriptions of $\mathbf{P} _m,\bar{\mathbf{P}}_m,\set_{m},\bar{\set}_{m},\set_{m}^+,\hat{\mathbf{w}}_m$}

We define $\mathbf{P}_{0}=\boldsymbol{0}_{d\times d}$, $\bar{\mathbf{P}}_{0}=\mathbf{I}_{d\times d}$,
$\set_{0}=\emptyset$, $\bar{\set}_{0}=\left\{ 1,\dots,N\right\}$, $\set_{m}^+=\emptyset$
and $\forall m\geq1$ we define recursively $\mathbf{P}_{m},\,\bar{\mathbf{P}}_{m},\,\set_{m},\,\bar{\set}_{m}$,
using the following minimum L2 norm max margin vectors 
\begin{equation}
\hat{\mathbf{w}}_{m}=\underset{\mathbf{\mathbf{w}}\in\mathbb{R}^{d}}{\mathrm{argmin}}\left\lVert \mathbf{w}\right\rVert ^{2}\,\,\mathrm{s.t.}\,\,\mathbf{w}^{\top}\bar{\mathcal{\mathbf{P}}}_{m-1}\mathbf{x}_{n}\geq1\,\forall n\in\bar{\set}_{m-1}\,.\label{eq: hat w_m}
\end{equation}
the following index sets
\begin{align*}
\set_{m} & =\left\{ n\in\bar{\set}_{m-1}|\what_{m}^{\top}\bar{\mathcal{\mathbf{P}}}_{m-1}\x_{n}=1\,\,\mathrm{and}\,\,\exists\mathcal{A}\subset\set_{m-1}:\,n\in\mathcal{A},\,\exists\boldsymbol{\alpha}\in\mathbb{R}_{>0}^{\left|\mathcal{A}\right|},\mathrm{and}\,\,\what_{m}=\sum_{k\in\mathcal{A}}\alpha_{k}\bar{\mathcal{\mathbf{P}}}_{m-1}\mathbf{x}_{k}\right\} \,.\\
\bar{\mathcal{S}}_{m} & =\left\{ n\in\bar{\set}_{m-1}|n\notin\set_{m}\,\mathrm{and}\,\what_{m}^{\top}\bar{\mathcal{\mathbf{P}}}_{m-1}\x_{n}=1\right\} \,.\\
\set_{m}^{+} & =\left\{ n\in\bar{\set}_{m-1}|\what_{m}^{\top}\bar{\mathcal{\mathbf{P}}}_{m-1}\x_{n}>1\right\} \,.
\end{align*}
and the following orthogonal projection matrices 
\begin{align*}
\mathbf{P}_{m} & =\mathbf{I}_{d}-\bar{\mathbf{P}}_{m}\,\,;\,\,\bar{\mathbf{P}}_{m}=\bar{\mathbf{P}}_{m-1}\left(\mathbf{I}_{d}-\mathbf{X}_{\set_{m}}\mathbf{X}_{\set_{m}}^{\pmpi}\right)\,,
\end{align*}
where we denoted $\mathbf{A}^{\pmpi}$ as the Moore-Penrose pseudo-inverse 
of $\mathbf{A}$. Lastly, we denote $M$ as the minimal $m$ such
that $\bar{\set}_{m}=\emptyset$.

In this section we prove the following theorem
\begin{thm}
\label{theorem: main2}For all datasets which are linearly
separable (Assumption \ref{assum: Linear sepereability}) and given
a $\beta$-smooth loss function (Assumption \ref{assum: loss properties})
with a poly-exponential tail (Assumption \ref{assum: exponential tail})
with $\beta=1$, gradient descent (as in eq. \ref{eq: gradient descent linear})
with stepsize $\eta<2\beta^{-1}\sigma_{\max}^{-2}\left(\text{\ensuremath{\mathbf{X}} }\right)$
and any starting point $\w(0)$ will behave as: 
\begin{equation}
\mathbf{w}\left(t\right)=\sum_{m=1}^{M}\hat{\mathbf{w}}_{m}\log^{\circ m}\left(t\right)+\boldsymbol{\rho}\left(t\right)\,,\label{eq: asymptotic form-1}
\end{equation}
where $\log^{\circ m}\left(t\right)=\overbrace{\log\log\cdots\log}^{m\,\mathrm{times}}\left(t\right)$,
$\hat{\mathbf{w}}_{m}$ is the L2 max margin vector defined in eq.
\ref{eq: hat w_m}, and the residual $\boldsymbol{\rho}\left(t\right)$
is bounded. 
\end{thm}

\subsection{Auxiliary notation}

We say that a function $f:\mathbb{N}\rightarrow\mathbb{R}$ is absolutely
summable if $\sum_{t=1}^{\infty}\left|f\left(t\right)\right|<\infty$,
and then we denote $f\left(t\right)\in L_{1}$. Furthermore, we define

\[
\mathbf{r}\left(t\right)=\mathbf{w}\left(t\right)-\sum_{m=1}^{M}\left[\hat{\mathbf{w}}_{m}\log^{\circ m}\left(t\right)+\tilde{\mathbf{w}}_{m}\right]
\]
where  $\forall m\geq1$, $\tilde{\mathbf{w}}_{m}$ is defined as the solution
of 
\begin{equation}
\forall m\geq1:\forall n\in\set_{m}:\,\sum_{n\in\set_{m}}\exp\left(-\sum_{k=1}^{m}\tilde{\mathbf{w}}_{k}^{\top}\mathbf{x}_{n}\right)\bar{\mathbf{P}}_{m-1}\mathbf{x}_{n}=\hat{\mathbf{w}}_{m}\,,\label{eq: w tilde-1}
\end{equation}
such that 
\begin{equation}
\mathbf{P}_{m-1}\tilde{\mathbf{w}}_{m}=0\,\mathrm{and}\,\bar{\mathbf{P}}_{m}\tilde{\mathbf{w}}_{m}=0.\label{eq: w tilde constraints}
\end{equation}
The existence and uniqueness of the solution, $\tilde{\mathbf{w}}_{m}$
are proved in appendix section \ref{subsec: existence 1}.

Additionally, we denote
\[
\tilde{\mathbf{w}}=\sum_{m=1}^{M}\tilde{\mathbf{w}}_{m}\,.
\]

Lastly, we define, $\forall m>k\geq1$, $\check{\mathbf{w}}_{k,m}$
as the solution of 
\begin{equation}
\sum_{n\in\set_{m}}\exp\left(-\tilde{\mathbf{w}}^{\top}\mathbf{x}_{n}\right)\mathbf{P}_{m-1}\mathbf{x}_{n}=\sum_{k=1}^{m-1}\left[\sum_{n\in\set_{k}}\exp\left(-\tilde{\mathbf{w}}^{\top}\mathbf{x}_{n}\right)\mathbf{x}_{n}\mathbf{x}_{n}^{\top}\right]\check{\mathbf{w}}_{k,m}\label{eq: w check}
\end{equation}
such that

\begin{equation}
\mathbf{P}_{k-1}\check{\mathbf{w}}_{k,m}=0\,\mathrm{and}\,\mathbf{\bar{P}}_{k}\check{\mathbf{w}}_{k,m}=0\,.\label{eq: w check constraints}
\end{equation}
The existence and uniqueness of the solution $\check{\mathbf{w}}_{k,m}$
are proved in appendix section \ref{subsec: existence 2}. 

Together, eqs. \ref{eq: w tilde-1}-\ref{eq: w check constraints}
entail the existence of a unique decomposition, $\forall m\geq1:$
\begin{equation}
\hat{\mathbf{w}}_{m}=\sum_{n\in\set_{m}}\exp\left(-\tilde{\mathbf{w}}^{\top}\mathbf{x}_{n}\right)\mathbf{x}_{n}-\sum_{k=1}^{m-1}\left[\sum_{n\in\set_{k}}\exp\left(-\tilde{\mathbf{w}}^{\top}\mathbf{x}_{n}\right)\mathbf{x}_{n}\mathbf{x}_{n}^{\top}\right]\check{\mathbf{w}}_{k,m}\label{eq: w_hat full decomposition}
\end{equation}
given the constraints in eqs. \ref{eq: w tilde constraints} and \ref{eq: w check constraints}
hold.

\subsection{Proof of Theorem \ref{theorem: main2}}

In the following proofs, for any solution $\wvec(t)$, we define 
\[
\tvec\left(t\right)=\sum_{m=2}^{M}\hat{\mathbf{w}}_{m}\log^{\circ m}\left(t\right)
\]
noting that 
\[
\left\Vert \tvec\left(t+1\right)-\tvec\left(t\right)\right\Vert \leq\frac{C_{\tau}}{t\log\left(t\right)}
\]
and 
\begin{equation}
\rvec(t)=\wvec(t)-\hat{\mathbf{w}}_{1}\log\left(t\right)-\tilde{\mathbf{w}}-\tvec\left(t\right)\label{eq: define r(t) degenrate}
\end{equation}
where $\what$ follow the conditions of Theorem \ref{theorem: main2}.
Our goal is to show that $\Vert\rvec(t)\Vert$ is bounded. To show
this, we will upper bound the following equation 
\begin{equation}
\Vert\rvec(t+1)\Vert^{2}=\Vert\rvec(t+1)-\rvec(t)\Vert^{2}+2\left(\rvec(t+1)-\rvec(t)\right)^{\top}\rvec(t)+\Vert\rvec(t)\Vert^{2}\label{eq: norm r(t+1) degenerate}
\end{equation}
First, we note that $\exists t_{0}$ such that $\forall t>t_{0}$
the first term in this equation can be upper bounded by 
\begin{flalign}
 & ||\rvec(t+1)-\rvec(t)||^{2}\nonumber \\
 & \overset{(1)}{=}||\wvec(t+1)-\hat{\mathbf{w}}_{1}\log\left(t+1\right)-\tvec\left(t+1\right)-\wvec(t)+\hat{\mathbf{w}}_{1}\log\left(t\right)+\tvec\left(t\right)||^{2}\nonumber \\
 & \overset{(2)}{=}||-\eta\nabla L(\wvec(t))-\hat{\mathbf{w}}_{1}(\log\left(t+1\right)-\log\left(t\right))-(\tvec\left(t+1\right)-\tvec\left(t\right))||^{2}\nonumber \\
 & =\eta^{2}||\nabla L(\wvec(t))||^{2}+\left\Vert \hat{\mathbf{w}}_{1}\right\Vert ^{2}\log^{2}\left(1+t^{-1}\right)+\left\Vert \tvec\left(t+1\right)-\tvec\left(t\right)\right\Vert ^{2}\nonumber \\
 & +2\eta\nabla L(\wvec(t))^{\top}\left(\hat{\mathbf{w}}_{1}\log\left(1+t^{-1}\right)+\tvec\left(t+1\right)-\tvec\left(t\right)\right)\nonumber \\
 & +2\hat{\mathbf{w}}_{1}^{\top}(\tvec\left(t+1\right)-\tvec\left(t\right))\log\left(1+t^{-1}\right)\nonumber \\
 & \overset{(3)}{\le}\eta^{2}||\nabla L(\wvec(t))||^{2}+\left\Vert \hat{\mathbf{w}}_{1}\right\Vert ^{2}t^{-2}+C_{\tau}'t^{-2}\log^{-1}\left(t\right)\,\,\,,\forall t>t_{0}\label{eq: norm(r(t+1)-r(t) degenerate}
\end{flalign}
where in (1) we used eq. \ref{eq: define r(t) degenrate}, in (2)
we used eq. \ref{eq: gradient descent linear} and in (3) we used
$\forall x>0:\,x\geq\log\left(1+x\right)>0$.

Also since $\ell'(\wvec(t)^{\top}\xn)<0$, we have that 
\begin{equation}
\left(\hat{\mathbf{w}}_{1}\log\left(1+t^{-1}\right)+\tvec\left(t+1\right)-\tvec\left(t\right)\right)^\top\derL\le\sumn\ell'(\wvec(t)^{\top}\xn)\left(\what_{1}^{\top}\xn\log\left(1+t^{-1}\right)-\frac{\left\Vert \xn\right\Vert C_{\tau}}{t\log\left(t\right)}\right)
\end{equation}
which is negative for sufficiently large $t_{0}$ (since $\log\left(1+t^{-1}\right)$
decreases as $t^{-1}$, which is slower then $1/\left(t\log\left(t\right)\right)$),
$\forall n:\,\what_{1}^{\top}\xn\ge1$ and $\ell^{\prime}(u)\leq0$.

Also, from Lemma \ref{lem: GD convergence} we know that: 
\begin{equation}
\Vert\nabla\mathcal{L}\left(\mathbf{w}\left(t\right)\right)\Vert^{2}=o(1)\text{ and }\sum_{u=0}^{\infty}\Vert\nabla\mathcal{L}(\wvec(u))\Vert^{2}<\infty\label{eq: derL converge degenerate}
\end{equation}
Substituting eq. \ref{eq: derL converge degenerate} into eq. \ref{eq: norm(r(t+1)-r(t) degenerate},
and recalling that $t^{-\nu_{1}}\log^{-\nu_{2}}\left(t\right)$ converges
for any $\nu_{1}>1$ and any $\nu_{2}$, and so
\begin{equation}
\kappa_{0}\left(t\right)\triangleq||\rvec(t+1)-\rvec(t)||^{2}\in L_{1}\,.\label{eq: sum norm r bound}
\end{equation}
Also,

\begin{restatable}{lemR}{Correlation}

\label{lem: r correlation}Let $\kappa_{1}\left(t\right)$ and $\kappa_{2}\left(t\right)$
be functions in $L_{1}$, then 
\begin{equation}
\left(\mathbf{r}\left(t+1\right)-\mathbf{r}\left(t\right)\right)^{\top}\mathbf{r}\left(t\right)\leq\kappa_{1}\left(t\right)\left\Vert \mathbf{r}\left(t\right)\right\Vert +\kappa_{2}\left(t\right)\label{eq: general case-1}
\end{equation}

\end{restatable}

Thus, by combining eqs. \ref{eq: general case-1} and \ref{eq: sum norm r bound}
into eq. \ref{eq: norm r(t+1) degenerate}, we find 
\begin{align*}
\Vert\rvec(t+1)\Vert^{2} & \leq\kappa_{0}\left(t\right)+2\kappa_{1}\left(t\right)\left\Vert \mathbf{r}\left(t\right)\right\Vert +2\kappa_{2}\left(t\right)+\Vert\rvec(t)\Vert^{2}
\end{align*}

We apply the following lemma (with $\phi\left(t\right)=\Vert\rvec(t)\Vert$,
$h\left(t\right)=2\kappa_{1}\left(t\right)$, and $z\left(t\right)=\kappa_{0}\left(t\right)+2\kappa_{2}\left(t\right)$)
on this result

\begin{restatable}{lemR}{PhiSum}

\label{lem: PhiSum}Let $\phi\left(t\right),h\left(t\right),z\left(t\right)$
be three functions from $\mathbb{N}$ to $\mathbb{R}_{\geq0}$, and
$C_{1},C_{2},C_{3}$ be three positive constants. Then, if $\sum_{t=1}^{\infty}h\left(t\right)\leq C_{1}<\infty$,
and 
\begin{equation}
\phi^{2}\left(t+1\right)\leq z\left(t\right)+h\left(t\right)\phi\left(t\right)+\phi^{2}\left(t\right)\,.\label{eq: rho bound}
\end{equation}
we have
\begin{equation}
\phi^{2}\left(t+1\right)\leq C_{2}+C_{3}\sum_{u=1}^{t}z\left(u\right)\,.\label{eq: rho sum bound}
\end{equation}

\end{restatable}

and obtain that 
\[
\Vert\rvec(t+1)\Vert^{2}\leq C_{2}+C_{3}\sum_{u=1}^{t}\left(\kappa_{0}\left(u\right)+2\kappa_{2}\left(u\right)\right)\leq C_{4}<\infty\,,
\]
since we assumed that $\forall i=0,1,2:\,\kappa_{i}\left(t\right)\in L_{1}$.
This complete our proof.

\subsection{Proof of Lemma \ref{lem: r correlation}}

\begin{restatable}{lemR}{logtsum}\label{lem:logtsum}
Consider the function $f(t)=t^{-\nu_1}(\log(t))^{-\nu_2^2}(\log\log(t))^{-\nu_3^2}\ldots(\log^{\circ m}(t))^{-\nu_{m-1}}$. If $\exists m_0\le m$ such that $\nu_{m_0}>1$ and for all $m'<m_0$,$\nu_{m'}=1$, then $f(t)\in L_1$. 
\end{restatable}
{\color{red} Proof by induction and integeral test}

\Correlation*
\begin{proof}
Recall that we defined

\begin{align}
\rvec(t) & =\wvec(t)-\mathbf{q}\left(t\right)\label{eq: define r(t) degenrate 2}
\end{align}
where
\begin{align}
\mathbf{q}\left(t\right) & =\sum_{m=1}^{M}\left[\hat{\mathbf{w}}_{m}\log^{\circ m}\left(t\right)+\wtilde_{m}\left(t\right)\right]\,.\label{eq: q(t)}
\end{align}
with $\hat{\mathbf{w}}_{m}$ and $\tilde{\mathbf{w}}_{m}$ 
defined in eqs. \ref{eq: hat w_m} and \ref{eq: w tilde-1}, respectively. We note that
\begin{equation}
\left\Vert \mathbf{q}\left(t+1\right)-\mathbf{q}\left(t\right)-\dot{\mathbf{q}}\left(t\right)\right\Vert \leq C_{q}t^{-2}.\label{eq: q second derivative}
\end{equation}
where
\begin{equation}
\dot{\mathbf{q}}\left(t\right)=\sum_{m=1}^{M}\hat{\mathbf{w}}_{m}\frac{1}{t\prod_{r=1}^{m}\log^{\circ r}\left(t\right)}\,.\label{eq: q dot}
\end{equation}
We wish to calculate 
\begin{align}
 & \left(\rvec(t+1)-\rvec(t)\right)^{\top}\rvec(t)\nonumber \\
\overset{\left(1\right)}{=} & \left[\wvec(t+1)-\mathbf{w}\left(t\right)-\left[\mathbf{q}\left(t+1\right)-\mathbf{q}\left(t\right)\right]\right]^{\top}\mathbf{r}\left(t\right)\nonumber \\
\overset{\left(2\right)}{=} & \left[-\eta\derL-\dot{\mathbf{q}}\left(t\right)\right]^{\top}\mathbf{r}\left(t\right)-\left[\mathbf{q}\left(t+1\right)-\mathbf{q}\left(t\right)-\dot{\mathbf{q}}\left(t\right)\right]^{\top}\mathbf{r}\left(t\right)\label{eq: r correlation}
\end{align}
where in $\left(1\right)$ we used eq. \ref{eq: define r(t) degenrate 2}
and in $\left(2\right)$ we used the definition of GD in eq. \ref{eq: gradient descent linear}.
We can bound the second term using Cauchy-Shwartz inequality and eq.
\ref{eq: q second derivative}:
\[
\left[\mathbf{q}\left(t+1\right)-\mathbf{q}\left(t\right)-\dot{\mathbf{q}}\left(t\right)\right]^{\top}\mathbf{r}\left(t\right)\leq\left\Vert \mathbf{q}\left(t+1\right)-\mathbf{q}\left(t\right)-\dot{\mathbf{q}}\left(t\right)\right\Vert \left\Vert \mathbf{r}\left(t\right)\right\Vert \leq C_{q}t^{-2}\left\Vert \mathbf{r}\left(t\right)\right\Vert \,.
\]
Next, we examine the second term in eq. \ref{eq: r correlation}
\begin{align}
 & \left[-\eta\derL-\dot{\mathbf{q}}\left(t\right)\right]^{\top}\mathbf{r}\left(t\right)\nonumber \\
= & \left[-\eta\sum_{n=1}^{N}\ell'(\w\left(t\right)^{\top}\mathbf{x}_{n})\,\mathbf{x}_{n}-\dot{\mathbf{q}}\left(t\right)\right]^{\top}\mathbf{r}\left(t\right)\nonumber \\
= & \eta\sum_{m=1}^{M}\sum_{n\in\set_{m}^{+}}-\ell'(\w\left(t\right)^{\top}\mathbf{x}_{n})\,\mathbf{x}_{n}^{\top}\mathbf{r}\left(t\right) \nonumber \\
&+  \left[\eta\sum_{m=1}^{M}\sum_{n\in\set_{m}}-\ell'(\w\left(t\right)^{\top}\mathbf{x}_{n})\,\mathbf{x}_{n}-\sum_{m=1}^{M}\hat{\mathbf{w}}_{m}\frac{1}{t\prod_{r=1}^{m}\log^{\circ r}\left(t\right)}\right]^{\top}\mathbf{r}\left(t\right)\label{eq: r correlation 2}
\end{align}
Next we upper bound the two terms in eq. \ref{eq: r correlation 2}.

In bounding the first term in eq. \ref{eq: r correlation 2}, note that for tight exponential tail loss, since $\w\left(t\right)^{\top}\mathbf{x}_{n}\to\infty$, for large enough $t_0$, we have $-\ell'(\w\left(t\right)^{\top}\mathbf{x}_{n})\le (1+\exp(-\mu_+\w\left(t\right)^{\top}\mathbf{x}_{n}))\exp(-\w\left(t\right)^{\top}\mathbf{x}_{n})\le 2\exp(-\w\left(t\right)^{\top}\mathbf{x}_{n})$ for all $t>t_0$. 
The first term in eq. \ref{eq: r correlation 2} can be bounded by the following set of inequalities, for $t>t_0$, 

\begin{align}
 & \eta\sum_{m=1}^{M}\sum_{n\in\set_{m}^{+}}-\ell'(\w\left(t\right)^{\top}\mathbf{x}_{n})\,\mathbf{x}_{n}^{\top}\mathbf{r}\left(t\right)\le \eta\sum_{m=1}^{M}\sum_{n\in\set_{m}^{+}:\,\mathbf{x}_{n}^{\top}\mathbf{r}\left(t\right)\geq0}-\ell'(\w\left(t\right)^{\top}\mathbf{x}_{n})\,\mathbf{x}_{n}^{\top}\mathbf{r}\left(t\right)\nonumber \\
\overset{\left(1\right)}{\leq} & \eta\sum_{m=1}^{M}\sum_{n\in\set_{m}^{+}:\,\mathbf{x}_{n}^{\top}\mathbf{r}\left(t\right)\geq0}2\exp\left(-\sum_{l=1}^{M}\left[\hat{\mathbf{w}}_{l}^{\top}\x_{n}\log^{\circ l}\left(t\right)+\mathbf{x}_{n}^{\top}\wtilde_m\right]-\mathbf{x}_{n}^{\top}\mathbf{r}\left(t\right)\right)\mathbf{x}_{n}^{\top}\mathbf{r}\left(t\right)\nonumber \\
\overset{\left(2\right)}{\leq} & 2\eta\sum_{m=1}^{M}\sum_{n\in\set_{m}^{+}:\,\mathbf{x}_{n}^{\top}\mathbf{r}\left(t\right)\geq0}\exp\left(-\sum_{l=1}^{M}\left[\hat{\mathbf{w}}_{l}^{\top}\x_{n}\log^{\circ l}\left(t\right)+\mathbf{x}_{n}^{\top}\wtilde_{m}\left(t\right)\right]\right)\nonumber \\
{\leq} & 2\eta\sum_{m=1}^M\left|\set_{m}^{+}\right|\exp\left(M\max_{n}\left\Vert \mathbf{x}_{n}\right\Vert\max_m\|\wtilde_m\|\right)\exp\left(-\sum_{l=1}^{M}\hat{\mathbf{w}}_{l}^{\top}\x_{n}\log^{\circ l}\left(t\right)\right)\nonumber \\
\overset{\left(3\right)}{\leq} & \sum_{m=1}^M\frac{2\eta\left|\set_{m}^{+}\right|\exp\left(M\max_{n}\left\Vert \mathbf{x}_{n}\right\Vert\max_m\|\wtilde_m\|\right)}{t\left(\prod_{k=1}^{m-1}\log^{\circ k}\left(t\right)\right)\left(\log^{\circ m}\left(t\right)\right)^{\theta_{m}}\left(\prod_{k=m+1}^{M-1}\left(\log^{\circ m}\left(t\right)\right)^{\mathbf{w}_{k}^{\top}\x_{n}}\right)}\in L_1.\label{eq: bound on non-SV gradient-1}
\end{align}
where in $\left(1\right)$ we used eqs. \ref{eq: define r(t) degenrate 2}
and \ref{eq: q(t)}, in $\left(2\right)$ we used that $xe^{-x}\leq1$
and $\mathbf{x}_{n}^{\top}\mathbf{r}\left(t\right)\geq0$, $\left(3\right)$we
used eq. \ref{eq: h bound} and in $\left(4\right)$ we denoted $\theta_{m}=\min_{n\in\set_{m}^{+}}\what_{m}^{\top}\x_{n}>1$ and $C_1=\frac{\eta}{2}M\left|\set_{m}^{+}\right|\exp\left(M\max_{n}\left\Vert \mathbf{x}_{n}\right\Vert \max_m\|\wtilde_m\|\right)$ is a constant . 

Next, we bound the last term in eq. \ref{eq: r correlation 2}
\paragraph{Proof for exponential loss $-\ell'(\w\left(t\right)^{\top}\mathbf{x}_{n})=\exp\left(-\w\left(t\right)^{\top}\mathbf{x}_{n}\right)$}
\begin{align}
 & \left[\eta\sum_{m=1}^{M}\sum_{n\in\set_{m}}\exp\left(-\w\left(t\right)^{\top}\mathbf{x}_{n}\right)\mathbf{x}_{n}-\sum_{m=1}^{M}\hat{\mathbf{w}}_{m}\frac{1}{t\prod_{r=1}^{m}\log^{\circ r}\left(t\right)}\right]^{\top}\mathbf{r}\left(t\right)\nonumber \\
\overset{\left(1\right)}{=} & \left[\eta\sum_{m=1}^{M}\sum_{n\in\set_{m}}\exp\left(-\sum_{l=1}^{M}\left[\hat{\mathbf{w}}_{l}^{\top}\x_{n}\log^{\circ l}\left(t\right)+\mathbf{x}_{n}^{\top}\wtilde_{l}\right]-\mathbf{x}_{n}^{\top}\mathbf{r}\left(t\right)\right)\mathbf{x}_{n}-\sum_{m=1}^{M}\hat{\mathbf{w}}_{m}\frac{1}{t\prod_{r=1}^{m-1}\log^{\circ r}\left(t\right)}\right]^{\top}\mathbf{r}\left(t\right)\nonumber \\
\overset{\left(2\right)}{=} & \eta\sum_{m=1}^{M}\frac{1}{t\prod_{r=1}^{m-1}\log^{\circ r}\left(t\right)}\left[\sum_{n\in\set_{m}}\exp\left(-\mathbf{x}_{n}^{\top}\wtilde\right)\exp\left(-\mathbf{x}_{n}^{\top}\mathbf{r}\left(t\right)\right)\mathbf{x}_{n}^{\top}\mathbf{r}\left(t\right)-\hat{\mathbf{w}}_{m}^{\top}\mathbf{r}\left(t\right)\right]\nonumber \\
\overset{\left(3\right)}{=} & \eta\sum_{m=1}^{M}\frac{1}{t\prod_{r=1}^{m-1}\log^{\circ r}\left(t\right)}\left[\sum_{n\in\set_{m}}\exp\left(-\mathbf{x}_{n}^{\top}\wtilde\right)\left(\exp\left(-\mathbf{x}_{n}^{\top}\mathbf{r}\left(t\right)\right)-1\right)\mathbf{x}_{n}^{\top}\mathbf{r}\left(t\right)\right.\nonumber\\
&\left.+\sum_{k=1}^{m-1}\sum_{n\in \set_k}\exp\left(-\mathbf{x}_{n}^{\top}\wtilde\right)\check{\mathbf{w}}_{k,m}^{\top}\mathbf{x}_n\mathbf{x}_{n}^{\top}\mathbf{r}\left(t\right)\right]\nonumber \\
\overset{\left(4\right)}{=} & \eta\sum_{m=1}^{M}\frac{1}{t\prod_{r=1}^{m-1}\log^{\circ r}\left(t\right)}\left[\sum_{n\in\set_{m}}\exp\left(-\mathbf{x}_{n}^{\top}\wtilde\right)\left(\psi_m(t)\exp\left(-\mathbf{x}_{n}^{\top}\mathbf{r}\left(t\right)\right)-1\right)\mathbf{x}_{n}^{\top}\mathbf{r}\left(t\right)\right.\nonumber\\
&-\sum_{k=1}^{m-1}\sum_{n\in \set_k}\exp\left(-\mathbf{x}_{n}^{\top}\wtilde\right)\left(\psi_m(t)\exp\left(-\mathbf{x}_{n}^{\top}\mathbf{r}\left(t\right)\right)-1\right)\check{\mathbf{w}}_{k,m}^{\top}\mathbf{x}_n\mathbf{x}_{n}^{\top}\mathbf{r}\left(t\right)\nonumber \\
&+\sum_{n\in\set_{m}}\exp\left(-\mathbf{x}_{n}^{\top}\wtilde\right)\left(1-\psi_m(t)\right)\exp\left(-\mathbf{x}_{n}^{\top}\mathbf{r}\left(t\right)\right)\mathbf{x}_{n}^{\top}\mathbf{r}\left(t\right)\nonumber\\
&\left.+\sum_{k=1}^{m-1}\sum_{n\in \set_k}\exp\left(-\mathbf{x}_{n}^{\top}\wtilde\right)\psi_m(t)\exp\left(-\mathbf{x}_{n}^{\top}\mathbf{r}\left(t\right)\right)\check{\mathbf{w}}_{k,m}^{\top}\mathbf{x}_n\mathbf{x}_{n}^{\top}\mathbf{r}\left(t\right)\right]\nonumber \\
\overset{\left(5\right)}{=} & \eta\sum_{m=1}^{M}\sum_{n\in\set_{m}}\left[\frac{\exp\left(-\mathbf{x}_{n}^{\top}\wtilde\right)}{t\prod_{r=1}^{m-1}\log^{\circ r}\left(t\right)}-\sum_{k=m+1}^{M}\frac{\exp\left(-\mathbf{x}_{n}^{\top}\wtilde\right)}{t\prod_{r=1}^{k-1}\log^{\circ r}\left(t\right)}\mathbf{x}_{n}^{\top}\check{\mathbf{w}}_{m,k}\right]\left(\psi_{m}\left(t\right)\exp\left(-\mathbf{x}_{n}^{\top}\mathbf{r}\left(t\right)\right)-1\right)\mathbf{x}_{n}^{\top}\mathbf{r}\left(t\right)\nonumber\\
&-\eta\sum_{m=1}^{M}\sum_{n\in\set_{m}}\left[\frac{0.5 \exp\left(-\mathbf{x}_{n}^{\top}\wtilde\right)}{t\prod_{r=1}^{m-1}\log^{\circ r}\left(t\right)}-\sum_{k=m+1}^{M}\frac{\exp\left(-\mathbf{x}_{n}^{\top}\wtilde\right)}{t\prod_{r=1}^{k-1}\log^{\circ r}\left(t\right)}\mathbf{x}_{n}^{\top}\check{\mathbf{w}}_{m,k}\right]\psi_{m}\left(t\right)\exp\left(-\mathbf{x}_{n}^{\top}\mathbf{r}\left(t\right)\right)\mathbf{x}_{n}^{\top}\mathbf{r}\left(t\right)\nonumber\\
&+\eta\sum_{m=1}^{M}\sum_{n\in\set_{m}}\frac{\left(1-\psi_m(t)\right)}{t\prod_{r=1}^{m-1}\log^{\circ r}\left(t\right)}\exp\left(-\mathbf{x}_{n}^{\top}\wtilde\right)\exp\left(-\mathbf{x}_{n}^{\top}\mathbf{r}\left(t\right)\right)\mathbf{x}_{n}^{\top}\mathbf{r}\left(t\right)\nonumber\\
\label{eq: main term, degenerate}
\end{align}
where in $\left(1\right)$ we used eqs. \ref{eq: define r(t) degenrate 2}
and \ref{eq: q(t)}, and in $\left(2\right)$ we used $\mathbf{P}_{k-1}\check{\mathbf{w}}_{k,m}=0$
from eq. \ref{eq: w check constraints} (so $\mathbf{x}_{n}^{\top}\check{\mathbf{w}}_{k,l}=0$
if $m<k$) and in $\left(3\right)$ defined 
\[
\psi_{m}\left(t\right)=\exp\left(-\sum_{l=m}^{M}\sum_{k=1}^{m-1}\frac{\mathbf{x}_{n}^{\top}\check{\mathbf{w}}_{k,l}}{\prod_{r=k}^{l}\log^{\circ r}\left(t\right)}\right)\,.
\]
Note $\exists t_{\psi}$ such that $\forall t>t_{\psi}$, we can bound
$\psi_{m}\left(t\right)$ by
\begin{equation}
\exp\left(-M\max_{n}\left\Vert \mathbf{x}_{n}\right\Vert C_{h}\frac{1}{\log^{\circ\left(m-1\right)}\left(t\right)}\right)\leq\psi_{m}\left(t\right)\leq1\,.\label{eq: psi bound}
\end{equation}
We examine the first term in eq. \ref{eq: main term, degenerate}
$\forall t>t_{1}>t_{\psi}$, where we will determine $t_{1}$ later
\begin{align*}
 & \eta\sum_{m=1}^{M}\frac{1}{t\prod_{r=1}^{m-1}\log^{\circ r}\left(t\right)}\sum_{n\in\set_{m}}\exp\left(-\mathbf{x}_{n}^{\top}\wtilde\right)\psi_{m}\left(t\right)\left[\exp\left(-\sum_{l=m}^{M}\frac{\mathbf{x}_{n}^{\top}\check{\mathbf{w}}_{m,l}}{\prod_{r=m}^{l}\log^{\circ r}\left(t\right)}\right)-\left(1-\sum_{l=m}^{M}\frac{\mathbf{x}_{n}^{\top}\check{\mathbf{w}}_{m,l}}{\prod_{r=m}^{l}\log^{\circ r}\left(t\right)}\right)\right]\mathbf{x}_{n}^{\top}\mathbf{r}\left(t\right)\,\\
\overset{\left(1\right)}{\leq} & \eta\sum_{m=1}^{M}\frac{1}{t\prod_{r=1}^{m-1}\log^{\circ r}\left(t\right)}\sum_{n\in\set_{m}:\,\mathbf{x}_{n}^{\top}\mathbf{r}\left(t\right)\geq0}\exp\left(-\mathbf{x}_{n}^{\top}\wtilde\right)\exp\left(-\mathbf{x}_{n}^{\top}\mathbf{r}\left(t\right)\right)\psi_{m}\left(t\right)\left[\exp\left(-\sum_{l=m}^{M}\frac{\mathbf{x}_{n}^{\top}\check{\mathbf{w}}_{m,l}}{\prod_{r=m}^{l}\log^{\circ r}\left(t\right)}\right)-\left(1-\sum_{l=m}^{M}\frac{\mathbf{x}_{n}^{\top}\check{\mathbf{w}}_{m,l}}{\prod_{r=m}^{l}\log^{\circ r}\left(t\right)}\right)\right]\mathbf{x}_{n}^{\top}\mathbf{r}\left(t\right)\\
\overset{\left(2\right)}{\leq} & \frac{\eta}{2}\sum_{m=1}^{M}\frac{1}{t\prod_{r=1}^{m-1}\log^{\circ r}\left(t\right)}\sum_{n\in\set_{m}:\,\mathbf{x}_{n}^{\top}\mathbf{r}\left(t\right)\geq0}\exp\left(-\mathbf{x}_{n}^{\top}\wtilde\right)\psi_{m}\left(t\right)\left[\exp\left(-\sum_{l=m}^{M}\frac{\mathbf{x}_{n}^{\top}\check{\mathbf{w}}_{m,l}}{\prod_{r=m}^{l}\log^{\circ r}\left(t\right)}\right)-\left(1-\sum_{l=m}^{M}\frac{\mathbf{x}_{n}^{\top}\check{\mathbf{w}}_{m,l}}{\prod_{r=m}^{l}\log^{\circ r}\left(t\right)}\right)\right]\\
\overset{\left(3\right)}{\leq} & \frac{\eta}{2}\sum_{m=1}^{M}\frac{1}{t\prod_{r=1}^{m-1}\log^{\circ r}\left(t\right)}\sum_{n\in\set_{m}:\,\mathbf{x}_{n}^{\top}\mathbf{r}\left(t\right)\geq0}\exp\left(-\mathbf{x}_{n}^{\top}\wtilde\right)\exp\left(\sum_{l=m}^{M}\frac{\mathbf{x}_{n}^{\top}\check{\mathbf{w}}_{m,l}}{\prod_{r=m}^{l}\log^{\circ r}\left(t\right)}\right)^{2}\,,
\end{align*}
where in $\left(1\right)$ we used that since $e^{-x}\geq1-x$, $\exists t_{1}>0$
such that $\forall t>t_{1}$ the term in the square bracket is positive,
in $\left(2\right)$ we used that $e^{-x}x\leq0.5$ and in $\left(3\right)$
we use that $\forall x\geq0$ $e^{-x}\geq1-x+0.5x^{2}$ and eq. \ref{eq: psi bound}.

We examine the second term in eq. \ref{eq: main term, degenerate} 

\[
\eta\sum_{m=1}^{M}\left[\frac{1}{t\prod_{r=1}^{m-1}\log^{\circ r}\left(t\right)}\sum_{n\in\set_{m}}\exp\left(-\mathbf{x}_{n}^{\top}\wtilde\right)\exp\left(-\mathbf{x}_{n}^{\top}\mathbf{r}\left(t\right)\right)\left(1-\sum_{l=m}^{M}\frac{\mathbf{x}_{n}^{\top}\check{\mathbf{w}}_{m,l}}{\prod_{r=m}^{l}\log^{\circ r}\left(t\right)}\right)\psi_{m}\left(t\right)\right]\mathbf{x}_{n}^{\top}\mathbf{r}\left(t\right)-\sum_{m=1}^{M}\hat{\mathbf{w}}_{m}^{\top}\mathbf{r}\left(t\right)\frac{1}{t\prod_{r=1}^{m-1}\log^{\circ r}\left(t\right)}\,.
\]
In the a case that $\mathbf{x}_{n}^{\top}\mathbf{r}\left(t\right)\geq0$
$\forall t>t_{2}$, where we will determine $t_{2}$ later, {\tiny
\begin{align}
 & \eta\sum_{m=1}^{M}\left[\frac{1}{t\prod_{r=1}^{m-1}\log^{\circ r}\left(t\right)}\sum_{n\in\set_{m}}\exp\left(-\mathbf{x}_{n}^{\top}\wtilde\right)\exp\left(-\mathbf{x}_{n}^{\top}\mathbf{r}\left(t\right)\right)\psi_{m}\left(t\right)\left(1-\sum_{l=m}^{M}\frac{\mathbf{x}_{n}^{\top}\check{\mathbf{w}}_{k,l}}{\prod_{r=m}^{l}\log^{\circ r}\left(t\right)}\right)\right]\mathbf{x}_{n}^{\top}\mathbf{r}\left(t\right)-\sum_{m=1}^{M}\hat{\mathbf{w}}_{m}^{\top}\mathbf{r}\left(t\right)\frac{1}{t\prod_{r=1}^{m-1}\log^{\circ r}\left(t\right)}\nonumber \\
\overset{\left(1\right)}{=} & \eta\sum_{m=1}^{M}\frac{1}{t\prod_{r=1}^{m-1}\log^{\circ r}\left(t\right)}\left[\sum_{n\in\set_{m}}\exp\left(-\mathbf{x}_{n}^{\top}\wtilde\right)\exp\left(-\mathbf{x}_{n}^{\top}\mathbf{r}\left(t\right)\right)\psi_{m}\left(t\right)\mathbf{x}_{n}^{\top}\mathbf{r}\left(t\right)-\hat{\mathbf{w}}_{m}^{\top}\mathbf{r}\left(t\right)-\sum_{k=1}^{m-1}\sum_{n\in\set_{k}}\psi_{k}\left(t\right)\mathbf{x}_{n}^{\top}\check{\mathbf{w}}_{k,m-1}\left[\exp\left(-\mathbf{x}_{n}^{\top}\wtilde\right)\exp\left(-\mathbf{x}_{n}^{\top}\mathbf{r}\left(t\right)\right)\right]\mathbf{x}_{n}^{\top}\mathbf{r}\left(t\right)\right(t\nonumber \\
\overset{\left(2\right)}{=} & \eta\sum_{m=1}^{M}\frac{1}{t\prod_{r=1}^{m-1}\log^{\circ r}\left(t\right)}\left[\sum_{n\in\set_{m}}\exp\left(-\mathbf{x}_{n}^{\top}\wtilde\right)\left(\psi_{m}\left(t\right)\exp\left(-\mathbf{x}_{n}^{\top}\mathbf{r}\left(t\right)\right)-1\right)\mathbf{x}_{n}^{\top}\mathbf{r}\left(t\right)-\sum_{k=1}^{m-1}\sum_{n\in\set_{k}}\mathbf{x}_{n}^{\top}\check{\mathbf{w}}_{k,m-1}\left[\exp\left(-\mathbf{x}_{n}^{\top}\wtilde\right)\left(\psi_{k}\left(t\right)\exp\left(-\mathbf{x}_{n}^{\top}\mathbf{r}\left(t\right)\right)-1\right)\right]\mathbf{x}_{n}^{\top}\mathbf{r}\left(t\right)\right]\nonumber \\
\overset{\left(3\right)}{=} & \eta\sum_{m=1}^{M}\sum_{n\in\set_{m}}\left[\frac{1}{t\prod_{r=1}^{m-1}\log^{\circ r}\left(t\right)}-\sum_{k=m+1}^{M}\frac{1}{t\prod_{r=1}^{k-1}\log^{\circ r}\left(t\right)}\mathbf{x}_{n}^{\top}\check{\mathbf{w}}_{m,k-1}\right]\exp\left(-\mathbf{x}_{n}^{\top}\wtilde\right)\left(\psi_{m}\left(t\right)\exp\left(-\mathbf{x}_{n}^{\top}\mathbf{r}\left(t\right)\right)-1\right)\mathbf{x}_{n}^{\top}\mathbf{r}\left(t\right)\label{eq: the last term}
\end{align}}
where in $\left(1\right)$ we re-arranged the order of the terms,
in $\left(2\right)$ we used eq. \ref{eq: w_hat full decomposition} , in $\left(3\right)$
we re-arranged the terms again. 

Next, we examine each $n$ term (such that $n\in\set_{m}$) in eq.
\ref{eq: the last term}, while dividing into two cases.

First, if $\mathbf{x}_{n}^{\top}\mathbf{r}\left(t\right)>0$, then
$\exists t_{2}>t_{\psi}$ such that $\forall t>t_{2}$ the term in
the square bracket in eq. \ref{eq: the last term} is smaller then
$2\left[t\prod_{r=1}^{m-1}\log^{\circ r}\left(t\right)\right]^{-1}$.
In this case
\begin{align*}
 & \eta\left[\frac{1}{t\prod_{r=1}^{m-1}\log^{\circ r}\left(t\right)}-\sum_{k=m+1}^{M}\frac{1}{t\prod_{r=1}^{k-1}\log^{\circ r}\left(t\right)}\mathbf{x}_{n}^{\top}\check{\mathbf{w}}_{m,k-1}\right]\exp\left(-\mathbf{x}_{n}^{\top}\wtilde\right)\left(\psi_{m}\left(t\right)\exp\left(-\mathbf{x}_{n}^{\top}\mathbf{r}\left(t\right)\right)-1\right)\mathbf{x}_{n}^{\top}\mathbf{r}\left(t\right)\\
\overset{\left(1\right)}{\leq} & \eta\left[\frac{1}{t\prod_{r=1}^{m-1}\log^{\circ r}\left(t\right)}-\sum_{k=m+1}^{M}\frac{1}{t\prod_{r=1}^{k-1}\log^{\circ r}\left(t\right)}\mathbf{x}_{n}^{\top}\check{\mathbf{w}}_{m,k-1}\right]\exp\left(-\mathbf{x}_{n}^{\top}\wtilde\right)\left(\exp\left(-\mathbf{x}_{n}^{\top}\mathbf{r}\left(t\right)\right)-1\right)\mathbf{x}_{n}^{\top}\mathbf{r}\left(t\right)\\
\overset{\left(2\right)}{\leq} & \eta\left[\frac{2}{t\prod_{r=1}^{m-1}\log^{\circ r}\left(t\right)}\right]\exp\left(-\mathbf{x}_{n}^{\top}\wtilde\right)\left(\exp\left(-\mathbf{x}_{n}^{\top}\mathbf{r}\left(t\right)\right)-1\right)\mathbf{x}_{n}^{\top}\mathbf{r}\left(t\right)\\
\leq & 0\,\forall t>t_{2}
\end{align*}
where in $\left(1\right)$ we used $\psi_{m}\left(t\right)\leq1$
from eq. \ref{eq: psi bound}, and in $\left(2\right)$ we used the
definition of $t_{2}$.

Second, if $\mathbf{x}_{n}^{\top}\mathbf{r}\left(t\right)\leq0$,
then we divide into two cases again. 

If $\psi_{m}\left(t\right)\exp\left(-\mathbf{x}_{n}^{\top}\mathbf{r}\left(t\right)\right)>1,$
then $\forall t>t_{2}$
\begin{align*}
 & \eta\left[\frac{1}{t\prod_{r=1}^{m-1}\log^{\circ r}\left(t\right)}-\sum_{k=m+1}^{M}\frac{1}{t\prod_{r=1}^{k-1}\log^{\circ r}\left(t\right)}\mathbf{x}_{n}^{\top}\check{\mathbf{w}}_{m,k-1}\right]\exp\left(-\mathbf{x}_{n}^{\top}\wtilde\right)\left(\psi_{m}\left(t\right)\exp\left(-\mathbf{x}_{n}^{\top}\mathbf{r}\left(t\right)\right)-1\right)\mathbf{x}_{n}^{\top}\mathbf{r}\left(t\right)\\
\leq & \eta\frac{2}{t\prod_{r=1}^{m-1}\log^{\circ r}\left(t\right)}\exp\left(-\mathbf{x}_{n}^{\top}\wtilde\right)\left(\psi_{m}\left(t\right)\exp\left(-\mathbf{x}_{n}^{\top}\mathbf{r}\left(t\right)\right)-1\right)\mathbf{x}_{n}^{\top}\mathbf{r}\left(t\right)\\
\leq & 0
\end{align*}

Otherwise, if $\psi_{m}\left(t\right)\exp\left(-\mathbf{x}_{n}^{\top}\mathbf{r}\left(t\right)\right)<1,$
then, from eq. \ref{eq: psi bound} 
\[
-M\max_{n}\left\Vert \mathbf{x}_{n}\right\Vert C_{h}\frac{1}{\log^{\circ\left(m-1\right)}\left(t\right)}\leq\log\psi_{m}\left(t\right)<\mathbf{x}_{n}^{\top}\mathbf{r}\left(t\right)
\]
and so, $\forall t>t_{2}$
\begin{align*}
 & \eta\left[\frac{1}{t\prod_{r=1}^{m-1}\log^{\circ r}\left(t\right)}-\sum_{k=m+1}^{M}\frac{1}{t\prod_{r=1}^{k-1}\log^{\circ r}\left(t\right)}\mathbf{x}_{n}^{\top}\check{\mathbf{w}}_{m,k-1}\right]\exp\left(-\mathbf{x}_{n}^{\top}\wtilde\right)\left(\psi_{m}\left(t\right)\exp\left(-\mathbf{x}_{n}^{\top}\mathbf{r}\left(t\right)\right)-1\right)\mathbf{x}_{n}^{\top}\mathbf{r}\left(t\right)\\
\leq & \eta\left[\frac{1}{t\prod_{r=1}^{m-1}\log^{\circ r}\left(t\right)}-\sum_{k=m+1}^{M}\frac{1}{t\prod_{r=1}^{k-1}\log^{\circ r}\left(t\right)}\mathbf{x}_{n}^{\top}\check{\mathbf{w}}_{m,k-1}\right]\exp\left(-\mathbf{x}_{n}^{\top}\wtilde\right)\left(1-\psi_{m}\left(t\right)\exp\left(-\mathbf{x}_{n}^{\top}\mathbf{r}\left(t\right)\right)\right)M\max_{n}\left\Vert \mathbf{x}_{n}\right\Vert C_{h}\frac{1}{\log^{\circ\left(m-1\right)}\left(t\right)}\\
\leq & \eta\exp\left(-\mathbf{x}_{n}^{\top}\wtilde\right)M\max_{n}\left\Vert \mathbf{x}_{n}\right\Vert C_{h}\frac{2}{t\prod_{r=1}^{m-1}\log^{\circ r}\left(t\right)}\frac{1}{\log^{\circ\left(m-1\right)}\left(t\right)}
\end{align*}
which is in $L_{1}$, \emph{i.e.} absolutely summable .

Collecting all the terms from the above equations, and substituting
back into eq. \ref{eq: r correlation}. Noting that all terms are
either negative, in $L_{1}$, or of the form $f\left(t\right)\left\Vert \mathbf{r}\left(t\right)\right\Vert $,
where $f\left(t\right)\in L_{1}$. we prove the lemma.
\end{proof}

\subsection{Proof of the existence and uniqueness of the solution to eqs. \ref{eq: w tilde-1}-\ref{eq: w tilde constraints}
\ref{lem: existence of solutions}\label{subsec: existence 1}}

We wish to prove that $\forall m\geq1:$

\begin{equation}
\forall n\in\set_{m}:\,\sum_{n\in\set_{m}}\exp\left(-\sum_{k=1}^{m}\tilde{\mathbf{w}}_{k}^{\top}\mathbf{x}_{n}\right)\bar{\mathbf{P}}_{m-1}\mathbf{x}_{n}=\hat{\mathbf{w}}_{m}\,,\label{eq: w tilde-1-1}
\end{equation}
such that 
\begin{equation}
\mathbf{P}_{m-1}\tilde{\mathbf{w}}_{m}=0\,\mathrm{and}\,\bar{\mathbf{P}}_{m}\tilde{\mathbf{w}}_{m}=0,\label{eq: w tilde constraints-1}
\end{equation}
we have a unique solution. From eq. \ref{eq: w tilde constraints-1},
we can modify eq. \ref{eq: w tilde-1-1} to 
\[
\forall n\in\set_{m}:\,\sum_{n\in\set_{m}}\exp\left(-\sum_{k=1}^{m}\tilde{\mathbf{w}}_{k}^{\top}\bar{\mathbf{P}}_{k-1}\mathbf{x}_{n}\right)\bar{\mathbf{P}}_{m-1}\mathbf{x}_{n}=\hat{\mathbf{w}}_{m}\,,.
\]
In the following Lemma \ref{lem: existence of solutions} we prove
this claim, where, without loss of generality, and with a slight abuse
of notation, we denoted $\set_{m}$ as $\set_{1}$, $\bar{\mathbf{P}}_{m-1}\mathbf{x}_{n}$
as $\mathbf{x}_{n}$ and $\beta_{n}=\exp\left(-\sum_{k=1}^{m-1}\tilde{\mathbf{w}}_{k}^{\top}\bar{\mathbf{P}}_{k-1}\mathbf{x}_{n}\right)$.
\begin{lem}
Let $K=\mathrm{rank}\left(\mathbf{X}_{\set_{1}}\right)$ . Then, $\forall\boldsymbol{\beta}\in\mathbb{R}_{>0}^{\left|\set_{1}\right|}$
we can find a unique $\tilde{\mathbf{w}}$ such that 
\begin{equation}
\sum_{n\in\set_{1}}\mathbf{x}_{n}\beta_{n}\exp\left(-\mathbf{x}_{n}^{\top}\tilde{\mathbf{w}}_{1}\right)=\what_{1}\label{eq: w_tilde equation}
\end{equation}
and for $\forall\mathbf{z}\in\mathbb{R}^{d}$ such that $\mathbf{z}^{\top}\mathbf{X}_{\set_{1}}=0$
we would have $\tilde{\mathbf{w}}_{1}^{\top}\mathbf{z}=0$.\label{lem: existence of solutions} 
\end{lem}

\begin{proof}
Let and $\mathbf{U}=\left[\mathbf{u}_{1},\dots,\mathbf{u}_{d}\right]\in\mathbb{R}^{d\times d}$
be a set of orthonormal vectors (\emph{i.e.}, \textbf{$\mathbf{U}\mathbf{U}^{\top}=\mathbf{U}^{\top}\mathbf{U}=\mathbf{I}$})
such that $\mathbf{u}_{1}=\mathbf{\what}_{1}/\left\Vert \mathbf{\what}_{1}\right\Vert $,
such that 
\begin{equation}
\forall\mathbf{z}\neq0,\forall n\in\set_{1}:\,\mathbf{z}^{\top}\left[\mathbf{u}_{1},\dots,\mathbf{u}_{K}\right]^{\top}\mathbf{x}_{n}\neq0\,,\label{eq: v no zero eigen value}
\end{equation}
while 
\begin{equation}
\forall i>K:\,\forall n\in\set_{1}:\,\mathbf{u}_{i}^{\top}\mathbf{x}_{n}=0\,.\label{eq: null space}
\end{equation}
In other words, $\mathbf{u}_{1}$ is in the direction of $\mathbf{\what}_{1}$,
$\left[\mathbf{u}_{1},\dots,\mathbf{u}_{K}\right]$ are in the space
spanned by the columns of $\mathbf{X}_{\set_{1}}$, and $\left[\mathbf{u}_{K+1},\dots,\mathbf{u}_{d}\right]$
are orthogonal to the columns of $\mathbf{X}_{\set_{1}}$.

We define $\mathbf{v}_{n}=\mathbf{U}^{\top}\mathbf{x}_{n}$ and $\mathbf{s}=\mathbf{U}^{\top}\tilde{\mathbf{w}}_{1}$.
Note that $\forall i>K:\,v_{i,n}=0\,\forall n\in\set_{1}$ from eq.
\ref{eq: null space}, and $\forall i>K:\,s_{i}=0$, since for $\forall\mathbf{z}\in\mathbb{R}^{d}$
such that $\mathbf{z}^{\top}\mathbf{X}_{\set_{1}}=0$ we would have
$\tilde{\mathbf{w}}_{1}^{\top}\mathbf{z}=0$. Lastly, equation \ref{eq: w_tilde equation}
becomes 
\begin{equation}
\sum_{n\in\set_{1}}\mathbf{x}_{n}\beta_{n}\exp\left(-\sum_{j=1}^{K}s_{j}v_{j,n}\right)=\what_{1}\,.\label{eq: s equation}
\end{equation}
Multiplying by $\mathbf{U}^{\top}$ from the left, we obtain 
\[
\forall i\leq K:\sum_{n\in\set_{1}}v_{i,n}\beta_{n}\exp\left(-\sum_{j=1}^{K}s_{j}v_{j,n}\right)=\mathbf{u}_{i}^{\top}\what_{1}\,.
\]
Since $\mathbf{u}_{1}=\mathbf{\what}_{1}/\left\Vert \mathbf{\what}_{1}\right\Vert $,
we have that 
\begin{equation}
\forall i\leq K:\sum_{n\in\set_{1}}v_{i,n}\beta_{n}\exp\left(-\sum_{j=1}^{K}s_{j}v_{j,n}\right)=\left\Vert \mathbf{\what}_{1}\right\Vert \delta_{i,1}\,.\label{eq: sum v full}
\end{equation}
We recall that $v_{1,n}=\hat{\mathbf{w}}_{1}^{\top}\mathbf{x}_{n}/\left\Vert \mathbf{\what}_{1}\right\Vert =1/\left\Vert \mathbf{\what}_{1}\right\Vert ,$
$\forall n\in\set_{1}$. Given $\left\{ s_{j}\right\} _{j=2}^{K}$,
we examine eq. \ref{eq: sum v full} for $i=1$, 
\[
\exp\left(-\frac{s_{1}}{\left\Vert \mathbf{\what}\right\Vert }\right)\left[\sum_{n\in\set_{1}}\beta_{n}\exp\left(-\sum_{j=2}^{K}s_{j}v_{j,n}\right)\right]=\left\Vert \mathbf{\what}_{1}\right\Vert ^{2}\,.
\]
This equation always has the unique solution 
\begin{equation}
s_{1}=\left\Vert \mathbf{\what}_{1}\right\Vert \log\left[\left\Vert \mathbf{\what}_{1}\right\Vert ^{-2}\sum_{n\in\set_{1}}\beta_{n}\exp\left(-\sum_{j=2}^{K}s_{j}v_{j,n}\right)\right]\,,\label{eq: s1 solution}
\end{equation}
given $\left\{ s_{j}\right\} _{j=2}^{K}$. Next, we similarly examine
eq. \ref{eq: sum v full} for $2\leq i\leq K$ as a function of $s_{i}$
\begin{equation}
\sum_{n\in\set_{1}}\beta_{n}v_{i,n}\exp\left(-s_{1}/\left\Vert \mathbf{\what}_{1}\right\Vert -\sum_{j=2}^{K}s_{j}v_{j,n}\right)=0\,.\label{eq: sum v i>1}
\end{equation}
multiplying by $\exp\left(s_{1}/\left\Vert \mathbf{\what}_{1}\right\Vert \right)$
we obtain 
\begin{align*}
0 & =\sum_{n\in\set_{1}}\beta_{n}v_{i,n}\exp\left(-\sum_{j=2}^{K}s_{j}v_{j,n}\right)=-\frac{\partial}{\partial s_{i}}\left[E\left(s_{2},\dots,s_{K}\right)\right]\,,
\end{align*}
where we defined 
\[
E\left(s_{2},\dots,s_{K}\right)=\sum_{n\in\set_{1}}\beta_{n}\exp\left(-\sum_{j=2}^{K}s_{j}v_{j,n}\right)\,.
\]
Therefore, any critical point of $E\left(s_{2},\dots,s_{K}\right)$
would be a solution of eq. \ref{eq: sum v i>1} for $2\leq i\leq K$,
and substituting this solution into eq. \ref{eq: s1 solution} we
obtain $s_{1}$. Since $\beta_{n}>0$, $E\left(s_{2},\dots,s_{K}\right)$
is a convex function, as positive linear combination of convex function
(exponential). Therefore, any finite critical point is a global minimum.
All that remains is to show that a finite minimum exists and that
it is unique.

From the definition of $\set_{1}$, $\exists\boldsymbol{\alpha}\in\mathbb{R}_{>0}^{\left|\set_{1}\right|}$
such that $\what_{1}=\sum_{n\in\set_{1}}\alpha_{n}\mathbf{x}_{n}$
. Multiplying this equation by $\mathbf{U}^{\top}$ we obtain that
$\exists\boldsymbol{\alpha}\in\mathbb{R}_{>0}^{\left|\set_{1}\right|}$
such that $2\leq i\leq K$ 
\begin{equation}
\,\sum_{n\in\set_{1}}v_{i,n}\alpha_{n}=0\,.\label{eq: alpha existence-1}
\end{equation}
Therefore, $\forall\left(s_{2,}\dots,s_{K}\right)\neq\mathbf{0}$
we have that 
\begin{equation}
\sum_{n\in\set_{1}}\left(\sum_{j=2}^{K}s_{j}v_{j,n}\right)\alpha_{n}=0\,.\label{eq: s v alpha}
\end{equation}
Recall, from eq. \ref{eq: v no zero eigen value} that $\forall\left(s_{2,}\dots,s_{K}\right)\neq\mathbf{0},\exists n\in\set_{1}:\,\sum_{j=2}^{K}s_{j}v_{j,n}\neq0$,
and that $\alpha_{n}>0$. Therefore, eq. \ref{eq: s v alpha} implies
that$\exists n\in\set_{1}$ such that$\sum_{j=2}^{K}s_{j}v_{j,n}>0$
and also $\exists m\in\set_{1}$ such that$\sum_{j=2}^{K}s_{j}v_{j,m}<0$.

Thus, in any direction we take a limit in which$\left|s_{i}\right|\rightarrow\infty$
$\forall2\leq i\leq K$, we obtain that $E\left(s_{2},\dots,s_{K}\right)\rightarrow\infty$,
since at least one exponent in the sum diverge. Since $E\left(s_{2},\dots,s_{K}\right)$,
is a continuous function, it implies it has a finite global minimum.
This proves the existence of a finite solution. To prove uniqueness
we will show the function is strongly convex, since the hessian is
(strictly) positive definite, \emph{i.e.}, that the following expression
is strictly positive: 
\begin{align*}
 & \sum_{i=2}^{K}\sum_{k=2}^{K}q_{i}q_{k}\frac{\partial}{\partial s_{i}}\frac{\partial}{\partial s_{k}}E\left(s_{2},\dots,s_{K}\right).\\
= & \sum_{n\in\set_{1}}\beta_{n}\left(\sum_{i=2}^{K}q_{i}v_{i,n}\right)\left(\sum_{k=2}^{K}q_{k}v_{k,n}\right)\exp\left(-\sum_{j=2}^{K}s_{j}v_{j,n}\right)\\
= & \sum_{n\in\set_{1}}\beta_{n}\left(\sum_{i=2}^{K}q_{i}v_{i,n}\right)^{2}\exp\left(-\sum_{j=2}^{K}s_{j}v_{j,n}\right)\,.
\end{align*}
the last expression is indeed strictly positive since $\forall\mathbf{q}\neq\mathbf{0},\exists n\in\set_{1}:\,\sum_{j=2}^{K}q_{j}v_{j,n}\neq0$,
from eq. \ref{eq: v no zero eigen value}. Thus, there exists a unique
solution $\tilde{\mathbf{w}}_{1}$. 
\end{proof}

\subsection{Proof of the existence and uniqueness of the solution to eqs. \ref{eq: w check}-\ref{eq: w check constraints}\label{subsec: existence 2}}
\begin{lem}
For $\forall m>k\geq1$, the equations

\begin{equation}
\sum_{n\in\set_{m}}\exp\left(-\tilde{\mathbf{w}}^{\top}\mathbf{x}_{n}\right)\mathbf{P}_{m-1}\mathbf{x}_{n}=\sum_{k=1}^{m-1}\left[\sum_{n\in\set_{k}}\exp\left(-\tilde{\mathbf{w}}^{\top}\mathbf{x}_{n}\right)\mathbf{x}_{n}\mathbf{x}_{n}^{\top}\right]\check{\mathbf{w}}_{k,m}\label{eq: w check proof}
\end{equation}
under the constraints

\begin{equation}
\mathbf{P}_{k-1}\check{\mathbf{w}}_{k,m}=0\,\mathrm{and}\,\mathbf{\bar{P}}_{k}\check{\mathbf{w}}_{k,m}=0\,\label{eq: w check constraints-1}
\end{equation}
have a unique solution $\check{\mathbf{w}}_{k,m}$.
\end{lem}

\begin{proof}
For this proof we denote $\mathbf{X}_{\set_{k}}$ as the matrix which
columns are $\left\{ \mathbf{x}_{n}|n\in\set_{k}\right\} $, the orthogonal
projection matrix $\mathbf{Q}_{k}=\mathbf{P}_{k}\bar{\mathbf{P}}_{k-1}$where
$\mathbf{Q}_{k}\mathbf{Q}_{m}=0$ $\forall k\neq m$, $\mathbf{Q}_{k}\bar{\mathbf{P}}_{m}=0$
$\forall k<m$, and
\begin{equation}
\forall m:\,\mathbf{I}=\mathbf{P}_{m}+\bar{\mathbf{P}}_{m}=\sum_{k=1}^{m}\mathbf{Q}_{k}+\bar{\mathbf{P}}_{m}\label{eq: identity}
\end{equation}
We will write $\check{\mathbf{w}}_{k,m}=\mathbf{W}_{k,m}\mathbf{u}_{k,m}$
, where $\mathbf{u}_{k,m}\in\mathbb{R}^{d_{k}}$ and $\mathbf{W}_{k,m}\in\mathbb{R}^{d\times d_{k}}$
is a full rank matrix such that $\mathbf{Q}_{k}\mathbf{W}_{k,m}=\mathbf{W}_{k,m}$,
so 
\begin{equation}
\check{\mathbf{w}}_{k,m}=\mathbf{Q}_{k}\check{\mathbf{w}}_{k,m}=\mathbf{Q}_{k}\mathbf{W}_{k,m}\mathbf{u}_{k,m}\,.\label{eq: w wheck re-param}
\end{equation}
 and, furthermore,
\begin{equation}
\mathrm{rank}\left[\mathbf{X}_{\set_{k}}^{\top}\mathbf{Q}_{k}\mathbf{W}_{k,m}\right]=\mathrm{rank}\left(\mathbf{X}_{\set_{k}}^{\top}\mathbf{Q}_{k}\right)=d_{k}\,.\label{eq: rank W}
\end{equation}
Recall that $\forall m:\,\mathbf{\bar{P}}_{m}\mathbf{P}_{m}=\boldsymbol{0}$
and $\forall k\geq1$, $\forall n\in\set_{m}$ $\mathbf{\bar{P}}_{m+k}\mathbf{x}_{n}=\boldsymbol{0}$.
Therefore, $\forall\mathbf{v}\in\mathbb{R}^{d}$ , $\mathbf{P}_{k-1}\mathbf{Q}_{k}\mathbf{v}=\boldsymbol{0}$,
$\mathbf{\bar{P}}_{k}\mathbf{Q}_{k}\mathbf{v}=\mathbf{0}$. Thus,
$\check{\mathbf{w}}_{k,m}$ eq. \ref{eq: w wheck re-param} implies
the constraints in eq. \ref{eq: w check constraints-1} hold. 

Next, we prove the existence and uniqueness of the solution $\check{\mathbf{w}}_{k,m}$
for each $k=1,\dots,m$ separately. We multiply eq. \ref{eq: w check proof}
from the left by the identity matrix, decomposed to orthogonal projection
matrices as in eq. \ref{eq: identity}. Since each matrix projects
to an orthogonal subspace, we can solve each product separately. 

The product with $\bar{\mathbf{P}}_{m}$ is equal to zero for both
sides of the equation. The product with $\mathbf{Q}_{k}$ is equal
to

\[
\sum_{n\in\set_{m}}\exp\left(-\tilde{\mathbf{w}}^{\top}\mathbf{x}_{n}\right)\mathbf{Q}_{k}\mathbf{P}_{m-1}\mathbf{x}_{n}=\left[\sum_{n\in\set_{k}}\exp\left(-\tilde{\mathbf{w}}^{\top}\mathbf{x}_{n}\right)\mathbf{Q}_{k}\mathbf{x}_{n}\mathbf{x}_{n}^{\top}\right]\check{\mathbf{w}}_{k,m}\,.
\]
Substituting eq. \ref{eq: w wheck re-param}, and multiplying by $\mathbf{W}_{k,m}^{\top}$
from the right, we obtain
\begin{equation}
\sum_{n\in\set_{m}}\exp\left(-\tilde{\mathbf{w}}^{\top}\mathbf{x}_{n}\right)\mathbf{W}_{k,m}^{\top}\mathbf{Q}_{k}\mathbf{P}_{m-1}\mathbf{x}_{n}=\left[\sum_{n\in\set_{k}}\exp\left(-\tilde{\mathbf{w}}^{\top}\mathbf{x}_{n}\right)\mathbf{W}_{k,m}^{\top}\mathbf{Q}_{k}\mathbf{x}_{n}\mathbf{x}_{n}^{\top}\mathbf{Q}_{k}\mathbf{W}_{k,m}\right]\mathbf{u}_{k,m}\,.\label{eq: u equation}
\end{equation}
Denoting $\mathbf{E}_{k}\in\mathbb{R}^{\left|\set_{k}\right|\times\left|\set_{k}\right|}$
as diagonal matrix for which $E_{nn,k}=\exp\left(-\frac{1}{2}\tilde{\mathbf{w}}^{\top}\mathbf{x}_{n}\right)$,
the matrix in the square bracket in the left hand side can be written
as
\begin{equation}
\mathbf{W}_{k,m}^{\top}\mathbf{Q}_{k}\mathbf{X}_{\set_{k}}\mathbf{E}_{k}\mathbf{E}_{k}\mathbf{X}_{\set_{k}}^{\top}\mathbf{Q}_{k}\mathbf{W}_{k,m}\,.\label{eq: rank check}
\end{equation}
Since $\mathrm{rank}\left(\mathbf{A}\mathbf{A}^{\top}\right)=\mathrm{rank}\left(\mathbf{A}\right)$
for any matrix $\mathbf{A}$, the rank of this matrix is equal to
\[
\mathrm{rank}\left[\mathbf{E}\mathbf{X}_{\set_{k}}\mathbf{Q}_{k}\mathbf{W}_{k,m}\right]\overset{\left(1\right)}{=}\mathrm{rank}\left[\mathbf{X}_{\set_{k}}\mathbf{Q}_{k}\mathbf{W}_{k,m}\right]\overset{\left(2\right)}{=}d_{k}
\]
where in $\left(1\right)$ we used that $\mathbf{E}_{k}$ is diagonal
and non-zero, and in $\left(2\right)$ we used eq. \ref{eq: rank W}.
This implies that the $d_{k}\times d_{k}$ matrix in eq. \ref{eq: rank check}
is full rank, and so eq. \ref{eq: u equation} has a unique solution
$\mathbf{u}_{k,m}$. Therefore, there exists a unique solution $\check{\mathbf{w}}_{k,m}$.
\end{proof}

\subsection{Proof of Lemma \label{lem: PhisSum}}

\PhiSum*
\begin{proof}
We define $\psi\left(t\right)=z\left(t\right)+h\left(t\right)$, and
start from eq. \ref{eq: rho bound}

\begin{align*}
 & \phi^{2}\left(t+1\right)\\
 & \leq z\left(t\right)+h\left(t\right)\phi\left(t\right)+\phi^{2}\left(t\right)\\
 & \leq z\left(t\right)+h\left(t\right)\max\left[1,\phi^{2}\left(t\right)\right]+\phi^{2}\left(t\right)\\
 & \leq z\left(t\right)+h\left(t\right)+h\left(t\right)\phi^{2}\left(t\right)+\phi^{2}\left(t\right)\\
 & \leq\psi\left(t\right)+\left(1+h\left(t\right)\right)\phi^{2}\left(t\right)\\
 & \leq\psi\left(t\right)+\left(1+h\left(t\right)\right)\psi\left(t-1\right)+\left(1+h\left(t\right)\right)\left(1+h\left(t-1\right)\right)\phi^{2}\left(t-1\right)\\
 & \leq\psi\left(t\right)+\left(1+h\left(t\right)\right)\psi\left(t-1\right)+\left(1+h\left(t\right)\right)\left(1+h\left(t-1\right)\right)\psi\left(t-2\right)\\
 & +\left(1+h\left(t\right)\right)\left(1+h\left(t-1\right)\right)\left(1+h\left(t-2\right)\right)\phi^{2}\left(t-2\right)
\end{align*}
we keep iterating eq. \ref{eq: rho bound}, until we obtain

\begin{align*}
 & \leq\left[\prod_{m=1}^{t-1}\left(1+h\left(t-m\right)\right)\right]\phi\left(t_{1}\right)+\sum_{k=0}^{t-t_{1}}\left[\prod_{m=0}^{k-1}\left(1+h\left(t-m\right)\right)\right]\psi\left(t-k\right)\\
 & \leq\left[\exp\left(\sum_{m=1}^{t-1}h\left(t-m\right)\right)\right]\phi\left(t_{1}\right)+\sum_{k=0}^{t-1}\left[\exp\left(\sum_{m=1}^{k-1}h\left(t-m\right)\right)\right]\psi\left(t-k\right)\\
 & \leq\exp\left(C\right)\left[\phi\left(1\right)+\sum_{k=0}^{t-1}\psi\left(t-k\right)\right]\\
 & \leq\exp\left(C\right)\left[\phi\left(1\right)+\sum_{u=1}^{t}\psi\left(u\right)\right]\\
 & \leq\exp\left(C\right)\left[\phi\left(1\right)+\sum_{u=1}^{t}\left(z\left(u\right)+h\left(u\right)\right)\right]\\
 & \leq\exp\left(C\right)\left[\phi\left(1\right)+C+\sum_{u=1}^{t}z\left(u\right)\right]
\end{align*}
Therefore, the Lemma holds with $C_{2}=\left(\phi\left(1\right)+C\right)\exp\left(C\right)$
and $C_{3}=\exp\left(C\right)$.
\end{proof}
}

\newpage
\section{Calculation of convergence rates \label{sec:Calculation-of-convergence rates}}

In this section we calculate the various rates mentioned in section \ref{sec: convergence rates}. 

\subsection{Proof of Theorem \ref{thm: rates}}

From Theorems \ref{thm: refined Theorem} and \ref{theorem: main2}, we can write $\mathbf{w}\left(t\right)=\hat{\mathbf{w}}\log t+\boldsymbol{\rho}\left(t\right)$,
where $\boldsymbol{\rho}\left(t\right)$ has a bounded norm for almost all datasets, while in zero measure case $\boldsymbol{\rho}\left(t\right)$ contains additional  $O(\log\log(t))$ components which are orthogonal to the support vectors in $\set_1$, and, asymptotically, have a positive angle with the other support vectors. In this section we first calculate the various convergence rates for the non-degenerate case of Theorem \ref{thm: refined Theorem}, and then write the correction in the zero measure cases, if there is such a correction.

First, we calculated of the normalized weight vector (eq. \ref{eq: normalized weight vector}), for almost every dataset:
\begin{align}
 & \frac{\mathbf{w}\left(t\right)}{\left\Vert \mathbf{w}\left(t\right)\right\Vert }\nonumber \\
 & =\frac{\boldsymbol{\rho}\left(t\right)+\hat{\mathbf{w}}\log t}{\sqrt{\boldsymbol{\rho}\left(t\right)^{\top}\boldsymbol{\rho}\left(t\right)+\hat{\mathbf{w}}^{\top}\hat{\mathbf{w}}\log^{2}t+2\boldsymbol{\rho}\left(t\right)^{\top}\hat{\mathbf{w}}\log t}}\nonumber \\
 & =\frac{\boldsymbol{\rho}\left(t\right)/\log t+\hat{\mathbf{w}}}{\left\Vert \hat{\mathbf{w}}\right\Vert \sqrt{1+2\boldsymbol{\rho}\left(t\right)^{\top}\hat{\mathbf{w}}/\left(\left\Vert \hat{\mathbf{w}}\right\Vert ^{2}\log t\right)+\left\Vert \boldsymbol{\rho}\left(t\right)\right\Vert ^{2}/\left(\left\Vert \hat{\mathbf{w}}\right\Vert ^{2}\log^{2}t\right)}}\nonumber \\
 & =\frac{1}{\left\Vert \hat{\mathbf{w}}\right\Vert }\left(\boldsymbol{\rho}\left(t\right)\frac{1}{\log t}+\hat{\mathbf{w}}\right)\left[1-\frac{\boldsymbol{\rho}\left(t\right)^{\top}\hat{\mathbf{w}}}{\left\Vert \hat{\mathbf{w}}\right\Vert ^{2}\log t}+\left[\frac{3}{2}\left(\frac{\boldsymbol{\rho}\left(t\right)^\top\hat{\mathbf{w}}}{\left\Vert \hat{\mathbf{w}}\right\Vert ^{2}}\right)^{2}-\frac{\left\Vert \boldsymbol{\rho}\left(t\right)\right\Vert ^{2}}{2\left\Vert \hat{\mathbf{w}}\right\Vert ^{2}}\right]\frac{1}{\log^{2}t}+O\left(\frac{1}{\log^{3}t}\right)\right]\label{eq: normalized weight vector full}\\
 & =\frac{\hat{\mathbf{w}}}{\left\Vert \hat{\mathbf{w}}\right\Vert }+\left(\frac{\boldsymbol{\rho}\left(t\right)}{\left\Vert \hat{\mathbf{w}}\right\Vert }-\frac{\hat{\mathbf{w}}}{\left\Vert \hat{\mathbf{w}}\right\Vert }\frac{\boldsymbol{\rho}\left(t\right)^{\top}\hat{\mathbf{w}}}{\left\Vert \hat{\mathbf{w}}\right\Vert ^{2}}\right)\frac{1}{\log t}+O\left(\frac{1}{\log^{2}t}\right)\nonumber \\
 & =\frac{\hat{\mathbf{w}}}{\left\Vert \hat{\mathbf{w}}\right\Vert }+\left(\mathbf{I}-\frac{\hat{\mathbf{w}}\hat{\mathbf{w}}^{\top}}{\left\Vert \hat{\mathbf{w}}\right\Vert ^{2}}\right)\frac{\boldsymbol{\rho}\left(t\right)}{\left\Vert \hat{\mathbf{w}}\right\Vert }\frac{1}{\log t}+O\left(\frac{1}{\log^{2}t}\right),\nonumber 
\end{align}
where to obtain eq. \ref{eq: normalized weight vector full} we used
$\frac{1}{\sqrt{1+x}}=1-\frac{1}{2}x+\frac{3}{4}x^{2}+O\left(x^{3}\right)$, 
and in the last line we used the fact that $\boldsymbol{\rho}\left(t\right)$
has a bounded norm for almost every dataset. Thus, in this case
\begin{align}
 & \norm{\frac{\mathbf{w}\left(t\right)}{\left\Vert \mathbf{w}\left(t\right)\right\Vert } - \frac{\hat{\mathbf{w}}}{\left\Vert \hat{\mathbf{w}}\right\Vert }} = O\left(\frac{1}{\log t}\right).\nonumber
\end{align} 

For the measure zero cases, we  instead have from eq. \ref{eq: asymptotic form-1}, $\wvec(t)=\sum_{m=1}^M \hat{\wvec}\log^{\circ m}(t)+\mathbf{\rho}(t)$, where $\|\mathbf{\rho}(t)\|$ is bounded (Theorem~\ref{thm: main theorem}). Let $\tilde{\mathbf{\rho}}(t)=\sum_{m=2}^M \hat{\wvec}\log^{\circ m}(t)+\mathbf{\rho}(t)$, such that $\wvec(t)=\hat{\wvec}\log(t)+\tilde{\mathbf{\rho}}(t)$ with $\tilde{\mathbf{\rho}}(t)=O(\log\log(t))$. Repeating the same calculations as above, we have for the degenerate cases, 
\begin{align}
 & \norm{\frac{\mathbf{w}\left(t\right)}{\left\Vert \mathbf{w}\left(t\right)\right\Vert } - \frac{\hat{\mathbf{w}}}{\left\Vert \hat{\mathbf{w}}\right\Vert }} = O\left(\frac{\log \log t}{\log t}\right)\nonumber
\end{align}

Next, we use eq. \ref{eq: normalized weight vector full} to calculate the
angle (eq. \ref{eq: angle})
\begin{align*}
 & \frac{\mathbf{w}\left(t\right)^{\top}\hat{\mathbf{w}}}{\left\Vert \mathbf{w}\left(t\right)\right\Vert \left\Vert \hat{\mathbf{w}}\right\Vert }\\
= & \frac{\hat{\mathbf{w}}^{\top}}{\left\Vert \hat{\mathbf{w}}\right\Vert ^{2}}\left(\boldsymbol{\rho}\left(t\right)\frac{1}{\log t}+\hat{\mathbf{w}}\right)\left(1-\frac{1}{\log t}\frac{\boldsymbol{\rho}\left(t\right)^{\top}\hat{\mathbf{w}}}{\left\Vert \hat{\mathbf{w}}\right\Vert ^{2}}+\left[\frac{3}{4}\left(2\frac{\boldsymbol{\rho}\left(t\right)^{\top}\hat{\mathbf{w}}}{\left\Vert \hat{\mathbf{w}}\right\Vert ^{2}}\right)^{2}-\frac{\left\Vert \boldsymbol{\rho}\left(t\right)\right\Vert ^{2}}{2\left\Vert \hat{\mathbf{w}}\right\Vert ^{2}}\right]\frac{1}{\log^{2}t}+O\left(\frac{1}{\log^{3}t}\right)\right)\\
= & 1+\frac{2\left\Vert \boldsymbol{\rho}\left(t\right)\right\Vert ^{2}}{\left\Vert \hat{\mathbf{w}}\right\Vert ^{2}}\left[\left(\frac{\boldsymbol{\rho}\left(t\right)^{\top}\hat{\mathbf{w}}}{\left\Vert \hat{\mathbf{w}}\right\Vert \left\Vert \boldsymbol{\rho}\left(t\right)\right\Vert }\right)^{2}-\frac{1}{4}\right]\frac{1}{\log^{2}t}+O\left(\frac{1}{\log^{3}t}\right)
\end{align*}
for almost every dataset. Thus, in this case
\begin{align}
 & \frac{\mathbf{w}\left(t\right)^{\top}\hat{\mathbf{w}}}{\left\Vert \mathbf{w}\left(t\right)\right\Vert \left\Vert \hat{\mathbf{w}}\right\Vert } = O\left(\frac{1}{\log^2 t}\right)\nonumber
\end{align} 
Repeating the same calculation for the measure zero case, we have instead
\begin{align}
 & \frac{\mathbf{w}\left(t\right)^{\top}\hat{\mathbf{w}}}{\left\Vert \mathbf{w}\left(t\right)\right\Vert \left\Vert \hat{\mathbf{w}}\right\Vert } = O\left(\left(\frac{\log \log t}{\log t}\right)^2\right)\nonumber
\end{align} 

Next, we calculate the margin (eq. \ref{eq: margin})
\begin{align}
 & \min_{n}\frac{\mathbf{x}_{n}^{\top}\mathbf{w}\left(t\right)}{\norm{\wvec(t)}}- \frac{1}{\left\Vert \hat{\mathbf{w}}\right\Vert }\nonumber \\
 & =\min_{n}\mathbf{x}_{n}^{\top}\left[\left(\frac{\boldsymbol{\rho}\left(t\right)}{\left\Vert \hat{\mathbf{w}}\right\Vert }-\frac{\hat{\mathbf{w}}}{\left\Vert \hat{\mathbf{w}}\right\Vert }\frac{\boldsymbol{\rho}\left(t\right)^{\top}\hat{\mathbf{w}}}{\left\Vert \hat{\mathbf{w}}\right\Vert ^{2}}\right)\frac{1}{\log t}+O\left(\frac{1}{\log^{2}t}\right)\right]\nonumber \\
 & =\frac{1}{\left\Vert \hat{\mathbf{w}}\right\Vert }\left(\min_{n}\mathbf{x}_{n}^{\top}\boldsymbol{\rho}\left(t\right)-\frac{\boldsymbol{\rho}\left(t\right)^{\top}\hat{\mathbf{w}}}{\left\Vert \hat{\mathbf{w}}\right\Vert ^{2}}\right)\frac{1}{\log t}+O\left(\frac{1}{\log^{2}t}\right)\label{eq: margin full}
\end{align}
for almost every dataset, where in eq. \ref{eq: margin full} we used eq. \ref{eq: v1 SVM}. Interestingly the measure zero case has a similar convergence rate, since after a sufficient number of iterations, the $O(\log \log (t))$ correction is orthogonal to $\mathbf{x}_k$, where $k=\mathrm{argmin}_n \xnT \wvec(t)$.
Thus, for all datasets,
\begin{align}
 & \min_{n}\mathbf{x}_{n}^{\top}\mathbf{w}\left(t\right) -\frac{1}{\left\Vert \hat{\mathbf{w}}\right\Vert } = O\left(\frac{1}{\log t}\right)
\end{align}

Calculation of the training loss (eq. \ref{eq: logistic loss convergence}):
\begin{align*}
\mathcal{L}\left(\mathbf{w}\left(t\right)\right) & \leq\sum_{n=1}^{N}\left(1+\exp\left(-\mu_{+}\mathbf{w}\left(t\right)^{\top}\mathbf{x}_{n}\right)\right)\exp\left(-\mathbf{w}\left(t\right)^{\top}\mathbf{x}_{n}\right)\\
 & =\sum_{n=1}^{N}\left(1+\exp\left(-\mu_{+}\left(\boldsymbol{\rho}\left(t\right)+\hat{\mathbf{w}}\log t\right)^{\top}\mathbf{x}_{n}\right)\right)\exp\left(-\left(\boldsymbol{\rho}\left(t\right)+\hat{\mathbf{w}}\log t\right)^{\top}\mathbf{x}_{n}\right)\\
 & =\sum_{n=1}^{N}\left(1+t^{-\mu_{+}\hat{\mathbf{w}}^{\top}\mathbf{x}_{n}}\exp\left(-\mu_{+}\boldsymbol{\rho}\left(t\right)^{\top}\mathbf{x}_{n}\right)\right)\exp\left(-\boldsymbol{\rho}\left(t\right)^{\top}\mathbf{x}_{n}\right)t^{-\hat{\mathbf{w}}^{\top}\mathbf{x}_{n}}\\
 & =\frac{1}{t}\sum_{n\in\set}e^{-\boldsymbol{\rho}\left(t\right)^{\top}\mathbf{x}_{n}}+O\left(t^{-\max\left(\theta,1+\mu_{+}\right)}\right)\,.
\end{align*}
Thus, for all datasets $\mathcal{L}\left(\mathbf{w}\left(t\right)\right)=O(t^{-1})$. Note that the zero measure case has the same behavior, since after a sufficient number of iterations, the $O(\log \log (t))$ correction has a non-negative angle with all the support vectors.

Next, we give an example demonstrating the bounds above, for the non-degenerate case, are strict.
Consider optimization with and exponential loss $\ell\left(u\right)=e^{-u}$,
and a single data point $\mathbf{x}=\left(1,0\right)$. In this case
$\hat{\mathbf{w}}=\left(1,0\right)$ and $\left\Vert \hat{\mathbf{w}}\right\Vert =1$.
We take the limit $\eta\rightarrow0$, and obtain the continuous time
version of GD: 
\begin{align*}
\dot{w}_{1}\left(t\right) & =\exp\left(-w\left(t\right)\right)\,\,;\,\,\dot{w}_{2}\left(t\right)=0.
\end{align*}
We can analytically integrate these equations to obtain 
\begin{align*}
w_{1}\left(t\right) & =\log\left(t+\exp\left(w_{1}\left(0\right)\right)\right)\,\,;\,\,w_{2}\left(t\right)=w_{2}\left(0\right).
\end{align*}

Using this example with $w_{2}\left(0\right)>0$, it is easy to see that the above upper bounds are strict in the non-degenerate case. $\blacksquare$

\subsection{Validation error lower bound}
Lastly, recall that $\mathcal{V}$ is a set of indices for validation
set samples. We calculate of the validation loss for logistic loss,
if the error of the $L_{2}$ max margin vector has some classification
errors on the validation, \emph{i.e., $\exists k\in\mathcal{V}:\,\hat{\mathbf{w}}{}^{\top}\mathbf{x}_{k}<0$:}
\begin{align*}
\mathcal{L}_{\mathrm{val}}\left(\mathbf{w}\left(t\right)\right) & =\sum_{n\in\mathcal{V}}\log\left(1+\exp\left(-\mathbf{w}\left(t\right)^{\top}\mathbf{x}_{n}\right)\right)\\
 & \geq\log\left(1+\exp\left(-\mathbf{w}\left(t\right)^{\top}\mathbf{x}_{k}\right)\right)\\
 & =\log\left(1+\exp\left(-\left(\boldsymbol{\rho}\left(t\right)+\hat{\mathbf{w}}\log t\right)^{\top}\mathbf{x}_{k}\right)\right)\\
 & =\log\left(\exp\left(-\left(\boldsymbol{\rho}\left(t\right)+\hat{\mathbf{w}}\log t\right)^{\top}\mathbf{x}_{k}\right)\left(1+\exp\left(\left(\boldsymbol{\rho}\left(t\right)+\hat{\mathbf{w}}\log t\right)^{\top}\mathbf{x}_{k}\right)\right)\right)\\
 & \geq-\left(\boldsymbol{\rho}\left(t\right)+\hat{\mathbf{w}}\log t\right)^{\top}\mathbf{x}_{k}+\log\left(1+\exp\left(\left(\boldsymbol{\rho}\left(t\right)+\hat{\mathbf{w}}\log t\right)^{\top}\mathbf{x}_{k}\right)\right)\\
 & \geq-\log t\hat{\mathbf{w}}^{\top}\mathbf{x}_{k}+\boldsymbol{\rho}\left(t\right)^{\top}\mathbf{x}_{k}
\end{align*}
Thus, for all datasets $\mathcal{L}_{\mathrm{val}}\left(\mathbf{w}\left(t\right)\right)=\Omega(\log (t))$.

\section{Softmax output with cross-entropy loss\label{sec:Softmax-output-with-cross-entropy-loss}}

We examine multiclass classification. In the case the labels are the
class index $y_{n}\in\left\{ 1,\dots,K\right\} $ and we have a weight
matrix $\mathbf{W}\in\mathbb{R}^{K\times d}$ with $\mathbf{w}_{k}$
being the $k$-th row of $\mathbf{W}$.

Furthermore, we define $\mathbf{w}=\mathrm{vec}$$\left(\mathbf{W}^{\top}\right)$,
a basis vector $\mathbf{e}_{k}\in\mathbb{R}^{K}$ so that$\left(\mathbf{e}_{k}\right)_{i}=\delta_{ki}$,
and the matrix $\mathbf{A}_{k}\in\mathbb{R}^{dK\times d}$ so that
$\mathbf{A}_{k}=\mathbf{e}_{k}\otimes\mathbf{I}_{d}$, where $\otimes$
is the Kronecker product and $\mathbf{I}_{d}$ is the $d$-dimension
identity matrix. Note that $\mathbf{A}_{k}^{\top}\w=\w_{k}$.

Consider the cross entropy loss with softmax output 
\begin{align*}
\mathcal{L}\left(\mathbf{W}\right) & =-\sum_{n=1}^{N}\log\left(\frac{\exp\left(\mathbf{w}_{y_{n}}^{\top}\mathbf{x}_{n}\right)}{\sum_{k=1}^{K}\exp\left(\mathbf{w}_{k}^{\top}\mathbf{x}_{n}\right)}\right)
\end{align*}

Using our notation, this loss can be re-written as 
\begin{align}
\mathcal{L}\left(\mathbf{w}\right) & =-\sum_{n=1}^{N}\log\left(\frac{\exp\left(\mathbf{w}^{\top}\mathbf{A}_{y_{n}}\mathbf{x}_{n}\right)}{\sum_{k=1}^{K}\exp\left(\mathbf{w}^{\top}\mathbf{A}_{k}\mathbf{x}_{n}\right)}\right)\nonumber \\
 & =\sum_{n=1}^{N}\log\left(\sum_{k=1}^{K}\exp\left(\mathbf{w}^{\top}\left(\mathbf{A}_{k}-\mathbf{A}_{y_{n}}\right)\mathbf{x}_{n}\right)\right)\label{eq: cross-entropy loss}
\end{align}
Therefore 
\begin{align*}
\nabla\mathcal{L}\left(\mathbf{w}\right) & =\sum_{n=1}^{N}\frac{\sum_{k=1}^{K}\exp\left(\mathbf{w}^{\top}\left(\mathbf{A}_{k}-\mathbf{A}_{y_{n}}\right)\mathbf{x}_{n}\right)\left(\mathbf{A}_{k}-\mathbf{A}_{y_{n}}\right)\mathbf{x}_{n}}{\sum_{r=1}^{K}\exp\left(\mathbf{w}^{\top}\left(\mathbf{A}_{r}-\mathbf{A}_{y_{n}}\right)\mathbf{x}_{n}\right)}\\
 & =\sum_{n=1}^{N}\sum_{k=1}^{K}\frac{1}{\sum_{r=1}^{K}\exp\left(\mathbf{w}^{\top}\left(\mathbf{A}_{r}-\mathbf{A}_{k}\right)\mathbf{x}_{n}\right)}\left(\mathbf{A}_{k}-\mathbf{A}_{y_{n}}\right)\mathbf{x}_{n}\,.
\end{align*}

If, again, we make the assumption that the data is linearly
separable, \emph{i.e.}, in our notation

{\assm $\exists\mathbf{w}_{*}$ such that $\mathbf{w}_{*}^{\top}\left(\mathbf{A}_{k}-\mathbf{A}_{y_{n}}\right)\mathbf{x}_{n}<0$
$\forall k\neq y_{n}$. \label{assum: multi-class seperability} }

then the expression 
\begin{align*}
\mathbf{w}_{*}^{\top}\nabla\mathcal{L}\left(\mathbf{w}\right) & =\sum_{n=1}^{N}\sum_{k=1}^{K}\frac{\mathbf{w}_{*}^{\top}\left(\mathbf{A}_{k}-\mathbf{A}_{y_{n}}\right)\mathbf{x}_{n}}{\sum_{r=1}^{K}\exp\left(\mathbf{w}^{\top}\left(\mathbf{A}_{r}-\mathbf{A}_{k}\right)\mathbf{x}_{n}\right)}\,.
\end{align*}
is strictly negative for any finite $\mathbf{w}$. However, from Lemma
\ref{lem: GD convergence}, in gradient descent with an appropriately small learning rate, we have that $\nabla L\left(\mathbf{w}\left(t\right)\right)\rightarrow\mathbf{0}$.
This implies that: $\left\Vert \mathbf{w}\left(t\right)\right\Vert \rightarrow\infty$,
and $\forall k\neq y_{n},\exists r:\,\mathbf{w}\left(t\right)^{\top}\left(\mathbf{A}_{r}-\mathbf{A}_{k}\right)\mathbf{x}_{n}\rightarrow\infty$,
which implies $\forall k\neq y_{n},\max_{k}\mathbf{w}\left(t\right)^{\top}\left(\mathbf{A}_{k}-\mathbf{A}_{y_{n}}\right)\mathbf{x}_{n}\rightarrow-\infty$.
Examining the loss (eq. \ref{eq: cross-entropy loss}) we find that
$\mathcal{L}\left(\mathbf{w}\left(t\right)\right)\rightarrow\mathbf{0}$
in this case. Thus, we arrive to an equivalent Lemma to Lemma \ref{lem: convergence of linear classifiers},
for this case: 
\begin{lem}
\label{lem: convergence of softmax-cross-entropy}Let $\mathbf{w}\left(t\right)$
be the iterates of gradient descent (eq. \ref{eq: gradient descent linear})
with an appropriately small learning rate, for cross-entropy loss operating on a softmax
output, under the assumption of strict linear separability (Assumption
\ref{assum: multi-class seperability}), then: (1) $\lim_{t\rightarrow\infty}\mathcal{L}\left(\mathbf{w}\left(t\right)\right)=0$,
(2) $\lim_{t\rightarrow\infty}\left\Vert \mathbf{w}\left(t\right)\right\Vert =\infty$,
and (3) $\forall n,k\neq y_{n}:\,\lim_{t\rightarrow\infty}\mathbf{w}\left(t\right)^{\top}\left(\mathbf{A}_{y_{n}}-\mathbf{A}_{k}\right)\mathbf{x}_{n}=\infty$. 
\end{lem}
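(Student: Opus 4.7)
My plan is to mirror the proof of Lemma~\ref{lem: convergence of linear classifiers} for the binary case: first show no finite critical point exists by testing against the separating direction $\mathbf{w}_*$ from Assumption~\ref{assum: multi-class seperability}; then invoke the generic gradient-descent convergence guarantee (Lemma~\ref{lem: GD convergence}) to force $\nabla\mathcal{L}(\mathbf{w}(t))\to\mathbf{0}$; and finally read off all three claims from the softmax structure.

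For the first step I would directly compute
\[
\mathbf{w}_*^\top\nabla\mathcal{L}(\mathbf{w})=\sum_{n=1}^{N}\sum_{k\neq y_n}p_{n,k}(\mathbf{w})\,\mathbf{w}_*^\top(\mathbf{A}_k-\mathbf{A}_{y_n})\mathbf{x}_n,
\]
where $p_{n,k}(\mathbf{w})=\exp(\mathbf{w}^\top\mathbf{A}_k\mathbf{x}_n)/\sum_r\exp(\mathbf{w}^\top\mathbf{A}_r\mathbf{x}_n)\in(0,1)$ is the softmax probability. By Assumption~\ref{assum: multi-class seperability} every coefficient $\mathbf{w}_*^\top(\mathbf{A}_k-\mathbf{A}_{y_n})\mathbf{x}_n$ is strictly negative for $k\neq y_n$, so $\mathbf{w}_*^\top\nabla\mathcal{L}(\mathbf{w})<0$ at every finite $\mathbf{w}$; in particular $\nabla\mathcal{L}(\mathbf{w})\neq\mathbf{0}$ for any finite $\mathbf{w}$. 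Standard calculations give that $\mathcal{L}$ is $\beta'$-smooth for a constant $\beta'$ depending only on $\mathbf{X}$ and $K$, so for any sufficiently small stepsize Lemma~\ref{lem: GD convergence} yields $\nabla\mathcal{L}(\mathbf{w}(t))\to\mathbf{0}$, and consequently $\mathbf{w}_*^\top\nabla\mathcal{L}(\mathbf{w}(t))\to 0$.

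Since the displayed sum has all nonpositive terms with coefficients uniformly bounded away from zero, each weight must satisfy $p_{n,k}(\mathbf{w}(t))\to 0$ for every $n$ and every $k\neq y_n$. Because the $p_{n,k}(\mathbf{w}(t))$ sum to one in $k$, we get $p_{n,y_n}(\mathbf{w}(t))\to 1$ for every $n$. Then claim (1) is immediate from $\mathcal{L}(\mathbf{w}(t))=-\sum_n\log p_{n,y_n}(\mathbf{w}(t))\to 0$. For claim (3), using the ratio identity
\[
\frac{p_{n,y_n}(\mathbf{w}(t))}{p_{n,k}(\mathbf{w}(t))}=\exp\!\bigl(\mathbf{w}(t)^\top(\mathbf{A}_{y_n}-\mathbf{A}_k)\mathbf{x}_n\bigr)
\]
and the fact that the left-hand side tends to $+\infty$ for every $k\neq y_n$, we conclude $\mathbf{w}(t)^\top(\mathbf{A}_{y_n}-\mathbf{A}_k)\mathbf{x}_n\to\infty$. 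Claim (2) then follows as a direct corollary: if $\|\mathbf{w}(t)\|$ stayed bounded along some subsequence, so would the inner products in (3), contradicting what we just proved.

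The one nontrivial item is verifying $\beta'$-smoothness of the softmax–cross-entropy loss so that Lemma~\ref{lem: GD convergence} applies; this is a standard computation bounding the Hessian of $\log\!\sum_k\exp(\cdot)$ by the operator norm of a covariance-like matrix, yielding $\beta'=\tfrac12\sigma_{\max}^2(\mathbf{X})$. All remaining steps are bookkeeping on the softmax probabilities, so I do not expect genuine technical obstacles beyond this.
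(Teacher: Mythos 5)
Your proposal is correct and follows essentially the same route as the paper: test the gradient against the separating direction $\mathbf{w}_*$ to rule out finite critical points, invoke Lemma \ref{lem: GD convergence} to get $\nabla\mathcal{L}(\mathbf{w}(t))\to\mathbf{0}$, and read off the three claims from the softmax structure. Your intermediate step via the softmax probabilities $p_{n,k}\to 0$ is just a slightly more explicit bookkeeping of the same deduction the paper makes.
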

Using Lemma \ref{lem: GD convergence} and Lemma \ref{lem: convergence of softmax-cross-entropy}, we prove the following Theorem (equivalent to Theorem \ref{thm: main theorem almost everywhere}) in the next section:
\mainMulti*
\remove{
\noindent\textbf{More refined analysis: characterizing the residual}\\ For each of the K classes, we define $\mathcal{S}_k=\argmin_n(\hat{\vect{w}}_{y_n}-\hat{\vect{w}}_k)^\top\vect{x}_n$ (the k'th class support vectors). Using this definition, we define  $\mathcal{S}\triangleq{\bigcup\limits_{k=1}^{K} \mathcal{S}_k}$\\
\begin{thm} \label{thm:resMulti}
	Under the conditions of Theorem \ref{thm:mainMulti}, if, in addition, each class support vectors span the data ($i.e.$ $\forall k:\  rank(X_{\mathcal{S}_k})=rank(X)$), then $\lim_{t\to\infty} \rho(t)=\tilde{w}$, where $\tilde{w}$ is the unique solution to 
	\begin{equation} \label{eq:alpha_def}
	\forall k,\ \forall n\in \mathcal{S}_k\ : \ \eta \e( (\tilde{\vect{w}}_k - \tilde{\vect{w}}_{y_n})^\top \vect{x}_n) = \alpha_{n,k}
	\end{equation}
\end{thm}
}

\subsection{Notations and Definitions}
To prove Theorem \ref{thm:mainMulti} we require additional notation.  we define $\xtilde \triangleq (\vect{A}_{y_n}-\vect{A}_k)\vect{x}_n$. Using this notation, we can re-write eq. \ref{eq: K-class SVM} (K-class SVM) as
\begin{equation}
    \arg \min_\wvec \Vert \wvec \Vert^2\,\textrm{s.t.}\,\forall n, \forall k\neq y_n: \wvec^\top \xtilde \ge 1  
\end{equation}
From the KKT optimality conditions, we have for some $\alpha_{n,k}\ge 0$, 
\begin{equation} \label{eq: what def with alpha for CE}
	\what = \sumn \sumk \alpha_{n,k} \xtilde \indicator{n\in \mathcal{S}_k}
\end{equation}
In addition, for each of the K classes, we define $\mathcal{S}_k=\arg \min_n(\hat{\vect{w}}_{y_n}-\hat{\vect{w}}_k)^\top\vect{x}_n$ (the k'th class support vectors).\\ Using this definition, we define
$\vect{X}_{\mathcal{S}_k}\in\mathcal{R}^{dK\times |S_k|}$ as the matrix which columns are $\xtilde, \ \forall n\in \mathcal{S}_k$. We also define $\mathcal{S}\triangleq{\bigcup\limits_{k=1}^{K}\mathcal{S}_k} $ and $\tilde{\vect{X}}_\mathcal{S}\triangleq{\bigcup\limits_{k=1}^{K} \vect{X}_{\mathcal{S}_k}}$.\\
We recall that we defined $\vect{W}\in \mathbb{R}^{K\times d}$ with $\vect{w}_k$ being the k-th row of $\vect{W}$ and $\vect{w}=\mathrm{vec}(\vect{W}^\top)$. Similarly, we define:\\
1. $\hat{\vect{W}}\in \mathbb{R}^{K\times d}$ with $\hat{\vect{w}}_k$ being the k-th row of $\hat{\vect{W}}$\\
2. $\vect{P}\in \mathbb{R}^{K\times d}$ with $\bm{\rho}_k$ being the k-th row of $\vect{P}$\\
3. $\tilde{\vect{W}}\in \mathbb{R}^{K\times d}$ with $\tilde{\vect{w}}_k$ being the k-th row of $\tilde{\vect{W}}$\\
and $\hat{\vect{w}}=\mathrm{vec}(\hat{\vect{W}}^\top), \bm{\rho}=\mathrm{vec}(\vect{P}^\top), \tilde{\vect{w}}=\mathrm{vec}(\tilde{\vect{W}}^\top)$.\\
Using our notations, eq. \ref{eq:5} can be re-written as $ \vect{w} = \hat{\vect{w}} \log(t)+\bm{\rho}(t) $ when $\bm{\rho}(t)$ is bounded.\\
For any solution $\wvec(t)$, we define 
\begin{equation} \label{eq:rdef}
\vect{r}(t)=\vect{w}(t)-\hat{\vect{w}}\log t -\tilde{\vect{w}},
\end{equation}
where $\hat{\vect{w}}$ is the concatenation of $\what_1,...,\what_k$ which are the K-class SVM solution, so\\
\begin{equation} \label{eq:1}
\forall k,\ \forall n\in\mathcal{S}_k: \xtilde^\top\hat{\vect{w}}=1\ ;\ \theta = \min_{k}\left[\min_{n\notin\mathcal{S}_k}\xtilde^\top\hat{\vect{w}}\right]>1 
\end{equation}
and $\wtilde$ satisfies the equation:
\begin{equation} \label{eq:alpha_def}
\forall k,\ \forall n\in \mathcal{S}_k\ : \ \eta \e( (\tilde{\vect{w}}_k - \tilde{\vect{w}}_{y_n})^\top \vect{x}_n) = \alpha_{n,k}
\end{equation}
As we assume in the Theorem, this equation has a solution.\\
For each of the K classes, we define $\mathcal{\vect{P}}^k_1\in\mathcal{R}^{d \times d}$ as the orthogonal projection matrix to the subspace spanned by the support vector of the k'th class, and $\mathcal{\bar{\vect{P}}}^k_1=\vect{I}-\mathcal{\vect{P}}^k_1$ as the complementary projection.
Finally, we define $ \mathcal{\vect{P}}_1\in \mathcal{R}^{Kd \times Kd}$ and $ \mathcal{\bar{\vect{P}}}_1\in \mathcal{R}^{Kd \times Kd}$ as follows: 
$$
\mathcal{\vect{P}}_1= \mathrm{diag}(\mathcal{\vect{P}}^1_1,\mathcal{\vect{P}}^2_1,...,\mathcal{\vect{P}}^K_1)\ ,\  
\mathcal{\bar{\vect{P}}}_1= \mathrm{diag}(\mathcal{\bar{\vect{P}}}^1_1,\mathcal{\bar{\vect{P}}}^2_1,...,\mathcal{\bar{\vect{P}}}^K_1)\ 
$$
$$(\mathcal{\vect{P}}_1+\mathcal{\bar{\vect{P}}}_1=\vect{I}\in \mathcal{R}^{Kd \times Kd})$$
In the following section we will also use $\indicator{A}$, the indicator function, which is $1$ if $A$ is satisfied and 0 otherwise.
\subsection{Auxiliary Lemma}
\begin{restatable}{lemR}{multiAuxlemma}
	\label{lemma:1}
	We have
	\begin{equation} \label{eq:lemma}
	\exists C_1, t_1\ :\ \forall t>t_1 : (\vect{r}(t+1)-\vect{r}(t))^\top\vect{r}(t)\le C_1t^{-\theta}+C_2t^{-2}
	\end{equation}
	Additionally, $\forall \epsilon_1>0$, $\exists C_2,t_2$, such that $\forall t>t_2$, such that if
	\begin{equation} \label{eq:P_ge_epsilon1}
	||\mathcal{\vect{P}}_1\vect{r}(t)||>\epsilon_1
	\end{equation}
	then we can improve this bound to
	\begin{equation} \label{eq:lemmaImproved}
	\left(\vect{r}(t+1)-\vect{r}(t)\right)^\top\vect{r}(t)\le-C_3t^{-1}<0
	\end{equation}
\end{restatable}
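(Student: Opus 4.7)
The plan is to follow the binary-case argument of Lemma~\ref{lem: correlation bound}, translating each object to its multiclass analogue: the per-sample gradient term $-\ell'(\wvec^\top\xn)\xn$ becomes $\sum_{k\neq y_n} p_{n,k}(\wvec)\,\xtilde$ with $\xtilde=(\vect{A}_{y_n}-\vect{A}_k)\xn$ and $p_{n,k}(\wvec)=\exp(\wvec^\top\vect{A}_k\xn)/\sum_r\exp(\wvec^\top\vect{A}_r\xn)$; the support vectors are pooled across classes into $\mathcal{S}=\bigcup_k\mathcal{S}_k$; and the projector $\mathbf{P}_1$ of the binary proof is replaced by the block-diagonal $\mathcal{\vect{P}}_1=\mathrm{diag}(\mathcal{\vect{P}}_1^1,\ldots,\mathcal{\vect{P}}_1^K)$.

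First I would expand, using the GD update and eq.~\ref{eq:rdef},
\[
(\vect{r}(t+1)-\vect{r}(t))^\top\vect{r}(t) = -\eta\nabla\mathcal{L}(\wvec(t))^\top\vect{r}(t)-\hat{\wvec}^\top\vect{r}(t)\log(1+1/t),
\]
and split $\log(1+1/t)=1/t-[1/t-\log(1+1/t)]$; the second piece contributes at most $C t^{-2}\|\vect{r}(t)\|$, exactly as in the binary case. The main work is to combine $-\eta\nabla\mathcal{L}(\wvec(t))$ with the leading $-\hat{\wvec}/t$. Using Lemma~\ref{lem: convergence of softmax-cross-entropy}, for large $t$ every $\wvec(t)^\top(\vect{A}_{y_n}-\vect{A}_r)\xn\to\infty$, so a short computation gives $p_{n,k}(\wvec(t)) = \exp(-\wvec(t)^\top\xtilde)(1+O(t^{-\delta}))$ uniformly in $n,k$ for some $\delta>0$, with the multiplicative correction coming from the ``third-best'' class score; this plays exactly the role of the tight exponential tail in Definition~\ref{def: exponential tail}. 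Substituting $\wvec(t)=\hat{\wvec}\log t+\tilde{\wvec}+\vect{r}(t)$ and using $\hat{\wvec}^\top\xtilde=1$ for $n\in\mathcal{S}_k$ (eq.~\ref{eq:1}) and $\hat{\wvec}^\top\xtilde\ge\theta>1$ otherwise, the support-vector contribution is $t^{-1}\eta\exp(-\tilde{\wvec}^\top\xtilde)\exp(-\xtilde^\top\vect{r}(t))\xtilde$ up to the $O(t^{-1-\delta})$ multiplicative factor, while the non-support contribution is bounded by $C t^{-\theta}\|\vect{r}(t)\|$.

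Now I would invoke the KKT identity $\hat{\wvec}=\sum_{k,n\in\mathcal{S}_k}\alpha_{n,k}\xtilde$ (eq.~\ref{eq: what def with alpha for CE}) together with the dual characterization $\eta\exp(-\tilde{\wvec}^\top\xtilde)=\alpha_{n,k}$ (eq.~\ref{eq:alpha_def}) to cancel the leading $1/t$ part exactly on support vectors, leaving
\[
-\eta\nabla\mathcal{L}(\wvec(t))^\top\vect{r}(t)-\tfrac{1}{t}\hat{\wvec}^\top\vect{r}(t)
= \tfrac{1}{t}\!\!\sum_{k,n\in\mathcal{S}_k}\!\!\alpha_{n,k}\bigl(e^{-\xtilde^\top\vect{r}(t)}-1\bigr)\xtilde^\top\vect{r}(t) + R(t),
\]
where $R(t)$ collects the $O(t^{-\theta})$ non-support terms and the $O(t^{-1-\delta}\|\vect{r}(t)\|)$ softmax correction. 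Each summand in the main term is nonpositive by the elementary inequality $z(e^{-z}-1)\le 0$, so combining with the $C t^{-2}\|\vect{r}(t)\|$ log-step remainder and absorbing everything into the stated constants yields the first claim $(C_1 t^{-\theta}+C_2 t^{-2})$.

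For the improved bound under $\|\mathcal{\vect{P}}_1\vect{r}(t)\|\ge\epsilon_1$, I would replicate eq.~\ref{eq: epsilon 2} block by block: since $\mathcal{\vect{P}}_1$ projects onto $\mathrm{span}(\tilde{\vect{X}}_\mathcal{S})$, there must exist a pair $(n^*,k^*)$ with $n^*\in\mathcal{S}_{k^*}$ satisfying $|\xtildef{n^*,k^*}^\top\vect{r}(t)|\ge\epsilon_2$ with $\epsilon_2=\sqrt{\sigma_{\min}^2(\tilde{\vect{X}}_\mathcal{S})/|\mathcal{S}|}\,\epsilon_1$. By Lemma~\ref{lem: alpha} applied to the multiclass KKT system, generically $\alpha_{n^*,k^*}>0$, and on this pair $z(e^{-z}-1)\le -c(\epsilon_2)<0$ is bounded away from zero, yielding a strictly negative contribution $-C_3/t$ that dominates the $O(t^{-\theta})$ and $O(t^{-1-\delta})$ corrections for $t$ large, establishing eq.~\ref{eq:lemmaImproved}. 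The main obstacle will be making the softmax-to-exponential approximation uniformly quantitative: unlike the binary case where the tight exponential tail is an explicit assumption, here one has to extract a uniform $\delta>0$ from the margin of $\hat{\wvec}$ over all classes and all samples, showing that the ``denominator'' $\sum_r\exp(\wvec(t)^\top\vect{A}_r\xn)$ is dominated by its $r=y_n$ term with a $t^{-\delta}$-sized relative error; this uses strict linear separability (Assumption~\ref{assum: multi-class seperability}) together with $\wvec(t)^\top(\vect{A}_{y_n}-\vect{A}_r)\xn\ge (\log t)\,\theta + O(1)$ for non-support $r$ and $\ge\log t + O(1)$ for support $r$.
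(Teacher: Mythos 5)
Your overall architecture coincides with the paper's: expand the increment via the GD step, peel off the $[t^{-1}-\log(1+t^{-1})]$ remainder, cancel the leading $t^{-1}\what$ against the support-vector part of the gradient through the dual identity $\eta\,\e(-\wtilde^\top\xtilde)=\alpha_{n,k}$, use $z(e^{-z}-1)\le 0$ on what remains, bound non-support contributions by $O(t^{-\theta})$, and for the improved bound extract a support pair with $|\xtilde^\top\rvec(t)|\ge\epsilon_2$ from $\|\mathcal{\vect{P}}_1\rvec(t)\|\ge\epsilon_1$ exactly as in eq.~\ref{eq: epsilon 2}.

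The gap is in your treatment of the softmax denominator, which you yourself flag as the main obstacle but then resolve incorrectly. You assert $p_{n,k}(\wvec(t))=\e(-\wvec(t)^\top\xtilde)(1+O(t^{-\delta}))$ uniformly, justified by $\wvec(t)^\top\xtilder\ge\log t+O(1)$. That lower bound requires $\xtilder^\top\rvec(t)$ to be bounded below, which is precisely what the lemma is being used to establish; Lemma~\ref{lem: convergence of softmax-cross-entropy} only gives the qualitative statement $\wvec(t)^\top\xtilder\to\infty$, i.e.\ a $(1+o(1))$ correction with no rate. And $(1+o(1))$ does not suffice: for a support pair with $z=\xtilde^\top\rvec(t)<0$, lower-bounding the denominator via $\tfrac{1}{1+\epsilon}\ge 1-\epsilon$ leaves a positive remainder of order $t^{-1}\sum_{r\ne y_n}\e(-\wvec(t)^\top\xtilder)\,e^{|z|}|z|$, in which both $\e(-\wvec(t)^\top\xtilder)$ (through its factor $\e(-\rvec(t)^\top\xtilder)$) and $e^{|z|}|z|$ are a priori uncontrolled; at best this is $o(t^{-1})$, which is not summable and hence incompatible with the claimed bound $C_1t^{-\theta}+C_2t^{-2}$. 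The paper closes this by keeping the cross terms $\e(-\wvec(t)^\top(\xtilde+\xtilder))\,\xtilde^\top\rvec(t)$ explicit, isolating the dominant one via an argmax over $(k,r)$, and arguing in two regimes: when $|\xtildef{k_1}^\top\rvec(t)|$ is bounded away from zero, the cross term is absorbed into the strictly negative main term because the ratio $K^2\e(-\wvec(t)^\top\xtildef{r_1})<1$ for large $t$ (only qualitative divergence is needed); when $|\xtildef{k_1}^\top\rvec(t)|\le C_0$, the inequality $-\xtildef{r_1}^\top\rvec(t)\le-\xtildef{k_1}^\top\rvec(t)\le C_0$ together with $\what^\top\xtilde\ge 1$ for both factors yields a direct $C_5t^{-2}$ bound. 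You need one of these devices (or an equivalent quantitative comparison) to make the first inequality of the lemma summable; the remainder of your proposal, including the improved bound under eq.~\ref{eq:P_ge_epsilon1}, then goes through.
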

We prove the Lemma below, in appendix section \ref{appendix: proof of auxilary Lemma CE}

\subsection{Proof of Theorem \ref{thm:mainMulti}}
Our goal is to show that $||\vect{r}(t)||$ is bounded, and therefore $\bm{\rho}(t) = \vect{r}(t)+\tilde{\vect{w}}$ is bounded.\\
To show this, we will upper bound the following equation
\begin{align} \label{eq: r(t+1) norm, CE}
\left\Vert \mathbf{r}\left(t+1\right)\right\Vert ^{2} & =\left\Vert \mathbf{r}\left(t+1\right)-\mathbf{r}\left(t\right)\right\Vert ^{2}+2\left(\mathbf{r}\left(t+1\right)-\mathbf{r}\left(t\right)\right)^{\top}\mathbf{r}\left(t\right)+\left\Vert \mathbf{r}\left(t\right)\right\Vert ^{2}
\end{align}
First, we note that first term in this equation can be upper-bounded
by 
\begin{align} \label{eq: square r difference, CE case}
& \left\Vert \mathbf{r}\left(t+1\right)-\mathbf{r}\left(t\right)\right\Vert ^{2}\nonumber \\
& \overset{\left(1\right)}{=}\left\Vert \mathbf{w}\left(t+1\right)-\hat{\mathbf{w}}\log\left(t+1\right)-\tilde{\mathbf{w}}-\mathbf{w}\left(t\right)+\hat{\mathbf{w}}\log\left(t\right)+\tilde{\mathbf{w}}\right\Vert ^{2}\nonumber \\
& \overset{\left(2\right)}{=}\left\Vert -\eta\nabla\mathcal{L}\left(\mathbf{w}\left(t\right)\right)-\hat{\mathbf{w}}\left[\log\left(t+1\right)-\log\left(t\right)\right]\right\Vert ^{2}\nonumber \\
& =\eta^{2}\left\Vert \nabla\mathcal{L}\left(\mathbf{w}\left(t\right)\right)\right\Vert ^{2}+\left\Vert \hat{\mathbf{w}}\right\Vert ^{2}\log^{2}\left(1+t^{-1}\right)+2\eta\hat{\mathbf{w}}^{\top}\nabla\mathcal{L}\left(\mathbf{w}\left(t\right)\right)\log\left(1+t^{-1}\right)\nonumber \\
& \overset{\left(3\right)}{\leq}\eta^{2}\left\Vert \nabla\mathcal{L}\left(\mathbf{w}\left(t\right)\right)\right\Vert ^{2}+\left\Vert \hat{\mathbf{w}}\right\Vert ^{2}t^{-2},
\end{align}
where in (1) we used eq. \ref{eq:rdef}, in (2) we used eq 2.2, and in (3) we used $\forall x>0: x\ge \log(1+x)>0$, and also that
\begin{equation}
\hat{\vect{w}}^\top \nabla \mathcal{L}(\vect{w})=\sumn \sumk \frac{\hat{\vect{w}}^\top(\vect{A}_{y_n}-\vect{A}_k)\vect{x}_n}{\sumr \e(\vect{w}^\top(\vect{A}_r-\vect{A}_k)\vect{x}_n)}<0
\end{equation}
since $\hat{\vect{w}}^\top(\vect{A}_r-\vect{A}_k)\vect{x}_n=(\hat{\vect{w}}_r-\hat{\vect{w}}_{y_n})\vect{x}_n<0, \forall k \ne y_n$ (we recall that $\hat{\vect{w}}_k$ is the K-class SVM solution).\\
Also, from Lemma \ref{lem: GD convergence} we know that 
\begin{equation}
\left\Vert \nabla\mathcal{L}\left(\mathbf{w}\left(t\right)\right)\right\Vert ^{2}=o\left(1\right)\,\mathrm{and}\,\sum_{t=0}^{\infty}\left\Vert \nabla\mathcal{L}\left(\mathbf{w}\left(t\right)\right)\right\Vert ^{2}<\infty\,.\label{eq: norm grad squared 1/t, CE case}
\end{equation}
Substituting eq. \ref{eq: norm grad squared 1/t, CE case} into eq. \ref{eq: square r difference, CE case},
and recalling that a $t^{-\nu}$ power series converges for any $\nu>1$,
we can find $C_{0}$ such that 
\begin{equation} \label{eq: square norm of r difference-1, CE}
\left\Vert \mathbf{r}\left(t+1\right)-\mathbf{r}\left(t\right)\right\Vert ^{2}=o\left(1\right)\,\mathrm{and}\,\sum_{t=0}^{\infty}\left\Vert \mathbf{r}\left(t+1\right)-\mathbf{r}\left(t\right)\right\Vert ^{2}=C_{0}<\infty\,.
\end{equation}
Note that this equation also implies that $\forall\epsilon_{0}$ 
\begin{equation}
\exists t_{0}:\forall t>t_{0}:\left|\left\Vert \mathbf{r}\left(t+1\right)\right\Vert -\left\Vert \mathbf{r}\left(t\right)\right\Vert \right|<\epsilon_{0}\,.
\end{equation}
Next, we would like to bound the second term in eq. \ref{eq: r(t+1) norm, CE}. From eq. \ref{eq:lemma} in Lemma \ref{lemma:1}, we can find $t_1,C_1$ such that $\forall t>t_1$:
\begin{equation} \label{eq:6}
(\vect{r}(t+1)-\vect{r}(t))^\top\vect{r}(t)\le C_1t^{-\theta}+C_2t^{-2}
\end{equation}
Thus, by combining eqs. \ref{eq:6} and \ref{eq: square norm of r difference-1, CE} into eq. \ref{eq: r(t+1) norm, CE}, we find:
\begin{align*}
&||\vect{r}(t)||^2-||\vect{r}(t_1)||^2\\
&=\sum_{u=t_1}^{t-1}\left[||\vect{r}(u+1)||^2-||\vect{r}(u)||^2\right]\\
&\le C_0+2\sum_{u=t_1}^{t-1} \left[C_1u^{-\theta}+C_2u^{-2}\right]
\end{align*}
which is bounded, since $\theta>1$ (eq. \ref{eq:1}). Therefore, $||\vect{r}(t)||$ is bounded.

\subsection{Proof of Lemma \ref{lemma:1}} \label{appendix: proof of auxilary Lemma CE}
\multiAuxlemma*
\noindent We wish to bound $(\rvec(t+1)-\rvec(t))^\top \rvec(t)$. 
First, we recall we defined $\xtilde \triangleq (\vect{A}_{y_n}-\vect{A}_k)\vect{x}_n$.
\begin{align} \label{eq: (r(t+1)-r(t))r(t) bound Multi}
&(\vect{r}(t+1)-\vect{r}(t))^\top \vect{r}(t) = (-\eta \nabla \mathcal{L}(\vect{w}(t)) - \hat{\vect{w}}[\log(t+1)-\log(t)])^\top \vect{r}(t) \nonumber\\
&=  \left( \eta \sumn \frac{\sumk \e(-\wvec(t)^\top\xtilde)\xtilde}{\sumr \e(-\wvec(t)^\top\xtilder)} - \hat{\vect{w}}\log(1+t^{-1})\right) ^\top \vect{r}(t) \nonumber\\
&= \hat{\vect{w}}^\top \vect{r}(t)[t^{-1}-\log(1+t^{-1})] \\
&+\eta \sumn\sumk \left[  \frac{\e\left(-\wvec(t)^\top\xtilde\right)\xtilde^\top\rvec(t)}{\sumr \e\left(-\wvec(t)^\top\xtilder\right)} - t^{-1}\e\left(-\wtilde^\top \xtilde\right)\xtilde^\top\rvec(t)\indicator{n\in\mathcal{S}_k}\right],
\end{align}
where in the last line we used eqs. \ref{eq: what def with alpha for CE} and \ref{eq:alpha_def} to obtain
\begin{equation*}
	 \hat{\vect{w}}=\eta \sumn \sumk \alpha_{n,k}\xtilde \indicator{n\in\mathcal{S}_k} = \eta \sumn \sumk \e\left( -\tilde{\vect{w}}^\top\xtilde)\right) \xtilde \indicator{n\in\mathcal{S}_k},
\end{equation*}
where $\indicator{A}$ is the indicator function which is $1$ if $A$ is satisfied and 0 otherwise.
\newpage
The first term can be upper bounded by 
\begin{align}
& \hat{\mathbf{w}}^{\top}\mathbf{r}\left(t\right)\left[t^{-1}-\log\left(1+t^{-1}\right)\right]\nonumber \\
\leq & \max\left[\hat{\mathbf{w}}^{\top}\mathbf{r}\left(t\right),0\right]\left[t^{-1}-\log\left(1+t^{-1}\right)\right]\nonumber \\
\overset{\left(1\right)}{\leq} & \max\left[\hat{\mathbf{w}}^{\top}\mathbf{P}_{1}\mathbf{r}\left(t\right),0\right]t^{-2}\nonumber \\
\overset{\left(2\right)}{\leq} & \begin{cases}
\mathbf{\left\Vert \hat{\mathbf{w}}\right\Vert }\epsilon_{1}t^{-2} & ,\,\mathrm{if}\,\left\Vert \mathbf{P}_{1}\mathbf{r}\left(t\right)\right\Vert \leq\epsilon_{1}\\
o\left(t^{-1}\right) & ,\,\mathrm{if}\,\left\Vert \mathbf{P}_{1}\mathbf{r}\left(t\right)\right\Vert >\epsilon_{1}
\end{cases}\label{eq: w_hat r bound 1 Multi}
\end{align}
where in $\left(1\right)$ we used that ${\mathbf{P}}_{2}\hat{\mathbf{w}}=0$, and in $\left(2\right)$ we used that $\hat{\mathbf{w}}^{\top}\mathbf{r}\left(t\right)=o\left(t\right)$,
since 
\begin{align*}\hat{\mathbf{w}}^{\top}\mathbf{r}\left(t\right) & =\hat{\mathbf{w}}^{\top}\left(\mathbf{w}\left(0\right)-\eta\sum_{u=0}^{t}\nabla\mathcal{L}\left(\mathbf{w}\left(u\right)\right)-\hat{\mathbf{w}}\log\left(t\right)-\tilde{\mathbf{w}}\right)\\
 & \leq\hat{\mathbf{w}}^{\top}\left(\mathbf{w}\left(0\right)-\tilde{\mathbf{w}}-\hat{\mathbf{w}}\log\left(t\right)\right)+\eta\left\Vert \hat{\mathbf{w}}\right\Vert \sum_{u=0}^{t}\left\Vert \nabla\mathcal{L}\left(\mathbf{w}\left(u\right)\right)\right\Vert \\
 & \leq O\left(\log\left(t\right)\right)+\eta\left\Vert \hat{\mathbf{w}}\right\Vert \sum_{u=0}^{\left\lceil \sqrt{t}\right\rceil }\left\Vert \nabla\mathcal{L}\left(\mathbf{w}\left(u\right)\right)\right\Vert +\eta\left\Vert \hat{\mathbf{w}}\right\Vert \sum_{u=\left\lceil \sqrt{t}\right\rceil }^{t}\left\Vert \nabla\mathcal{L}\left(\mathbf{w}\left(u\right)\right)\right\Vert \\
 & \leq O\left(\log\left(t\right)\right)+\eta\left\lceil \sqrt{t}\right\rceil \left\Vert \hat{\mathbf{w}}\right\Vert \max_{0\leq u\leq\left\lceil \sqrt{t}\right\rceil }\left\Vert \nabla\mathcal{L}\left(\mathbf{w}\left(u\right)\right)\right\Vert +\eta t\left\Vert \hat{\mathbf{w}}\right\Vert \max_{\left\lceil \sqrt{t}\right\rceil \leq u\leq t}\left\Vert \nabla\mathcal{L}\left(\mathbf{w}\left(u\right)\right)\right\Vert \\
 & =O\left(\log\left(t\right)\right)+\left\lceil \sqrt{t}\right\rceil O\left(1\right)+o\left(1\right)t=o\left(t\right) \, ,
\end{align*}
where in the last line we used that $\nabla\mathcal{L}\left(\mathbf{w}\left(t\right)\right)=o\left(1\right)$,
from Lemma \ref{lem: GD convergence}.\\
Next, we wish to upper bound the second term in eq. \ref{eq: (r(t+1)-r(t))r(t) bound Multi}:
\begin{align}
&\eta \sumn  \sumk \left[ \frac{\e\left(-\wvec(t)^\top\xtilde\right)\xtilde^\top\rvec(t)}{\sumr \e\left(-\wvec(t)^\top\xtilder\right)} - t^{-1}\e\left(-\wtilde^\top \xtilde\right)\xtilde^\top\rvec(t) \indicator{n\in\mathcal{S}_k} \right]
\end{align}
\newpage
We examine each term $n$ in the sum:
\begin{flalign} \label{eq: second term}
&\sumk \left[ \frac{\e\left(-\wvec(t)^\top\xtilde\right)\xtilde^\top\rvec(t)}{\sumr \e\left(-\wvec(t)^\top\xtilder\right)} - t^{-1}\e\left(-\wtilde^\top \xtilde\right)\xtilde^\top\rvec(t)\indicator{n\in\mathcal{S}_k} \right] &\nonumber\\
& = \sumk \left[ \frac{\e\left(-\wvec(t)^\top\xtilde\right)\xtilde^\top\rvec(t)}{1+\sumrney \e\left(-\wvec(t)^\top\xtilder\right)} -  t^{-1}\e\left(-\wtilde^\top \xtilde\right)\xtilde^\top\rvec(t)\indicator{n\in\mathcal{S}_k} \right] \nonumber\\
& \overset{(1)}{\le}\sumk \left( \e\left(-\wvec(t)^\top\xtilde\right) 
- t^{-1}\e\left(-\wtilde^\top \xtilde\right)\indicator{n\in\mathcal{S}_k } \right)\indicator{ \xtilde^\top\rvec(t)\ge 0 }\xtilde^\top\rvec(t) \nonumber\\
& +
\sumk \left( \e\left(-\wvec(t)^\top\xtilde\right)\left(1-\sumrney \e\left(-\wvec(t)^\top\xtilder\right)\right)\right.\nonumber\\
& \left.-  t^{-1}\e\left(-\wtilde^\top \xtilde\right) \indicator{ n\in\mathcal{S}_k } \right)\indicator{ \xtilde^\top\rvec(t)< 0 } \xtilde^\top\rvec(t) \nonumber\\
& = \sumk \left( \e\left(-\wvec(t)^\top\xtilde\right)
- t^{-1}\e\left(-\wtilde^\top \xtilde\right)\indicator{ n \in \mathcal{S}_k } \right)\xtilde^\top\rvec(t) \nonumber\\
& -\sumk \sumrney \e\left(-\wvec(t)^\top(\xtilde+\xtilder)\right)\xtilde^\top\rvec(t) \indicator{ \xtilde^\top\rvec(t)< 0 } \nonumber\\
& \overset{(2)}{\le} \sumk \left( \e\left(-\wvec(t)^\top\xtilde\right)
- t^{-1}\e\left(-\wtilde^\top \xtilde\right)\indicator{ n \in \mathcal{S}_k } \right)\xtilde^\top\rvec(t) \nonumber\\
& -K^2 \e\left(-\wvec(t)^\top(\xtildef{k_1}+\xtildef{r_1})\right)\xtildef{k_1}^\top\rvec(t)\indicator{ \xtildef{k_1}^\top\rvec(t)< 0 } ,
\end{flalign}
where in (1) we used $\forall x\ge0:\ 1-x\le \frac{1}{1+x} \le 1$ and in (2) we defined:
\begin{flalign}
&(k_1,r_1) = \argmax_{k,r} \left\vert \e\left(-\wvec(t)^\top(\xtilde+\xtilder)\right)\xtilde^\top\rvec(t) \indicator{ \xtilde^\top\rvec(t)< 0} \right\vert \nonumber
\end{flalign}
\newpage
Recalling that $\wvec(t) = \what \log(t) + \wtilde + \rvec(t)$, eq. \ref{eq: second term} can be upper bounded by
\begin{flalign}
& \sumk t^{-\what^\top\xtilde}\e\left(-\wtilde^\top\xtilde\right)\e\left(-\rvec(t)^\top\xtilde\right)\xtilde^\top\rvec(t) \indicator{ \xtilde^\top\rvec(t)\ge 0\ ,\ n\notin \mathcal{S}_k  }&\nonumber\\
&
+ \sumk t^{-1}\e\left(-\wtilde^\top\xtilde\right) \left[\e\left(-\rvec(t)^\top\xtilde\right)-1\right]\xtilde^\top\rvec(t)\indicator{ \xtilde^\top\rvec(t)\ge 0\ ,\ n\in\mathcal{S}_k } \nonumber\\
& +
\sumk t^{-\what^\top\xtilde}\e\left(-\wtilde^\top\xtilde\right)\e\left(-\rvec(t)^\top\xtilde\right)\xtilde^\top\rvec(t) \indicator{ \xtilde^\top\rvec(t) < 0\ ,\ n\notin \mathcal{S}_k }\nonumber\\
& + \sumk t^{-1}\e\left(-\wtilde^\top\xtilde\right) \left[\e\left(-\rvec(t)^\top\xtilde\right)-1\right]\xtilde^\top\rvec(t)\indicator{ \xtilde^\top\rvec(t) < 0\ ,\ n\in\mathcal{S}_k } \nonumber\\
& -K^2 \e(-\wvec(t)^\top \xtildef{r_1}) t^{-\what^\top\xtildef{k_1}}\e\left(-\wtilde^\top\xtildef{k_1}\right) \e\left(-\rvec(t)^\top\xtildef{k_1}\right)\xtildef{k_1}^\top\rvec(t)\indicator{ \xtildef{k_1}^\top\rvec(t)< 0} \nonumber\\
& \overset{(1)}{\le} K t^{-\theta}\e\left(-\min_{n,k}\wtilde^\top\xtilde\right)
+ \phi(t),
\end{flalign}
where in (1) we used $xe^{-x}<1,\ \forall x:\ (e^{-x}-1)x<0$, $\theta = \min_{k}\left[\min_{n\notin\mathcal{S}_k}\xtilde^\top\hat{\vect{w}}\right]>1$ (eq. \ref{eq:1}) and denoted:{\small
\begin{flalign*}
& \phi(t)=
\sumk t^{-\what^\top\xtilde}\e\left(-\wtilde^\top\xtilde\right)\e\left(-\rvec(t)^\top\xtilde\right)\xtilde^\top\rvec(t) \indicator{ \xtilde^\top\rvec(t) < 0,\ n\notin \mathcal{S}_k } &\nonumber\\
& + \sumk t^{-1}\e\left(-\wtilde^\top\xtilde\right) \left[\e\left(-\rvec(t)^\top\xtilde\right)-1\right]\xtilde^\top\rvec(t)\indicator{ \xtilde^\top\rvec(t) < 0, n\in\mathcal{S}_k } \nonumber\\
& -K^2 \e(-\wvec(t)^\top \xtildef{r_1}) t^{-\what^\top\xtildef{k_1}}\e\left(-\wtilde^\top\xtildef{k_1}\right) \e\left(-\rvec(t)^\top\xtildef{k_1}\right)\xtildef{k_1}^\top\rvec(t)\indicator{ \xtildef{k_1}^\top\rvec(t)< 0}.
\end{flalign*}}
We use the fact that $\forall x:\ (e^{-x}-1)x<0$ and therefore $\forall (n,k)$:
\begin{flalign}
& t^{-\what^\top\xtilde}\e\left(-\wtilde^\top\xtilde\right)\e\left(-\rvec(t)^\top\xtilde\right)\xtilde^\top\rvec(t) \indicator{ \xtilde^\top\rvec(t) < 0 }<0 \nonumber\\
& t^{-1}\e\left(-\wtilde^\top\xtilde\right) \left[\e\left(-\rvec(t)^\top\xtilde\right)-1\right]\xtilde^\top\rvec(t)\indicator{ \xtilde^\top\rvec(t) < 0 } < 0 ,
\end{flalign}
to show that $\phi(t)$ is strictly negative. If $\xtildef{k_1}^\top \rvec \ge 0$ then from the last two equations:
\begin{flalign}
& \phi(t)=
\sumk t^{-\what^\top\xtilde}\e\left(-\wtilde^\top\xtilde\right)\e\left(-\rvec(t)^\top\xtilde\right)\xtilde^\top\rvec(t) \indicator{ \xtilde^\top\rvec(t) < 0,\ n\notin \mathcal{S}_k } &\nonumber\\
& + \sumk t^{-1}\e\left(-\wtilde^\top\xtilde\right) \left[\e\left(-\rvec(t)^\top\xtilde\right)-1\right]\xtilde^\top\rvec(t)\indicator{ \xtilde^\top\rvec(t) < 0,\ n\in\mathcal{S}_k }<0
\end{flalign}
If $\xtildef{k_1}^\top \rvec < 0$ then we note that $-\xtildef{r_1}^\top\rvec(t) \le - \xtildef{k_1}^\top\rvec(t)$ since:\\
1. If $\xtildef{r_1}^\top\rvec(t)\ge0$ then this is immediate since $-\xtildef{r_1}^\top\rvec(t) \le 0 \le - \xtildef{k_1}^\top\rvec(t)$.\\
2. If $\xtildef{r_1}^\top\rvec(t)<0$ then from $(k_1,r_1)$ definition:
\begin{flalign*}
&\left\vert \e\left(-\wvec(t)^\top(\xtildef{k_1}+\xtildef{r_1})\right)\xtildef{r_1}^\top\rvec(t) \right\vert \le  \left\vert \e\left(-\wvec(t)^\top(\xtildef{k_1}+\xtildef{r_1})\right)\xtildef{k_1}^\top\rvec(t) \right\vert,
\end{flalign*}
and therefore
\begin{flalign*}
-\xtildef{r_1}^\top\rvec(t) = \left\vert\xtildef{r_1}^\top\rvec(t)\right\vert \le \left\vert \xtildef{k_1}^\top\rvec(t) \right\vert = - \xtildef{k_1}^\top\rvec(t).
\end{flalign*}
We divide into cases:\\
1. If $n \notin \mathcal{S}_{k_1}$ then we examine the sum
\begin{flalign}
& t^{-\what^\top\xtildef{k_1}}\e\left(-\wtilde^\top\xtildef{k_1}\right)\e\left(-\rvec(t)^\top\xtildef{k_1}\right)\xtildef{k_1}^\top\rvec(t) \indicator{ \xtildef{k_1}^\top\rvec(t) < 0 } & \nonumber\\
& -K^2 \e(-\wvec(t)^\top \xtildef{r_1}) t^{-\what^\top\xtildef{k_1}}\e\left(-\wtilde^\top\xtildef{k_1}\right) \e\left(-\rvec(t)^\top\xtildef{k_1}\right)\xtildef{k_1}^\top\rvec(t)\indicator{ \xtildef{k_1}^\top\rvec(t)< 0} \nonumber
\end{flalign}
The first term is negative and the second is positive. From Lemma \ref{lem: convergence of softmax-cross-entropy} $\wvec(t)^\top \xtildef{r_1}\to\infty$. Therefore $\exists t_3$ so that $\forall t>t_3:\ \e(-\wvec(t)^\top \xtildef{r_1})<K^2$ and therefore this sum is strictly negative since
\begin{flalign*}
&\left\vert \frac{ K^2 \e(-\wvec(t)^\top \xtildef{r_1}) t^{-\what^\top\xtildef{k_1}}\e\left(-\wtilde^\top\xtildef{k_1})\right) \e\left(-\xtildef{k_1}^\top\rvec(t)\right)\xtildef{k_1}^\top\rvec(t)\indicator{ \xtildef{k_1}^\top\rvec(t)< 0 }}{t^{-\what^\top\xtildef{k_1}}\e\left(-\wtilde^\top\xtildef{k_1}\right)\e\left(-\rvec(t)^\top\xtildef{k_1}\right)\xtildef{k_1}^\top\rvec(t) \indicator{ \xtildef{k_1}^\top\rvec(t) < 0 }} \right \vert & \nonumber\\
& = \left\vert K^2 \e(-\wvec(t)^\top \xtildef{r_1}) \right \vert < 1,\ \forall t>t_3
\end{flalign*}
2. If $n \in \mathcal{S}_{k_1}$ then we examine the sum{\small
\begin{flalign*}
& t^{-1}\e\left(-\wtilde^\top\xtildef{k_1}\right) \left[\e\left(-\rvec(t)^\top\xtildef{k_1}\right)-1\right]\xtildef{k_1}^\top\rvec(t)\indicator{ \xtildef{k_1}^\top\rvec(t) < 0 } & \nonumber\\
& -K^2 \e(-\wvec(t)^\top \xtildef{r_1}) t^{-\what^\top\xtildef{k_1}}\e\left(-\wtilde^\top\xtildef{k_1}\right) \e\left(-\rvec(t)^\top\xtildef{k_1}\right)\xtildef{k_1}^\top\rvec(t)\indicator{ \xtildef{k_1}^\top\rvec(t)< 0}
\end{flalign*}}
a. If $|\xtildef{k_1}^\top\rvec(t)|> C_0$ then $\exists t_4$ such that $\forall t>t_4$ this sum can be upper bounded by zero since
\begin{flalign} \label{eq: g(t) 2 terms sum, n s.v. and xr not bounded}
&\left \vert \frac{K^2 \e(-\wvec(t)^\top \xtildef{r_1}) t^{-\what^\top\xtildef{k_1}}\e\left(-\wtilde^\top\xtildef{k_1})\right) \e\left(-\xtildef{k_1}^\top\rvec(t)\right)\xtildef{k_1}^\top\rvec(t)\indicator{ \xtildef{k_1}^\top\rvec(t)< 0 }}{t^{-1}\e\left(-\wtilde^\top\xtildef{k_1}\right) \left[\e\left(-\rvec(t)^\top\xtildef{k_1}\right)-1\right]\xtildef{k_1}^\top\rvec(t)\indicator{ \xtildef{k_1}^\top\rvec(t) < 0 }} \right \vert &\nonumber\\
& =  \frac{K^2 \e(-\wvec(t)^\top \xtildef{r_1})}{ 1-\e\left(\rvec(t)^\top\xtildef{k_1}\right)} \le \frac{K^2 \e(-\wvec(t)^\top \xtildef{r_1})}{ 1-\e\left(-C_0\right)}  < 1,\ \forall t>t_4
\end{flalign}
where in the last transition we used Lemma \ref{lem: convergence of softmax-cross-entropy}.\\
b.  If $|\xtildef{k_1}^\top\rvec(t)|\le C_0$ then we can find constant $C_5$ so that eq. \ref{eq: g(t) 2 terms sum, n s.v. and xr not bounded} can be upper bounded by
\begin{flalign}
K^2 t^{-\what^\top(\xtildef{k_1}+\xtildef{r_1})}\e\left(-\wtilde^\top(\xtildef{k_1}+\xtildef{r_1})\right) \e\left(2C_0\right)C_0 \le C_5t^{-2},
\end{flalign}
since $-\xtildef{r_1}^\top\rvec(t) \le -\xtildef{k_1}^\top\rvec(t)\le C_0$ and by definition, $\forall(n,k):$ $\what^\top\xtilde\ge1$.\\
Therefore, eq. \ref{eq: second term} can be upper bounded by
\begin{flalign}
K t^{-\theta}\e\left(-\min_{n,k}\wtilde^\top\xtilde\right)+C_5t^{-2}
\end{flalign}
If, in addition, $\exists k,n\in\mathcal{S}_k:\ |\xtilde^\top\rvec(t)|> \epsilon_2$ then 
\begin{flalign}
&t^{-1}\e\left(-\wtilde^\top\xtilde\right) \left[\e\left(-\rvec(t)^\top\xtilde\right)-1\right]\xtilde^\top\rvec(t)\\
& \le \begin{cases}
-t^{-1}\e\left(-\max_{n,k}\wtilde^\top\xtilde\right) \left[1-\e\left(-\epsilon_{2}\right)\right]\epsilon_{2} & ,\text{ if }\rvec(t)^\top\xtilde\ge0\\
-t^{-1}\e\left(-\max_{n,k}\wtilde^\top\xtilde\right) \left[\e\left(\epsilon_{2}\right)-1\right]\epsilon_{2} & ,\text{ if }\rvec(t)^\top\xtilde < 0
\end{cases}
\end{flalign}
and we can improve this bound to
\begin{flalign} \label{eq: second term improved bound}
-C'' t^{-1}<0,
\end{flalign}
where $C''$ is the minimum between $\e\left(-\max_{n,k}\wtilde^\top\xtilde\right) \left[1-\e\left(-\epsilon_{2}\right)\right]\epsilon_{2}$ and \\ $\e\left(-\max_{n,k}\wtilde^\top\xtilde\right) \left[\e\left(\epsilon_{2}\right)-1\right]\epsilon_{2}$.
To conclude: \\
1. If $\left\Vert \mathbf{P}_{1}\mathbf{r}\left(t\right)\right\Vert \geq\epsilon_{1}$
(as in Eq. \ref{eq: w_hat r bound 1 Multi}), we have that 
\begin{equation}
\max_{k,n\in\mathcal{S}_k}\left|\xtilde^{\top}\mathbf{r}\left(t\right)\right|^{2}\overset{\left(1\right)}{\geq}\frac{1}{\left|\mathcal{S}\right|}\sum_{k,n\in\mathcal{S}_k}\left|\xtilde^{\top}\mathbf{P}_{1}\mathbf{r}\left(t\right)\right|^{2}=\frac{1}{\left|\mathcal{S}\right|}\left\Vert \mathbf{X}_{\mathcal{S}}^{\top}\mathbf{P}_{1}\mathbf{r}\left(t\right)\right\Vert ^{2}\overset{\left(2\right)}{\geq}\frac{1}{\left|\mathcal{S}\right|}\sigma_{\min}^{2}\left(\mathbf{X}_{\mathcal{S}}\right)\epsilon_{1}^{2}
\end{equation}
where in $\left(1\right)$ we used $\mathbf{P}_{1}^{\top}\xtilde=\xtilde$
$\forall k,\ n\in\mathcal{S}_k$, in $\left(2\right)$ we denoted by $\sigma_{\min}\left(\mathbf{X}_{\mathcal{S}}\right)$,
the minimal non-zero singular value of $\mathbf{X}_{\mathcal{S}}$ and used
eq. \ref{eq:P_ge_epsilon1}. Therefore, for some $(n,k)$, $\left|\xtilde^{\top}\mathbf{r}\right|\geq\epsilon_{2}\triangleq\left|\mathcal{S}\right|^{-1}\sigma_{\min}^{2}\left(\mathbf{X}_{\mathcal{S}}\right)\epsilon_{1}^{2}$.
If $||\mathcal{\vect{P}}_1 \vect{r}(t)||\ge\epsilon_1$, then combining eq. \ref{eq: w_hat r bound 1 Multi} with eq.  \ref{eq: second term improved bound} we find that eq. \ref{eq: (r(t+1)-r(t))r(t) bound Multi} can be upper bounded by:
$$ \left(\vect{r}(t+1)- \vect{r}(t)\right)^\top \vect{r} (t)\le-C''t^{-1}+o(t^{-1})$$
This implies that $\exists C_2<C''$ and $\exists t_2>0$ such that eq. \ref{eq:lemmaImproved} holds. This implies also that eq. \ref{eq:lemma} holds for $||\mathcal{\vect{P}}_1 \vect{r}(t)||\ge\epsilon_1$.\\
2. If $||\mathcal{\vect{P}}_1r(t)||<\epsilon_1$, we obtain (for some positive constants $C_3,C_4$):
$$(\vect{r}(t+1)- \vect{r}(t))^\top \vect{r}(t)\le C_3 t^{-\theta}+C_4t^{-2}$$\\
Therefore, $\exists t_1>0$ and $C_1$ such that eq. \ref{eq:lemma} holds.

\section{An experiment with stochastic gradient descent \label{sec:Additional-Figures}}

\begin{figure}[H]
\begin{centering}
\includegraphics[width=1\columnwidth]{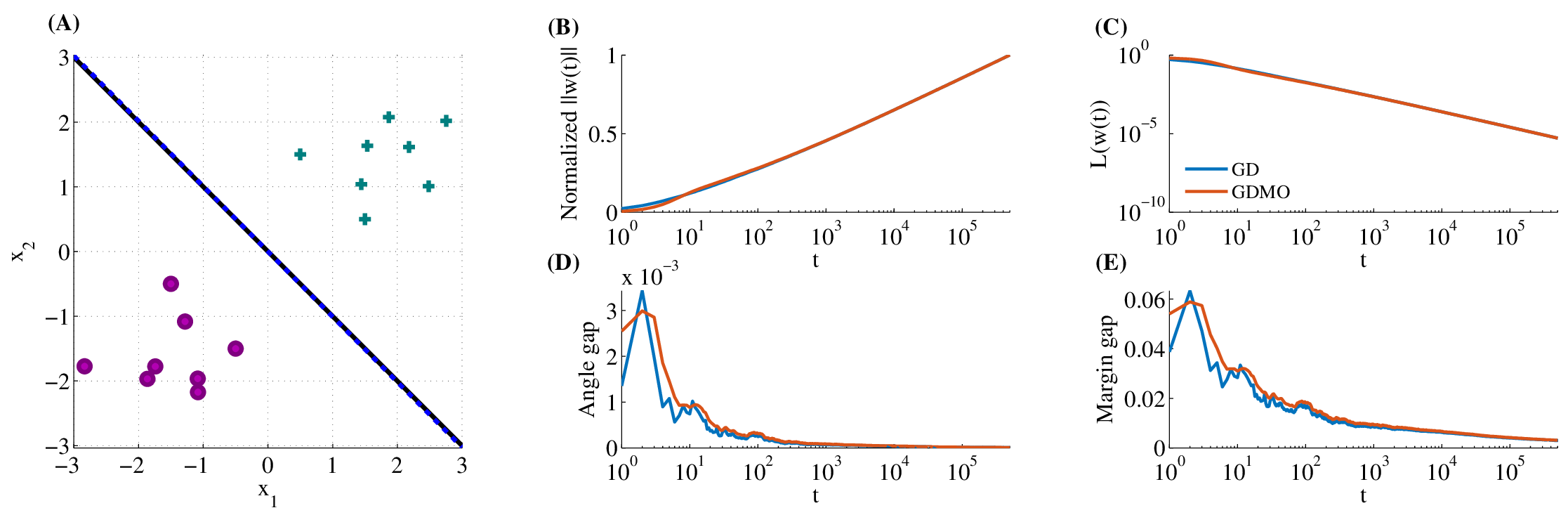} 
\par\end{centering}
\caption{Same as Fig. \ref{fig:Synthetic-dataset}, except stochastic gradient
decent is used (with mini-batch of size 4), instead of GD.\label{fig: SGD} }
\end{figure}

\pagebreak


\begin{thebibliography}{21}
	\providecommand{\natexlab}[1]{#1}
	\providecommand{\url}[1]{\texttt{#1}}
	\expandafter\ifx\csname urlstyle\endcsname\relax
	\providecommand{\doi}[1]{doi: #1}\else
	\providecommand{\doi}{doi: \begingroup \urlstyle{rm}\Url}\fi
	
		\bibitem[Nacson et~al.(2018) Nacson, Srebro, Soudry]{Nacson2018b}
	Mor Shpigel Nacson, Nati Srebro, and Daniel Soudry.
	\newblock  Stochastic Gradient Descent on Separable Data Exact Convergence with a Fixed Learning Rate.
	\newblock \emph{AISTATS}, 2019.
	
			\bibitem[Gunasekar et~al.(2018b) Gunasekar, Lee, Soudry, Srebro]{Gunasekar2018b}
	Suriya Gunasekar, Jason Lee, Daniel Soudry, and Nathan Srebro.
	\newblock  Implicit Bias of Gradient Descent on Linear Convolutional Networks.
	\newblock \emph{NIPS}, 2018.


		\bibitem[Duchi et~al.(2011)Duchi, Hazan, and Singer]{duchi2011adaptive}
	John Duchi, Elad Hazan, and Yoram Singer.
	\newblock Adaptive subgradient methods for online learning and stochastic
	optimization.
	\newblock \emph{Journal of Machine Learning Research}, 12\penalty0
	(Jul):\penalty0 2121--2159, 2011.
	

	\bibitem[Ganti(2015)]{Ganti2015}
	Radha Krishna Ganti.
	\newblock {EE6151, Convex optimization algorithms. Unconstrained minimization:
		Gradient descent algorithm}, 2015.
	\newblock URL
	
	\bibitem[Gunasekar et~al.(2017)Gunasekar, Woodworth, Bhojanapalli, Neyshabur,
	and Srebro]{Gunasekar2017}
	Suriya Gunasekar, Blake Woodworth, Srinadh Bhojanapalli, Behnam Neyshabur, and
	Nathan Srebro.
	\newblock {Implicit Regularization in Matrix Factorization}.
	\newblock \emph{NIPS}, pages 1--10, 2017.
	
	\bibitem[Gunasekar et~al.(2018)Gunasekar, Lee, Soudry, and
	Srebro]{gunasekar2018characterizing}
	Suriya Gunasekar, Jason Lee, Daniel Soudry, and Nathan Srebro.
	\newblock Characterizing implicit bias in terms of optimization geometry.
	\newblock \emph{ICML}, 2018.
	
	\bibitem[Hardt et~al.(2016)Hardt, Recht, and Singer]{Hardt2015a}
	Moritz Hardt, Benjamin Recht, and Y~Singer.
	\newblock {Train faster, generalize better: Stability of stochastic gradient
		descent}.
	\newblock \emph{ICML}, pages 1--24, 2016.
	
	\bibitem[Hoffer et~al.(2017)Hoffer, Hubara, and Soudry]{Hoffer2017a}
	Elad Hoffer, Itay Hubara, and D.~Soudry.
	\newblock {Train longer, generalize better: closing the generalization gap in
		large batch training of neural networks}.
	\newblock In \emph{NIPS}, pages 1--13, may 2017.
	
	\bibitem[Hubara et~al.(2018)Hubara, Courbariaux, Soudry, El-yaniv, and
	Bengio]{Hubara2016}
	I~Hubara, M~Courbariaux, D.~Soudry, R~El-yaniv, and Y~Bengio.
	\newblock {Quantized Neural Networks: Training Neural Networks with Low
		Precision Weights and Activations}.
	\newblock \emph{JMLR}, 2018.
	
	\bibitem[Ji and Telgarsky(2018)]{Ji2018}
	Ziwei Ji and Matus Telgarsky.
	\newblock {Risk and parameter convergence of logistic regression}.
	\newblock Communicated by the authors, 2018.
	
	\bibitem[Keskar et~al.(2017)Keskar, Mudigere, Nocedal, Smelyanskiy, and
	Tang]{Keskar2016}
	Nitish~Shirish Keskar, Dheevatsa Mudigere, Jorge Nocedal, Mikhail Smelyanskiy,
	and Ping Tak~Peter Tang.
	\newblock {On Large-Batch Training for Deep Learning: Generalization Gap and
		Sharp Minima}.
	\newblock \emph{ICLR}, pages 1--16, 2017.
	
	
	\bibitem[Kingma and Ba(2015)]{Kingma2015}
	Diederik~P Kingma and Jimmy~Lei Ba.
	\newblock {Adam: a Method for Stochastic Optimization}.
	\newblock In \emph{ICLR}, pages 1--13, 2015.
	
	\bibitem[Nacson et~al.(2018)Nacson, Lee, Gunasekar, Srebro, and
	Soudry]{Nacson2018}
	Mor~Shpigel Nacson, Jason Lee, Suriya Gunasekar, Nathan Srebro, and Daniel
	Soudry.
	\newblock {Convergence of Gradient Descent on Separable Data}.
	\newblock \emph{AISTATS}, pages 1--45, 2019.
	
	\bibitem[Neyshabur et~al.(2014)Neyshabur, Tomioka, and
	Srebro]{neyshabur2014search}
	Behnam Neyshabur, Ryota Tomioka, and Nathan Srebro.
	\newblock In search of the real inductive bias: On the role of implicit
	regularization in deep learning.
	\newblock \emph{arXiv:1412.6614}, 2014.
	
	\bibitem[Neyshabur et~al.(2015)Neyshabur, Salakhutdinov, and
	Srebro]{neyshabur2015path}
	Behnam Neyshabur, Ruslan~R Salakhutdinov, and Nati Srebro.
	\newblock Path-sgd: Path-normalized optimization in deep neural networks.
	\newblock In \emph{NIPS}, 2015.
	
	\bibitem[Neyshabur et~al.(2017)Neyshabur, Bhojanapalli, McAllester, and
	Srebro]{Neyshabur2017a}
	Behnam Neyshabur, Srinadh Bhojanapalli, David McAllester, and Nathan Srebro.
	\newblock {Exploring Generalization in Deep Learning}.
	\newblock \emph{arXiv}, jun 2017.
    
    \bibitem[Ravi et~al.(2024)Ravi, Scott, Soudry, and Wang]{Ravi2024}
    Hrithik Ravi, Clayton Scott, Daniel Soudry, and Yutong Wang
    \newblock {The implicit bias of gradient descent on separable multiclass data.}
    \newblock In \emph{NeurIPS}, 2024.
 
	\bibitem[Rosset et~al.(2004)Rosset, Zhu, and Hastie]{Rosset2004}
	Saharon Rosset, Ji~Zhu, and Trevor~J Hastie.
	\newblock {Margin Maximizing Loss Functions}.
	\newblock In \emph{NIPS}, pages 1237--1244, 2004.

	
	\bibitem[Schapire et~al.(1998)Schapire, Freund, Bartlett, Lee,
	et~al.]{schapire1998boosting}
	Robert~E Schapire, Yoav Freund, Peter Bartlett, Wee~Sun Lee, et~al.
	\newblock Boosting the margin: A new explanation for the effectiveness of
	voting methods.
	\newblock \emph{The annals of statistics}, 26\penalty0 (5):\penalty0
	1651--1686, 1998.
	
	\bibitem[Soudry et~al.(2018)Soudry, Hoffer, {Shpigel Nacson}, and
	Srebro]{Soudry2017a}
	Daniel Soudry, Elad Hoffer, Mor {Shpigel Nacson}, and N~Srebro.
	\newblock {The Implicit Bias of Gradient Descent on Separable Data}.
	\newblock In \emph{ICLR}, 2018.
	
	\bibitem[Telgarsky(2013)]{telgarsky2013margins}
	Matus Telgarsky.
	\newblock Margins, shrinkage and boosting.
	\newblock In \emph{Proceedings of the 30th International Conference on
		International Conference on Machine Learning-Volume 28}, pages II--307. JMLR.
	org, 2013.
	
	\bibitem[Wilson et~al.(2017)Wilson, Roelofs, Stern, Srebro, and
	Recht]{Wilson2017}
	Ashia~C Wilson, Rebecca Roelofs, Mitchell Stern, Nathan Srebro, and Benjamin
	Recht.
	\newblock {The Marginal Value of Adaptive Gradient Methods in Machine
		Learning}.
	\newblock \emph{arXiv}, pages 1--14, 2017.
	
	\bibitem[Zhang et~al.(2017)Zhang, Bengio, Hardt, Recht, and Vinyals]{Zhang2016}
	Chiyuan Zhang, Samy Bengio, Moritz Hardt, Benjamin Recht, and Oriol Vinyals.
	\newblock {Understanding deep learning requires rethinking generalization}.
	\newblock In \emph{ICLR}, 2017.
	
	\bibitem[Zhang et~al.(2005)Zhang, Yu, et~al.]{zhang2005boosting}
	Tong Zhang, Bin Yu, et~al.
	\newblock Boosting with early stopping: Convergence and consistency.
	\newblock \emph{The Annals of Statistics}, 33\penalty0 (4):\penalty0
	1538--1579, 2005.
	
\end{thebibliography}
\end{document}